\declaretheorem{theorem}
\declaretheorem{corollary}
\declaretheorem{lemma}
\declaretheorem{proposition}
\declaretheoremstyle[qed=$\square$]{definitionwithend}
\declaretheorem[style=definitionwithend]{definition}
\declaretheorem[style=definitionwithend]{assumption}
\declaretheoremstyle[qed=$\blacksquare$]{remarkwithend}
\declaretheorem{example}
\declaretheorem[style=remarkwithend]{remark}
\crefname{assumption}{Assumption}{Assumptions}
\crefname{conjecture}{Conjecture}{Conjectures} 
\newcommand{\hinge}{\textrm{hinge}}
\newcommand{\replace}[2]{#2} 
\newcommand{\obj}[1]{h(#1;\mathcal{A}^+,\mathcal{A}^-)}  
\newcommand{\objt}[1]{h(#1;\widetilde{\mathcal{A}}_t^+,\widetilde{\mathcal{A}}_t^-)}  
\DeclareMathOperator{\lbl}{label}
\DeclareMathOperator{\plbl}{\widetilde{label}}
\DeclareMathOperator{\phlbl}{\widehat{label}}
\DeclareMathOperator{\cost}{cost}
\newcommand{\tilD}{\widetilde{D}}
\DeclareFontFamily{U}{mathx}{\hyphenchar\font45}
\DeclareFontShape{U}{mathx}{m}{n}{<-> mathx10}{}
\DeclareSymbolFont{mathx}{U}{mathx}{m}{n}
\DeclareMathAccent{\widebar}{0}{mathx}{"73}
\newcommand{\barD}{\widebar{D}}
\definecolor{gold}{rgb}{0.85,0.65,0}
\colorlet{dgreen}{green!60!black}
\newcommand{\grad}{\ensuremath{\nabla}}
\let\emptyset\varnothing
\newcommand{\set}[1]{\left\{#1\right\}}
\def\L{{\mathbb{L}}}
\def\bbL{{\mathbb{L}}}
\def\N{{\mathbb{N}}}
\def\bbN{{\mathbb{N}}}
\def\bbP{{\mathbb{P}}}
\def\R{{\mathbb{R}}}
\def\bbR{{\mathbb{R}}}
\def\cA{\mathcal{A}}
\def\cD{{\mathcal D}}
\def\cE{{\mathcal E}}
\def\cN{{\mathcal N}}
\DeclareMathOperator*{\argmin}{arg\,min}
\DeclareMathOperator*{\argmax}{arg\,max}
\DeclarePairedDelimiterX{\inner}[2]{\langle}{\rangle}{#1, #2}
\DeclareMathOperator{\conv}{conv}
\DeclareMathOperator{\sign}{sign}
\DeclareMathOperator{\Proj}{Proj}
\begin{document}

\title{Mistake, Manipulation and Margin Guarantees in Online Strategic Classification}

\author[1]{Lingqing Shen}
\author[2]{Nam Ho-Nguyen}
\author[2]{Khanh-Hung Giang-Tran}
\author[1]{Fatma K{\i}l{\i}n\c{c}-Karzan}
\affil[1]{Tepper School of Business, Carnegie Mellon University}
\affil[2]{Discipline of Business Analytics, The University of Sydney}
\date{}

\maketitle

\begin{abstract}

We consider an online strategic classification problem where each arriving agent can manipulate their true feature vector to obtain a positive predicted label, while incurring a cost that depends on the amount of manipulation. The learner seeks to predict the agent's true label given access to only the manipulated features. After the learner releases their prediction, the agent's true label is revealed. Previous algorithms such as the strategic perceptron guarantee finitely many mistakes under a margin assumption on agents' true feature vectors. However, these are not guaranteed to encourage agents to be truthful. Promoting truthfulness is intimately linked to obtaining adequate margin on the predictions, thus we provide two new algorithms aimed at recovering the maximum margin classifier in the presence of strategic agent behavior. We prove convergence, finite mistake and finite manipulation guarantees for a variety of agent cost structures. We also provide generalized versions of the strategic perceptron with mistake guarantees for different costs. Our numerical study on real and synthetic data demonstrates that the new algorithms outperform previous ones in terms of margin, number of manipulation and number of mistakes.
\end{abstract}

\section{Introduction}

Binary classification is a well-known problem in supervised learning, with applications in numerous important domains such as marketing, finance, natural language processing and medicine. The traditional binary classification problem aims to learn a \emph{decision rule} that maps \emph{feature vectors} to \emph{binary labels} $\pm 1$, with the aim of predicting a true underlying label for a feature vector. For example, features may correspond to identifying information of customers of a bank who apply for a loan, and in this context the label may indicate whether the bank will approve or deny the loan. The true underlying label, whether the bank \emph{should} approve or deny given all future outcomes, is unknown at the time of the loan application, which necessitates the need to use a classification rule. 

Like the example above, binary classification is now regularly applied to various applications involving human \emph{agents}. Customers obviously prefer that their loan application be approved rather than denied, and in many other applications there may similarly be one label that is preferred by agents over the other. One can imagine that in practice this leads to \emph{strategic behavior} of agents, where feature vectors are manipulated in order to achieve the desired label prediction. Of course, there is often also a \emph{cost} associated with manipulation, so agents may manipulate only if the payoff for achieving the desirable label is worth the cost.

Amidst potential misleading strategic behavior of agents, the \emph{learner} wishes to find a decision rule that still accurately predicts the true underlying label, despite given access to only potentially manipulated feature vectors. This is the problem of \emph{strategic classification}. In this paper, we study an \emph{online} model for strategic classification with \emph{continuous} features (i.e., feature vectors are in $\R^d$), where agents arrive sequentially, observe the current decision rule, and present their potentially manipulated feature vectors to the learner. Upon each arrival, the learner uses the current decision rule to classify, i.e., predict the label of the agent. Then, the true label is revealed, and the learner may update the decision rule based on the revealed information of the agent's potentially manipulated feature vector and true label.

\subsection{Literature}

The idea of strategic agents dates back to \citet{bruckner_stackelberg_2011,dalvi_adversarial_2004,dekel_incentive_2008}, where the focus is on agents' adversarial behavior. 
In this setting, the agents' goal is to completely mislead the learner, and the agents are agnostic to the label they receive. 
The strategic classification problem as described above, where agents have preferences over labels and behave in a self-interested fashion, is introduced by \citet{hardt_strategic_2016}, where a Stackelberg game is used to model the learner (leader) and agent (follower) behavior. 

Strategic classification is studied in two settings: (i) the \emph{offline setting} in which data points come from an underlying distribution and the learner aims to find the equilibrium of the resulting Stackelberg game;
and (ii) in the \emph{online setting} where the data points arrive sequentially and the learner has the opportunity to revise their classifier. 
In the offline setting the focus has been on developing methods to approximately compute the Stackelberg equilibrium, and understanding their sample complexity
\citep{hardt_strategic_2016,zhang_incentive_2021,sundaram_pac_2021,lechner_learning_2022,perdomo_performative_2020,zrnic_leads_2021}.
On the other hand, the algorithms in the online setting aim to minimize the Stackelberg regret \citep{chen_learning_2020,dong_strategic_2018,ahmadi_fundamental_2023} or the number of mistakes over a time horizon \citep{ahmadi_strategic_2021,ahmadi_fundamental_2023}. As mentioned above, our work studies the latter \emph{online} setting, and we analyze the number of mistakes, manipulations and convergence of the classifiers over time.

The strategic classification literature can also be categorized based on the assumptions on the agent manipulation structures. The most common assumption is  \emph{continuous} manipulation of feature vectors in $\R^d$, where any perturbation of the feature vectors is possible within a bounded set and the cost is a continuous function (typically a weighted norm)~\citep{dong_strategic_2018,chen_learning_2020,haghtalab_maximizing_2020,ahmadi_strategic_2021,ghalme_strategic_2021,sundaram_pac_2021}. 
Alternatively, a \emph{discrete} manipulation structure where only some manipulations are plausible, defined through a manipulation graph, is also studied in~\citep{lechner_learning_2022,zhang_incentive_2021,ahmadi_fundamental_2023}. 
Relatedly, some works have focused on relaxing assumptions on the learner-agent interaction model, such as allowing for imperfect information \citep{ghalme_strategic_2021,bechavod_information_2022,jagadeesan_alternative_2021} and collective agent actions \citep{zrnic_leads_2021}. 
\citet{ghalme_strategic_2021,bechavod_information_2022,jagadeesan_alternative_2021} all examine the offline problem, while \citet{zrnic_leads_2021} considers the online problem where the agents' learn their best actions collectively and over time. 
In contrast, our work examines the online model, with continuous manipulations, and we consider the standard full information learner-agent interaction model, where agents are individualistic and can respond optimally instantly. 

Another concept worth mentioning, although not yet formalized in the literature, is the ability of an algorithm to encourage \emph{truthfulness} of agents, by presenting classifiers which do not incentivize agents to manipulate their features. In contrast to earlier work \citep{hardt_strategic_2016} that mainly focused on the misclassification error of an algorithm as a performance metric, \citet{zhang_incentive_2021,lechner_learning_2022} take the vulnerability to manipulation into account. In particular, \citet{zhang_incentive_2021} propose an algorithm that prevents strategic behaviors, and \citet{lechner_learning_2022} propose a strategic empirical manipulation loss that balances the classification accuracy and manipulation. Nevertheless, the context therein is different to ours: these works all consider the model of manipulation graph and offline learning, and their results mainly involve sample complexity analysis.

The possibility of strategic manipulation of feature vectors has also brought up a number of other closely related questions. In particular, \citet{kleinberg_how_2020,miller_strategic_2020,ahmadi_classification_2022,bechavod_information_2022,shavit_causal_2020,haghtalab_maximizing_2020,alon_multiagent_2020,tsirtsis_optimal_2024} study how the agents can be \emph{incentivized} to actually improve their qualification and eventually changing their true labels. 
\citep{hu_disparate_2019,milli_social_2019,braverman_role_2020,levanon_strategic_2021} examine the \emph{societal implications} of strategic behaviors. Strategic behaviors in other machine learning tasks such as linear regression have also gained interest \citep{chen_strategyproof_2018,bechavod_gaming_2021}.

Our setting is closest to \citet{dong_strategic_2018,chen_learning_2020,ahmadi_strategic_2021}, which all consider the online strategic classification problem with continuous feature manipulation model. In particular, \citet{dong_strategic_2018} study the online strategic classification problem with interaction between the agent and the learner, where the agent sends to the learner the best response that maximizes the agent's utility function, and the learner suffers from a certain loss for misclassification. In order to derive a regret-minimizing algorithm, they identify conditions on the structure of agent's utility function under which the learner's problem is convex. Subsequently, whenever their proposed conditions are met, they propose an algorithm achieving a sublinear regret when the learner receives a mixture of strategic and non-strategic data points in an online manner. 
Our algorithms, in contrast, focus on different performance measures such as mistake bounds and margin guarantees. Moreover, we assume all data points are subject to manipulation whenever they have sufficient budget, whereas their algorithm essentially relies on the fraction of non-strategic data points. \citet{chen_learning_2020} make a more general assumption on the agent's utility function and deal with the case where the exact form of the utility function is unknown to the learner. In this setting, they propose an algorithm that relies on a special oracle, and provide a Stackelberg regret guarantee for their algorithm. Although their algorithm does not rely on the specific structure of the agent's utility, their oracle access based model and analysis rely on fundamentally different assumptions than ours. \citet{ahmadi_strategic_2021} consider two specific families of manipulation cost functions based on $\ell_2$- and weighted $\ell_1$-norms. They generalize the well-known perceptron algorithm to the strategic setting and establish finite mistake bounds. In our paper, we generalize their algorithms and results to more general norms, suggest two new algorithms with theoretical guarantees on the number of mistakes, the number of manipulations and convergence to the best margin classifier for the classification problem based on the agents' true features and show that our algorithms have superior practical performance as well.

Our new algorithms are generalizations of maximum margin classifiers to the strategic setting. For non-strategic classification, online algorithms for finding maximum margin classifiers have been studied in number of works \citep{li_relaxed_1999,gentile_new_2000,freund_large_1998,friess_kernel_1998,kivinen_online_2004,kowalczyk_maximal_2000}. These propose scalable algorithms that update incrementally like the perceptron, aimed at finding a classifier with margin guarantees analogous to the support vector machine.
We develop tools to address intricacies of the strategic setting and establish that max-margin algorithms in similar spirit can be developed in strategic setting as well even though none of our algorithms are generalization of these works.

\subsection{Outline and contributions}

\cref{sec:problem-setting} gives the formal problem setting of online strategic classification, introducing our notation and defining critical concepts such as cost of manipulation, agent response, margins and proxy data. In \cref{sec:algorithms}, we describe three algorithms for solving the online strategic classification problem with linear classifiers. The first two of our algorithms are aimed at recovering the maximum margin classifier in the presence of strategic behavior, while the third is a generalization of the strategic perceptron from \citep{ahmadi_strategic_2021}. Our theoretical contributions can be summarized as follows:
\begin{itemize}
	\item We study the concept of proxy data which is constructed from agent responses after their true label has been revealed, and show that it plays a critical role in the design of algorithms in the presence of strategic behavior. In \cref{sec:algorithms} we provide a condition (see \cref{cor:proxy-inclusion}) that ensures the proxy data to remain separable, which is critical for \cref{alg:data-driven,alg:data-driven-subgradient-averaging,alg:projected-perceptron}.
It is noteworthy to mention that even if the classification problem based on the true feature vectors may be separable, as the agent responses and consequently their proxies built are functions of the classifiers released to the agents, ensuring the separability of the proxy data is in no way obvious or guaranteed. This is thus one of the key challenges in the strategic setting, and our solution to this issue forms the crux for developing sensible classification algorithms in the strategic setting.
	
	\item \cref{alg:data-driven}, described in \cref{sec:smm}, solves an offline maximum margin problem at each iteration on the proxy data. Through tools from convex analysis (given in \cref{sec:guarantee-preliminaries}), in \cref{sec:guarantee-SMM} we provide explicit finite bounds on the number of mistakes and manipulations \cref{alg:data-driven} can make (\cref{thm:margin-best_mistake-bound,thm:margin-best_manipulation-bound}). Furthermore, under a probabilistic data generation assumption, we show that \cref{alg:data-driven} almost surely recovers the maximum margin classifier for the non-manipulated agent features (\cref{thm:data-driven-convergence}) even through it has access to only (possibly) manipulated features of the agents.
	
	\item In each iteration, \cref{alg:data-driven} solves an optimization problem of growing size. To mitigate the possibly expensive per-iteration cost of \cref{alg:data-driven}, in \cref{sec:gradient-smm} we describe \cref{alg:data-driven-subgradient-averaging}, which is an approximate version of \cref{alg:data-driven}, inspired by the Joint Estimation and Optimization (JEO) framework \citep{ahmadi_data-driven_2014,ho-nguyen_dynamic_2021}. \cref{alg:data-driven-subgradient-averaging} replaces the need to solve a full maximum margin problem at each iteration with a subgradient-type update. In \cref{sec:guarantee-SMM-grad}, we use tools from online convex optimization to give conditions under which \cref{alg:data-driven-subgradient-averaging} enjoys finite mistake and manipulation guarantees, and under the same probabilistic data generation assumption used for \cref{alg:data-driven}, we show that \cref{alg:data-driven-subgradient-averaging} also almost surely recovers the maximum margin classifier on non-manipulated agent features (\cref{thm:averaging-convergence}).

	\item \cref{alg:projected-perceptron}, described in \cref{sec:strategic-perceptron}, is a generalized version of the strategic perceptron developed by \citet{ahmadi_strategic_2021}, who also provided mistake bounds for the strategic setting when the agent manipulation costs are based on $\ell_2$- or weighted $\ell_1$-norm. (In particular, \citet{ahmadi_strategic_2021} generalizes bounds for the classical non-strategic setting \citep{rosenblatt_perceptron_1958,novikoff_convergence_1962}.) 
	In \cref{sec:guarantee-perceptron}, we use online convex optimization tools to provide mistake bounds for more general cost functions (\cref{thm:perceptron-mistake-bound,thm:perceptron-mistake-bound2}).
Moreover, our methods give improved mistake bounds in cases where the optimal linear classifier may not pass through the origin, without requiring information on the maximum margin (\cref{rem:perceptron-comparison}). 
	Our generalized version can also exploit a priori domain information of the classifier (whenever present) via a projection step to provide improved bounds in  certain settings (\cref{thm:nonnegative-perceptron-mistake-bound}).
			
	\item In \cref{sec:guarantee-examples} we examine the assumptions that are used in developing our performance guarantees. Through various examples, we show that removing any of these will result in failure of the bounds, thus demonstrating the necessity of these assumptions. 
	In particular, we show that the strategic perceptron algorithm of \citet{ahmadi_strategic_2021} as well as its generalization given in \cref{alg:projected-perceptron} may incur infinitely many manipulations, and consequently may fail to converge to the max-margin classifier.
\end{itemize}
In \cref{sec:numerical}
we test the numerical performance of our algorithms in terms of the number of mistakes, number of manipulations, and distance to the non-strategic maximum margin classifier. using real data from a lending platform as well as synthetic data.
We show that \cref{alg:data-driven} generally has superior performance on all three metrics, and \cref{alg:data-driven-subgradient-averaging} often outperforms \cref{alg:projected-perceptron}.

\subsection{Notation}

Let $[n]\coloneq\set{1,2,\cdots,n}$ for any positive integer $n\in\N$. Let $\R^d_+ \coloneq \set{x\in\R^d:x\geq0}$ denote the nonnegative orthant of the $d$-dimensional Euclidean space. We define $\argmin_{x\in\mathcal{X}}f(x)$ be the set of minimizers and $\argmax_{x\in\mathcal{X}}f(x)$ the set of maximizers of the problems respectively. For a given convex (resp.\ concave) function $f$, let $\partial f(x)$ be the subdifferential (resp.\ superdifferential) set of $f$ at $x$. For a given function $f$ differentiable at $x$, let $\nabla f(x)$ denote the gradient of $f$ evaluated at $x$. Let $\conv(\mathcal{A})$ denote the convex hull of the set $\mathcal{A}$. Let $\|\cdot\|_2$ be the $\ell_2$-norm, i.e., $\|x\|_2=\left(\sum_{i\in[d]}x_i^2\right)^{1/2}$ for any $x=(x_1,\dots,x_d)\in\R^d$. We define $B_{\|\cdot\|_2}:=\set{x\in\R^d:\|x\|_2\leq1}$ to be the unit $\ell_2$-norm ball.  

\section{Problem Setting}
\label{sec:preliminary}\label{sec:problem-setting}

In this section we formalize the problem definition for strategic classification. Each agent comprises of a data pair $(A,\lbl(A))$, where $A \in \cA \subseteq \R^d$ is a feature vector and $\lbl(A) \in \{\pm 1\}$ is the associated label. Here, $\lbl(\cdot)$ is a function mapping feature vectors in $\R^d$ to $\pm 1$, and $\cA$ is the space of all possible feature vectors. 
For notational convenience, we also define 
	the sets of positively and negatively labeled agents, respectively, as
	\begin{align}\label{eq:pos-neg-sets}
		\cA^+ := \left\{ A \in \cA : \lbl(A) = +1 \right\}, \quad
		\cA^- := \left\{ A \in \cA : \lbl(A) = -1 \right\}. 
	\end{align}
We focus on the case of continuous feature vectors and consider the set of \emph{linear classifiers} parametrized by $(y,b) \in \bbR^d \times \bbR$: given a feature vector $x \in \bbR^d$, the \emph{predicted label} for $\lbl(x)$ is defined to be
\begin{align*}
	\phlbl(x,y,b) := \sign(y^\top x + b) \in \{\pm 1\},
\end{align*}
where we follow the convention of $\sign(0) := +1$.

\subsection{Manipulation and agent responses}

In the strategic setting, labels of $+1$ are assumed to be more desirable than $-1$. Therefore, an agent $(A,\lbl(A))$ may manipulate their own \emph{true} feature vector $A$ to change their predicted label, as long as it does not cost them too much. More precisely, we define the \emph{cost} of manipulating a feature vector $A$ to $x$ as $\cost(A,x)$, with the properties that $\cost(A,x) \geq 0$, and $\cost(A,A) = 0$. Given the classifier $x \mapsto \sign(y^\top x + b)$, the agent then chooses a (possibly new) \emph{manipulated} feature vector by solving the following optimization problem:
\begin{align*}
	\argmax_{x\in\R^d}\left\{\sign\left(y^\top x + b\right) - \cost\left(A, x\right)\right\}.
\end{align*}
We next derive the precise form of the agent response.
Given a vector $x \in \bbR^d$, an agent will be incentivized to move from $A$ to $x$ if and only if $\sign(y^\top x + b) - \cost(A,x) \geq \sign(y^\top A + b)$. Clearly, if $y^\top A + b \geq 0$, then the agent will never move, as $A$ is optimal, so we consider the case of $y^\top A + b < 0$ now. If $y^\top x + b < 0$ also, then once again the agent is never incentivized to move. Therefore, we consider the case when $y^\top x + b \geq 0 > y^\top A + b$, which means the agent moves from $A$ to $x$ if and only if $\cost(A,x) \leq 2$. To find the most cost effective move, the agent solves
\begin{align}\label{eq:cost-to-move}
	\bar{c}(A,y,b) := \min_{x \in \bbR^d} \set{ \cost(A,x) : y^\top x + b \geq 0 }, \  \bar{r}(A,y,b) \in \argmin_{x \in \bbR^d} \set{ \cost(A,x) : y^\top x + b \geq 0 }.
\end{align}
Then, the agent changes from $A$ to $\bar{r}(A,y,b)$ if and only if $\bar{c}(A,y,b) \leq 2$. Note that, for certain cost functions, it is possible that there may be several choices for $\bar{r}(A,y,b)$. In such cases we assume that the agent and learner agree on the selection. We will formalize this in \cref{assum:unique-direction} below.
Throughout this paper, we make the following structural assumption on the cost function.
\begin{assumption}[Cost function]\label{assum:cost-norm}
	The cost function takes the form $\cost(A,x) := c\|x - A\|$ where $\|\cdot\|$ is a norm on $\bbR^d$ and $c>0$ is a parameter. Moreover, this structural form and the value of $c$ is known to both the learner and the agents.
\end{assumption}
In contrast to previous works, we will study various different norms and associated guarantees. Under \cref{assum:cost-norm}, we have via basic convex analysis
\begin{align*}
	&\argmin_{x \in \bbR^d} \set{ \cost(A,x) : y^\top x + b \geq 0 }\\
&= \begin{cases}
	\set{A}, &y=0 \text{ and } b \geq 0\\
	\emptyset, &y=0 \text{ and } b < 0\\ \set{A + \max\left\{ 0, -\frac{y^\top A + b}{\|y\|_*} \right\} v : v \in \argmax_{w : \|w\| \leq 1} y^\top w }, &y \neq 0.
\end{cases}
\end{align*}
In order to define a unique response $\bar{r}(A,y,b)$ and to  prevent ambiguities, we make the following simplifying assumption on the \emph{manipulation direction}:
\begin{assumption}[Manipulation direction]\label{assum:unique-direction}
	Given $y \in \bbR^d$, assume that both agent and learner have knowledge of a selection $v(y)$ from $\argmax_{w : \|w\| \leq 1} y^\top w$. \par
	Note that $y^\top v(y) = \|y\|_*$, $v(y) \in \partial \|y\|_*$, and when $y \neq 0$, $\|v(y)\| = 1$. When $y = 0$, we define $v(y) := 0$.
\end{assumption}
We now derive the agent responses under two cases: $y \neq 0$ and $y = 0$.

\subsubsection{The case \texorpdfstring{$y \neq 0$}{y≠0}.} Under \cref{assum:unique-direction}, when $y \neq 0$, we define
\[ \bar{r}(A,y,b) := A + \max\left\{ 0, -\frac{y^\top A + b}{\|y\|_*} \right\} v(y), \quad \bar{c}(A,y,b) = c \cdot \max\left\{ 0, -\frac{y^\top A + b}{\|y\|_*} \right\}. \]
Recalling that $A$ will move if and only if $\bar{c}(A,y,b) \leq 2$, we deduce that
\begin{align*}
	A \text{ moves to } \bar{r}(A,y,b) &\iff  y^\top A + b \geq - \frac{2 \|y\|_*}{c}\\
	A \neq \bar{r}(A,y,b) &\iff y^\top A + b < 0.
\end{align*}
In other words, given the classifier $x \mapsto \sign(y^\top x + b)$, the agent with true feature vector $A$ will present a \emph{different} feature vector $\bar{r}(A,y,b)$ if and only if $- \frac{2 \|y\|_*}{c} \leq y^\top A + b < 0$. Based on this, we denote the agent response (manipulated data) to be
\begin{align}\label{eq:agent-response}
	\tilde{r}(A,y,b) = \begin{cases}
	\bar{r}(A,y,b), & \text{if } \bar{c}(A,y,b) \leq 2\\
	A, & \text{if } \bar{c}(A,y,b) > 2
\end{cases} = \begin{cases}
	A - \frac{y^\top A + b}{\|y\|_*} \cdot v(y), & \text{if } - \frac{2 \|y\|_*}{c} \leq y^\top A + b < 0\\
	A, & \text{otherwise}.
\end{cases}
\end{align}
Note that based on these definitions, we assume that when the agent is indifferent between manipulating or not, i.e., $\bar{c}(A,y,b)=2$, the agent will always manipulate.

\subsubsection{The case \texorpdfstring{$y=0$}{y=0}.} When $y=0$, we have
\[ \bar{c}(A,y,b) = \begin{cases} 0, &\text{if } b \geq 0\\ +\infty, &\text{if }b < 0. \end{cases} \]
Thus, when $b\ge0$, we have $\bar{c}(A,y,b)=0$ and $\set{A} = \argmin_{x \in \bbR^d} \set{\cost(A,x) : y^\top x + b}$, so the optimal response of the agent is to present their true feature vector $A$. 
In the case of $b<0$, $\bar{c}(A,y,b)=+\infty > 2$, so the agent does not manipulate the feature vector. In summary, when $y = 0$, regardless of the sign of $b$, we have $\tilde{r}(A,y,b) = A$. Since $-2\|y\|_*/c = 0$ when $y=0$, the first case of \eqref{eq:agent-response} can never be satisfied, so \eqref{eq:agent-response} is also correct for $y=0$.

\subsection{Margins and truthfulness}

We make the assumption that $\cA^+$ and $\cA^-$ from \eqref{eq:pos-neg-sets} are linearly separable. The distance of feature vector $A$ from the misclassification region $\{ x \in \bbR^d : \lbl(A) (y^\top x + b) \leq 0 \}$, as measured by the norm $\|\cdot\|$, is given by
\[ \min_x \left\{ \|x-A\| : x \in \bbR^d, \lbl(A) (y^\top x + b) \leq 0 \right\} = \max\left\{0,\, \lbl(A) \cdot \frac{y^\top A + b}{\|y\|_*} \right\}. \]
Under linear separability (and mild topological conditions such as closedness of $\cA$), we will satisfy the following positive margin assumption.
\begin{assumption}[Margin]\label{assum:data-driven-deterministic}\label{assum:separable}\label{assum:margin}
	The following optimization problem 
	\begin{align}\label{eq:data-driven-best}
		d_* := \max_{y \neq 0, b \in \bbR} \min_{A \in \cA} \left\{ \lbl(A) \cdot \frac{y^\top A + b}{\|y\|_*} \right\}
	\end{align}
	has an optimum solution $y_* \in \bbR^d \setminus \{0\}$, $b_* \in \bbR$ and an optimal value of $d_* > 0$.
\end{assumption}
This is the standard definition of maximum margin classifiers for non-strategic settings \citep[Chapter 5.2]{mohri_foundations_2012}. \cref{assum:margin} implies 
\[ \lbl(A) \cdot \frac{y_*^\top A + {b_*}}{\|y_*\|_*} \geq d_* \text{ for all } A \in \cA. \]
This means that, in the non-strategic setting, when the feature vector $A$ is presented to the classifier $x\mapsto\sign(y_*^\top x + b_*)$, it will be classified correctly simply by predicting $\sign(y_*^\top A + b_*)$.

However, in the strategic setting, even a classifier $x \mapsto \sign(y^\top x + b)$ which correctly predicts the labels of the non-manipulated feature vectors (i.e., $\phlbl(A,y,b) := \sign(y^\top A + b) = \lbl(A)$) it may still not predict the labels of the manipulated feature vectors correctly (i.e., $\phlbl(\tilde{r}(A,y,b),y,b) = \sign(y^\top \tilde{r}(A,y,b) + b) \neq \lbl(A)$). This is simply because when manipulation occurs, i.e., $\tilde{r}(A,y,b) \neq A$, we also have $-2\|y\|_*/c \leq y^\top A + b < 0$, thus
\[ \lbl(A) = \sign(y^\top A + b) = -1 \quad \text{and} \quad \sign(y^\top \tilde{r}(A,y,b) + b) = \sign(0) = +1. \] 
A simple adjustment of the offset, first proposed by \citet{ahmadi_strategic_2021}, provides a remedy to this issue. 
\begin{lemma}\label{lem:offset-label}
Consider a classifier $x\mapsto\sign\left(y^\top x+b-\frac{2\|y\|_*}{c}\right)$. Then, the agent response is
\begin{align}\label{eq:agent-response-offset}
	\tilde{r}\left(A,y,b - \tfrac{2\|y\|_*}{c} \right) = \begin{cases}
		A + \left( \frac{2}{c} - \frac{y^\top A + b}{\|y\|_*} \right) v(y), &\text{if }~ 0 \leq y^\top A + b < \frac{2 \|y\|_*}{c},\\
		A, & \text{otherwise}.
	\end{cases}
\end{align}
Let $(y,b)$ be such that
for every
$A \in \cA$ we have $\phlbl(A,y,b) = \lbl(A)$, i.e., the classifier $x\mapsto\sign\left(y^\top x+b\right)$ is correct on the non-manipulated features.
Then, 
\[  \phlbl\left( \tilde{r}\left(A,y,b - \tfrac{2\|y\|_*}{c} \right), y, b-\tfrac{2\|y\|_*}{c} \right) = \lbl(A) , \]
that is, the classifier $x\mapsto\sign\left(y^\top x+b-\frac{2\|y\|_*}{c}\right)$ is correct on the manipulated features.
\end{lemma}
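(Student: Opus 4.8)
The plan is to regard the shifted classifier $x \mapsto \sign\bigl(y^\top x + b - \tfrac{2\|y\|_*}{c}\bigr)$ as an ordinary linear classifier with parameters $(y, b')$, where $b' := b - \tfrac{2\|y\|_*}{c}$, and then apply the agent-response formula \eqref{eq:agent-response} verbatim. First I would substitute $b'$ into \eqref{eq:agent-response}: adding $\tfrac{2\|y\|_*}{c}$ to all three terms of the activation condition $-\tfrac{2\|y\|_*}{c} \le y^\top A + b' < 0$ turns it into $0 \le y^\top A + b < \tfrac{2\|y\|_*}{c}$, and the manipulated point $A - \tfrac{y^\top A + b'}{\|y\|_*} v(y)$ simplifies to $A + \bigl(\tfrac{2}{c} - \tfrac{y^\top A + b}{\|y\|_*}\bigr) v(y)$; this is exactly \eqref{eq:agent-response-offset}. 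When $y = 0$ one has $\|y\|_* = 0$, so $b' = b$, the classifier is unchanged, and $\tilde r(A,0,b) = A$, so the statement is trivial; hence I would assume $y \neq 0$ (and thus $\|y\|_* > 0$) for the rest.

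For the prediction claim I would split on $\lbl(A)$. If $\lbl(A) = -1$, the hypothesis $\phlbl(A,y,b) = \lbl(A)$ gives $y^\top A + b < 0$, so the activation condition $0 \le y^\top A + b$ fails, the agent does not move, and $y^\top A + b' = (y^\top A + b) - \tfrac{2\|y\|_*}{c} < 0$; hence the prediction is $-1 = \lbl(A)$. If $\lbl(A) = +1$, the hypothesis gives $y^\top A + b \ge 0$, and I would further split according to whether manipulation occurs. If $0 \le y^\top A + b < \tfrac{2\|y\|_*}{c}$, the agent moves to $A + \bigl(\tfrac{2}{c} - \tfrac{y^\top A + b}{\|y\|_*}\bigr) v(y)$; substituting into $y^\top(\cdot) + b'$ and using $y^\top v(y) = \|y\|_*$ from \cref{assum:unique-direction}, the terms telescope to exactly $0$, so the prediction is $\sign(0) = +1 = \lbl(A)$. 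Otherwise $y^\top A + b \ge \tfrac{2\|y\|_*}{c}$, the agent stays at $A$, and $y^\top A + b' = (y^\top A + b) - \tfrac{2\|y\|_*}{c} \ge 0$, so again the prediction is $+1 = \lbl(A)$.

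I do not expect a genuine obstacle: the argument is one substitution followed by a short case check. The only places needing care are the interval bookkeeping — making sure the boundary values $y^\top A + b \in \{0,\, \tfrac{2\|y\|_*}{c}\}$ are routed to the intended case — and the explicit appeal to the conventions $\sign(0) := +1$ and "an indifferent agent manipulates", which are precisely what make the telescoped value $0$ produce the correct positive label. It is also worth recording, in passing, that the computation shows every manipulating positive agent is mapped exactly onto the decision boundary of the shifted classifier, which is the structural reason the usable margin degrades under this offset and motivates the analysis in later sections.
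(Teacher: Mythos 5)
Your proposal is correct and follows essentially the same route as the paper's proof: substitute the offset $b - \tfrac{2\|y\|_*}{c}$ into \eqref{eq:agent-response}, use $y^\top v(y) = \|y\|_*$ to show manipulating agents land exactly on the decision boundary, and check the three regimes of $y^\top A + b$ together with the $\sign(0)=+1$ convention. Your explicit handling of $y=0$ and your organizing the cases by $\lbl(A)$ rather than directly by the value of $y^\top A + b$ are only cosmetic differences.
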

\begin{proof}[Proof of \cref{lem:offset-label}]
Plugging in $(y,b-2\|y\|_*/c)$ into \eqref{eq:agent-response} leads to 
\[ \tilde{r}\left(A,y,b - \tfrac{2\|y\|_*}{c} \right) = \begin{cases}
A + \left( \frac{2}{c} - \frac{y^\top A + b}{\|y\|_*} \right) v(y), & \text{if } 0 \leq y^\top A + b < \frac{2 \|y\|_*}{c},\\
A, & \text{otherwise},
\end{cases} \]
thus by $y^\top v(y)=\|y\|_*$ we arrive at 
\begin{align*} 
\phlbl\left( \tilde{r}\left(A,y,b - \tfrac{2\|y\|_*}{c} \right), y, b-\tfrac{2\|y\|_*}{c} \right) &= 
\sign\left(y^\top \tilde{r}\left(A,y,b - \tfrac{2\|y\|_*}{c} \right) + b - \tfrac{2\|y\|_*}{c} \right)\\
&= \begin{cases}
0, & \text{if } 0 \leq y^\top A + b < \frac{2 \|y\|_*}{c}\\
\sign\left(y^\top A + b - \frac{2\|y\|_*}{c} \right), & \text{otherwise}.
\end{cases} 
\end{align*}
When $y^\top A + b < 0$, i.e., $\lbl(A) = -1$, we have
\[ \phlbl\left( \tilde{r}\left(A,y,b - \tfrac{2\|y\|_*}{c} \right), y, b-\tfrac{2\|y\|_*}{c} \right) 
=\sign\left(y^\top A + b - \tfrac{2\|y\|_*}{c} \right)
= -1 \]
as required. When $0 \leq y^\top A + b < 2\|y\|_*/c$, we have $\lbl(A) = +1$ and
\[ \phlbl\left( \tilde{r}\left(A,y,b - \tfrac{2\|y\|_*}{c} \right), y, b-\tfrac{2\|y\|_*}{c} \right) = \sign\left(0 \right) = +1 \]
as required. When $y^\top A + b \geq 2\|y\|_*/c$, we have
\[ \phlbl\left( \tilde{r}\left(A,y,b - \tfrac{2\|y\|_*}{c} \right), y, b-\tfrac{2\|y\|_*}{c} \right) 
=\sign\left(y^\top A + b - \tfrac{2\|y\|_*}{c} \right)
= +1 \]
as required.
\end{proof}

\cref{lem:offset-label} states that given any classifier $x \mapsto \sign(y^\top x + b)$ that classifies all \emph{non-manipulated} features correctly, we can create a classifier $x \mapsto \sign(y^\top x + b - 2\|y\|_*/c)$ that classifies all \emph{manipulated} features correctly.

In a strategic setting, in addition to correct classification, another property of interest is \emph{truthfulness}. 
\begin{definition}\label{def:truthful}
We say that an agent with feature vector $A$ is \emph{truthful for the classifier $x\mapsto\sign\left(y^\top x+b-\frac{2\|y\|_*}{c}\right)$}, if there is no manipulation, i.e., $\tilde{r}(A,y,b-2\|y\|_*/c) = A$.
\end{definition}
In the strategic setting, we wish to find a classifier that not only correctly classifies the points but also incentivizes many, if not all, agents to report truthful features. 

It turns out that a classifier promoting truthfulness is intrinsically connected to its margin. Thus, the magnitude of $\frac{y^\top A + b}{\|y\|_*}$ is of interest to us, and this motivates us to explore methods which recover the maximum margin classifier $(y_*,b_*)$ that solves \eqref{eq:data-driven-best}. The following example demonstrates the benefit of obtaining a maximum margin classifier.
\begin{example}\label{ex:truthful-max-margin}
Consider $\cA = \cA^+ \cup \cA^- \subset \bbR^2$, where
\begin{align*}
	&& \cA^+ &= \{(-1,1),(0, 1), (1,1), (2,1)\}, \quad &&\text{where }\lbl(A) = +1 \ \forall A \in \cA^+,&&&\\
	&& \cA^- &= \{(-1,-1),(1,-1)\}, \quad &&\text{where }\lbl(A) = -1 \ \forall A \in \cA^-.&&& 
\end{align*}
Then, \cref{assum:margin} is satisfied with $y_* = (0,1)$, $b_* = 0$ and $d_* = 1$.
Suppose $\|\cdot\| = \|\cdot\|_* = \|\cdot\|_2$ with $c=4$ (so $2/c = 1/2$) in \cref{assum:cost-norm}. 
Note that we have $y_*^\top A+b_*=-1$ for all $A \in \cA^-$ and $y_*^\top A+b_*=+1>{2\|y_*\|_*/c}=1/2$ for all $A\in\cA^+$. Thus, from~\cref{lem:offset-label}, we deduce that $\tilde{r}(A,y_*,b_*-2\|y_*\|_*/c)=A$ for all $A\in\cA$ and so every agent is truthful for the classifier $x\mapsto\sign\left(y_*^\top x+b_*-\frac{2\|y_*\|_*}{c}\right)$.

Now consider the classifier $x\mapsto\sign\left(\bar{y}^\top x+\bar{b}-\frac{2\|\bar{y}\|_*}{c}\right)$, where $\bar y = (1,2)$, $\bar b=0$. This classifier  also correctly classifies the non-manipulated data. We can check that $\frac{\bar{y}^\top A + \bar{b}}{\|\bar{y}\|_*} > 2/c$ for all $A \in \cA^+\setminus\{(-1,1)\}$ and $\bar{y}^\top A + \bar{b} < 0$ for $A \in \cA^-$. Therefore, by~\cref{lem:offset-label}, we have $\tilde{r}(A,\bar{y},\bar{b}-2\|\bar{y}\|_*/c)=A$ for all $A\in\cA\setminus\{(-1,1)\}$ and thus every agent except $A = (-1,1)$ is truthful for the classifier $x\mapsto\sign\left(\bar{y}^\top x+\bar{b}-\frac{2\|\bar{y}\|_*}{c}\right)$.
In fact, $\frac{\bar{y}^\top (-1,1) + \bar{b}}{\|\bar{y}\|_*} = \frac{1}{\sqrt{5}} < \frac{2}{c} = \frac{1}{2}$, so
\[ \tilde{r}\left((-1,1),\bar{y}, -\frac{2\|\bar{y}\|_*}{c} \right)
= (-1,1)+\left(\frac{1}{2}-\frac{1}{\sqrt{5}}\right)\frac{(1,2)}{\sqrt{5}} = \left(-\frac{6}{5}+\frac{\sqrt{5}}{10},\frac{3}{5}+\frac{\sqrt{5}}{5}\right) \neq (-1,1). \]
Thus, the agent $A = (-1,1)$ is never truthful with this classifier. \hfill$\blacksquare$
\end{example}

\subsection{Proxy data}

\cref{ex:truthful-max-margin} motivates us to find $(y_*,b_*)$ which solves \eqref{eq:data-driven-best}, and then predict labels of features $x$ according to $\phlbl(x,y_*,b-2\|y_*\|_*/c)$. This will ensure that all agents are classified correctly, and will encourage truthfulness of the agents. 
We now consider how to algorithmically find $(y_*,b_*)$ when we have access to only the agent responses $\tilde{r}(A,y,b-2\|y\|_*/c)$. The key insight lies in the concept of \emph{proxy data}, which we adapt from \citet{ahmadi_strategic_2021}. The proxy data are estimates of $A$ computed from the observations $r(A,y,b-2\|y\|_*/c)$ after also observing the true label $\lbl(A)$. As we will show in \cref{cor:proxy-inclusion} below, the proxy data will be separable under certain conditions.

As before, suppose the learner presents the classifier $x \mapsto \sign(y^\top x + b - 2\|y\|_*/c)$ to the agent. In response the agent with true features $A$ reveals manipulated features $\tilde{r}(A,y,b-2\|y\|_*/c)$. From \eqref{eq:agent-response-offset}, we compute
\begin{align}
	\frac{y^\top \tilde{r}(A,y,b-2\|y\|_*/c) + b}{\|y\|_*} &= \begin{cases}
		\frac{2}{c}, & \text{if } 0 \leq \frac{y^\top A+b}{\|y\|_*} < \frac{2}{c}\\
		\frac{y^\top A+b}{\|y\|_*}, &\text{otherwise}.
	\end{cases}\label{eq:y-transpose-response}
\end{align}
This shows that whenever we see that $\frac{y^\top \tilde{r}(A,y,b-2\|y\|_*/c)+b}{\|y\|_*} = 2/c$, we may reasonably guess that the feature vector has likely been manipulated. (The exception, of course, is when $(y^\top A+b)/\|y\|_* = 2/c$, in which case $A = \tilde{r}(A,y,b-2\|y\|_*/c)$.) Furthermore, the predicted label is $\phlbl(\tilde{r}(A,y,b-2\|y\|_*/c),y,b-2\|y\|_*/c) = \sign(0)=+1$. When $\lbl(A) = +1$, the manipulation does \emph{not} make the prediction incorrect, but when $\lbl(A)=-1$, the prediction is incorrect.
In order to correct for the manipulation in the case when $\lbl(A)=-1$ and given the classifier $x \mapsto \sign(y^\top x + b)$ to the agent, under \cref{assum:cost-norm,assum:unique-direction}, we define the \emph{proxy data} $\tilde{s}(A, y, b)$ as follows: 
\begin{align}\label{eq:proxy-0}
	\tilde{s}(A, y,b) := \begin{cases}
		\tilde{r}(A, y, b) - \frac{2}{c} v(y), &\text{if }  \frac{y^\top \tilde{r}(A, y,b) + b}{\|y\|_*} = 0 \text{ and } \lbl(A) = -1, \\
		\tilde{r}(A, y, b), & \text{otherwise}. 
	\end{cases}
\end{align}
Combining \eqref{eq:agent-response-offset} and \eqref{eq:proxy-0}, when we include the $-2\|y\|_*/c$ offset, the proxy data becomes
\begin{align}\label{eq:proxy-offset}
	\tilde{s}\left(A, y,b-\frac{2\|y\|_*}{c}\right) := \begin{cases}
		A - \frac{y^\top A + b}{\|y\|_*} v(y), & \text{if } 0 \leq \frac{y^\top A + b}{\|y\|_*} < \frac{2}{c} \text{ and } \lbl(A) = -1, \\
		A + \left( \frac{2}{c} - \frac{y^\top A + b}{\|y\|_*}\right) v(y), & \text{if } 0 \leq \frac{y^\top A + b}{\|y\|_*} < \frac{2}{c} \text{ and } \lbl(A) = +1, \\
		A, & \text{otherwise}. 
	\end{cases}
\end{align}

In words, the proxy data is obtained by moving points that are on the decision boundary $\frac{y^\top x+b}{\|y\|_*} = \frac{2}{c}$ and with true label $-1$ back to the hyperplane $\frac{y^\top x+b}{\|y\|_*} = 0$, via the direction $-v(y)$.

\subsection{Online strategic classification problem}\label{sec:online_setting}

Below we summarize the terms defined so far, and introduce simplified notation for the agent response and proxy data when the agents are given the classifier $x \mapsto \sign(y^\top x + b - 2\|y\|_*/c)$ (to ease our notation, from now on whenever we write the classifier $(y,b)$, we will be referring to the classifier $x \mapsto \sign(y^\top x + b - 2\|y\|_*/c)$ with the additional offset of $ - 2\|y\|_*/c$):
\begin{subequations}\label{eq:notation}
	\begin{align}
		&& \text{predicted label:} & & \plbl(x,y,b) &:= \phlbl\left(x,y,b-\frac{2\|y\|_*}{c}\right) = \sign\left( y^\top x+b - \frac{2\|y\|_*}{c} \right) &\\
		&& \text{true label:} & & \lbl(A) &\in \{\pm 1\} &\\
		&& \text{direction:} & & v(y) &\in \partial \|y\|_* = \argmax_{x\in\R^d} \left\{ y^\top x : \|x\| \leq 1 \right\} & \label{eq:manipulation-direction} \\ 
		&& \text{response:} & & r(A,y,b) &:= \tilde{r}\left(A,y,b-\frac{2\|y\|_*}{c} \right)& \nonumber\\
		&& & & &= \begin{cases}
			A + \left( \frac{2}{c} - \frac{y^\top A+b}{\|y\|_*} \right) v(y), &\text{if } 0 \leq \frac{y^\top A+b}{\|y\|_*} < \frac{2}{c} \\
			A, & \text{otherwise}\\
		\end{cases} & \label{eq:manipulated}\\
		&& \text{proxy data:} & & s(A,y,b) &:= \tilde{s}\left(A,y,b-\frac{2\|y\|_*}{c} \right)& \nonumber\\
		&& & & &= \begin{cases}
			A - \frac{y^\top A + b}{\|y\|_*} v(y), & \text{if } 0 \leq \frac{y^\top A + b}{\|y\|_*} < \frac{2}{c} \text{ and } \lbl(A) = -1, \\
			A + \left( \frac{2}{c} - \frac{y^\top A + b}{\|y\|_*}\right) v(y), &  \text{if }  0 \leq \frac{y^\top A + b}{\|y\|_*} < \frac{2}{c} \text{ and } \lbl(A) = +1, \\
			A, & \text{otherwise}. 
		\end{cases} & \label{eq:proxy}
	\end{align}
\end{subequations}
The formulas in \eqref{eq:notation} are for the case $y \neq 0$. However, when $y=0$, it is easy to see that we have $r(A,y,b) = s(A,y,b) = A$.
Also, \eqref{eq:proxy} implies that for any classifier $(y,b)$ we have $s(A,y,b)=A+\lbl(A)\cdot\alpha(A,y,b)\cdot v(y)$ for some $\alpha(A,y,b)\in[0,2/c]$.

Finally, we define the online strategic classification problem.
\begin{definition}[Online strategic classification problem]\label{def:online-strategic-problem}
	At time $t=1,\ldots,T$, the following sequence of events happens:
	\begin{itemize}
		\item the learner picks $(y_t,b_t) \in \R^d \times \bbR$ and reveals classifier $x \mapsto \plbl(x,y_t,b_t)$ to the agent;
		\item an agent with true feature vector-label pair $(A_t, \lbl(A_t)) \in \cA \times \{\pm 1\}$ reveals the response $r(A_t,y_t,b_t)$;
		\item the learner classifies the agent according to $\plbl(r(A_t,y_t,b_t),y_t,b_t)$;
		\item then the agent reveals $\lbl(A_t)$.
	\end{itemize}
\end{definition}

We illustrate the response $r(A,y,b)$ and the proxy data $s(A,y,b)$ for the data points $A$  in \cref{fig:proxy-data}.

\tikzset{
	data point/.style={circle,inner sep=.7pt,text=white},
	negative/.style={data point,fill=red!40!gray}, 
	positive/.style={data point,fill=blue!40!gray},
	unmanipulated negative/.style={
		negative,fill=none,draw=red!40!gray,text=red!40!gray,densely dashed,
		inner sep=.3pt,line width=1pt
	},
	unmanipulated positive/.style={
		positive,fill=none,draw=blue!40!gray,text=blue!40!gray,densely dashed,
		inner sep=.3pt,line width=1pt
	},
	manipulated negative/.style={
		negative,draw=red!40!gray,fill=red!40!gray,fill opacity=.5,text opacity=1,
		inner sep=.3pt,line width=1pt,
},
	manipulated positive/.style={
		positive,draw=blue!40!gray,fill=blue!40!gray,fill opacity=.5,text opacity=1,
		inner sep=.3pt,line width=1pt,
},
	proxy/.style={
		data point,fill=gray 
	}
}

\begin{figure}[ht!]
	\begin{center}
	\scalebox{.9}{\begin{tikzpicture}[scale=.8]
			\draw[draw=blue!40!gray,thick,opacity=.6] (1.9,-3.3) -- (-1.84,3.3);
			\draw[draw=blue!40!gray,thick,opacity=.6,-latex] (-1.51,2.71) -- (-1,3);
			\draw[draw=none,fill=blue!60!gray,opacity=.2] (1.9,-3.3) -- (5,-3.3) -- (5,3.3) -- (-1.84,3.3) -- cycle;
			\draw[draw=none,fill=red!60!gray,opacity=.2] (1.9,-3.3) -- (-5,-3.3) -- (-5,3.3) -- (-1.84,3.3) -- cycle;
\draw[draw=gray,thick,dashed] (-1.8,-3.3) -- (2.2,3.3);
			\draw[draw=gray,thick,-latex] (2,2.97) -- (2.55,2.64);
\draw[draw=none,fill=gray,opacity=.2] (-3.8,-3.3) -- (0.2,3.3) -- (2.2,3.3) -- (-1.8,-3.3) -- (-3.8,-3.3) -- cycle;
			\foreach \Point in {(-2.64,-0.5), (-4.13,-1.6), (-4.22,0.56), (-3,2.25), (-2.13,0.85), (0.25,-2.2)}{
				\node[negative] at \Point {$\vphantom{+}-$};
			}
			\foreach \Point in {(1.03,0.06), (2.64,0.82), (2.48,-0.58), (2.91,2.06), (3.99,1.27), (3.97,-0.48), (3.64,-1.73)}{
				\node[positive] at \Point {$+$};
			}
			\foreach \name/\orig/\mani in {1/{(-2.76,-2.34)}/{(-1.675,-3)}, 2/{(-1.73,-1.57)}/{(-1.06,-1.98)}, 3/{(-1,-0)}/{(-0.165,-0.505)}}{
				\node[unmanipulated negative] (o\name) at \orig {$\vphantom{+}-$};
				\node[manipulated negative] (\name) at \mani {$\vphantom{+}-$};
				\draw[-stealth,gray,thick] (o\name) -- (\name);
			}
			\foreach \name/\orig/\mani in {4/{(0.67,2.34)}/{(1.32,1.95)}}{
				\node[unmanipulated positive] (o\name) at \orig {$+$};
				\node[manipulated positive] (\name) at \mani {$+$};
				\draw[-stealth,gray,thick] (o\name) -- (\name);
			}
			\node[yshift=.3cm,xshift=-.2cm,text=blue!40!gray] at (-1.84,3.3) {$y_*^\top x+b_*\geq0$};
			\node[yshift=.3cm,xshift=.2cm,text=gray] at (2.2,3.3) {$y^\top x+b\geq0$};
			\node[yshift=0cm,xshift=-.4cm,text=red!40!gray] at (-2.76,-2.34) {$A$};
			\node[yshift=0cm,xshift=.9cm,text=red!40!gray] at (-1.675,-3) {$r(A,y,b)$};
\end{tikzpicture}
\quad
\begin{tikzpicture}[scale=.8]
			\draw[draw=blue!40!gray,thick,opacity=.6] (1.9,-3.3) -- (-1.84,3.3);
			\draw[draw=blue!40!gray,thick,opacity=.6,-latex] (-1.51,2.71) -- (-1,3);
			\draw[draw=none,fill=blue!60!gray,opacity=.2] (1.9,-3.3) -- (5,-3.3) -- (5,3.3) -- (-1.84,3.3) -- cycle;
			\draw[draw=none,fill=red!60!gray,opacity=.2] (1.9,-3.3) -- (-5,-3.3) -- (-5,3.3) -- (-1.84,3.3) -- cycle;
\draw[draw=gray,thick,dashed] (-1.8,-3.3) -- (2.2,3.3);
			\draw[draw=gray,thick,-latex] (2,2.97) -- (2.55,2.64);
\draw[draw=none,fill=gray,opacity=.2] (-3.8,-3.3) -- (0.2,3.3) -- (2.2,3.3) -- (-1.8,-3.3) -- (-3.8,-3.3) -- cycle;
			\foreach \Point in {(-2.64,-0.5), (-4.13,-1.6), (-4.22,0.56), (-3,2.25), (-2.13,0.85), (0.25,-2.2)}{
				\node[negative] at \Point {$\vphantom{+}-$};
			}
			\foreach \Point in {(1.03,0.06), (2.64,0.82), (2.48,-0.58), (2.91,2.06), (3.99,1.27), (3.97,-0.48), (3.64,-1.73)}{
				\node[positive] at \Point {$+$};
			}
			\foreach \name/\proxy/\mani in {1/{(-3.095,-2.14)}/{(-1.675,-3)}, 2/{(-2.48,-1.12)}/{(-1.06,-1.98)}, 3/{(-1.585,0.354)}/{(-0.165,-0.505)}}{
				\node[manipulated negative] (\name) at \mani {$\vphantom{+}-$};
				\node[proxy] (p\name) at \proxy {$\vphantom{+}-$};
				\draw[-stealth,gray,thick] (\name) -- (p\name);
			}
			\foreach \name/\proxy/\mani in {4/{(-0.095,2.81)}/{(1.32,1.95)}}{
				\node[proxy] (\name) at \mani {$+$};
}
			\node[yshift=.3cm,xshift=-.2cm,text=blue!40!gray] at (-1.84,3.3) {$y_*^\top x+b_*\geq0$};
			\node[yshift=.3cm,xshift=.2cm,text=gray] at (2.2,3.3) {$y^\top x+b\geq0$};
			\node[yshift=0cm,xshift=.9cm,text=red!40!gray] at (-1.675,-3) {$r(A,y,b)$};
			\node[yshift=0cm,xshift=-.9cm,text=red!40!gray] at (-3.095,-2.14) {$s(A,y,b)$};
\end{tikzpicture}
	}
	\end{center}
	\caption{
		Illustration for the response $r(A,y,b)$ and proxy $s(A,y,b)$. 
In both figures, the solid blue line represents the maximum margin hyperplane $y_*^\top x + b_* = 0$
		which separates the positive and negative agents. 
		The blue points have label $+1$ while red points have label $-1$. The dashed gray line represents the hyperplane $y^\top x + b = \frac2c \|y\|_*$, which is the decision boundary of the classifier $x\mapsto \sign(y^\top x + b - \frac2c \|y\|_*) = \plbl(x,y,b)$ being presented to the agents.
		The gray shaded region represents the $0 \leq y^\top x + b < \frac2c \|y\|_*$ region, i.e., it is the region in which the agents will manipulate their feature vectors.
In the left figure, solid-colored and dashed points denote the true feature vectors $A$ of agents. The solid points fall outside the gray region, thus those agents will not manipulate their feature vectors. Dashed points fall inside the gray region, thus the agents in this region will manipulate to the light-shaded colored points (i.e., $r(A,y,b)$) which lie on the dashed gray line $y^\top x + b = \frac2c \|y\|_*$. Notice that the true label of the three shaded red points is $-1$, however by manipulating to the dashed gray line, they will instead be classified with an incorrect label of $+1$. On the other hand, the dashed blue point has true label $+1$, and if it does not manipulate, it would be given an incorrect label of $-1$; however, by manipulating to the light-shaded blue point, it is classified with its correct $+1$ label.
In the right figure, the solid and light-shaded colored points show the agents' response vectors; note that this is all that is available to the learner, as the true feature vectors (dashed points on the left figure) are not revealed. The learner then constructs the proxy data points, which are the solid gray points. Notice that the positive manipulated point (on the blue side) is the same as the response vector, but the three manipulated negative points (on the red side) are shifted back to the line $y^\top x + b = 0$.
}

\label{fig:proxy-data}
\end{figure}
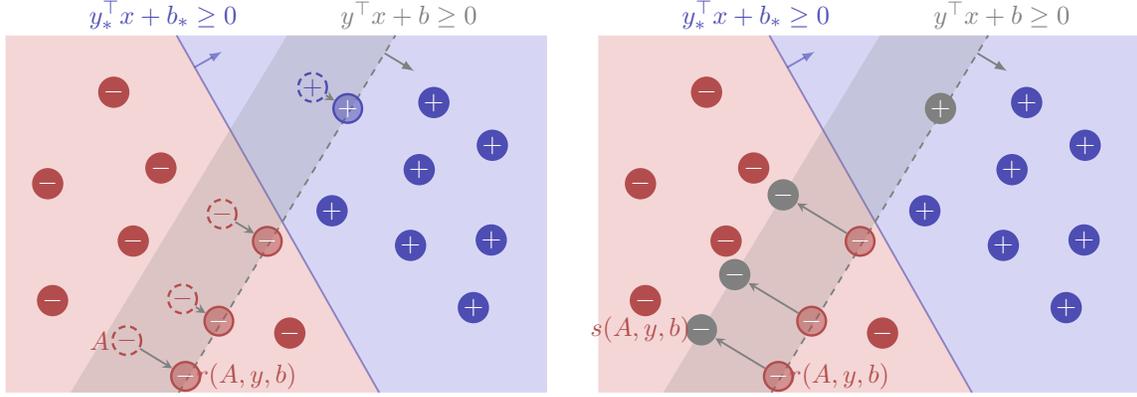

In order to provide guarantees for algorithms solving this strategic classification  problem, we make a mild boundedness assumption on the feature vectors.
\begin{assumption}[Boundedness]\label{assum:bounded}
	The support set for features is bounded, i.e., $\sup_{A \in \cA} \|A\|_2 < \infty$.
\end{assumption}

The following quantities appear in our analysis (which are all finite under \cref{assum:bounded}):
\begin{subequations}\label{eq:proxy-bounded}
\begin{align}
C_{\|\cdot\|} &\coloneqq\max_{y \in \bbR^d}\|v(y)\|_2, \quad \text{where $\|\cdot\|$ is the norm in \cref{assum:cost-norm}}\\
D &:= \sup_{A \in \cA} \|A\|_2, \quad \tilD \coloneqq D+\frac{2}{c}C_{\|\cdot\|}\\
D^{\pm} &:= \frac{1}{2} \sup_{A,A' \in \cA} \|A - A'\|_2, \quad \tilD^{\pm} := D^{\pm} + \frac{2}{c} C_{\|\cdot\|}\\
D^{+} &:= \frac{1}{2} \sup_{A,A' \in \cA^+} \|A - A'\|_2, \quad \tilD^- := D^{-} + \frac{2}{c} C_{\|\cdot\|}\\
D^{-} &:= \frac{1}{2} \sup_{A,A' \in \cA^-} \|A - A'\|_2, \quad \tilD^+ := D^{+} + \frac{2}{c} C_{\|\cdot\|}, \quad \barD := \max\{\widetilde{D}^+, \widetilde{D}^-\}.
\end{align}
\end{subequations}
Notice that $D \geq \max\{D^{\pm},D^+,D^-\}$ and $\tilD \geq \max\{\tilD^{\pm},\tilD^+,\tilD^-\}$.
From \cref{eq:proxy,assum:bounded}, we have that for any two classifiers $(y,b)$ and $(y',b')$, the following bounds hold for the proxy data:
$\sup_{A\in\cA}\|s(A,y,b)\|_2 \leq \tilD$, $\sup_{A,A'\in\cA} \|s(A,y,b) - s(A',y',b')\|_2 \leq \tilD^{\pm}$, $\sup_{A,A'\in\cA^+} \|s(A,y,b) - s(A',y',b')\|_2 \leq \tilD^+$, and $\sup_{A,A'\in\cA^-} \|s(A,y,b) - s(A',y',b')\|_2 \leq \tilD^-$.

\section{Algorithms}
\label{sec:algorithms}

We now introduce algorithms for the online strategic classification problem (\cref{def:online-strategic-problem}), all of which make use of the proxy data \eqref{eq:proxy}. Before doing so, we provide some useful properties of the proxy data that the algorithms will utilize.

We start with an important observation that the proxy data captures the ``correctness'' of a given classifier: a response $r(A,y,b)$ is misclassified by $x \mapsto \sign(y^\top x + b - 2\|y\|_*/c) = \plbl(x,y,b)$ if and only if its proxy $s(A,y, b)$ is misclassified by $x \mapsto \sign(y^\top x + b) = \phlbl(x,y,b)$.
\begin{lemma}\label{lem:classifier-proxy-inner-product}
Given $(y,b)$, agent $A \in \cA$ and response $r(A,y,b)$, consider a prediction of the agent's label $\plbl(r(A,y,b),y,b)$. 
	If the prediction is correct, i.e., $\plbl(r(A,y,b),y,b)=\lbl(A)$, then we have $\lbl(A) [y^\top s(A,y,b)+b] \geq 0$; otherwise, $\lbl(A) [y^\top s(A,y,b) + b] \leq 0$. 
\end{lemma}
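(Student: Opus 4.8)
The plan is to reduce the statement to a short finite case analysis governed by the value of the normalized score $\sigma := (y^\top A + b)/\|y\|_*$ (in the main case $y \neq 0$), feeding in the piecewise formulas \eqref{eq:manipulated} for $r(A,y,b)$ and \eqref{eq:proxy} for $s(A,y,b)$ together with the identity $y^\top v(y) = \|y\|_*$ from \cref{assum:unique-direction}. First I would record two elementary computations. Evaluating the response score gives $y^\top r(A,y,b) + b = \frac{2}{c}\|y\|_*$ whenever $0 \le \sigma < \frac{2}{c}$ and $y^\top r(A,y,b)+b = y^\top A + b$ otherwise; hence $\plbl(r(A,y,b),y,b) = \sign\!\big(y^\top r(A,y,b)+b-\frac{2}{c}\|y\|_*\big)$ equals $+1$ whenever $\sigma \ge 0$ (using $\sign(0) := +1$ both in the manipulation band, where the argument is exactly $0$, and in the band $\sigma \ge \frac{2}{c}$, where it is nonnegative) and equals $-1$ when $\sigma < 0$. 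Similarly, evaluating the proxy score gives $y^\top s(A,y,b)+b = 0$ when $0 \le \sigma < \frac{2}{c}$ and $\lbl(A) = -1$; $y^\top s(A,y,b)+b = \frac{2}{c}\|y\|_*$ when $0 \le \sigma < \frac{2}{c}$ and $\lbl(A) = +1$; and $y^\top s(A,y,b)+b = y^\top A + b$ otherwise.

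Next I would split on the three regimes of $\sigma$. When $\sigma < 0$, both $r(A,y,b) = s(A,y,b) = A$, the prediction is $-1$, and $y^\top s(A,y,b)+b = y^\top A + b < 0$; so a correct prediction forces $\lbl(A) = -1$, making $\lbl(A)[y^\top s(A,y,b)+b] = -(y^\top A+b) > 0$, while an incorrect prediction forces $\lbl(A) = +1$, making it $y^\top A + b < 0$. When $\sigma \ge \frac{2}{c}$, again $r(A,y,b) = s(A,y,b) = A$, but the prediction is $+1$ and $y^\top s(A,y,b)+b = y^\top A + b \ge \frac{2}{c}\|y\|_* \ge 0$; a correct prediction gives $\lbl(A)=+1$ and the quantity is $\ge 0$, an incorrect one gives $\lbl(A)=-1$ and the quantity is $\le -\frac{2}{c}\|y\|_* \le 0$. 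In the manipulation band $0 \le \sigma < \frac{2}{c}$ the prediction is always $+1$, so correctness is exactly the condition $\lbl(A) = +1$: if $\lbl(A) = +1$ then $\lbl(A)[y^\top s(A,y,b)+b] = \frac{2}{c}\|y\|_* \ge 0$, and if $\lbl(A) = -1$ (incorrect) then $\lbl(A)[y^\top s(A,y,b)+b] = 0 \le 0$.

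Finally I would handle the degenerate case $y = 0$: here $r(A,y,b) = s(A,y,b) = A$, $\plbl(A,0,b) = \sign(b)$, and $y^\top s(A,y,b)+b = b$, so $\lbl(A)\cdot b$ equals $|b| \ge 0$ when $\lbl(A) = \sign(b)$ (correct) and $-|b| \le 0$ otherwise. I do not expect a real obstacle — the argument is pure bookkeeping once the two score formulas are in hand — but the point that requires the most care is the convention $\sign(0) := +1$: it is what pins the prediction $\plbl(r(A,y,b),y,b)$ to $+1$ throughout the manipulation band (where $y^\top r(A,y,b)+b-\frac{2}{c}\|y\|_*$ vanishes identically) and on the boundary $\sigma = \frac{2}{c}$, and it must line up with the $\lbl(A)=+1$ branch of the proxy definition so that the two sub-cases of the manipulation band correspond precisely to ``correct'' and ``incorrect'' as claimed.
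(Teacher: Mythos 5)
Your proof is correct and follows essentially the same route as the paper's: the same two score computations for $y^\top r(A,y,b)+b$ and $y^\top s(A,y,b)+b$, followed by the same three-way case split on $(y^\top A+b)/\|y\|_*$ and the separate treatment of $y=0$. Your explicit handling of the boundary $\sigma = 2/c$ via the $\sign(0)=+1$ convention is a slightly more careful bookkeeping of a point the paper glosses over, but it is not a different argument.
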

\begin{proof}[Proof of \cref{lem:classifier-proxy-inner-product}]
	First, consider the case when $y = 0$. In this case, $r(A,y,b) = s(A,y,b) = A$, and $\plbl(r(A,y,b),y,b) = \sign(b)$. If $\sign(b) = \lbl(A)$, then we have $\lbl(A)(y^\top s(A,y,b)+b) = \lbl(A) b \geq 0$. On the other hand, if $\sign(b) \neq \lbl(A)$, we have $\lbl(A) (y^\top s(A,y,b) + b) = \lbl(A) b \leq 0$. This proves the result when $y = 0$.
	
	Now, we consider $y \neq 0$. From \eqref{eq:manipulated}, we have 
	\begin{align}
		y^\top \replace{\tilde{r}\left( A,y,b-\frac{2\|y\|_*}{c} \right)}{r(A,y,b)} + b 
		&= \begin{cases}
			y^\top \left[A + \left(\frac{2}{c}-\frac{y^\top A+b}{\|y\|_*}\right)v(y)\right] + b, & \text{if } 0 \leq \frac{y^\top A+b}{\|y\|_*} < \frac{2}{c}, \\
			y^\top A + b, & \text{otherwise} 
		\end{cases}\notag\\
		&= \begin{cases}
			\frac2c \|y\|_*, & \text{if } 0 \leq \frac{y^\top A+b}{\|y\|_*} < \frac{2}{c}, \\
			y^\top A + b, & \text{otherwise}, 
		\end{cases}\label{eq:manipulated_product}
	\end{align}
	where the second equality follows as $y^\top v(y) = \|y\|_*$.  

	In addition, from \eqref{eq:proxy}, we obtain
	\begin{align}
	y^\top s(A,y,b) + b 
	&=\begin{cases}
		y^\top A +b - \frac{y^\top A + b}{\|y\|_*} y^\top v(y), & \text{if } 0 \leq \frac{y^\top A + b}{\|y\|_*} < \frac{2}{c} \text{ and } \lbl(A) = -1, \\
		y^\top A + b + \left( \frac{2}{c} - \frac{y^\top A + b}{\|y\|_*}\right) y^\top v(y), &  \text{if }  0 \leq \frac{y^\top A + b}{\|y\|_*} < \frac{2}{c} \text{ and } \lbl(A) = +1, \\
		y^\top A +b, & \text{otherwise} 
	\end{cases} \notag \\
	&=\begin{cases}
		0, & \text{if } 0 \leq \frac{y^\top A + b}{\|y\|_*} < \frac{2}{c} \text{ and } \lbl(A) = -1, \\
	 \frac{2}{c} \|y\|_*, &  \text{if }  0 \leq \frac{y^\top A + b}{\|y\|_*} < \frac{2}{c} \text{ and } \lbl(A) = +1, \\
		y^\top A +b, & \text{otherwise} .
	\end{cases} \label{eq:proxy_product}		
	\end{align}

	Now, suppose $0 \leq \frac{y^\top A + b}{\|y\|_*} < \frac{2}{c}$. Then, we deduce from \eqref{eq:manipulated_product} that $y^\top r(A,y,b) + b=\frac2c\|y\|_*>0$, so $\plbl(r(A,y,b),y,b) = \sign(0) = +1$. Also, from \eqref{eq:proxy_product} we deduce that $y^\top s(A,y,b) + b\ge0$. Hence, if the prediction is correct, i.e., $\plbl(r(A,y,b),y,b)=\lbl(A)$, then we have $\lbl(A)=1$ as well as $\lbl(A) [y^\top s(A,y,b)+b] \geq 0$; otherwise, $\lbl(A) [y^\top s(A,y,b) + b] \leq 0$, as desired. 
	
	Suppose $\frac{y^\top A + b}{\|y\|_*} < 0$. Then from \eqref{eq:manipulated_product} we have $y^\top r(A,y,b) + b = y^\top A + b < 0$, and from \eqref{eq:proxy_product} we have $y^\top s(A,y,b) + b = y^\top A + b < 0$. Notice also that $\plbl(r(A,y,b),y,b) = -1$. If $\lbl(A) = 1$, then the prediction is incorrect, and $\lbl(A)(y^\top s(A,y,b)+b) < 0$. If $\lbl(A) = -1$, then the prediction is correct, and $\lbl(A) (y^\top s(A,y,b)+b) > 0$. This proves the result in this case.
	
	Finally, suppose that $\frac{y^\top A + b}{\|y\|_*} > \frac2c$. Then from \eqref{eq:manipulated_product} we have $y^\top r(A,y,b) + b = y^\top A + b > \frac2c \|y\|_*$, and from \eqref{eq:proxy_product} we have $y^\top s(A,y,b) + b = y^\top A + b > 0$. Notice also that $\plbl(r(A,y,b),y,b) = +1$. If $\lbl(A) = 1$, then the prediction is correct, and $\lbl(A)(y^\top s(A,y,b)+b) > 0$. If $\lbl(A) = -1$, then the prediction is incorrect, and $\lbl(A) (y^\top s(A,y,b)+b) < 0$. This proves the result.
\end{proof}

Another important property is that the margin of proxy data $s(A,y,b)$ on a classifier $x \mapsto \sign(\bar{y}^\top x + \bar{b})$ is lower bounded by the margin of the true features $A$ on the same classifier, as long as $y$ and $\bar{y}$ are \emph{suitably aligned}.
	\begin{lemma}\label{lem:proxy-inclusion}
Let $(y,b),(\bar{y},\bar{b}) \in \bbR^d \times \bbR$ be such that $\bar{y}^\top v(y) \geq 0$. Then, for all $A \in \cA$,
		\[ \lbl(A) \cdot \left( \bar{y}^\top \replace{\tilde{s}\left( A,y,b - \frac{2\|y\|_*}{c} \right)}{s(A,y,b)} + \bar{b} \right) \geq \lbl(A) \cdot \left( \bar{y}^\top A + \bar{b} \right). \]
	\end{lemma}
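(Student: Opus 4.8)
The plan is to reduce the claimed inequality to a single manifestly nonnegative product, using the uniform structural form of the proxy data. The crucial fact, recorded in the excerpt immediately after \eqref{eq:proxy}, is that for every classifier $(y,b)$ and every $A \in \cA$ the proxy can be written as $s(A,y,b) = A + \lbl(A)\cdot\alpha(A,y,b)\cdot v(y)$ for some scalar $\alpha(A,y,b) \in [0,2/c]$. First I would re-establish this representation by inspecting the three branches of \eqref{eq:proxy}: in the manipulation branch with $\lbl(A) = -1$ one reads off $\alpha = (y^\top A + b)/\|y\|_* \in [0,2/c)$; in the manipulation branch with $\lbl(A) = +1$ one reads off $\alpha = 2/c - (y^\top A + b)/\|y\|_* \in (0,2/c]$; and in the remaining branch $s(A,y,b) = A$, i.e.\ $\alpha = 0$. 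In all three branches $\alpha(A,y,b) \geq 0$, which is the only property the argument needs. Recall that here $s(A,y,b)$ is precisely $\tilde{s}(A,y,b-2\|y\|_*/c)$ by the notational convention set in \eqref{eq:notation}, so proving the inequality for $s(A,y,b)$ proves it as stated.

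Next I would form the difference of the two sides and substitute the representation. Writing $\ell := \lbl(A) \in \{\pm 1\}$ and $\alpha := \alpha(A,y,b) \geq 0$, we have $s(A,y,b) - A = \ell\,\alpha\,v(y)$, hence
\[ \ell\left( \bar{y}^\top s(A,y,b) + \bar{b} \right) - \ell\left( \bar{y}^\top A + \bar{b} \right) = \ell\,\bar{y}^\top\!\left( s(A,y,b) - A \right) = \ell^2 \alpha\, \bar{y}^\top v(y) = \alpha\, \bar{y}^\top v(y), \]
where the last equality uses $\ell^2 = 1$. Since $\alpha \geq 0$ by the first step and $\bar{y}^\top v(y) \geq 0$ by hypothesis, the right-hand side is nonnegative, which is exactly the assertion. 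The degenerate case $y = 0$ is covered automatically: by \cref{assum:unique-direction} we have $v(0) = 0$, so $s(A,0,b) = A$ and both sides coincide with equality.

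The entire argument is a one-line computation once the sign-consistent representation of the proxy is in hand, so there is no genuine obstacle; the only conceptual point that requires care — and the real content of the lemma — is recognizing that, regardless of the label and of which of the three cases in \eqref{eq:proxy} applies, the proxy always displaces $A$ along the \emph{same} direction $\ell\,v(y)$ by a \emph{nonnegative} amount. It is precisely this uniform displacement that lets the single alignment hypothesis $\bar{y}^\top v(y) \geq 0$ control the sign of the change in $\bar{y}$-margin simultaneously over both labels and all cases; without it the factor $\ell$ in the displacement would not cancel cleanly against the $\ell$ multiplying the margins.
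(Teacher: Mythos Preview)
Your proof is correct and follows essentially the same approach as the paper: both use the representation $s(A,y,b)=A+\lbl(A)\cdot\alpha(A,y,b)\cdot v(y)$ with $\alpha(A,y,b)\in[0,2/c]$ recorded after \eqref{eq:proxy}, substitute it, and observe that the difference of the two sides equals $\alpha(A,y,b)\,\bar{y}^\top v(y)\geq 0$. The only cosmetic differences are that the paper dispatches the degenerate cases $y=0$ and $\bar{y}=0$ upfront, whereas you handle $y=0$ at the end (and $\bar{y}=0$ is absorbed into the general computation).
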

	\begin{proof}[Proof of \cref{lem:proxy-inclusion}]
		Note that when $y=0$, $s(A,y,b) = A$ and $v(y)=0$ by definition, so the inequality trivially holds. Furthermore, when $\bar{y}=0$, $\bar{y}^\top v(y) = \bar{y}^\top s(A,y,b) = \bar{y}^\top A = 0$, so the inequality again trivially holds. We thus can consider $y,\bar{y} \in \bbR^d \setminus \{0\}$. 
		
		By the discussion right after \eqref{eq:proxy}, we have for any $A \in \cA$, $\replace{\tilde{s}(A,y,b-2\|y\|_*/c)}{s(A,y,b)}=A+\lbl(A)\cdot\alpha(A,y,b)\cdot v(y)$ for some $\alpha(A,y,b)\in[0,2/c]$. Thus, 
		\begin{align*}
			\lbl(A) \cdot \left( \bar{y}^\top \replace{\tilde{s}\left( A,y,b - \frac{2\|y\|_*}{c} \right)}{s(A,y,b)} + \bar{b} \right) &=\lbl(A) \cdot (\bar{y}^\top A + \bar{b}) + \alpha(A,y,b)\cdot \bar{y}^\top v(y)\\
			&\geq \lbl(A) \cdot (\bar{y}^\top A + \bar{b}),
		\end{align*}
		where the inequality follows from the premise of the lemma that $\bar{y}^\top v(y) \geq 0$.
	\end{proof}
	The assumption that $\bar{y}^\top v(y) \geq 0$ is crucial for \cref{lem:proxy-inclusion}; if removed, the result no longer holds.
	As an immediate corollary of \cref{lem:proxy-inclusion}, whenever the classifier $x \mapsto \sign(\bar{y}^\top x + \bar{b})$ separates all agents $A \in \cA$, it will also separate the proxy data $\replace{\tilde{s}\left( A,y,b - 2\|y\|_*/c \right)}{s(A,y,b)}$ generated based on $(y,b)$ as long as $y$ satisfies $\bar{y}^\top v(y) \geq 0$.
	\begin{corollary}\label{cor:proxy-inclusion}
		Let $(y,b),(\bar{y},\bar{b}) \in \bbR^d \times \bbR$ be such that $\bar{y}^\top v(y) \geq 0$. 
		If for some $\rho>0$ we have $\lbl(A) (\bar{y}^\top A + \bar{b}) \geq \rho$ for all $A\in\cA$, then
		\[ \lbl(A) (\bar{y}^\top s(A,y,b) + \bar{b}) \geq \rho, \quad  \forall A \in \cA.\]
		In particular, this holds when $(\bar{y},\bar{b}) = (y_*,b_*)$ from \cref{assum:margin} and for every $y$ such that $y_*^\top v(y)\ge0$, with $\rho = d_* \|y_*\|$: since $\lbl(A) (y_*^\top A + b_*)/\|y_*\|_* \geq d_*$, thus $\lbl(A) \cdot \left(y_*^\top s(A,y,b)+b_*\right)/\|y_*\|_* \geq d_*$ for all $A \in \cA$, i.e., the proxy points $s(A,y,b)$ are separable by the classifier $x \mapsto \sign(y_*^\top x + b_*)$ with a margin of at least $d_*$.
		
\end{corollary}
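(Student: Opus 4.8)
The plan is to obtain this as a direct consequence of \cref{lem:proxy-inclusion}. Fix an arbitrary $A \in \cA$. The hypothesis $\bar{y}^\top v(y) \geq 0$ is precisely the premise of \cref{lem:proxy-inclusion}, so applying that lemma gives
\[ \lbl(A) \cdot \left( \bar{y}^\top s(A,y,b) + \bar{b} \right) \geq \lbl(A) \cdot \left( \bar{y}^\top A + \bar{b} \right). \]
Chaining this with the assumed lower bound $\lbl(A)(\bar{y}^\top A + \bar{b}) \geq \rho$ yields $\lbl(A)(\bar{y}^\top s(A,y,b) + \bar{b}) \geq \rho$, and since $A \in \cA$ was arbitrary the claim follows for all $A \in \cA$.

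For the ``in particular'' statement I would specialize $(\bar{y},\bar{b}) = (y_*,b_*)$ from \cref{assum:margin}. The margin assumption states $\lbl(A)(y_*^\top A + b_*)/\|y_*\|_* \geq d_*$ for all $A \in \cA$, i.e.\ $\lbl(A)(y_*^\top A + b_*) \geq d_* \|y_*\|_*$, so the hypothesis of the first part holds with $\rho = d_*\|y_*\|_*$. Provided the alignment condition $y_*^\top v(y) \geq 0$ holds, the first part then gives $\lbl(A)(y_*^\top s(A,y,b) + b_*) \geq d_*\|y_*\|_*$ for all $A \in \cA$; dividing through by $\|y_*\|_* > 0$ recovers $\lbl(A)(y_*^\top s(A,y,b)+b_*)/\|y_*\|_* \geq d_*$, which is the stated separation of the proxy points with margin at least $d_*$.

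I do not expect any real obstacle here: the entire content is carried by \cref{lem:proxy-inclusion}, whose proof in turn rests on the structural identity $s(A,y,b) = A + \lbl(A)\,\alpha(A,y,b)\,v(y)$ with $\alpha(A,y,b) \in [0,2/c]$ noted right after \eqref{eq:proxy}, so the argument reduces to a one-line chaining of inequalities together with the specialization above. The only point that merits emphasis is \emph{why} the alignment hypothesis $\bar{y}^\top v(y) \geq 0$ cannot be dropped: the proxy correction displaces $A$ along the direction $v(y)$ by a nonnegative amount in the $\lbl(A)$ sense, so it can only increase the margin measured against $\bar{y}$ when $\bar{y}$ makes a nonnegative inner product with $v(y)$; without that condition the displacement could instead erode the margin, which is exactly what makes maintaining $y_*^\top v(y)\ge 0$ (equivalently, keeping iterates suitably aligned with $y_*$) a design requirement for the algorithms in \cref{sec:algorithms}.
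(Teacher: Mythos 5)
Your proof is correct and matches the paper's approach exactly: the paper presents this as an immediate consequence of \cref{lem:proxy-inclusion}, chaining the lemma's inequality with the assumed margin bound and then specializing to $(y_*,b_*)$ with $\rho = d_*\|y_*\|_*$. Nothing further is needed.
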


The algorithms we will present next, and their subsequent analysis in \cref{sec:theory}, rely on a certain margin function: given sets $\widetilde{\cA}^+, \widetilde{\cA}^-$ of feature vectors, we define
\begin{subequations}\label{eq:h-general}
\begin{align}
	h(y,b;\widetilde{\cA}^+, \widetilde{\cA}^-) &:= \min\left\{ \min_{x \in \widetilde{\cA}^+} \left\{ y^\top x + b \right\}, \min_{x \in \widetilde{\cA}^-} \left\{ -y^\top x - b \right\} \right\}\label{eq:h-general-def}\\
	&= \frac{1}{2} \left( \min_{x \in \widetilde{\cA}^+} y^\top x - \max_{x \in \widetilde{\cA}^-} y^\top x \right) - \left| b + \frac{1}{2} \left(\min_{x \in \widetilde{\cA}^+} y^\top x + \max_{x \in \widetilde{\cA}^-} y^\top x\right) \right|.\label{eq:h-general-alternate}
\end{align}
\end{subequations}
Note that \eqref{eq:h-general-alternate} follows from using the transformation $\min\{u,v\} = \frac{1}{2} (u + v - |u-v|)$ for any $u,v\in\R$.
By its definition in \eqref{eq:h-general-def}, it is easy to see that $h(y,b;\widetilde{\cA}^+, \widetilde{\cA}^-)$ is jointly concave in $(y,b)$ for any given pair of sets $\widetilde{\cA}^+, \widetilde{\cA}^-$.

\subsection{Strategic maximum margin algorithm}\label{sec:smm}

Our first algorithm is an adaptation of the well-known maximum margin classifier to the strategic setting. 
Recall that by \cref{cor:proxy-inclusion}, the proxy data $s(A,y,b)$ is separable by the classifier $x \mapsto \sign(y_*^\top x + b_*)$ when $y_*^\top v(y) \geq 0$; \cref{alg:data-driven} is based on exploiting this property explicitly.
Specifically, in \cref{alg:data-driven},
we will guarantee $y_*^\top v(y_t) \geq 0$ holds for all $y_t$  via a proper initialization and subsequent updates. This will then guarantee that the proxy data points \sloppy $(s(A_1,y_1,b_1),\lbl(A_1)),\ldots,(s(A_t,y_t,b_t),\lbl(A_t))$ are separable by the classifier $x \mapsto \sign(y_*^\top x + b_*)$. Therefore, \cref{alg:data-driven} then computes $y_{t+1}$ by solving \eqref{eq:data-driven} below, which is simply a maximum margin classification problem on the proxy data revealed so far.

We present the initialization scheme for \cref{alg:data-driven} in \cref{alg:initialization}, which is specifically designed to find a $(y_1,b_1)$ that satisfies $y_*^\top v(y_1) \geq 0$ under \cref{assum:margin}. This holds due to a technical property of maximum margin classifiers (\cref{lem:margin-prod}) that we prove in \cref{sec:theory}; see \cref{lem:y_star-y-nonnegative}. Furthermore, the initialization procedure given in~\cref{alg:initialization} will make at most $2$ mistakes.
\setcounter{algorithm}{-1}
\begin{algorithm}[ht]
	\caption{Initialization scheme.}
	\label{alg:initialization}
	\begin{algorithmic}
		\State Set $(y, b)=(0,1)$ and $\widetilde{\cA}_0^+ = \widetilde{\cA}_0^- = \emptyset$.
		\While{$\widetilde{\cA}_0^+ = \emptyset$ or $\widetilde{\cA}_0^- = \emptyset$}
			\State Present $(y,b)$ to agent and receive response $r(A,y,b)=A$ (since $y=0$).
			\State Predict $\plbl(A,y,b)=\sign(b)$ and receive $\lbl(A)$.
			\State Add $A$ to $\widetilde{\cA}_0^+$ or $\widetilde{\cA}_0^-$ according to its label.
			\State If $\widetilde{\cA}_0^+ = \emptyset$, set $b = -1$. Otherwise, if $\widetilde{\cA}_0^- = \emptyset$, set $b=+1$.
		\EndWhile
		\State \Return $\widetilde{\cA}_0^+$ and $\widetilde{\cA}_0^-$ and the optimal solution $(y_1,b_1)$ of the problem
		\begin{align*}\label{eq:data-driven-init}
			d_1 := \max_{\|y\|_* \leq 1, b \in \bbR} h(y,b; \widetilde{\cA}_0^+, \widetilde{\cA}_0^-). \tag{${\rm P}_0$}
		\end{align*}
	\end{algorithmic}
\end{algorithm}

\begin{algorithm}[ht]
  \caption{Strategic max-margin (SMM) algorithm.
  }
  \label{alg:data-driven}
  \begin{algorithmic}
    \State Use \cref{alg:initialization} to obtain $\widetilde{\cA}_0^+, \widetilde{\cA}_0^-$ and $(y_1,b_1)$.
    \For{$t=1,2,\dots$}
      \State Step 1. Declare classifier $x \mapsto \plbl(x,y_t,b_t)$, and receive agent response $r(A_t,y_t,b_t)$. Predict $\plbl(r(A_t,y_t,b_t),y_t,b_t)$.
      \State Step 2. Receive $\lbl(A_t)$ and compute the proxy data $s(A_t, y_t, b_t)$. If $\lbl(A_t)=+1$, set $\widetilde{\cA}_t^+ := \widetilde{\cA}_{t-1}^+ \cup \{s(A_t,y_t,b_t)\}$, $\widetilde{\cA}_t^- := \widetilde{\cA}_{t-1}^-$. Otherwise, if $\lbl(A_t) = -1$, set $\widetilde{\cA}_t^+ := \widetilde{\cA}_{t-1}^+$, $\widetilde{\cA}_t^- := \widetilde{\cA}_{t-1}^- \cup \{s(A_t,y_t,b_t)\}$.
      \State Step 3. Compute $(y_{t+1},b_{t+1})$ by solving the following problem 
      \begin{align}
      	d_{t+1} \coloneqq \max_{\|y\|_*\leq 1, b \in \bbR} h(y,b; \widetilde{\cA}_t^+, \widetilde{\cA}_t^-).  \tag{${\rm P}_t$}\label{eq:data-driven}
\end{align}
    \EndFor
  \end{algorithmic}
\end{algorithm}

By analyzing the optimality condition of maximum margin classifiers (\cref{lem:margin-prod}), we will establish in \cref{lem:y_star-y-nonnegative}  that every classifier $(y_t,b_t)$ generated in \cref{alg:data-driven} satisfies $y_*^\top v(y_t)\ge0$, and thus through \cref{cor:proxy-inclusion}, we have $s(A_t,y_t,b_t)$ are separable for all $t$. 
Consequently, we observe that \eqref{eq:data-driven} is simply a convex formulation of the following maximum margin problem that separates the positive and negative proxy data points:
\[
 \max_{y \neq 0, b \in \bbR} \min\left\{ \min_{x \in \widetilde{\cA}_t^+} \left\{\frac{y^\top x + b}{\|y\|_*} \right\}, \min_{x \in \widetilde{\cA}_t^-} \left\{\frac{-y^\top x - b}{\|y\|_*} \right\} \right\} 
= \max_{y \neq 0, b \in \bbR} h\left( \frac{y}{\|y\|_*}, \frac{b}{\|y\|_*}; \widetilde{\cA}_t^+, \widetilde{\cA}_t^- \right). 
\]
In particular, this problem aims to find the maximum margin affine hyperplane to separate $\widetilde{\cA}^+_t$ and $\widetilde{\cA}^-_t$.
In our implementation of \cref{alg:data-driven}, we solve the convex reformulation, which has the benefit that even if the data were inseparable, it is still solvable with optimal solution $(y_{t+1},b_{t+1}) = (0,0)$. In contrast, the non-convex maximum margin problem would be infeasible if the points are not separable. 
Note that if one of $\widetilde{\cA}_t^+$ or $\widetilde{\cA}_t^-$ were empty, then \eqref{eq:data-driven} would be unbounded above. However, our initialization procedure in \cref{alg:initialization} is designed to ensure that both sets are non-empty.

\subsection{Gradient-based strategic maximum margin algorithm}\label{sec:gradient-smm}

Note that at each iteration the size of the set $\widetilde{\cA}^+_t\cup \widetilde{\cA}^-_t$ grows by 1, and as a result the size of the problem  \eqref{eq:data-driven} increases with the iteration count $t$. Thus, \eqref{eq:data-driven}  may become very expensive to solve in \cref{alg:data-driven}. In \cref{alg:data-driven-subgradient-averaging}, we aim to ease this computational cost for the case when the cost function from \cref{assum:cost-norm} is the $\ell_2$-norm.
Observe that since $\objt{y,b}$ is concave in $(y,b)$, so \eqref{eq:data-driven} is simply a (nonsmooth) concave maximization problem. One can then, for example, use the projected subgradient ascent to solve \eqref{eq:data-driven}.

On the other hand, as we see more and more data we do not expect the function $h(y,b;\widetilde{\cA}^+_{t+1},\widetilde{\cA}^-_{t+1})$ to differ significantly from the function $\objt{y,b}$.
To exploit this, we employ ideas from the Joint Estimation-Optimization (JEO) framework introduced in \citet{ahmadi_data-driven_2014,ho-nguyen_dynamic_2021}.
In particular, instead of solving~\eqref{eq:data-driven} completely at each iteration $t$, we perform only a single projected subgradient ascent update based on $\objt{y,b}$.

However, we also wish to preserve the inequality $y_*^\top v(y_t) \geq 0$ throughout our procedure, so that by \cref{cor:proxy-inclusion} the proxy data $s(A_t,y_t,b_t)$ remain separable. In order to do this, we instead perform one step of subgradient ascent on the concave function
\[ g_t(y) = \max_{b \in \bbR} \objt{y,b} = \frac{1}{2} \left( \min_{x \in \widetilde{\cA}^+} y^\top x - \max_{x \in \widetilde{\cA}^-} y^\top x \right), \]
where the second inequality is due to \eqref{eq:h-general-alternate}. We then set $b_{t+1}=\argmax_{b \in \bbR} \objt{y_{t+1},b}$.
\begin{algorithm}[ht]
	\caption{Gradient-based SMM algorithm for $\ell_2$-norm costs.}
	\label{alg:data-driven-subgradient-averaging}
	\begin{algorithmic}
		\State Input sequence of stepsizes $\{\gamma_t\}_{t \geq 1}$.
		\State Use \cref{alg:initialization} to obtain $\widetilde{\cA}_0^+, \widetilde{\cA}_0^-$ and $(y_1,b_1)$. Set $z_1 = y_1$. 
		\For{$t=1,2,\dots$}
		\State Step 1. Declare classifier $x \mapsto \plbl(x,y_t,b_t)$, and receive agent response $r(A_t,y_t,b_t)$. Predict $\plbl(r(A_t,y_t,b_t),y_t,b_t)$. \State Step 2. Receive $\lbl(A_t)$ and compute the proxy data $s(A_t, y_t, b_t)$. If $\lbl(A_t)=+1$, set $\widetilde{\cA}_t^+ := \widetilde{\cA}_{t-1}^+ \cup \{s(A_t,y_t,b_t)\}$, $\widetilde{\cA}_t^- := \widetilde{\cA}_{t-1}^-$. Otherwise, if $\lbl(A_t) = -1$, set $\widetilde{\cA}_t^+ := \widetilde{\cA}_{t-1}^+$, $\widetilde{\cA}_t^- := \widetilde{\cA}_{t-1}^- \cup \{s(A_t,y_t,b_t)\}$.
		\State Step 3a. Compute
		\begin{align*}
			s_t^+ &\in \argmin_x \left\{ z_t^\top x : x \in \widetilde{\cA}_t^+ \right\}, \quad s_t^- \in \argmax_x \left\{ z_t^\top x : x \in \widetilde{\cA}_t^- \right\}.
		\end{align*}
		\State Step 3b. Compute
		\[ z_{t+1} 
		:= \Proj_{B_{\|\cdot\|_2}}\left(z_t + \gamma_t (s_t^+ - s_t^-)\right). \]
		\State Step 3c. Compute 
		\begin{align}\label{eq:data-driven-subgradient-averaging}
		y_{t+1} := \frac{\sum_{\tau\in[t+1]}\gamma_\tau z_{\tau}}{\sum_{\tau\in[t+1]}\gamma_\tau}, \quad b_{t+1} := -\frac{1}{2} \left( \min_{x \in \widetilde{\cA}_t^+} y_{t+1}^\top x + \max_{x \in \widetilde{\cA}_t^-} y_{t+1}^\top x \right). 
		\end{align}
		\EndFor
	\end{algorithmic}
\end{algorithm}

Note that the existing JEO framework in earlier works does not directly apply to our situation, because it works with a sequence of problems $\max_{z}g(z;p_t)$, each parameterized by a vector $p_t\in\R^m$ satisfying $p_t\to p_*$ as $t\to\infty$, with the goal of solving the problem $\max_{z}g(z;p_*)$ in the limit. Although we are also dealing with a sequence of problems $\max_{y,b} \objt{y,b}$, our objective function $\objt{y,b}$ cannot be parameterized by a single vector with a fixed dimension. Rather, $\objt{y,b}$ relies on all history data $\widetilde{\cA}_t^+ \cup \widetilde{\cA}_t^-$, the size of which keeps growing as $t$ increases. Therefore, convergence of \cref{alg:data-driven-subgradient-averaging} does not immediately follows from earlier results in JEO, but requires a more careful analysis.

\subsection{Generalized strategic perceptron}\label{sec:strategic-perceptron}

In the non-strategic setting where the agent cannot manipulate, the well-celebrated perceptron algorithm can find a linear classifier after making only finitely many mistakes~\citep{rosenblatt_perceptron_1958,novikoff_convergence_1962}.  
	The counterpart of the perceptron in the strategic setting was introduced recently by \citet{ahmadi_strategic_2021}. 
We present \cref{alg:projected-perceptron}, which is a generalized and unified version of two separate \emph{strategic perceptron algorithms} of \citet[Algorithms 2 and 3]{ahmadi_strategic_2021} designed to handle the cases of $\ell_2$- or $\ell_1$-norms respectively.
Note also that \cref{alg:projected-perceptron} covers different norms in the cost function beyond just $\ell_2$- or $\ell_1$-norm.

\begin{algorithm}[ht]
	\caption{Projected strategic perceptron algorithm.}
	\label{alg:projected-perceptron}
	\begin{algorithmic}
		\State $\L \subseteq \R^d \times \bbR$ is a given closed convex cone and $\gamma>0$ is a fixed stepsize. Initialize $q_0:=\begin{pmatrix} y_0 \\ b_0 \end{pmatrix} = \begin{pmatrix}
			0 \\ 0 \end{pmatrix}$.
		\For{$t=0,1,2,\dots,T$}
		\State Declare classifier $x \mapsto \plbl(x,y_t,b_t)$, and receive agent response $r(A_t,y_t,b_t)$. 
\State Predict the label to be $\plbl(r(A_t,y_t,b_t),y_t,b_t)$.
\State Receive $\lbl(A_t)$, compute the proxy data $s(A_t, y_t, b_t)$, and set $\xi_t:=\begin{pmatrix} s(A_t, y_t, b_t) \\ 1 \end{pmatrix}$. 
		\State Update by $q_{t+1}=\begin{pmatrix} y_{t+1} \\ b_{t+1} \end{pmatrix} :=\Proj_{\L}(z_{t+1})$ where
		\begin{align}\label{eq:projected-gd-stepsize}
			z_{t+1} = \begin{cases}
				q_t + \gamma \lbl(A_t) \cdot  \xi_t, & \text{if } \plbl(r(A_t,y_t,b_t),y_t,b_t) \neq \lbl(A_t), \\
				q_t, & \text{otherwise}. 
			\end{cases}
		\end{align}
\EndFor
	\end{algorithmic}
\end{algorithm}

In the non-strategic setting, the classical perceptron algorithm seeks to learn a classifier $x \mapsto \sign(y^\top x)$ with no intercept term $b=0$, under the following assumption.
\begin{assumption}[Margin without intercept]\label{assum:margin-zero-b}
The following optimization problem
\begin{align}
d_* := \max_{y \neq 0} \min_{A \in \cA} \left\{ \lbl(A) \cdot \frac{y^\top A}{\|y\|_*} \right\}
\end{align}
has an optimal solution $y_* \neq 0$ and optimal value $d_* > 0$.
\end{assumption}	
This is extended to classifiers with possibly a nonzero intercept $b$ by the usual technique of appending an extra unit coordinate to feature vectors: $x$ becomes $\bar{x} = (x,1)$ and $\bar{y} = (y,b)$, then $\bar{y}^\top \bar{x} = y^\top x + b$. The strategic perceptron of \citet{ahmadi_strategic_2021} is also designed to find classifiers with $b=0$ using the proxy data \eqref{eq:proxy}.
However, the technique of appending a unit coordinate to find a classifier with $b \neq 0$ does not {immediately} work in the strategic setting since the new unit coordinate should not be manipulated. \citet[Section 7, Algorithm 5]{ahmadi_strategic_2021} instead propose a different technique. Suppose that \cref{assum:margin} holds, so $x \mapsto \sign(y_*^\top x + b_*)$ separates the true feature vectors $\cA$. The idea is to find a point $p$ for which $y_*^\top p + b_* \approx 0$ via a linear search between some $A_+ \in \cA^+$ and $A_- \in \cA^-$ (which can be done without explicit knowledge of $y_*,b_*$). Then, the set $\cA - \{p\} = \{A-p : A \in \cA\}$ is linearly separable by the classifier $x \mapsto \sign(y_*^\top x)$, since $y_*^\top (A-p) \approx y_*^\top A + b_*$. The linear search uses the strategic perceptron \citep[Algorithms 2 and 3]{ahmadi_strategic_2021} as a subroutine, checking whether the actual number of mistakes exceeds the theoretical bound under \cref{assum:margin-zero-b}, in which case the method will then try a different $p$.
However, this technique requires knowledge of the maximum margin $d_*$, which is in general not available to the learner.

\citet{ahmadi_strategic_2021} employs a perceptron-style update to the classifier $y_t$ when a mistake is made, i.e., $\plbl(r(A_t,y_t,0),y_t,0) \neq \lbl(A_t)$. The main innovation of \citet{ahmadi_strategic_2021} is to replace the unobservable true feature vector $A_t$ by its proxy $s(A_t,y_t,0)$ in the perceptron update rule. We utilize the same idea in \cref{alg:projected-perceptron},
but with a projection step onto a convex cone $\bbL$ applied to the perceptron update. This is aimed to take advantage of further domain information on the classifiers $y_t$ whenever present. For example, if we know that $y_*\in\R^d_+$ (\cref{assum:non-negative}), we utilize this information in \cref{alg:projected-perceptron} by taking $\L=\R^d_+\times\R$. We illustrate how exploiting such further information on the classifiers leads to better mistake bounds in \cref{prop:projection-mistake-bound}. On the other hand, by setting $\L=\R^d\times\{0\}$ in \cref{alg:projected-perceptron}, we recover the strategic perceptron algorithm of \citet[Algorithm 2]{ahmadi_strategic_2021}. 
\citet{ahmadi_strategic_2021} analyzed the strategic perceptron and its variant for when the norm $\|\cdot\|$ used in \cref{assum:cost-norm} is $\ell_2$- and weighted $\ell_1$-norm respectively. By contrast, we will consider a more general form for the norm $\|\cdot\|$, which will cover both $\ell_2$- and weighted $\ell_1$-norm as special cases. In addition, the theoretical analysis of our generalized method in \cref{sec:guarantee-perceptron} will provide conditions for which finite mistake bounds are attainable for these general norms, as well as when \cref{assum:margin} holds instead of \cref{assum:margin-zero-b}, i.e., when the intercept $b_*$ is potentially nonzero.

\section{Theoretical Guarantees}\label{sec:theory}

In this section, we present conditions under which we can provide performance guarantees for the algorithms from \cref{sec:algorithms}. We are interested in three types of performance guarantees: (i) bounds on the number of mistakes an algorithm makes; (ii) bounds on the number of data points that are manipulated in the history of the algorithm; and (iii) convergence to the (non-strategic) maximum margin classifier $(y_*,b_*)$ from \cref{assum:margin}.

We show that when certain conditions are met, \cref{alg:projected-perceptron,alg:data-driven,alg:data-driven-subgradient-averaging} are all guaranteed to make finitely many mistakes, with explicit mistake bounds for \cref{alg:projected-perceptron,alg:data-driven}. We compare these explicit bounds in \cref{rem:compare-mistake-bounds}, and point out that the bound for \cref{alg:data-driven} involves different constants in \cref{eq:proxy-bounded} which may be much smaller than the constants used in the bound for \cref{alg:projected-perceptron}. That said, if we compare the bounds with the same constants, we show that the mistake bound for \cref{alg:data-driven} is at most an extra logarithmic factor worse than than that of \cref{alg:projected-perceptron}. We also observe numerically (see \cref{sec:numerical}) that \cref{alg:data-driven} makes much fewer mistakes than \cref{alg:projected-perceptron} on both real and synthetic data.
Our finite mistake bound result for \cref{alg:projected-perceptron} extends
the results of \citet{ahmadi_strategic_2021} to general norms. However, in  \cref{sec:guarantee-examples} through several examples, we demonstrate that a finite mistake bound is not guaranteed if certain conditions on the margin and the norm are not met.

When $d_* > 2/c$ in \cref{assum:margin}, we provide an explicit bound on the number of times the agent manipulates their feature vector $A_t \neq r(A_t,y_t,b_t)$ for \cref{alg:data-driven}, and we also show that \cref{alg:data-driven-subgradient-averaging} always has finitely many manipulations, though without an explicit bound.
The assumption $d_* > 2/c$ makes sense intuitively because if $d_*$ were less than $2/c$, then even if we found the maximum margin classifier $(y_*,b_*)$, there would exist some $A \in \cA$ such that $r(A,y_*,b_*) \neq A$. More generally, when $d_* < 2/c$, we can show that for any $(y,b)$, there will exist $A \in \cA$ that is either misclassified by $(y,b)$, or $r(A,y,b) \neq A$. 

As discussed in \cref{sec:problem-setting}, if $x \mapsto \sign(y_*^\top x + b_*)$ correctly classifies all true feature vectors $A \in \cA$, then $x \mapsto \plbl(x,y_*,b_*) = \sign(y_*^\top x + b_* - 2\|y_*\|_*/c)$ will correctly predict labels of all agents when responses $r(A,y_*,b_*)$ are presented instead of true feature vectors. Furthermore, as evidenced by \cref{ex:truthful-max-margin}, there can be other classifiers besides $(y_*,b_*)$ which correctly predict agent labels in the strategic setting, but using $(y_*,b_*)$ can improve truthfulness. Thus, recovering $(y_*,b_*)$ is of interest, and we show that \cref{alg:data-driven,alg:data-driven-subgradient-averaging} converges in the limit to $(y_*,b_*)$, though we do not provide convergence rates (see \cref{thm:data-driven-convergence,thm:averaging-convergence}). We also demonstrate that \cref{alg:projected-perceptron} in general may \emph{not} converge to $(y_*,b_*)$ even when $b_*=0$ (see \cref{ex:perceptron-margin}).

We summarize the critical conditions needed for finite mistake bounds,  finite manipulation bounds, and convergence to $(y_*,b_*)$ for these algorithms in \cref{tbl:finite_mistake_assum_compare}.

\begin{table}[h]
  \begin{tabular}{lccc}
    \toprule
    Algorithm & \makecell{Condition for\\finitely many mistakes} & \makecell{Condition for\\finitely many manipulations} & \makecell{Condition for\\convergence to $(y_*,b_*)$} \\
    \midrule
    \cref{alg:data-driven} 
			& \makecell{$d_*>0$\\general norm\\(\cref{thm:margin-best_mistake-bound})} 
			& \makecell{$d_*>2/c$\\general norm\\(\cref{thm:margin-best_manipulation-bound})}
			& \makecell{$d_*>2/c$\\general norm\\(\cref{thm:data-driven-convergence})} \\\midrule
    \cref{alg:data-driven-subgradient-averaging} 
			& \makecell{$d_*>0$\\$\ell_2$-norm\\(\cref{thm:averaging-convergence})} 
			& \makecell{$d_*>2/c$\\$\ell_2$-norm\\(\cref{thm:averaging-convergence})} 
			& \makecell{$d_*>2/c$\\$\ell_2$-norm\\(\cref{thm:averaging-convergence})} \\\midrule
    \makecell{\cref{alg:projected-perceptron}\\$\L=\R^d\times\R$}
			& \makecell{$d_*>2/c$\\general norm\\(\cref{prop:mistake-bound})} 
			& \makecell{does not hold\\(\cref{ex:perceptron-margin})} 
			& \makecell{does not hold\\(\cref{ex:perceptron-margin})} \\\midrule
			\makecell{\cref{alg:projected-perceptron}\\$\L=\R^d\times\{0\}$}
			& \makecell{$d_*>0$, $b_*=0$\\$\ell_2$-norm\\(\cref{prop:mistake-bound2})} 
			& \makecell{does not hold\\(\cref{ex:perceptron-margin})} 
			& \makecell{does not hold\\(\cref{ex:perceptron-margin})} \\\midrule
			\makecell{\cref{alg:projected-perceptron}\\$\L=\R^d_+\times\R$}
			& \makecell{$d_*>0$, $y_*\in\R^d_+$\\$\ell_p$-norm\\(\cref{prop:projection-mistake-bound})} 
			& \makecell{does not hold\\(\cref{ex:perceptron-margin})} 
			& \makecell{does not hold\\(\cref{ex:perceptron-margin})} \\
    \bottomrule
  \end{tabular}
	\caption{
Conditions for theoretical guarantees of \cref{alg:data-driven,alg:data-driven-subgradient-averaging,alg:projected-perceptron}. All the guarantees are under \cref{assum:bounded} of boundedness of $\{A_t\}$. The guarantees for finite mistakes and finite manipulations are under \cref{assum:margin} of separability, while the convergence guarantees further make i.i.d.\ assumptions on $\{A_t\}$ as in \cref{assum:data-driven-stochastic}. In addition, the guarantees for finite mistakes and finite manipulations for \cref{alg:data-driven,alg:projected-perceptron} come with explicit bounds.}
	\label{tbl:finite_mistake_assum_compare}
\end{table}

\subsection{Preliminaries}\label{sec:guarantee-preliminaries}
Our analysis relies on some properties of the $h(\cdot)$ function introduced in \eqref{eq:h-general}. We will also impose a mild structural assumption on $\|\cdot\|$ to ensure that maximizers of $h(\cdot)$ are unique, and $v(\cdot)$ from \cref{assum:unique-direction} is uniquely defined.
\begin{assumption}\label{assum:strictly-convex-norm}
	The norm $\|\cdot\|$ from \cref{assum:cost-norm} and its dual are both strictly convex, so that $\|\beta w + (1-\beta) z\| < \beta \|w\| + (1-\beta) \|z\|$ and $\|\beta w + (1-\beta) z\|_* < \beta \|w\|_* + (1-\beta) \|z\|_*$ whenever $\beta \in (0,1)$ and $w,z$ are not collinear. Thus, both the norm and its dual are differentiable away from $0$, and $\grad \|y\|_* = v(y)$ for all $y \neq 0$.
\end{assumption}

\begin{lemma}\label{lem:unique-sol}
Given two sets $\widetilde{\cA}^{+}, \widetilde{\cA}^{-} \subseteq \mathbb{R}^d$,
	if the problem
	\begin{align}\label{eq:limit-problem}
			\max_{\|y\|_* \leq 1, b \in \R} h(y,b; \widetilde{\cA}^+, \widetilde{\cA}^-)
	\end{align}
	is solvable with positive optimal value, then all optimal solutions $(\hat{y},\hat{b})$ satisfy $\|\hat{y}\|_* = 1$. Furthermore, under \cref{assum:strictly-convex-norm} there is a unique optimal solution.
\end{lemma}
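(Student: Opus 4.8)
The plan is to handle the two assertions in turn. For the first --- that every optimal $(\hat y,\hat b)$ satisfies $\|\hat y\|_* = 1$ --- I would exploit positive homogeneity. Since $h(\cdot\,;\widetilde{\cA}^+,\widetilde{\cA}^-)$ is a pointwise minimum of functions that are linear (hence positively homogeneous of degree one) in $(y,b)$, it is itself positively homogeneous of degree one. Let $d>0$ be the optimal value. First note $\hat y\neq 0$: if $\hat y=0$ then $h(0,\hat b)=\min\{\hat b,-\hat b\}=-|\hat b|\le 0<d$. (Incidentally, positivity of the optimal value also forces both $\widetilde{\cA}^+$ and $\widetilde{\cA}^-$ to be nonempty, otherwise $h$ is unbounded above over the feasible set.) If $\|\hat y\|_*<1$, set $\lambda:=1/\|\hat y\|_*>1$; then $(\lambda\hat y,\lambda\hat b)$ is feasible with $h(\lambda\hat y,\lambda\hat b)=\lambda d>d$, contradicting optimality. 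Hence $\|\hat y\|_*=1$.

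For uniqueness under \cref{assum:strictly-convex-norm}, I would argue as follows. The feasible set $\{y:\|y\|_*\le 1\}\times\R$ is convex and $h$ is jointly concave (as noted after \eqref{eq:h-general}), so the set of optimizers is a convex face. Given two optimizers $(\hat y_1,\hat b_1)$ and $(\hat y_2,\hat b_2)$, their midpoint is feasible and, by concavity, also optimal, so by the first part its $y$-component has dual norm $1$. Thus $1=\big\|\tfrac12(\hat y_1+\hat y_2)\big\|_*=\tfrac12\|\hat y_1\|_*+\tfrac12\|\hat y_2\|_*$, i.e.\ equality holds in the triangle inequality for $\|\cdot\|_*$; strict convexity of $\|\cdot\|_*$ then forces $\hat y_1$ and $\hat y_2$ to be collinear, and comparing dual norms gives $\hat y_2=\pm\hat y_1$. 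The choice $-\hat y_1$ would make the midpoint's $y$-component vanish, contradicting its dual norm being $1$, so $\hat y_1=\hat y_2=:\hat y$. Finally, with $y=\hat y$ fixed, \eqref{eq:h-general-alternate} expresses $b\mapsto h(\hat y,b)$ as a constant minus $\big|b+\tfrac12(\min_{x\in\widetilde{\cA}^+}\hat y^\top x+\max_{x\in\widetilde{\cA}^-}\hat y^\top x)\big|$, which has a unique maximizer; since every $b$ keeps $(\hat y,b)$ feasible, $\hat b_1$ and $\hat b_2$ both attain this maximum, hence coincide.

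The proof is mostly routine convex analysis; the one place requiring care is the logical ordering in the uniqueness argument: the strict-convexity step only bites once we know that \emph{every} optimal solution (in particular the midpoint of two optimal solutions) has $\|\hat y\|_*=1$, which is exactly what the first part supplies --- concavity alone does not produce the equality case of the triangle inequality. A secondary subtlety, ruling out the antipodal case $\hat y_2=-\hat y_1$, is dispatched by the same normalization, and the intercept is then pinned down by the one-dimensional structure of $b\mapsto h(\hat y,b)$.
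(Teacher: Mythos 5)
Your proof is correct and follows essentially the same route as the paper's: positive homogeneity forces $\|\hat y\|_*=1$, strict convexity of the dual norm forces the $y$-components of any two optimizers to coincide, and the explicit one-dimensional formula for $b\mapsto h(\hat y,b)$ from \eqref{eq:h-general-alternate} pins down the intercept. The only (cosmetic) difference is that you derive the contradiction by applying the first part to the midpoint of two optimizers and invoking the equality case of the triangle inequality, whereas the paper rescales the convex combination $\beta(\hat y,\hat b)+(1-\beta)(\hat y',\hat b')$ to beat the optimal value; your variant also disposes of the antipodal case $\hat y_2=-\hat y_1$ slightly more explicitly.
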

\begin{proof}
Note that $\tilde{d}:=\max_{\|y\|_* \leq 1, b \in \bbR} h(y,b;\widetilde{\cA}^+, \widetilde{\cA}^-) \geq 0$ since $(y,b) = (0,0)$ is feasible. Furthermore, since $h$ is positive homogeneous in $(y,b)$, and we are assuming that the optimal value is positive, we must have that $\|\hat{y}\|_* = 1$. Otherwise, we could scale $(\hat{y}/\|\hat{y}\|_*, \hat{b}/\|\hat{y}\|_*)$ to get a solution with better objective value.

Now, we claim that $(\hat{y},\hat{b})$ is unique under \cref{assum:strictly-convex-norm}. First, note that from the structure of $h$ in \eqref{eq:h-general-alternate}, we must have
\[ \hat{b} = -\frac{1}{2} \left( \min_{x \in \widetilde{\cA}^+} \hat{y}^\top x + \max_{x \in \widetilde{\cA}^-} \hat{y}^\top x \right). \]
Let the optimal value be $\tilde{d} > 0$. If there exists an alternate solution $(\hat{y}',\hat{b}')$ then $\|\hat{y}'\|_*=1$ as well, and
\[ \hat{b}' = -\frac{1}{2} \left( \min_{x \in \widetilde{\cA}^+} (\hat{y}')^\top x + \max_{x \in \widetilde{\cA}^-} (\hat{y}')^\top x \right). \]
Since $h$ is concave, we have for any $\beta \in (0,1)$
\[ h(\beta \hat{y}+(1-\beta)\hat{y}', \beta \hat{b}+(1-\beta)\hat{b}'; \widetilde{\cA}^+, \widetilde{\cA}^-) \geq \tilde{d}. \]
Suppose $(\hat{y},\hat{b}) \neq (\hat{y}',\hat{b}')$. Then, $\hat{y} \neq \hat{y}'$ or $\hat{b} \neq \hat{b}'$. If $\hat{y} = \hat{y}'$, then by the expressions above for $\hat{b},\hat{b}'$ from \eqref{eq:h-general-alternate}, we must have $\hat{b}=\hat{b}'$. Therefore, it must be the case that $\hat{y} \neq \hat{y}'$. Thus, for any $\beta \in (0,1)$, since $\|\cdot\|_*$ is strictly convex, $\|\beta \hat{y}+(1-\beta)\hat{y}'\|_* < 1$. However, this means that we can obtain a better objective value than $\tilde{d}$ by normalizing, i.e., the objective value of the solution $\frac{1}{\|\beta \hat{y}+(1-\beta)\hat{y}'\|_*}[\beta (\hat y,\hat b)+(1-\beta)(\hat y',\hat b')]$ will be strictly higher than $\tilde{d}$, contradicting the optimality of $\tilde{d}$.
\end{proof}

	\tikzset{
	pline/.style={every plot/.style={
			mark=x,
			mark options={
				gray,
				very thick,
				solid,
				color=red!40!gray
			},
			mark size=6pt,
		},
		thick, gray, dashed
	},
}

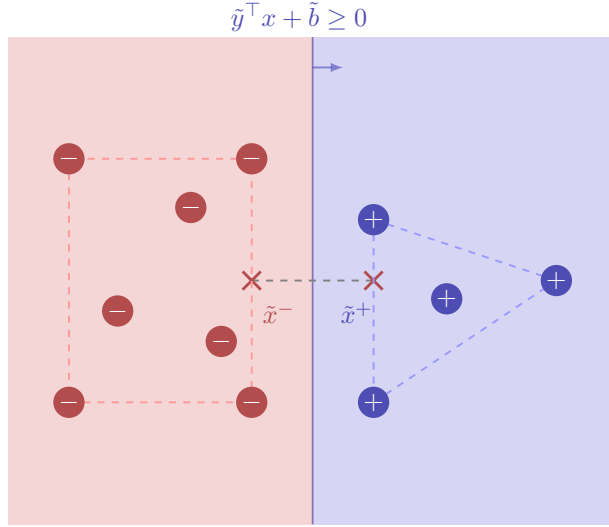
\begin{figure}[t!h]
	\begin{center}
	\scalebox{.9}{\begin{tikzpicture}[scale=.9]
			\draw[draw=blue!40!gray,thick,opacity=.6] (0,-4) -- (0,4);
			\draw[draw=blue!40!gray,thick,opacity=.6,-latex] (0,3.5) -- (0.5,3.5);
			\draw[draw=none,fill=blue!60!gray,opacity=.2] (0,-4) -- (0,4) -- (5,4) -- (5,-4) -- cycle;
			\draw[draw=none,fill=red!60!gray,opacity=.2] (0,-4) -- (0,4) -- (-5,4) -- (-5,-4) -- cycle;
			
			\draw[pline] plot coordinates {(-1,0) (1,0)};
			
			\draw[draw=red!40,thick,dashed] (-1,2) --  (-4,2);
			\draw[draw=red!40,thick,dashed] (-4,2) --   (-4,-2);
			\draw[draw=red!40,thick,dashed] (-4,-2) -- (-1,-2);
			\draw[draw=red!40,thick,dashed] (-1,-2) -- (-1,2);

			\draw[draw=blue!40,thick,dashed] (1,1) --  (1,-2);
			\draw[draw=blue!40,thick,dashed] (1,-2) --  (4,0);
			\draw[draw=blue!40,thick,dashed] (4,0) --  (1,1);
			
			\foreach \Point in {(-1,2), (-1,-2), (-4,-2), (-4,2), (-2,1.2), (-3.2,-.5), (-1.5,-1)}{
				\node[negative] at \Point {$\vphantom{+}-$};
			}
			\foreach \Point in {(1,1), (1,-2), (4,0), (2.2,-.3)}{
				\node[positive] at \Point {$+$};
			}
			
			\node[yshift=.3cm,xshift=-.2cm,text=blue!40!gray] at (0,4) {$\tilde{y}^\top x+\tilde{b}\geq0$};
			
			\node[yshift=.2cm,xshift=.4cm,text=red!40!gray] at (-1,-0.75) {$\tilde{x}^-$};
			\node[yshift=.2cm,xshift=-.25cm,text=blue!40!gray] at (1,-0.75) {$\tilde{x}^+$};
		\end{tikzpicture}
	}
	\end{center}
	\caption{Illustration for \cref{lem:margin-prod} where $\|\cdot\|=\|\cdot\|_2$.
}\label{fig:margin-prod}
\end{figure}

\begin{lemma}\label{lem:margin-prod}
Consider two compact and linearly separable sets $\widetilde{\cA}^+$ and $\widetilde{\cA}^-$, so that
\begin{align}\label{eq:arbitrary-margin}
	0<  \tilde{d} := \max_{y : \|y\|_* \leq 1, b \in \bbR} h\left( y,b; \widetilde{\cA}^+, \widetilde{\cA}^- \right) .
\end{align}
Let $(\tilde{y},\tilde{b})$ be the optimal solution to \eqref{eq:arbitrary-margin}. Then,  (see \cref{fig:margin-prod}) there exists $\tilde{x}^+\in \conv(\widetilde{\cA}^+)$ and $\tilde{x}^- \in \conv(\widetilde{\cA}^-)$ such that
\begin{align*}
\tilde{y}^\top (\tilde{x}^+ - \tilde{x}^-) &= \|\tilde{x}^+ - \tilde{x}^-\| \cdot \|\tilde{y}\|_*,\\
\text{and } \tilde{d} \cdot \|\tilde{y}\|_* &= \tilde{y}^\top \tilde{x}^+ + \tilde{b} = -\tilde{y}^\top \tilde{x}^- - \tilde{b} \implies \tilde{d} = \frac{\|\tilde{x}^+ - \tilde{x}^-\|}{2}, \ \tilde{b} = -\frac{\tilde{y}^\top (\tilde{x}^+ + \tilde{x}^-)}{2}.
\end{align*}
Furthermore, let $(\bar{y},\bar{b})$ be such that $h\left( \bar{y},\bar{b}; \widetilde{\cA}^+, \widetilde{\cA}^- \right) \geq \bar{d} > 0$. Under \cref{assum:strictly-convex-norm}, we have
\[ \bar{y}^\top v(\tilde{y}) \geq \frac{\bar{d}}{\tilde{d}} > 0. \] 
\end{lemma}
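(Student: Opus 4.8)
The plan is to recast \eqref{eq:arbitrary-margin} as a bilinear minimax problem over the two convex hulls, extract a saddle point to read off all the claimed identities, and then obtain the last inequality by identifying $v(\tilde y)$ through differentiability of the norms together with the hypothesis $h(\bar y,\bar b;\widetilde{\cA}^+,\widetilde{\cA}^-)\ge\bar d$.

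\textbf{Step 1 (reduction and minimax).} Since the minimum of a linear functional over a set equals its minimum over the convex hull, $h(y,b;\widetilde{\cA}^+,\widetilde{\cA}^-)=\min\{m^+(y)+b,\,-M^-(y)-b\}$ with $m^+(y):=\min_{x\in\conv(\widetilde{\cA}^+)}y^\top x$ and $M^-(y):=\max_{x\in\conv(\widetilde{\cA}^-)}y^\top x$. Optimizing over $b$ balances the two terms, so $\max_b h(y,b)=\tfrac12(m^+(y)-M^-(y))=\tfrac12\min_{x^+\in\conv(\widetilde{\cA}^+),\,x^-\in\conv(\widetilde{\cA}^-)}y^\top(x^+-x^-)$, whence $2\tilde d=\max_{\|y\|_*\le1}\min_{x^+,x^-}y^\top(x^+-x^-)$. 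The set $\{y:\|y\|_*\le1\}$ and the two convex hulls are convex and compact and the objective is bilinear, so Sion's minimax theorem gives $2\tilde d=\min_{x^+,x^-}\max_{\|y\|_*\le1}y^\top(x^+-x^-)=\min_{x^+,x^-}\|x^+-x^-\|=:\delta$, and $\delta=2\tilde d>0$.

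\textbf{Step 2 (saddle point and explicit formulas).} The component $\tilde y$ of the given optimum maximizes $y\mapsto\min_{x^+,x^-}y^\top(x^+-x^-)$ over $\|y\|_*\le1$; I would pick any minimizer $(\tilde x^+,\tilde x^-)$ of $\|x^+-x^-\|=\max_{\|y\|_*\le1}y^\top(x^+-x^-)$ (it exists by compactness) and invoke the standard fact that, under strong duality, optimal solutions of the two sides of a bilinear minimax form a saddle point. Reading off the two sides, $\tilde y^\top(\tilde x^+-\tilde x^-)=\max_{\|y\|_*\le1}y^\top(\tilde x^+-\tilde x^-)=\|\tilde x^+-\tilde x^-\|$ and $\tilde y^\top(\tilde x^+-\tilde x^-)=m^+(\tilde y)-M^-(\tilde y)=2\tilde d$; by \cref{lem:unique-sol}, $\|\tilde y\|_*=1$ (since $\tilde d>0$), which turns the first equality into the alignment identity $\tilde y^\top(\tilde x^+-\tilde x^-)=\|\tilde x^+-\tilde x^-\|\,\|\tilde y\|_*$. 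The saddle also forces $\tilde y^\top\tilde x^+=m^+(\tilde y)$ and $\tilde y^\top\tilde x^-=M^-(\tilde y)$, and optimality of $\tilde b$ in $\max_b h(\tilde y,b)$ forces $\tilde b=-\tfrac12(m^+(\tilde y)+M^-(\tilde y))$; hence $\tilde y^\top\tilde x^++\tilde b=-\tilde y^\top\tilde x^--\tilde b=\tfrac12(m^+(\tilde y)-M^-(\tilde y))=\tilde d=\tilde d\,\|\tilde y\|_*$, and adding/subtracting these gives $\tilde d=\|\tilde x^+-\tilde x^-\|/2$ and $\tilde b=-\tilde y^\top(\tilde x^++\tilde x^-)/2$.

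\textbf{Step 3 (the final inequality).} Under \cref{assum:strictly-convex-norm}, $\|\cdot\|_*$ is differentiable away from $0$, so $\partial\|\tilde y\|_*=\{v(\tilde y)\}$. With $u:=(\tilde x^+-\tilde x^-)/\|\tilde x^+-\tilde x^-\|$, the alignment identity reads $\tilde y^\top u=1=\|\tilde y\|_*$ and $\|u\|=1$, so $u\in\argmax_{\|x\|\le1}\tilde y^\top x=\partial\|\tilde y\|_*$, i.e.\ $v(\tilde y)=u=(\tilde x^+-\tilde x^-)/(2\tilde d)$. Since $h(\bar y,\bar b;\widetilde{\cA}^+,\widetilde{\cA}^-)\ge\bar d$, we have $\bar y^\top x+\bar b\ge\bar d$ for all $x\in\widetilde{\cA}^+$ and $-\bar y^\top x-\bar b\ge\bar d$ for all $x\in\widetilde{\cA}^-$; by convexity these hold at $\tilde x^+$ and $\tilde x^-$ respectively, and adding yields $\bar y^\top(\tilde x^+-\tilde x^-)\ge2\bar d$. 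Therefore $\bar y^\top v(\tilde y)=\bar y^\top(\tilde x^+-\tilde x^-)/(2\tilde d)\ge\bar d/\tilde d>0$.

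\textbf{Anticipated obstacle.} The delicate point is Step 2: the $y$-component of the \emph{given} maximizer of \eqref{eq:arbitrary-margin}, not merely some maximizer produced by the minimax theorem, must be the $y$-part of a genuine saddle point — and this is exactly what strong duality (Sion) provides, while \cref{lem:unique-sol} pins down $\|\tilde y\|_*=1$. The rest is routine: Step 1 is the classical identity ``max-margin $=$ half the distance between the convex hulls,'' and Step 3 is the first-order characterization of the gradient of a strictly convex norm.
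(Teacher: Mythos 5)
Your proof is correct, but it takes a genuinely different route from the paper's. The paper works directly with the first-order optimality condition of \eqref{eq:arbitrary-margin}: it computes the superdifferential of $h$ at $(\tilde y,\tilde b)$ via the Danskin-type formula $\partial h = \conv(\widetilde{\cA}^+_*\times\{1\}\cup(-\widetilde{\cA}^-_*)\times\{-1\})$ over the \emph{active} sets, uses the stationarity condition in $b$ to force the positive and negative convex-combination weights to each sum to $1/2$, and then defines $\tilde x^\pm$ as the resulting normalized convex combinations of active points. You instead reduce to the classical identity ``maximum margin equals half the distance between the convex hulls'' via Sion's minimax theorem, take $(\tilde x^+,\tilde x^-)$ to be a closest pair between $\conv(\widetilde{\cA}^+)$ and $\conv(\widetilde{\cA}^-)$, and extract the identities from the saddle-point property (your key observation — that strong duality makes the \emph{given} maximizer $\tilde y$ the $y$-part of a saddle with \emph{any} closest pair — is the standard fact and is applied correctly; the splitting of $\tilde y^\top(\tilde x^+ - \tilde x^-) = m^+(\tilde y)-M^-(\tilde y)$ into the two separate equalities follows because each term is one-sidedly bounded and the sum is tight). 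Both arguments need \cref{assum:strictly-convex-norm} only for the final inequality, where each identifies $v(\tilde y)$ with $(\tilde x^+-\tilde x^-)/(2\tilde d)$ and concludes identically. Your approach buys geometric transparency and the extra information that $\tilde x^\pm$ may be chosen as the closest points between the two hulls; the paper's approach buys an explicit representation of $\tilde x^\pm$ as convex combinations of \emph{active} (margin-attaining) points, which is the form reused implicitly elsewhere (e.g., in the proof of \cref{prop:margin-decrease}). Either version establishes the lemma as stated.
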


\begin{proof}[Proof of \cref{lem:margin-prod}]
Let $\partial h\left( \tilde{y},\tilde{b}; \widetilde{\cA}^+, \widetilde{\cA}^- \right)$ denote the \emph{super}differential of $h$ at $(\tilde{y},\tilde{b})$. By the optimality condition of \eqref{eq:arbitrary-margin}, there exists $(u,v)\in\partial h\left( \tilde{y},\tilde{b}; \widetilde{\cA}^+, \widetilde{\cA}^- \right)$ such that
\[ u^\top(\tilde{y} - y) + v(\tilde{b} - b) \geq0 \quad \forall y \text{ s.t. } \|y\|_* \leq 1, \ b \in \bbR. \]
Clearly, this means that $v = 0$, and $u^\top \tilde{y} \geq \|u\|$.

Now we derive the structure of $(u,v)$. As $(\tilde{y},\tilde{b})$ is an optimum solution to \eqref{eq:arbitrary-margin}, using the expressions for $(\tilde{y},\tilde{b})$ given by \eqref{eq:h-general-def} and \cref{lem:unique-sol}, we must have
\[ \min_{x \in \widetilde{\cA}^+} \left\{ \tilde{y}^\top x + \tilde{b} \right\} = \min_{x \in \widetilde{\cA}^-} \left\{ - \tilde{y}^\top x - \tilde{b} \right\}. \]
We define the sets $\widetilde{\cA}^+_*\coloneqq\argmin_{x\in\widetilde{\cA}^+}\{\tilde{y}^\top x+\tilde{b}\}$ and $\widetilde{\cA}^-_*\coloneqq\argmin_{x\in\widetilde{\cA}^-}\{-\tilde{y}^\top x-\tilde{b}\}$. Then,  $h\left( \tilde{y},\tilde{b}; \widetilde{\cA}^+, \widetilde{\cA}^- \right) = \tilde{y}^\top x^+ + \tilde{b} = -\tilde{y}^\top x^- - \tilde{b}$ for any $x^+ \in\widetilde{\cA}^+_*$ and $x^- \in\widetilde{\cA}^-_*$.
By compactness of $\widetilde{\cA}^+_*$ and $\widetilde{\cA}^-_*$, based on \citet[Chapter D, Theorem 4.4.2]{hiriart-urruty_fundamentals_2001} we arrive at
\[\partial h\left( \tilde{y},\tilde{b}; \widetilde{\cA}^+, \widetilde{\cA}^- \right)=\conv\left( \widetilde{\cA}^+_*\times\{1\}\cup(-\widetilde{\cA}^-_*)\times\{-1\} \right).\]
Therefore, there exist convex combination weights $\{\lambda_j\}_{j\in J^+\cup J^-}$ and points $\{x_j\}_{j\in J^+}\subseteq\widetilde{\cA}^+_*$ and $\{x_j\}_{j\in J^-}\subseteq\widetilde{\cA}^-_*$ such that $\sum_{j\in J^+}\lambda_j + \sum_{j\in J^-}\lambda_j=1$ and
\begin{align*}
	u &= \sum_{j\in J^+} \lambda_j x_j - \sum_{j\in J^-} \lambda_j x_j, \\
	0 = v &= \sum_{j\in J^+}\lambda_j - \sum_{j\in J^-}\lambda_j. 
\end{align*}
Note that since $\sum_{j\in J^+}\lambda_j = \sum_{j\in J^-}\lambda_j$ and $\sum_{j\in J^+}\lambda_j + \sum_{j\in J^-}\lambda_j=1$, we have $\sum_{j\in J^+}\lambda_j = \sum_{j\in J^-}\lambda_j = 1/2$. Let 
\begin{align*}
	\tilde{x}^+ &\coloneqq \frac{\sum_{j\in J^+}\lambda_jx_j}{\sum_{j\in J^+}\lambda_j} = 2 \sum_{j\in J^+}\lambda_jx_j, \quad 
	\tilde{x}^- \coloneqq \frac{\sum_{j\in J^-}\lambda_jx_j}{\sum_{j\in J^-}\lambda_j} = 2\sum_{j\in J^-}\lambda_jx_j. 
\end{align*}
As an immediate result, $\tilde{x}^+\in\conv(\widetilde{\cA}^+_*)\subseteq\conv(\widetilde{\cA}^+)$ and $\tilde{x}^-\in\conv(\widetilde{\cA}^-_*)\subseteq\conv(\widetilde{\cA}^-)$, and $u = \frac{1}{2} (\tilde{x}^+ - \tilde{x}^-)$. Note also that if $u=0$, then $\tilde{x}^+ = \tilde{x}^-$, which violates the linear separability of $\widetilde{\cA}^+$ and $\widetilde{\cA}^-$, so $u \neq 0$.

From before, we have $u^\top \tilde{y} \geq \|u\|$. Since $\|\tilde{y}\|_* \leq 1$, this implies $u^\top \tilde{y} = \|u\|$, and thus
\[ \tilde{y}^\top (\tilde{x}^+ - \tilde{x}^-) = \|\tilde{x}^+ - \tilde{x}^-\|. \]
Furthermore,
\[ \tilde{d} = h\left( \tilde{y},\tilde{b}; \widetilde{\cA}^+, \widetilde{\cA}^- \right) = \tilde{y}^\top \tilde{x}^+ + \tilde{b} = -\tilde{y}^\top \tilde{x}^- - \tilde{b}. \]
We can immediately deduce that $2\tilde{d} = \|\tilde{x}^+ - \tilde{x}^-\|$ hence $\|u\| = \tilde{d}$.

As $\|u\| \neq 0$, dividing both sides of $u^\top \tilde{y} = \|u\|$ by $\|u\|$, we get $1 = (u/\|u\|)^\top \tilde{y} \leq \|\tilde{y}\|_*$. Since $\|\tilde{y}\|_* \leq 1$, we thus have $(u/\|u\|)^\top \tilde{y} = \|\tilde{y}\|_* = 1$, hence $u/\|u\| = v(\tilde{y})$ by \cref{assum:strictly-convex-norm}. Now, observe that from $h\left( \bar{y},\bar{b}; \widetilde{\cA}^+, \widetilde{\cA}^- \right) \geq \bar{d} $, we deduce $\bar{y}^\top x + \bar{b} \geq \bar{d}$ for all $x \in \widetilde{\cA}^+$ and $-\bar{y}^\top x - \bar{b} \geq \bar{d}$ for all $x \in \widetilde{\cA}^-$. Therefore,
\begin{align*}
	\bar{y}^\top v(\tilde{y}) = \frac{1}{\|u\|} (\bar{y}^\top u) &= \frac{1}{\|u\|} \left(\sum_{j\in J^+} \lambda_j \bar{y}^\top x_j - \sum_{j\in J^-} \lambda_j \bar{y}^\top x_j\right)\\
	&\geq \frac{1}{\|u\|} \left(\sum_{j\in J^+} \lambda_j (\bar{d} - \bar{b}) + \sum_{j\in J^-} \lambda_j (\bar{d} + \bar{b})\right)\\
	&= \frac{1}{\|u\|} \left(\bar{d} \left(\sum_{j\in J^+} \lambda_j  + \sum_{j\in J^-} \lambda_j\right) + \bar{b} \left(\sum_{j\in J^-} \lambda_j - \sum_{j\in J^+} \lambda_j\right)\right)\\
	&= \frac{\bar{d}}{\|u\|} = \frac{\bar{d}}{\tilde{d}} > 0.
	\qedhere
\end{align*}
\end{proof}

\begin{lemma}[{Follows from \citet[Theorem 10.8]{rockafellar_convex_1970}}]\label{lem:uniform-convergence}\label{lem:unique-accumulation}
Let
\begin{align*}
	&\widetilde{\cA}_1^+ \subseteq \widetilde{\cA}_2^+ \subseteq \ldots \subseteq \widetilde{\cA}_\infty^+ \subset \bbR^d\\
	&\widetilde{\cA}_1^- \subseteq \widetilde{\cA}_2^- \subseteq \ldots \subseteq \widetilde{\cA}_\infty^- \subset \bbR^d
\end{align*}
be two nested sequences of subsets of $\bbR^d$. If both sets $\widetilde{\cA}_\infty^+$ and $\widetilde{\cA}_\infty^-$ are bounded, then the functions
\[ h_t(y,b) := h\left( y,b; \widetilde{\cA}_t^+, \widetilde{\cA}_t^- \right) \]
converge uniformly to the function
\[ h_\infty(y,b) := h\left( y,b; \widetilde{\cA}_\infty^+, \widetilde{\cA}_\infty^- \right) \]
over any compact domain $\cD \subset \bbR^d \times \bbR$.
More precisely, for any $\epsilon > 0$, there exists $t_0 \in \mathbb{N}$ such that when $t \geq t_0$, we have
\[ \sup_{(y,b) \in \cD} \left| h_t(y,b) - h_\infty(y,b) \right| \leq \epsilon. \]
\end{lemma}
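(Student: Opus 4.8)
The plan is to recognize each $h_t$ and $h_\infty$ as a finite-valued concave function and then invoke a standard uniform-convergence principle for convex functions on compact sets.

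First I would record the structural facts. For any pair of bounded nonempty sets $S^+, S^- \subset \bbR^d$, the function $(y,b) \mapsto h(y,b; S^+, S^-)$ is a pointwise infimum of the affine maps $(y,b) \mapsto y^\top x + b$ ($x \in S^+$) and $(y,b) \mapsto -y^\top x - b$ ($x \in S^-$), hence concave; it is moreover finite on all of $\bbR^d \times \bbR$, being bounded below by $-|b| - \|y\|_2 \sup_{x \in S^+ \cup S^-} \|x\|_2$ and above by its value at any single point of $S^+$. Since $x \mapsto y^\top x + b$ is continuous, the infimum over a set equals the infimum over its closure, so we may replace each $\widetilde{\cA}_t^{\pm}$ and $\widetilde{\cA}_\infty^{\pm}$ by its (compact) closure without changing any $h_t$ or $h_\infty$, and we may discard finitely many initial indices so that $\widetilde{\cA}_t^+$ and $\widetilde{\cA}_t^-$ are both nonempty for every $t$. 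By the nesting $\widetilde{\cA}_t^{\pm} \subseteq \widetilde{\cA}_{t+1}^{\pm} \subseteq \widetilde{\cA}_\infty^{\pm}$, the sequence $\{h_t\}$ is monotone nonincreasing and bounded below pointwise by $h_\infty$, so all functions in sight are finite and concave.

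Next I would establish pointwise convergence $h_t(y,b) \to h_\infty(y,b)$ for each fixed $(y,b)$. Writing $h_t = \min\{p_t, q_t\}$ with $p_t := \inf_{x \in \widetilde{\cA}_t^+}(y^\top x + b)$ and $q_t := \inf_{x \in \widetilde{\cA}_t^-}(-y^\top x - b)$, and using that the $\min$ of two convergent sequences converges to the $\min$ of the limits, it suffices to show $p_t \downarrow \inf_{x \in \widetilde{\cA}_\infty^+}(y^\top x + b)$ (the argument for $q_t$ is identical). The sequence $\{p_t\}$ is nonincreasing and bounded below by $\inf_{x \in \widetilde{\cA}_\infty^+}(y^\top x + b)$; for the matching upper bound, fix $\epsilon > 0$, pick $x^* \in \widetilde{\cA}_\infty^+$ with $y^\top x^* + b \le \inf_{x \in \widetilde{\cA}_\infty^+}(y^\top x + b) + \epsilon$, and use that $\widetilde{\cA}_\infty^+$ is the closure of $\bigcup_t \widetilde{\cA}_t^+$ to find $t_0$ and $x' \in \widetilde{\cA}_{t_0}^+$ with $|y^\top (x' - x^*)| \le \epsilon$; then $p_t \le y^\top x' + b \le \inf_{x \in \widetilde{\cA}_\infty^+}(y^\top x + b) + 2\epsilon$ for all $t \ge t_0$.

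Finally, since $\{-h_t\}$ is a sequence of finite convex functions converging pointwise on the open convex set $\bbR^d \times \bbR$ to the finite convex function $-h_\infty$, \citet[Theorem 10.8]{rockafellar_convex_1970} gives that the convergence is uniform on every closed bounded, hence every compact, subset $\cD \subset \bbR^d \times \bbR$, which is exactly the claimed bound $\sup_{(y,b) \in \cD} |h_t(y,b) - h_\infty(y,b)| \le \epsilon$ for $t$ large. (Equivalently, since each $h_t$ and $h_\infty$ is finite concave, hence continuous, and $h_t \downarrow h_\infty$ monotonically, Dini's theorem yields the same conclusion on compact $\cD$.) The only point requiring care, and what I would flag as the main subtlety rather than a genuine obstacle, is the identification $\widetilde{\cA}_\infty^{\pm} = \overline{\bigcup_t \widetilde{\cA}_t^{\pm}}$ invoked in the pointwise-convergence step: this is the precise sense in which $\widetilde{\cA}_\infty^{\pm}$ is the limit of the nested sequence, and it is exactly what forces $h_t \to h_\infty$; everything else is routine bookkeeping.
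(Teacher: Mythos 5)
Your proof is correct and follows essentially the same route as the paper's (pointwise convergence of the concave functions $h_t$ to $h_\infty$, then Rockafellar's Theorem~10.8 to upgrade to uniform convergence on compacta), just with the pointwise step written out in full and Dini's theorem noted as an alternative. You are also right to flag the one real subtlety: the nesting hypothesis alone only yields monotonicity $h_t \downarrow$, and pointwise convergence to $h_\infty$ genuinely requires reading $\widetilde{\cA}_\infty^{\pm}$ as (the closure of) $\bigcup_t \widetilde{\cA}_t^{\pm}$ --- an identification the paper leaves implicit in the lemma's hypotheses but uses in its applications, where $\widetilde{\cA}_\infty^{\pm}$ is defined as the union of the $\widetilde{\cA}_t^{\pm}$.
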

\begin{proof}[{Proof of \cref{lem:unique-accumulation}}]
Since the sets are nested, $h_t(y,b) \to h_\infty(y,b)$ pointwise on $\cD$. The conclusion then follows from \citet[Theorem 10.8]{rockafellar_convex_1970}.  
\end{proof}

\subsection{Guarantees for \texorpdfstring{\cref{alg:data-driven}}{Algorithm 1}}\label{sec:guarantee-SMM}

We first verify that the critical condition $y_*^\top v(y_t) \geq 0$ required to apply \cref{cor:proxy-inclusion} to ensure separability of the proxy data holds throughout \cref{alg:data-driven}.

\begin{lemma}\label{lem:y_star-y-nonnegative}
For all $t\in\mathbb{N}$, let $(y_t,b_t)$ along with $d_t$ be generated by \cref{alg:data-driven}.
	Under \cref{assum:margin,assum:strictly-convex-norm}, we have $y_*^\top v(y_t) \geq \|y_*\|_* \tfrac{d_*}{d_t} > 0$ for all $t \in \mathbb{N}$.
\end{lemma}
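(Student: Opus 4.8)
The plan is to argue by induction on $t$, using Lemma \ref{lem:margin-prod} as the engine at each step and Corollary \ref{cor:proxy-inclusion} to propagate separability of the proxy data. The key point is that $(y_t, b_t)$ is the optimal solution of the maximum-margin problem $(\mathrm{P}_{t-1})$ over the proxy sets $\widetilde{\cA}^+_{t-1}, \widetilde{\cA}^-_{t-1}$, so Lemma \ref{lem:margin-prod} applies with $(\tilde y, \tilde b) = (y_t, b_t)$ and $\tilde d = d_t$, provided we can exhibit a competitor $(\bar y, \bar b)$ with $h(\bar y, \bar b; \widetilde{\cA}^+_{t-1}, \widetilde{\cA}^-_{t-1}) \geq \bar d > 0$; the natural choice is $(\bar y, \bar b) = (y_*/\|y_*\|_*, b_*/\|y_*\|_*)$, for which Corollary \ref{cor:proxy-inclusion} will give $\bar d = d_*$. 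Then Lemma \ref{lem:margin-prod} yields exactly $y_*^\top v(y_t) / \|y_*\|_* = \bar y^\top v(y_t) \geq \bar d / d_t = d_*/d_t > 0$, which is the claim.

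The induction is structured as follows. \textbf{Base case:} $(y_1, b_1)$ comes from Algorithm \ref{alg:initialization}, which solves $(\mathrm{P}_0)$ on the sets $\widetilde{\cA}^+_0, \widetilde{\cA}^-_0 \subseteq \cA$ (these are genuine unmanipulated features, since $y = 0$ forces $r(A,y,b) = A$ throughout initialization). Since $\widetilde{\cA}^+_0, \widetilde{\cA}^-_0 \subseteq \cA$ are separated by $(y_*, b_*)$ with margin at least $d_*$, we have $h(y_*/\|y_*\|_*, b_*/\|y_*\|_*; \widetilde{\cA}^+_0, \widetilde{\cA}^-_0) \geq d_* > 0$, and in particular $d_1 > 0$, so Lemma \ref{lem:unique-sol} and Lemma \ref{lem:margin-prod} apply; the latter gives $y_*^\top v(y_1) \geq \|y_*\|_* \, d_*/d_1 > 0$. \textbf{Inductive step:} Assume $y_*^\top v(y_\tau) \geq 0$ for all $\tau \leq t$. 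Then for each $\tau \leq t$, Corollary \ref{cor:proxy-inclusion} (with $(\bar y, \bar b) = (y_*, b_*)$, $(y, b) = (y_\tau, b_\tau)$) gives $\lbl(A_\tau) (y_*^\top s(A_\tau, y_\tau, b_\tau) + b_*) \geq d_* \|y_*\|_*$, i.e., each proxy point in $\widetilde{\cA}^+_t \cup \widetilde{\cA}^-_t$ is separated by $(y_*, b_*)$ with margin at least $d_*$. Hence $h(y_*/\|y_*\|_*, b_*/\|y_*\|_*; \widetilde{\cA}^+_t, \widetilde{\cA}^-_t) \geq d_*$, so $d_{t+1} \geq d_* > 0$, the sets $\widetilde{\cA}^+_t, \widetilde{\cA}^-_t$ are compact (finite) and linearly separable, and Lemma \ref{lem:margin-prod} applies with $(\tilde y, \tilde b) = (y_{t+1}, b_{t+1})$, $\tilde d = d_{t+1}$, $(\bar y, \bar b) = (y_*/\|y_*\|_*, b_*/\|y_*\|_*)$, $\bar d = d_*$, yielding $y_*^\top v(y_{t+1}) / \|y_*\|_* \geq d_*/d_{t+1} > 0$.

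A couple of routine checks will need to be folded in: that $d_{t+1} > 0$ really does let us invoke Lemma \ref{lem:margin-prod} (which requires strict positivity of the optimal value and linear separability of the two finite sets — both guaranteed by the displayed bound $d_{t+1} \geq d_* > 0$), and that the normalization $(y_*/\|y_*\|_*, b_*/\|y_*\|_*)$ is feasible for the constraint $\|y\|_* \leq 1$ while preserving the margin $d_*$ (immediate since $h$ is positively homogeneous and dividing $(y_*, b_*)$ by $\|y_*\|_*$ scales the margin $\lbl(A)(y_*^\top A + b_*)$ by $1/\|y_*\|_*$, turning $d_* \|y_*\|_*$ into $d_*$). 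I expect \textbf{the main obstacle} to be conceptual rather than technical: making sure the induction hypothesis $y_*^\top v(y_\tau) \geq 0$ for \emph{all} prior $\tau$ is genuinely needed and correctly used — each proxy point $s(A_\tau, y_\tau, b_\tau)$ currently in the accumulated sets was generated using classifier $y_\tau$, so separability of the \emph{whole} accumulated set by $(y_*, b_*)$ requires $y_*^\top v(y_\tau) \geq 0$ for every $\tau$ up to the current time, which is exactly what strong induction delivers. Everything else is bookkeeping around Lemma \ref{lem:margin-prod} and Corollary \ref{cor:proxy-inclusion}.
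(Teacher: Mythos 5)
Your proposal is correct and follows essentially the same route as the paper's proof: induction on $t$, with the base case handled by the initialization on true (unmanipulated) features, the inductive step using \cref{cor:proxy-inclusion} to certify that $(y_*,b_*)$ separates all accumulated proxy points with margin $d_*$, and \cref{lem:margin-prod} applied to the optimizer of \eqref{eq:data-driven} to extract $y_*^\top v(y_{t+1}) \geq \|y_*\|_* d_*/d_{t+1} > 0$. The extra bookkeeping you flag (normalization of $(y_*,b_*)$, positivity of $d_{t+1}$) is handled implicitly in the paper and does not change the argument.
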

\begin{proof}[Proof of \cref{lem:y_star-y-nonnegative}]
		We prove this by induction. For the base case, note that by its definition $(y_1,b_1)$ maximizes $h(y,b; \widetilde{\cA}^+_0, \widetilde{\cA}^-_0)$ in \cref{alg:initialization}. Furthermore, $\widetilde{\cA}^+_0 \subset \cA^+$, $\widetilde{\cA}^-_0 \subset \cA^-$ are separable by $(y_*,b_*)$ with margin $d_*$ by \cref{assum:margin}. Then, under \cref{assum:strictly-convex-norm}, by \cref{lem:margin-prod} we have $y_*^\top v(y_1) \geq \|y_*\|_* d_*/d_1 > 0$.
		
		Now assume as the induction hypothesis that $y_*^\top v(y_\tau) \geq \|y_*\|_* d_*/d_{\tau} > 0$ holds for $\tau \in [t]$. This implies that $\ell_\tau( y_*^\top s_\tau + b_* ) \geq \|y_*\|_* d_*$ for all $\tau \in [t]$ by \cref{cor:proxy-inclusion}, where $s_\tau = s(A_\tau,y_\tau,b_\tau)$ and $\ell_\tau = \lbl(A_\tau)$. Notice also that $(y_{t+1},b_{t+1})$ maximizes $h(y,b; \widetilde{\cA}_t^+,  \widetilde{\cA}_t^-)$, where $\widetilde{\cA}_t^+ := \{ s_\tau : \tau \in [t], \ell_\tau=+1 \}$ and $\widetilde{\cA}_t^- := \{ s_\tau : \tau \in [t], \ell_\tau=-1 \}$. Then, by \cref{lem:margin-prod}, $y_*^\top v(y_{t+1}) \geq \|y_*\|_* d_*/d_{t+1} > 0$. Therefore, by induction it holds for all $t \in \mathbb{N}$.
	\end{proof}

\begin{lemma}\label{lem:margin-best}
Under \cref{assum:margin,assum:strictly-convex-norm}, the optimal value $d_{t+1}$ of \eqref{eq:data-driven} in \cref{alg:data-driven} satisfies $d_* \leq d_{t+1} \leq \tilD^{\pm}$ for all $t \geq 0$, where $\tilD^{\pm}$ is defined in \cref{eq:proxy-bounded}.
\end{lemma}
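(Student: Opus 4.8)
The plan is to prove the two bounds $d_*\le d_{t+1}$ and $d_{t+1}\le\tilD^{\pm}$ separately. For the lower bound I would simply exhibit a feasible point for \eqref{eq:data-driven} whose objective value is at least $d_*$: the rescaled true maximum-margin classifier $(\bar y,\bar b):=(y_*/\|y_*\|_*,\,b_*/\|y_*\|_*)$, which is feasible because $\|\bar y\|_*=1$. For the upper bound I would evaluate the concave margin functional $h$ at its (unique) maximizer and dominate it by its value at a single positive and a single negative proxy point, then invoke the boundedness of proxy points from \eqref{eq:proxy-bounded}.

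For the lower bound, recall that by construction $\widetilde{\cA}_t^{+}=\{s(A_\tau,y_\tau,b_\tau):\tau\in[t],\ \lbl(A_\tau)=+1\}$ and $\widetilde{\cA}_t^{-}=\{s(A_\tau,y_\tau,b_\tau):\tau\in[t],\ \lbl(A_\tau)=-1\}$ (and for $t=0$ we have $\widetilde{\cA}_0^{\pm}\subseteq\cA^{\pm}$, since no manipulation occurs during \cref{alg:initialization}). By \cref{lem:y_star-y-nonnegative}, $y_*^\top v(y_\tau)\ge0$ for every $\tau\le t$, so \cref{cor:proxy-inclusion}, applied for each $\tau$ with $(\bar y,\bar b)=(y_*,b_*)$ and $\rho=d_*\|y_*\|_*$ (the latter coming from \cref{assum:margin}), gives $\lbl(A_\tau)\bigl(y_*^\top s(A_\tau,y_\tau,b_\tau)+b_*\bigr)\ge d_*\|y_*\|_*$. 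Dividing by $\|y_*\|_*$ and taking the minimum over the proxy points of each sign shows $h(\bar y,\bar b;\widetilde{\cA}_t^{+},\widetilde{\cA}_t^{-})\ge d_*$, and since $(\bar y,\bar b)$ is feasible for \eqref{eq:data-driven} we conclude $d_{t+1}\ge d_*>0$. (For $t=0$ the same computation works directly from \cref{assum:margin}, without invoking \cref{cor:proxy-inclusion}.)

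For the upper bound, since $d_{t+1}\ge d_*>0$, \cref{lem:unique-sol} gives that the maximizer $(y_{t+1},b_{t+1})$ of \eqref{eq:data-driven} satisfies $\|y_{t+1}\|_*=1$ and $b_{t+1}=-\tfrac12\bigl(\min_{x\in\widetilde{\cA}_t^{+}}y_{t+1}^\top x+\max_{x\in\widetilde{\cA}_t^{-}}y_{t+1}^\top x\bigr)$. Both $\widetilde{\cA}_t^{+}$ and $\widetilde{\cA}_t^{-}$ are nonempty (ensured by \cref{alg:initialization}), so pick any $s^{+}\in\widetilde{\cA}_t^{+}$ and $s^{-}\in\widetilde{\cA}_t^{-}$; these are proxies of agents in $\cA$. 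By the defining formula \eqref{eq:h-general-def} of $h$, we get $d_{t+1}\le y_{t+1}^\top s^{+}+b_{t+1}$ and $d_{t+1}\le -y_{t+1}^\top s^{-}-b_{t+1}$; adding these cancels $b_{t+1}$ and yields $2d_{t+1}\le y_{t+1}^\top(s^{+}-s^{-})$. Combining Hölder's inequality for $\|\cdot\|$ and its dual (equivalently Cauchy--Schwarz when $\|\cdot\|=\|\cdot\|_2$), the normalization $\|y_{t+1}\|_*=1$, and the a priori bound $\|s^{+}-s^{-}\|_2\le\tilD^{\pm}$ recorded in \eqref{eq:proxy-bounded}, then gives $d_{t+1}\le\tilD^{\pm}$.

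The conceptually nontrivial step is the lower bound: a priori there is no reason the proxy sets $\widetilde{\cA}_t^{\pm}$ — which are built from agent responses to the previously played, manipulation-offset classifiers $(y_\tau,b_\tau)$ — should even be linearly separable, let alone separable with margin $d_*$ by the \emph{fixed} classifier $(y_*,b_*)$. This is exactly what \cref{cor:proxy-inclusion} supplies, and its hypothesis is the alignment invariant $y_*^\top v(y_\tau)\ge0$ established in \cref{lem:y_star-y-nonnegative}, which in turn rests on the optimality/superdifferential analysis of \cref{lem:margin-prod} and hence on \cref{assum:strictly-convex-norm}. The upper bound, by contrast, is routine: it amounts only to bounding the concave margin functional at its maximizer using the constraint $\|y_{t+1}\|_*=1$ and the fact that the finitely many proxy points lie in a bounded set whose diameter is controlled by the constants in \eqref{eq:proxy-bounded}.
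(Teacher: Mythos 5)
Your proof is correct, and the upper bound is handled exactly as the paper intends (the paper disposes of it in one line, ``follows immediately from the definition of $\tilD^{\pm}$ as the diameter of proxy data''; your expansion via \eqref{eq:h-general-def}, cancellation of $b_{t+1}$, H\"older, and $\|y_{t+1}\|_*=1$ is the honest version of that line, and it inherits the same mild $\ell_2$-versus-$\|\cdot\|$ imprecision already present in the paper's definition of $\tilD^{\pm}$). Where you genuinely diverge is the lower bound. The paper does \emph{not} exhibit a feasible point: it takes the quantitative conclusion of \cref{lem:y_star-y-nonnegative}, namely $y_*^\top v(y_{t+1}) \geq \|y_*\|_* \, d_*/d_{t+1}$, pairs it with the trivial H\"older bound $(y_*/\|y_*\|_*)^\top v(y_{t+1}) \leq \|v(y_{t+1})\| \leq 1$, and reads off $d_*/d_{t+1}\leq 1$ directly. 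You instead use only the qualitative part of \cref{lem:y_star-y-nonnegative} ($y_*^\top v(y_\tau)\geq 0$) to trigger \cref{cor:proxy-inclusion}, conclude that the proxy sets are separated by $(y_*,b_*)$ with margin $d_*$, and then invoke feasibility of $(y_*/\|y_*\|_*, b_*/\|y_*\|_*)$ in \eqref{eq:data-driven}. Both arguments ultimately rest on the same lemma; yours is the more transparent ``exhibit a feasible solution'' argument and in effect re-derives the intermediate fact that the paper's proof of \cref{lem:y_star-y-nonnegative} establishes internally before applying \cref{lem:margin-prod}, while the paper's version is shorter because it reuses the sharper inequality it has already paid for. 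Your handling of the base case $t=0$ (no manipulation during \cref{alg:initialization}, so $\widetilde{\cA}_0^{\pm}\subseteq\cA^{\pm}$) is also correct and is a detail the paper leaves implicit.
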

\begin{proof}[Proof of \cref{lem:margin-best}]
	We have $(y_*/\|y_*\|_*)^\top v(y_{t+1}) \leq 1$ since $\|v(y_{t+1})\| \leq 1$ holds by \eqref{eq:manipulation-direction}. By \cref{lem:y_star-y-nonnegative}, this implies $d_{t+1} \geq d_*$. The inequality $d_{t+1}\leq \tilD^{\pm}$ follows immediately from the definition of $\tilD^{\pm}$ as the diameter of proxy data.
\end{proof}

We now establish that \cref{alg:data-driven} has a finite mistake bound.
To do this, we need to relate the margins $d_t$ found, i.e., the optimal values of \eqref{eq:data-driven}, between consecutive steps of \cref{alg:data-driven}.
\begin{proposition}\label{prop:margin-decrease}
Suppose the classifiers $(y_t,b_t)$ for $t\in\N$ are generated by \cref{alg:data-driven}.
Suppose for some $t \in \mathbb{N}$ we have $\lbl(A_t)[y_t^\top s(A_t,y_t,b_t) + b_t]\leq a\|y_t\|_*$ for some $a<d_*$. Then, 
under \cref{assum:margin,assum:bounded,assum:strictly-convex-norm}, we have
\[ d_{t+1} \leq \kappa(a,d_*,\barD) d_t,\]
where $d_t,d_{t+1}$ are the margins found by \cref{alg:data-driven}, $\barD$ is defined in \cref{eq:proxy-bounded}, and
\begin{align}\label{eq:kappa}
	\kappa(a,d_*, \barD) := \max_{\substack{w,z : \|w\|=1\\ \grad \|w\|^\top z \geq (1-a/d_*)/2 \\ \|z\|_2 \leq \barD/d_*}} \min_{\beta \in [0,1]} \|w-\beta z\| < 1,
\end{align}
\end{proposition}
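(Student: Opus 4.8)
The goal is to show $d_{t+1} \le \kappa(a,d_*,\barD) \, d_t$ where $\kappa$ is the quantity defined in \eqref{eq:kappa}, and also that $\kappa < 1$. The starting observation is that when the mistake-type inequality $\ell_t[y_t^\top s_t + b_t] \le a\|y_t\|_*$ holds (writing $s_t = s(A_t,y_t,b_t)$, $\ell_t=\lbl(A_t)$), the newly added proxy point $s_t$ sits relatively close to the decision boundary of $(y_t,b_t)$, so adding it to $\widetilde{\cA}_t^\pm$ should force the next margin $d_{t+1}$ to shrink relative to $d_t$. To quantify this, I would invoke \cref{lem:margin-prod} applied to the problem \eqref{eq:data-driven} at step $t$: there exist $\tilde{x}^+ \in \conv(\widetilde{\cA}_{t-1}^+)$, $\tilde{x}^- \in \conv(\widetilde{\cA}_{t-1}^-)$ with $d_t = \|\tilde{x}^+-\tilde{x}^-\|/2$, $y_t^\top(\tilde{x}^+-\tilde{x}^-) = \|\tilde{x}^+-\tilde{x}^-\|\cdot\|y_t\|_*$, and $v(y_t) = (\tilde{x}^+-\tilde{x}^-)/\|\tilde{x}^+-\tilde{x}^-\|$.

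**Key steps.** Assume WLOG $\ell_t = -1$ (the $\ell_t=+1$ case is symmetric), so $s_t$ is added to $\widetilde{\cA}_t^-$. First I would exhibit an explicit feasible pair for \eqref{eq:data-driven} at step $t$, namely a scaled midpoint-type classifier: the line $\frac{y}{\|y\|_*} = v(y_t) = w$ (say, renamed $w$ with $\|w\|=1$) together with offset making the separating hyperplane bisect the segment between $\tilde{x}^+$ and a worst-case convex combination of $\tilde{x}^-$ and $s_t$. Writing $z = (\tilde{x}^+ - s_t)/d_*$ (its $\ell_2$-norm is at most $\barD/d_*$ by the diameter bound from \cref{eq:proxy-bounded}, since $\widetilde{\cA}_t^+ \cup \widetilde{\cA}_t^-$ lives in a ball of radius-sum $\barD$), the hypothesis $\ell_t[y_t^\top s_t + b_t] \le a\|y_t\|_*$ translates, after using the \cref{lem:margin-prod} identities for $(\tilde{x}^+,\tilde{x}^-,y_t)$, into the bound $w^\top z \ge (1 - a/d_*)/2$ (up to checking constants — the $\tilde{x}^+$ direction gives margin $d_t$ on one side, the hypothesis caps the other side at $a$, and averaging produces the $(1-a/d_*)/2$ threshold after rescaling by $d_*$). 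Now for any $\beta \in [0,1]$, the point $\beta s_t + (1-\beta)\tilde{x}^-$ lies in $\conv(\widetilde{\cA}_t^-)$, and the margin achieved by the classifier with direction $w$ separating $\{\tilde{x}^+\}$ from $\{\beta s_t + (1-\beta)\tilde{x}^-, \tilde{x}^-, \dots\}$ is at most $\frac{1}{2}\|\tilde{x}^+ - (\beta s_t + (1-\beta)\tilde{x}^-)\|$; since we get to \emph{optimize} the classifier, $d_{t+1}$ is bounded by the best over the relevant convex hull, but we only need \emph{one} upper estimate, so I would take the $\min$ over $\beta\in[0,1]$ of $\|\tilde{x}^+ - \tilde{x}^- - \beta(s_t - \tilde{x}^-)\|/2$. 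Factoring out $d_t = \|\tilde{x}^+-\tilde{x}^-\|/2$ and setting $w = (\tilde{x}^+-\tilde{x}^-)/\|\tilde{x}^+-\tilde{x}^-\|$, $z$ the appropriately scaled version of $\tilde{x}^+ - s_t$ (or $s_t-\tilde{x}^-$), I would arrive at $d_{t+1} \le d_t \cdot \min_\beta \|w - \beta z\|$, and then pass to the supremum over all $(w,z)$ meeting the two constraints ($\grad\|w\|^\top z \ge (1-a/d_*)/2$, using $\grad\|w\| = v(w)$ from \cref{assum:strictly-convex-norm}, and $\|z\|_2 \le \barD/d_*$), which is exactly $\kappa(a,d_*,\barD)$.

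**Showing $\kappa < 1$.** The strict inequality is a compactness argument. The feasible set for $(w,z)$ in \eqref{eq:kappa} is compact (unit sphere for $w$, bounded ball for $z$, closed constraint $\grad\|w\|^\top z \ge (1-a/d_*)/2 > 0$ since $a < d_*$), and the objective $(w,z) \mapsto \min_{\beta\in[0,1]}\|w - \beta z\|$ is continuous (the inner min over a compact interval of a jointly continuous function). So the sup is attained at some $(w^\star, z^\star)$. At that maximizer, I need $\min_\beta \|w^\star - \beta z^\star\| < 1$: picking $\beta$ small and positive, $\|w^\star - \beta z^\star\|^2 \le \|w^\star\|_2^2$-type reasoning won't work directly since $\|\cdot\|$ is a general norm, but differentiating $\beta \mapsto \|w^\star - \beta z^\star\|$ at $\beta = 0^+$ gives derivative $-\grad\|w^\star\|^\top z^\star \le -(1-a/d_*)/2 < 0$ (using differentiability of the norm away from $0$ from \cref{assum:strictly-convex-norm} and $\|w^\star\|=1$), so the value strictly decreases below $\|w^\star\| = 1$ for small $\beta > 0$. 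Hence $\kappa < 1$.

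**Main obstacle.** The delicate part is the bookkeeping that converts the hypothesis $\ell_t[y_t^\top s_t + b_t] \le a\|y_t\|_*$ into the constraint $\grad\|w\|^\top z \ge (1-a/d_*)/2$ with the \emph{exact} constant, and correctly identifying which difference ($\tilde{x}^+ - s_t$ versus $s_t - \tilde{x}^-$) plays the role of $z$ and why its scaled $\ell_2$-norm is bounded by $\barD/d_*$ rather than $\tilD^\pm/d_*$ — this is where the refined constant $\barD = \max\{\tilD^+,\tilD^-\}$ from \cref{eq:proxy-bounded} must enter, presumably because $s_t$ and $\tilde{x}^-$ are both in $\conv(\widetilde{\cA}_t^-)$ (distance $\le 2\tilD^-$... actually $\le \tilD^-$ by the diameter convention) while $\tilde{x}^+$ is on the positive side, so the relevant bound needs care. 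I would also need to double-check the $\min$ over $\beta\in[0,1]$ is legitimate, i.e., that I am genuinely upper-bounding $d_{t+1}$: since $d_{t+1}$ is a \emph{max} of $h$ over all feasible classifiers and I am only evaluating one family, and $h$ at a fixed direction equals half the distance between the closest positive and negative points, the bound $d_{t+1} \le \frac12\|\tilde{x}^+ - (\beta s_t + (1-\beta)\tilde{x}^-)\|$ holds for \emph{every} $\beta$, hence for the min — this step is sound but should be stated carefully.
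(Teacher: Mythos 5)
Your proposal follows essentially the same route as the paper: invoke \cref{lem:margin-prod} to produce witnesses $\tilde{x}^+\in\conv(\widetilde{\cA}_{t-1}^+)$, $\tilde{x}^-\in\conv(\widetilde{\cA}_{t-1}^-)$ with $\|\tilde{x}^+-\tilde{x}^-\|=2d_t$ and $v(y_t)=(\tilde{x}^+-\tilde{x}^-)/\|\tilde{x}^+-\tilde{x}^-\|$, slide the same-label witness toward $s_t$ via a convex combination, bound $2d_{t+1}$ by H\"older against the \emph{new} optimal direction $y_{t+1}/\|y_{t+1}\|_*$, factor out $2d_t$, and finish the strict inequality $\kappa<1$ by exactly the compactness-plus-negative-derivative-at-$\beta=0$ argument the paper uses. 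The one substantive piece you leave unresolved is the bookkeeping you flag yourself, and it does matter for matching \eqref{eq:kappa}: the correct normalization is $w=(\tilde{x}^+-\tilde{x}^-)/(2d_t)$ and $z=(\tilde{x}^+-s_t)/(2d_t)$ (for $\lbl(A_t)=+1$; $z=(s_t-\tilde{x}^-)/(2d_t)$ for the negative case), \emph{not} division by $d_*$. With this scaling the hypothesis plus $y_t^\top \tilde{x}^+ + b_t = d_t\|y_t\|_*$ gives $\grad\|w\|^\top z \geq \tfrac12(1-a/d_t)$ and the diameter bound $\|\tilde{x}^+-s_t\|_2\le 2\tilD^+\le 2\barD$ gives $\|z\|_2\le\barD/d_t$ — so the factor of $2$ you worried about cancels cleanly. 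This yields $d_{t+1}/d_t\le\bar\kappa\bigl(\tfrac12(1-a/d_t),\barD/d_t\bigr)$ with $d_t$-dependent arguments; the paper then passes to the $d_*$-based $\kappa(a,d_*,\barD)$ by observing that the auxiliary max-min $\bar\kappa(\gamma,\delta)$ is monotone (nonincreasing in $\gamma$, nondecreasing in $\delta$) and invoking $d_t\ge d_*$ from \cref{lem:margin-best}. That monotonicity step is absent from your sketch and is needed to land on the stated constant; everything else, including your justification that $d_{t+1}\le\tfrac12\|u-v\|$ for any pair of points in the two convex hulls, is sound.
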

When we use the $\ell_2$-norm in \cref{assum:cost-norm}, the $\kappa(\cdot)$ function has the following bound.
\begin{lemma}\label{lem:kappa-L2}
When $\|\cdot\| = \|\cdot\|_2$ in \cref{assum:cost-norm}, we have
\[ \kappa(a,d_*,\barD) \leq \sqrt{\max\left\{ 1 - \frac{(d_* - a)^2}{4 \barD^2}, \frac{d_*+a}{2d_*} \right\}}. \]
\end{lemma}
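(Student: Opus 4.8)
The plan is to specialize the definition \eqref{eq:kappa} of $\kappa$ to the Euclidean norm and then evaluate the resulting optimization by a short one-dimensional argument.

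First I would substitute $\|\cdot\| = \|\cdot\|_2$. On the feasible set $\|w\|_2 = 1$, so $\grad \|w\|_2 = w$ and the gradient constraint in \eqref{eq:kappa} reads $w^\top z \geq \mu$, where I abbreviate $\mu := (1 - a/d_*)/2$ and $R := \barD/d_*$. Hence
\[ \kappa(a,d_*,\barD) \;=\; \max_{\|w\|_2 = 1,\; w^\top z \geq \mu,\; \|z\|_2 \leq R} \;\min_{\beta \in [0,1]} \|w - \beta z\|_2, \]
and for fixed $w,z$ with $z \neq 0$ the square $\|w - \beta z\|_2^2 = 1 - 2\beta\, w^\top z + \beta^2 \|z\|_2^2$ is a convex quadratic in $\beta$ minimized at $\beta^\star := (w^\top z)/\|z\|_2^2$. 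Since $w^\top z \geq \mu > 0$ (recall $a < d_*$ as in \cref{prop:margin-decrease}), we have $\beta^\star > 0$, so the minimum over $[0,1]$ is attained at $\beta = \min\{\beta^\star, 1\}$, never at the left endpoint.

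Then I would split into two cases. If $\beta^\star \leq 1$, the minimum value is $1 - (w^\top z)^2/\|z\|_2^2$; bounding $(w^\top z)^2 \geq \mu^2$ and $\|z\|_2^2 \leq R^2$ gives $\min_\beta \|w-\beta z\|_2^2 \leq 1 - \mu^2/R^2 = 1 - (d_*-a)^2/(4\barD^2)$. If instead $\beta^\star > 1$, i.e.\ $\|z\|_2^2 < w^\top z$, the minimum over $[0,1]$ is at $\beta = 1$, so $\min_\beta \|w - \beta z\|_2^2 = \|w-z\|_2^2 = 1 - 2 w^\top z + \|z\|_2^2 < 1 - w^\top z \leq 1 - \mu = (d_*+a)/(2d_*)$. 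Taking the larger of the two bounds and then a square root yields the claimed inequality; the degenerate case $z = 0$ only arises when $\mu \leq 0$, in which case the inner minimum is $\|w\|_2 = 1$ while the right-hand side is already $\geq 1$, so the bound holds there too.

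I do not anticipate a real obstacle: this is a routine projection-onto-$[0,1]$ computation. The only points needing care are (i) verifying that the minimizing $\beta$ never sits at $0$, which uses $w^\top z \geq \mu > 0$, and (ii) in the case $\beta^\star > 1$, using the case hypothesis $\|z\|_2^2 < w^\top z$ to discard the $\|z\|_2^2$ term \emph{before} invoking $w^\top z \geq \mu$; translating $\mu$ and $R$ back into $d_*, a, \barD$ is then immediate.
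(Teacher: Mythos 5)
Your proposal is correct and follows essentially the same route as the paper's proof: specialize $\grad\|w\|_2 = w$ on the unit sphere, minimize the quadratic $1 - 2\beta\, w^\top z + \beta^2\|z\|_2^2$ over $\beta \in [0,1]$, and split into the cases where the unconstrained minimizer lies in $[0,1]$ (yielding the $1-(d_*-a)^2/(4\barD^2)$ bound) or exceeds $1$ (yielding $\|w-z\|_2^2 \leq 1 - w^\top z \leq (d_*+a)/(2d_*)$). The only cosmetic difference is that the paper also records the $w^\top z \leq 0$ case explicitly before discarding it, whereas you note it is infeasible since $\mu>0$; both are fine.
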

\begin{proof}[Proof of \cref{lem:kappa-L2}]
Note that when $\|w\|_2 = 1$, $\grad \|w\|_2 = w$. Let us compute
\[ \bar{\kappa}(\gamma,\delta) := \max_{\substack{w,z : \|w\|_2=1\\ w^\top z \geq \gamma \\ \|z\|_2 \leq \delta}} \min_{\beta \in [0,1]} \|w-\beta z\|_2. \]
First, for fixed $w$ and $z$ such that $\|w\|_2=1$ and $z \neq 0$, we have
\begin{align*}
\min_{\beta \in [0,1]} \|w-\beta z\|_2 &= \min_{\beta \in [0,1]} \sqrt{1 - 2 \beta w^\top z + \beta^2 \|z\|_2^2}\\
&= \sqrt{\min_{\beta \in [0,1]} \left\{1 - 2 \beta w^\top z + \beta^2 \|z\|_2^2\right\}}\\
&= \begin{cases}
1, &\text{if } w^\top z \leq 0\\
\sqrt{1 - \left(\frac{w^\top z}{\|z\|_2}\right)^2}, &\text{if } 0 < w^\top z \leq \|z\|_2^2\\
\|w-z\|_2, &\text{if } w^\top z > \|z\|_2^2, 
\end{cases}
\end{align*}
where the last equality follows from minimizing the quadratic, and recognizing that $\|w\|_2=1$. Then, if $\|z\|_2 \leq \delta$ and $w^\top z \geq \gamma > 0$, the first case cannot happen, and the second case $\leq \sqrt{1 - \left(\frac{\gamma}{\delta}\right)^2}$. For the third case, observe that $w^\top z > \|z\|_2^2$ implies $\|w-z\|_2^2 \leq 1-w^\top z \leq 1-\gamma$. Therefore, we have
\begin{align*}
\min_{\beta \in [0,1]} \|w-\beta z\|_2 &\leq \sqrt{\max\left\{ 1 - \left(\frac{\gamma}{\delta}\right)^2, 1-\gamma  \right\}},
\end{align*}
and thus $\bar{\kappa}(\gamma,\delta)$ is bounded by the term on the right hand side also. Finally, observe that
\[ \kappa(a,d_*,\barD) = \bar{\kappa}\left( \frac{1}{2} \left(1-a/d_*\right), \barD/d_* \right) \leq \sqrt{\max\left\{ 1 - \frac{(d_* - a)^2}{4 \barD^2}, \frac{d_*+a}{2d_*} \right\}}. 
\qedhere
\]
\end{proof}

Using \cref{prop:margin-decrease} we derive the following mistake bound for \cref{alg:data-driven}.
\begin{theorem}\label{thm:margin-best_mistake-bound}
Suppose $\{(y_t,b_t)\}_{t \in \bbN}$ are generated by \cref{alg:data-driven}. Let the iterations in which a mistake is made be denoted by $\mathcal{M}\coloneqq\{ t\in\N :  \plbl(r(A_t,y_t,b_t),y_t,b_t)\neq\lbl(A_t)\}$.
{Under the same assumptions as \cref{prop:margin-decrease}, we have}
\[ |\mathcal{M}| \leq\frac{\log\left( \tilD^{\pm}/d_* \right)}{\log\left( 1/\kappa\left( 0,d_*,\barD \right) \right)} < \infty,\]
where {$d_1$ is the optimal value of \eqref{eq:data-driven-init}.}
\end{theorem}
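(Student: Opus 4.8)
The plan is to derive the bound by combining three facts already in hand: a uniform two-sided control $d_* \le d_t \le \tilD^{\pm}$ on the margins produced by \cref{alg:data-driven}, a per-round monotonicity $d_{t+1} \le d_t$, and a strict geometric contraction $d_{t+1} \le \kappa(0,d_*,\barD)\, d_t$ on exactly those rounds where a mistake occurs. Once these are in place, a telescoping argument over the mistake rounds caps their number.

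First I would establish monotonicity of $\{d_t\}$. In Step~2 of \cref{alg:data-driven}, exactly one of $\widetilde{\cA}_t^+,\widetilde{\cA}_t^-$ gains a point and the other is unchanged, so $\widetilde{\cA}_{t-1}^{\pm} \subseteq \widetilde{\cA}_t^{\pm}$; since $h(y,b;\cdot,\cdot)$ is a minimum over the defining sets and hence pointwise non-increasing as those sets grow, taking $\max$ over $\|y\|_* \le 1,\, b\in\bbR$ gives $d_{t+1} \le d_t$ for all $t$. Together with \cref{lem:margin-best} (which gives $d_* \le d_{t+1} \le \tilD^{\pm}$ for all $t \ge 0$), every $d_t$ lies in $[d_*,\tilD^{\pm}]$, and in particular $d_* \le d_1 \le \tilD^{\pm}$.

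Next, on a round $t \in \mathcal{M}$ a mistake means $\plbl(r(A_t,y_t,b_t),y_t,b_t) \ne \lbl(A_t)$, so \cref{lem:classifier-proxy-inner-product} yields $\lbl(A_t)\bigl[y_t^\top s(A_t,y_t,b_t) + b_t\bigr] \le 0 = 0 \cdot \|y_t\|_*$. Since $0 < d_*$ under \cref{assum:margin}, \cref{prop:margin-decrease} applies with $a=0$ and gives $d_{t+1} \le \kappa(0,d_*,\barD)\, d_t$ with $\kappa(0,d_*,\barD) < 1$. I would then chain: writing the mistake rounds as $t_1 < t_2 < \cdots < t_{|\mathcal{M}|}$, monotonicity on non-mistake rounds gives $d_{t_{i+1}} \le d_{t_i+1}$, and contraction on mistake rounds gives $d_{t_i+1} \le \kappa(0,d_*,\barD)\, d_{t_i}$, so inductively $d_{t_{|\mathcal{M}|}+1} \le \kappa(0,d_*,\barD)^{|\mathcal{M}|}\, d_1 \le \kappa(0,d_*,\barD)^{|\mathcal{M}|}\,\tilD^{\pm}$. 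Combined with $d_* \le d_{t_{|\mathcal{M}|}+1}$, this gives $d_* \le \kappa(0,d_*,\barD)^{|\mathcal{M}|}\,\tilD^{\pm}$; taking logarithms and dividing by $\log\bigl(1/\kappa(0,d_*,\barD)\bigr) > 0$ yields the stated bound, which is finite because $\kappa(0,d_*,\barD)<1$, $\tilD^{\pm}<\infty$ (\cref{assum:bounded}), and $d_*>0$. (Keeping $d_1$ in place of $\tilD^{\pm}$ gives the slightly sharper $|\mathcal{M}| \le \log(d_1/d_*)/\log(1/\kappa(0,d_*,\barD))$.)

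There is no serious obstacle here, since the heavy lifting is done by \cref{prop:margin-decrease} and the fact $\kappa<1$. The only points needing care are (i) justifying the pointwise monotonicity of $h$ in the data sets and hence $d_{t+1}\le d_t$, and (ii) carrying out the telescoping cleanly while distinguishing mistake rounds from non-mistake rounds, so that the lower bound $d_t \ge d_*$ genuinely forces the geometric decay — and therefore $|\mathcal{M}|$ — to terminate.
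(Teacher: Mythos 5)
Your proposal is correct and follows essentially the same route as the paper's proof: apply \cref{lem:classifier-proxy-inner-product} to get $\lbl(A_t)[y_t^\top s(A_t,y_t,b_t)+b_t]\le 0$ on mistake rounds, invoke \cref{prop:margin-decrease} with $a=0$ for the geometric contraction, use monotonicity on non-mistake rounds together with $d_*\le d_t\le\tilD^{\pm}$ from \cref{lem:margin-best}, and telescope. Your explicit justification of $d_{t+1}\le d_t$ via the nestedness of $\widetilde{\cA}_t^{\pm}$ is a small point the paper leaves implicit, but otherwise the arguments coincide.
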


\cref{prop:margin-decrease} also leads to the following bound on the number of times an agent will manipulate their feature vector throughout the course of \cref{alg:data-driven}.
\begin{theorem}\label{thm:margin-best_manipulation-bound}
Suppose $\{(y_t,b_t)\}_{t \in \bbN}$ are generated by \cref{alg:data-driven}. Let
\[ \cN \coloneqq \{t\in\N : r(A_t,y_t,b_t) \neq A_t\}, \ \cN^+ \coloneqq \{t\in\cN : \lbl(A_t) = +1\}, \ \cN^- \coloneqq \{t\in\cN : \lbl(A_t) = -1\}. \]
Under the same assumptions as \cref{prop:margin-decrease}, we have
\[ |\cN^-| \cdot \log\left( 1/\kappa\left( 0,d_*,\barD \right) \right) + |\cN^+| \cdot \log\left( 1/\kappa\left( 2/c,d_*,\barD \right) \right) \leq \log\left( \tilD^{\pm}/d_* \right). \]
Therefore,
\[ |\cN^-| \leq \frac{\log\left( \tilD^{\pm}/d_* \right)}{\log\left( 1/\kappa\left( 0,d_*,\barD \right) \right)}< \infty, \]
and if we further assume $d_*>\frac2c$, then
\[ |\cN^+| \leq \frac{\log\left( \tilD^{\pm}/d_* \right)}{\log\left( 1/\kappa\left( 2/c,d_*,\barD \right) \right)} < \infty. \]
\end{theorem}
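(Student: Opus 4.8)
The plan is to turn the manipulation count into a geometric-decay argument driven by \cref{prop:margin-decrease}, which says that whenever the proxy point presented at step $t$ satisfies $\lbl(A_t)[y_t^\top s(A_t,y_t,b_t)+b_t]\le a\|y_t\|_*$ for some $a<d_*$, the optimal margin contracts: $d_{t+1}\le\kappa(a,d_*,\barD)\,d_t$ with $\kappa(a,d_*,\barD)<1$. The key observation I would establish first is that a manipulation step is precisely a step for which this hypothesis holds with a \emph{small} value of $a$. Indeed, by the response formula \eqref{eq:manipulated}, $t\in\cN$ forces $0\le (y_t^\top A_t+b_t)/\|y_t\|_* < 2/c$ (and $y_t\neq 0$), and then \eqref{eq:proxy_product} gives $y_t^\top s(A_t,y_t,b_t)+b_t=0$ when $\lbl(A_t)=-1$ and $y_t^\top s(A_t,y_t,b_t)+b_t=\tfrac2c\|y_t\|_*$ when $\lbl(A_t)=+1$. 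Hence for $t\in\cN^-$ we have $\lbl(A_t)[y_t^\top s(A_t,y_t,b_t)+b_t]=0$, so \cref{prop:margin-decrease} applies with $a=0<d_*$; and for $t\in\cN^+$ we have $\lbl(A_t)[y_t^\top s(A_t,y_t,b_t)+b_t]=\tfrac2c\|y_t\|_*$, so it applies with $a=2/c$ provided $d_*>2/c$.

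Next I would record the monotonicity $d_1\ge d_2\ge\cdots$: at each iteration \cref{alg:data-driven} only enlarges $\widetilde{\cA}_{t-1}^+$ or $\widetilde{\cA}_{t-1}^-$ by a single proxy point, so $h(\cdot;\widetilde{\cA}_t^+,\widetilde{\cA}_t^-)\le h(\cdot;\widetilde{\cA}_{t-1}^+,\widetilde{\cA}_{t-1}^-)$ pointwise and therefore $d_{t+1}\le d_t$ for every $t$; moreover $d_1\le\tilD^{\pm}$ and $d_t\ge d_*$ by \cref{lem:margin-best}. Then, for an arbitrary horizon $T$, I would partition $[T]$ into three groups: (i) $t\in\cN^-$, where $d_{t+1}\le\kappa(0,d_*,\barD)\,d_t$; (ii) $t\in\cN^+$ with $d_*>2/c$, where $d_{t+1}\le\kappa(2/c,d_*,\barD)\,d_t$; and (iii) all remaining $t$ (no manipulation, or $t\in\cN^+$ when $d_*\le 2/c$), where only $d_{t+1}\le d_t$ is used — noting that in the latter subcase $\kappa(2/c,d_*,\barD)=1$ anyway, since by \eqref{eq:kappa} the constraint $\grad\|w\|^\top z\ge(1-a/d_*)/2\le 0$ then admits $z=0$ and $\min_{\beta\in[0,1]}\|w-\beta\cdot 0\|=\|w\|=1$. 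Taking logarithms and telescoping over $t=1,\dots,T$ yields
\[
|\cN^-\cap[T]|\,\log\tfrac{1}{\kappa(0,d_*,\barD)} + |\cN^+\cap[T]|\,\log\tfrac{1}{\kappa(2/c,d_*,\barD)} \;\le\; \log\tfrac{d_1}{d_{T+1}} \;\le\; \log\tfrac{\tilD^{\pm}}{d_*}.
\]
Letting $T\to\infty$, both left-hand terms are nonnegative and nondecreasing in $T$, so the inequality passes to the limit, giving the combined bound. Since $\kappa(0,d_*,\barD)<1$, dividing by $\log(1/\kappa(0,d_*,\barD))>0$ yields $|\cN^-|\le \log(\tilD^{\pm}/d_*)/\log(1/\kappa(0,d_*,\barD))<\infty$; and when $d_*>2/c$, $\kappa(2/c,d_*,\barD)<1$, so dividing by $\log(1/\kappa(2/c,d_*,\barD))>0$ gives the stated finite bound on $|\cN^+|$.

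There is no single deep step here: the crux is the identification in the first paragraph that manipulation at step $t$ pins $\lbl(A_t)[y_t^\top s(A_t,y_t,b_t)+b_t]$ to be exactly $0$ or $\tfrac2c\|y_t\|_*$, which is what makes the contraction in \cref{prop:margin-decrease} applicable; after that it is a telescoping/bookkeeping exercise resting on \cref{lem:margin-best}. The one point I would be careful about is the third group of indices: non-manipulation steps cannot be controlled through \cref{prop:margin-decrease} (their margin hypothesis may fail) and must instead be absorbed by the elementary monotonicity $d_{t+1}\le d_t$; and the boundary regime $d_*\le 2/c$ must be recognized as the case $\kappa(2/c,d_*,\barD)=1$, which makes the $|\cN^+|$ term drop out of the telescoping and is precisely the reason a finite bound on $|\cN^+|$ needs the extra assumption $d_*>2/c$.
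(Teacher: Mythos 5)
Your proposal is correct and follows essentially the same route as the paper's proof: identify that a manipulation at step $t$ forces $\lbl(A_t)[y_t^\top s(A_t,y_t,b_t)+b_t]$ to equal $0$ (negative label) or $\tfrac{2}{c}\|y_t\|_*$ (positive label), invoke \cref{prop:margin-decrease} with $a=0$ or $a=2/c$, and telescope using $d_*\le d_t\le\tilD^{\pm}$ from \cref{lem:margin-best} together with the monotonicity $d_{t+1}\le d_t$ at non-manipulation steps. Your explicit observation that $\kappa(2/c,d_*,\barD)=1$ when $d_*\le 2/c$, so that the $|\cN^+|$ term contributes nothing to the combined inequality in that regime, is a slightly more careful treatment of the boundary case than the paper gives.
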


\begin{proof}[Proof of \cref{prop:margin-decrease}]
	To ease our notation, we let $s_t := s(A_t,y_t,b_t)$ and $\ell_t := \lbl(A_t)$. By \cref{lem:margin-prod}, for any $t\geq1$, there exists points $x_{+}$, $x_{-}$ such that
	\begin{align*}
		&x_{+} \in \conv\left(\widetilde{\cA}_{t-1}^+\right), \quad x_{-} \in \conv\left(\widetilde{\cA}_{t-1}^-\right), \\
		&y_t^\top x_{+} + b_t = d_t\|y_t\|_*, \quad y_t^\top x_{-} + b_t = -d_t\|y_t\|_*, \\
		&y_t^\top (x_+ - x_-) = \|x_+ - x_-\| \|y_t\|_*. \end{align*}
	This shows that $v(y_t) = \frac{x_+-x_-}{\|x_+-x_-\|}$ and $\frac{y_t}{\|y_t\|_*} = \argmax_{y : \|y\|_* \leq 1} y^\top (x_+-x_-)$. Furthermore, $\|x_+-x_-\| = 2d_t$.
	
	Suppose $\ell_t = +1$. Then \eqref{eq:data-driven} will find $(y_{t+1},b_{t+1})$ such that
	\begin{align*}
		y_{t+1}^\top s_t + b_{t+1} &\geq d_{t+1} \|y_{t+1}\|_*\\
		y_{t+1}^\top x_+ + b_{t+1} &\geq d_{t+1} \|y_{t+1}\|_*\\
		y_{t+1}^\top x_- + b_{t+1} &\leq -d_{t+1} \|y_{t+1}\|_*.
	\end{align*}
	Therefore, by H\"{o}lder's inequality, we have for any $\beta \in [0,1]$,
	\begin{align*}
		2d_{t+1} &\leq \left(\frac{y_{t+1}}{\|y_{t+1}\|_*}\right)^\top (\beta s_t + (1-\beta) x_+ - x_-)\\
		&\leq \|\beta s_t + (1-\beta) x_+ - x_-\|\\
		&= \|x_+-x_- - \beta(x_+-s_t)\|\\
		&= 2d_t \left\| \frac{x_+-x_-}{2d_t} - \beta\frac{x_+ - s_t}{2d_t} \right\|.
	\end{align*}
	Now let $f(\cdot) = \|\cdot\|$, and since by \cref{assum:strictly-convex-norm} the dual norm $\|\cdot\|_*$ is strictly convex, $f$ is differentiable everywhere except $0$. In particular, from $\frac{y_t}{\|y_t\|_*} = \argmax_{y : \|y\|_* \leq 1} y^\top (x_+-x_-)$ and $\|x_+-x_-\| = 2d_t$, we conclude  $y_t/\|y_t\|_* = \grad f((x_+ - x_-)/(2d_t))$. By the premise of the lemma, we have $y_t^\top s_t + b_t \leq a \|y_t\|_*$ and $y_t^\top x_+ + b_t = d_t \|y_t\|_*$, so
	\[ \grad f((x_+ - x_-)/(2d_t))^\top \frac{x_+ - s_t}{2d_t} = \frac{y_t^\top (x_+ - s_t)}{2 d_t \|y_t\|_*} \geq \frac{d_t - a}{2d_t} = \frac{1}{2} \left(1 - \frac{a}{d_t}\right). \]
	Furthermore, we have $\|x_+-s_t\|_2 \leq 2\tilD^+ \leq 2\barD$ and $\|x_+ - x_-\| = 2d_t$. To summarize, we have two vectors $w = \frac{x_+-x_-}{2d_t}$ and $z=\frac{x_+ - s_t}{2d_t}$ such that $f(w) = 1$, $\|z\|_2 \leq \barD/d_t$, $\grad f(w)^\top z \geq \frac{1}{2} (1-a/d_t)$, and for all $\beta \in [0,1]$
	\[ \frac{d_{t+1}}{d_t} \leq f(w - \beta z). \]
	We thus define
	\[ \bar{\kappa}(\gamma,\delta) := \max_{\substack{w,z : f(w)=1\\ \grad f(w)^\top z \geq \gamma\\ \|z\|_2 \leq \delta}} \kappa_0(w,z), \quad \kappa_0(w,z) := \min_{\beta \in [0,1]} f(w - \beta z). \]
	First notice that for any $w,z$ such that $f(w) = 1$, $\kappa_0(w,z) = \min_{\beta \in [0,1]} f(w - \beta z) \leq f(w)= 1$. Thus, {$\bar{\kappa}(\gamma,\delta) \leq 1$} holds whenever there exists a $w,z$ such that $\|w\| = 1$, $\grad f(w)^\top z \geq \gamma$ and $\|z\|_2 \leq \delta$. Furthermore, it is easy to see that as $\gamma$ decreases and $\delta$ increases, $\bar{\kappa}(\gamma,\delta)$ increases.
	
	We now show that $\bar{\kappa}(\gamma,\delta) < 1$ holds whenever $\gamma > 0$. First, for fixed $w,z$, we compute the derivative of $f(w-\beta z)$ at $\beta = 0$. This is $-\grad f(w)^\top z \leq -\gamma < 0$. Therefore, since $f(w) = 1$, for small $\beta > 0$, we have $f(w-\beta z) < 1$. This means $\kappa_0(w,z) = \min_{\beta \in [0,1]} f(w-\beta z) < 1$. Next, notice that since $f$ is continuous, $\kappa_0(w,z)$ is continuous in $w$ and $z$. Finally, the set
	\[ \{ (w,z) :~ f(w) = 1,~ \grad f(w)^\top z \geq \gamma,~ \|z\|_2 \leq \delta \} \]
	is compact (boundness is obvious due to the constraints $f(w) = 1$ and $\|z\|_2 \leq \delta$, and for closedness recall that $\|\cdot\|$ is strictly convex and thus $\grad f(\cdot)$ is continuous over unit vectors $f(w)=1$). Therefore, the maximum of $\kappa_0(w,z)$ is attained, and is $<1$.
	
	This shows that when $a < d_*$,
	\[ \frac{d_{t+1}}{d_t} \leq \bar{\kappa}\left( \frac{1}{2} \left(1-a/d_t\right), \barD/d_t \right) \leq \bar{\kappa}\left( \frac{1}{2} \left(1-a/d_*\right), \barD/d_* \right) = \kappa(a,d_*,\barD) < 1, \]
	where the second inequality follows since $d_t \geq d_*$ and the third inequality follows since $a < d_*$.
	
	An analogous argument follows for the case of $\ell_t = -1$.
\end{proof}

\begin{proof}[Proof of \cref{thm:margin-best_mistake-bound}]
For any $A_t\in\mathcal{M}$, \cref{lem:classifier-proxy-inner-product} implies that $\lbl(A_t)[y_t^\top s(A_t,y_t,b_t)+b_t]\leq0$. Therefore, we can apply \cref{prop:margin-decrease} with $a=0$ to derive $d_{t+1}\leq\kappa(0,d_*,\barD) d_t$. Note that $d_{t+1} \leq d_t$ when $t\notin\mathcal{M}$. 
Recall that by \cref{lem:margin-best}, we have $d_*\leq d_t$ for all $t \geq 1$, so we have $d_* \leq \kappa(0,d_*,\barD) d_t$ whenever $t \in\mathcal{M}$, and $d_* \leq  d_t$ otherwise. Applying the inequalities recursively leads to $d_* \leq d_1 \kappa(0,d_*,\barD)^{|\mathcal{M}|}$. Therefore, $|\mathcal{M}|\leq\frac{\log(d_1/d_*)}{\log(1/\kappa(0,d_*,\barD))}<\infty$ since $\kappa(0,d_*,\barD) < 1$ by \cref{prop:margin-decrease}. Finally, we have $d_1 \leq \tilD^{\pm}$ from \cref{lem:margin-best}. 
\end{proof}

\begin{proof}[Proof of \cref{thm:margin-best_manipulation-bound}]
Consider any iteration $t$ such that $r(A_t,y_t,b_t) \neq A_t$. Then, by \eqref{eq:manipulated} it must be the case that $0 \leq y_t^\top A_t + b_t < 2\|y_t\|_*/c$.

When $\lbl(A_t) = -1$, by \eqref{eq:proxy} and recalling $y^\top v(y)=\|y\|_*$ holds for any $y$, we deduce $y_t^\top s(A_t,y_t,b_t) + b_t = 0$.  Then, \cref{prop:margin-decrease} with $a=0$ gives $d_{t+1}\leq \kappa(0,d_*,\barD) d_t$. When $\lbl(A_t) = +1$, by \eqref{eq:proxy} we have $y_t^\top s(A_t,y_t,b_t) + b_t = 2\|y_t\|_*/c$. Then, \cref{prop:margin-decrease} with $a=2/c$ gives $d_{t+1}\leq \kappa(2/c,d_*,\barD) d_t$.

Also, $d_{t+1} \leq d_t$ for $t\notin\cN = \cN^+ \cup \cN^-$. By \cref{lem:margin-best} we have $d_t \geq d_*$ for all $t \geq 1$. Therefore, recursively applying these inequalities yields
\[ d_* \leq d_1 \cdot \kappa(0,d_*,\barD)^{|\cN^-|} \cdot \kappa(2/c,d_*,\barD)^{|\cN^+|}, \]
which implies
\[ |\cN^-| \cdot \log\left( 1/\kappa\left( 0,d_*,\barD \right) \right) + |\cN^+| \cdot \log\left( 1/\kappa\left( 2/c,d_*,\barD \right) \right) \leq \log(d_1/d_*) \leq \log\left( \tilD^{\pm}/d_* \right). \]
Note that $\kappa(0,d_*,\barD) < 1$ always, and when $d_* > 2/c$ we have $\kappa(2/c,d_*,\barD) < 1$, which verifies finiteness of the bounds.
\end{proof}

We now turn our attention to guarantees on convergence of iterates of \cref{alg:data-driven} to the maximum margin classifier $(y_*,b_*)$. In order to obtain this, we will make the following assumption, to ensure that $\cA$ is ``explored'' adequately. Otherwise, the sequence of $\{A_t\}_{t \in \bbN}$ could fail to cover $\cA$, and we would not recover $(y_*,b_*)$.

\begin{assumption}\label{assum:data-driven-stochastic}
	The set of possible feature vectors $\cA$ is closed, and there exists a probability distribution $\bbP$ over $\bbR^d$, for which $\cA$ is the support, in the sense that for any $A \in \cA$ and $\epsilon > 0$ we have $\bbP[\{x : \|A-x\| \leq \epsilon\}] > 0$, and for any $A \not\in \cA$, there exists $\epsilon > 0$ such that $\bbP[\{x : \|A-x\| \leq \epsilon\}] = 0$. Furthermore, the agents' feature vectors $\{A_t\}_{t \in \bbN}$ are drawn i.i.d. from the distribution $\bbP$.
\end{assumption}

\begin{theorem}\label{thm:data-driven-convergence}
	Suppose that \cref{assum:margin,assum:bounded,assum:strictly-convex-norm} hold with $d_* > 2/c$, and that $\{A_t\}_{t \in \bbN}$ are generated according to \cref{assum:data-driven-stochastic}. If $\{(y_t,b_t)\}_{t \in \bbN}$ are generated by \cref{alg:data-driven}, then $(y_t,b_t) \to (y_*/\|y_*\|_*,b_*/\|y_*\|_*)$ almost surely.
\end{theorem}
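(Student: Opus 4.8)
The plan is to show that the unique maximizers $(y_t,b_t)$ of the proxy max-margin problems \eqref{eq:data-driven} converge to the unique maximizer of a single \emph{limiting} max-margin problem, and then to identify that maximizer with $(y_*/\|y_*\|_*,b_*/\|y_*\|_*)$. Throughout, recall from \cref{lem:margin-best} that $d_{t+1}\ge d_*>0$, so by \cref{lem:unique-sol} (under \cref{assum:strictly-convex-norm}) each $(y_{t+1},b_{t+1})$ is well defined and unique; from \cref{lem:y_star-y-nonnegative} that $y_*^\top v(y_t)\ge 0$ for all $t$; and that $d_t$ is non-increasing, since enlarging $\widetilde{\cA}_t^{\pm}$ only shrinks $h(\cdot;\widetilde{\cA}_t^+,\widetilde{\cA}_t^-)$ pointwise.

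First I would show the proxy sets eventually consist of true features and fill out $\cA^\pm$. Since $d_*>2/c$, \cref{thm:margin-best_manipulation-bound} gives $|\cN^+|+|\cN^-|<\infty$, so almost surely there is a finite (random) time $T_0$ with $r(A_t,y_t,b_t)=s(A_t,y_t,b_t)=A_t$ for all $t\ge T_0$; thus for $t\ge T_0$ the point appended at step $t$ is exactly the true feature $A_t$. Let $\widetilde{\cA}_\infty^{\pm}:=\bigcup_t \widetilde{\cA}_t^{\pm}$, which are bounded by \cref{assum:bounded}. By \cref{cor:proxy-inclusion} applied with $(\bar y,\bar b)=(y_*,b_*)$ (valid because $y_*^\top v(y_t)\ge 0$), every $x\in\widetilde{\cA}_\infty^+$ satisfies $y_*^\top x+b_*\ge d_*\|y_*\|_*$ and every $x\in\widetilde{\cA}_\infty^-$ satisfies $y_*^\top x+b_*\le -d_*\|y_*\|_*$ — in particular the finitely many ``shifted'' proxies created before $T_0$ also respect this margin. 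Moreover, \cref{assum:margin} forces $\cA^+$ and $\cA^-$ to lie at $\|\cdot\|$-distance at least $2d_*>0$, so for $A\in\cA^+$ and $\epsilon<2d_*$ the ball $\{x:\|x-A\|\le\epsilon\}$ meets $\cA$ only inside $\cA^+$; combining this with the i.i.d.\ full-support sampling of \cref{assum:data-driven-stochastic} (a Borel--Cantelli argument over a countable dense subset of $\cA$) yields almost surely that $\{A_t:t\ge T_0\}$ is dense in $\cA$, hence $\cA^+\subseteq\cl(\widetilde{\cA}_\infty^+)$ and $\cA^-\subseteq\cl(\widetilde{\cA}_\infty^-)$.

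Next I would set up the limiting problem and pass to the limit. Define $h_\infty(y,b):=h(y,b;\cl(\widetilde{\cA}_\infty^+),\cl(\widetilde{\cA}_\infty^-))$, which agrees with $h(y,b;\widetilde{\cA}_\infty^+,\widetilde{\cA}_\infty^-)$ read as an infimum. By the facts above, $h_\infty(y_*/\|y_*\|_*,b_*/\|y_*\|_*)\ge d_*$ while $h_\infty\le h(\cdot;\cA^+,\cA^-)$ pointwise with $\max_{\|y\|_*\le 1,b}h(\cdot;\cA^+,\cA^-)=d_*$ (the convex reformulation of \eqref{eq:data-driven-best}); hence $\max_{\|y\|_*\le 1,b}h_\infty=d_*>0$, so by \cref{lem:unique-sol} it has a unique maximizer, necessarily $(y_*/\|y_*\|_*,b_*/\|y_*\|_*)$. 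All iterates and this maximizer lie in a fixed compact box $\cD=\{\|y\|_*\le 1\}\times\{|b|\le M\}$, with $M$ coming from boundedness of $\widetilde{\cA}_\infty^{\pm}$ via the closed form for $b_{t+1}$ in \cref{lem:unique-sol}. By \cref{lem:unique-accumulation}, $h(\cdot;\widetilde{\cA}_t^+,\widetilde{\cA}_t^-)\to h_\infty$ uniformly on $\cD$, so $d_t\to d_*$. Along any subsequence with $(y_t,b_t)\to(\bar y,\bar b)$ (possible by compactness of $\cD$), uniform convergence plus continuity of $h_\infty$ give $h_\infty(\bar y,\bar b)=\lim d_t=d_*=\max_\cD h_\infty$, so $(\bar y,\bar b)$ is the unique maximizer $(y_*/\|y_*\|_*,b_*/\|y_*\|_*)$; since every convergent subsequence has the same limit and $\cD$ is compact, the whole sequence converges there.

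The hardest part is identifying the solution of the limiting problem: this is exactly where $d_*>2/c$ is indispensable — it is what makes \cref{thm:margin-best_manipulation-bound} yield finitely many manipulations, so only finitely many proxies are ever ``corrupted'' — and where \cref{lem:y_star-y-nonnegative} together with \cref{cor:proxy-inclusion} are needed, so that even those corrupted proxies respect the margin of $(y_*,b_*)$ and hence cannot inflate the limiting margin above $d_*$. The almost-sure density step is routine but genuinely needs the $\|\cdot\|$-separation of $\cA^+$ and $\cA^-$ to convert full support of $\bbP$ into label-consistent density within each class.
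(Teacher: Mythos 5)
Your proposal is correct and follows essentially the same route as the paper: finitely many manipulations (via \cref{thm:margin-best_manipulation-bound}, using $d_*>2/c$) so that proxies eventually equal true features, almost-sure density of the samples in $\cA$, margin preservation of the early corrupted proxies via \cref{lem:y_star-y-nonnegative} and \cref{cor:proxy-inclusion} to pin down $(y_*/\|y_*\|_*,b_*/\|y_*\|_*)$ as the unique maximizer of the limiting function, and uniform convergence of $h_t$ to transfer optimality to the iterates. The only cosmetic difference is your final step, which uses an elementary compactness/subsequence argument in place of the paper's citation of \citet[Theorem 7.33]{rockafellar_variational_2009}.
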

\begin{proof}[Proof of \cref{thm:data-driven-convergence}]
	Recall the notation $\widetilde{\cA}_t^+ = \{s(A_t,y_t,b_t) : t \in \bbN,\, \lbl(A_t) = +1\}$, $\cA^+ := \{A \in \cA : \lbl(A) = +1\}$ and $\widetilde{\cA}_t^- = \{s(A_t,y_t,b_t) : t \in \bbN,\, \lbl(A_t) = -1\}$, $\cA^- := \{A \in \cA : \lbl(A) = -1\}$. With this notation, by \cref{assum:margin}, we have
	\[ \left(\frac{y_*}{\|y_*\|_*}, \frac{b}{\|y_*\|_*} \right) \in {\argmax_{\|y\|_*\leq 1,b \in \bbR}} \obj{y,b}. \]
	Throughout this proof, we will assume that $\|y_*\|_* = 1$.
	
	Since each $\|s(A_t,y_t,b_t)\|_2 \leq \tilD$, by \cref{lem:uniform-convergence} the functions $\objt{\cdot}$ will uniformly converge to some $h_\infty(y,b)$. Since $d_*>\frac2c$, by \cref{thm:margin-best_manipulation-bound}, there exists $t_0\in\N$ such that $s(A_t,y_t,b_t)=A_t$ for all $t\geq t_0$, almost surely. Also, the set $\{A_t : t \geq t_0\}$ is dense in $\cA$ almost surely. To see this, suppose that it is not, so there exists some $A \in \cA$ and $\epsilon > 0$ such that $\|A_t - A\| > \epsilon$ for all $t \geq t_0$. Since the $A_t$ are i.i.d., this occurs with probability
	\[ \bbP\left[ \forall t \geq t_0, \|A_t - A\| > \epsilon \right] = \lim_{t \to \infty} \prod_{t \geq t_0} (1 - \bbP[\|A_t - A\| \leq \epsilon]) = 0. \]
	The final equality follows as $\bbP[\|A_t - A\| \leq \epsilon] > 0$ by \cref{assum:data-driven-stochastic}. Now, since $\objt{y,b} \leq h(y,b;\widetilde{\cA}_t^+\setminus\widetilde{\cA}_{t_0}^+,\widetilde{\cA}_t^-\setminus\widetilde{\cA}_{t_0}^-)$ for all $t \geq t_0$, hence
	\begin{align*}
		h_\infty(y,b) = \lim_{t\to\infty}\objt{y,b}
		&\leq \lim_{t\to\infty}h(y,b;\widetilde{\cA}_t^+\setminus\widetilde{\cA}_{t_0}^+,\widetilde{\cA}_t^-\setminus\widetilde{\cA}_{t_0}^-) = \obj{y,b}
	\end{align*}
	where the {last} equality follows as $\{\cA_t\}_{t \geq t_0} = \bigcup_{t \geq t_0} \left( \left(\widetilde{\cA}_t^+\setminus\widetilde{\cA}_{t_0}^+ \right) \cup \left(\widetilde{\cA}_t^-\setminus\widetilde{\cA}_{t_0}^-\right) \right)$ is dense in $\cA$.
	
	Next, observe that from \cref{lem:y_star-y-nonnegative,cor:proxy-inclusion} we know that $\lbl(A_t) (y_*^\top s(A_t,y_t,b_t) + b_*) \geq d_*$ for all $t\in\bbN$, thus $h_\infty(y_*,b_*) \geq d_*$. Since {$d_* \geq \obj{y,b} \geq h_\infty(y,b)$} almost surely, we deduce that $(y_*,b_*)$ maximizes $h_\infty$ over the constraints $\|y\| \leq 1$, $b \in \bbR$ almost surely. By \cref{lem:unique-sol}, it is the unique optimal solution. Finally, since $(y_{t+1},b_{t+1})$ is the maximizer of $\objt{y,b}$ over the same constraint set, and $\objt{\cdot}$ converges to $h_\infty(\cdot)$ uniformly, by \citet[Theorem 7.33]{rockafellar_variational_2009} we have that $(y_t,b_t) \to \argmax_{\|y\|_*\leq 1, b \in \bbR} h_\infty(y,b)$. As $(y_*,b_*)=\argmax_{\|y\|_*\leq 1, b \in \bbR} h_\infty(y,b)$ almost surely, we conclude $(y_t,b_t) \to (y_*/\|y_*\|_*,b_*/\|y_*\|_*)$ almost surely.
\end{proof}

\subsection{Guarantees for \texorpdfstring{\cref{alg:data-driven-subgradient-averaging}}{Algorithm 2}}\label{sec:guarantee-SMM-grad}

We now provide guarantees for \cref{alg:data-driven-subgradient-averaging} in the case when the agent manipulation cost is the $\ell_2$-norm. We first show the crucial property required to maintain separation of the proxy data throughout the course of the algorithm (see \cref{cor:proxy-inclusion}).
	
\begin{lemma}\label{lem:subgrad-averaging-positive-inner-product}
	Suppose \cref{assum:cost-norm} holds with $\|\cdot\|=\|\cdot\|_2$, and \cref{assum:margin} holds. For $\{y_t\}_{t \in \bbN}$ generated by \cref{alg:data-driven-subgradient-averaging}, we have $y_*^\top v(y_t) \geq 0$.
\end{lemma}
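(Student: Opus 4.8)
The plan is to reduce the statement to a single induction on the auxiliary iterates $z_\tau$. Since $\|\cdot\|=\|\cdot\|_2$ is self-dual and strictly convex, \cref{assum:strictly-convex-norm} holds automatically and $v(y)=y/\|y\|_2$ for $y\neq 0$ while $v(0)=0$; in particular $y_*^\top v(y_t)\geq 0$ is equivalent to $y_*^\top y_t\geq 0$ (the case $y_t=0$ being trivial). By Step~3c of \cref{alg:data-driven-subgradient-averaging}, $y_t=\bigl(\sum_{\tau\in[t]}\gamma_\tau z_\tau\bigr)\big/\bigl(\sum_{\tau\in[t]}\gamma_\tau\bigr)$ is a convex combination of $z_1,\dots,z_t$ with strictly positive weights, so it is enough to prove the stronger claim $y_*^\top z_\tau\geq 0$ for every $\tau\in\mathbb{N}$, which I would do by induction on $\tau$.

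For the base case I would argue exactly as in the base case of the proof of \cref{lem:y_star-y-nonnegative}: $z_1=y_1$ is the optimal solution of $\max_{\|y\|_*\leq 1,\,b}h(y,b;\widetilde{\cA}_0^+,\widetilde{\cA}_0^-)$ over the finite, linearly separable sets $\widetilde{\cA}_0^+\subseteq\cA^+$ and $\widetilde{\cA}_0^-\subseteq\cA^-$, and \cref{assum:margin} gives $h(y_*,b_*;\widetilde{\cA}_0^+,\widetilde{\cA}_0^-)\geq d_*\|y_*\|_*>0$, so \cref{lem:margin-prod} applies with $(\bar y,\bar b)=(y_*,b_*)$ and $\bar d=d_*\|y_*\|_*$ and yields $y_*^\top v(y_1)>0$, hence $y_*^\top z_1>0$. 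For the inductive step, suppose $y_*^\top z_\tau\geq 0$ for all $\tau\in[t]$; then $y_*^\top y_\tau\geq 0$, hence $y_*^\top v(y_\tau)\geq 0$, for all $\tau\in[t]$. Applying \cref{cor:proxy-inclusion} with $(\bar y,\bar b)=(y_*,b_*)$ to each such $\tau$, every proxy point accumulated in $\widetilde{\cA}_t^+\cup\widetilde{\cA}_t^-$ satisfies $\lbl(A_\tau)\bigl(y_*^\top s(A_\tau,y_\tau,b_\tau)+b_*\bigr)\geq d_*\|y_*\|_*$; in particular (both sets being non-empty thanks to \cref{alg:initialization}) the sets $\widetilde{\cA}_t^+,\widetilde{\cA}_t^-$ are separated by $(y_*,b_*)$ with margin $d_*\|y_*\|_*$. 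Hence $y_*^\top s_t^+\geq d_*\|y_*\|_*-b_*$ and $y_*^\top s_t^-\leq -d_*\|y_*\|_*-b_*$, so $y_*^\top(s_t^+-s_t^-)\geq 2d_*\|y_*\|_*>0$, and therefore $y_*^\top\bigl(z_t+\gamma_t(s_t^+-s_t^-)\bigr)=y_*^\top z_t+\gamma_t\,y_*^\top(s_t^+-s_t^-)\geq \gamma_t\cdot 2d_*\|y_*\|_*>0$ using $y_*^\top z_t\geq 0$ and $\gamma_t>0$. Finally, the Euclidean projection onto the unit ball satisfies $\Proj_{B_{\|\cdot\|_2}}(w)=w/\max\{1,\|w\|_2\}$, a nonnegative rescaling, so $y_*^\top z_{t+1}=\dfrac{y_*^\top(z_t+\gamma_t(s_t^+-s_t^-))}{\max\{1,\,\|z_t+\gamma_t(s_t^+-s_t^-)\|_2\}}>0$, which closes the induction and, after passing to the convex average $y_t$, proves the lemma.

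I do not expect a serious obstacle. The one point that genuinely needs care is that invoking \cref{cor:proxy-inclusion} at iteration $t$ requires the sign condition $y_*^\top v(y_\tau)\geq 0$ for \emph{all} $\tau\le t$ — so that every proxy point already stored in $\widetilde{\cA}_t^+\cup\widetilde{\cA}_t^-$ is correctly separated by $(y_*,b_*)$ — and not merely for $y_t$; this is exactly why I would run the induction on the $z_\tau$ and only pass to the average $y_t$ at the very end, rather than trying to induct on the $y_t$ directly. The only other thing to verify is the elementary observation that projection onto the $\ell_2$ ball scales a vector by a nonnegative scalar and therefore cannot flip the sign of $y_*^\top(\cdot)$; this is the one place where the hypothesis $\|\cdot\|=\|\cdot\|_2$ is used beyond making $v(\cdot)$ explicit.
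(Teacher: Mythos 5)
Your proposal is correct and follows essentially the same route as the paper's proof: reduce $y_*^\top v(y_t)\geq 0$ to $y_*^\top y_t\geq 0$ via $v(y)=y/\|y\|_2$, induct on $y_*^\top z_\tau\geq 0$ using \cref{lem:margin-prod} for the base case and \cref{cor:proxy-inclusion} plus the nonnegative-rescaling property of the $\ell_2$-ball projection for the inductive step, then pass to the convex average $y_t$. The only (cosmetic) difference is that the paper carries both $y_*^\top z_t\geq 0$ and $y_*^\top y_t\geq 0$ as the induction hypothesis rather than deriving the latter inside the step.
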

\begin{proof}
Since $\|\cdot\| = \|\cdot\|_2$, we have that $v(y) = y/\|y\|_2$ when $y \neq 0$ in \cref{assum:unique-direction}. Therefore the condition $y_*^\top v(y_t) \geq 0$ in \cref{cor:proxy-inclusion} is equivalent to $y_*^\top y_t \geq 0$.

We prove by induction that $y_*^\top z_t \geq 0$ and $y_*^\top y_t \geq 0$ for all $t \geq 1$. First, by \cref{lem:margin-prod}, the initialization procedure guarantees that we have $y_*^\top y_1 = y_*^\top z_1 \geq 0$, since we are separating points from $\cA$, which is guaranteed to have positive margin by \cref{assum:margin}.

Now assume that $y_*^\top y_\tau \geq 0$, $y_*^\top z_\tau \geq 0$ for all $\tau \in [t]$. The sets $\widetilde{\cA}_t^+ = \{s(A_\tau,y_\tau,b_\tau) : \tau \in [t], \lbl(A_\tau) = +1\}$ and $\widetilde{\cA}_t^- = \{s(A_\tau,y_\tau,b_\tau) : \tau \in [t], \lbl(A_\tau) = -1\}$ are separable by $(y_*,b_*)$ with margin at least $d_*$ by the assumption that $y_*^\top y_\tau \geq 0$ for $\tau \in [t]$. Therefore, as $s_t^+ \in \widetilde{\cA}_t^+$ and $s_t^- \in \widetilde{\cA}_t^-$ we have
\[ y_*^\top s_t^+ + b_* \geq d_* \|y_*\|_*, \quad y_*^\top s_t^- + b_* \leq -d_* \|y_*\|_*, \]
so 
\[ y_*^\top (z_t + \gamma_t {(s_t^+ - s_t^-)}) \geq y_t^\top z_t + 2\gamma_t d_* \|y_*\|_* \geq 0, \]
where in the last inequality we also used $y_*^\top z_t \ge0$ from the induction hypothesis. 
Since $z_{t+1} = \Proj_{B_{\|\cdot\|_2}}(z_t + \gamma_t (s_t^+ - s_t^-))$ is a non-negative scalar multiple of $z_t + \gamma_t {(s_t^+ - s_t^-)}$, we have $y_*^\top z_{t+1} \geq 0$ as well. Now, as $y_{t+1}$ is {simply} a convex combination of $z_1,\ldots,z_{t+1}$, we have $y_*^\top y_{t+1} \geq 0$ also. Thus, the result holds by induction. 
\end{proof}

We are now ready to state the performance guarantees for \cref{alg:data-driven-subgradient-averaging}. Like \cref{alg:data-driven}, \cref{alg:data-driven-subgradient-averaging} makes finitely many mistakes and manipulations, but unlike \cref{alg:data-driven}, we do not obtain explicit bounds. The classifiers  obtained from \cref{alg:data-driven-subgradient-averaging} also converge to the maximum margin classifier from \cref{assum:margin}.
\begin{theorem}\label{thm:averaging-convergence}
	Suppose \cref{assum:cost-norm} holds with $\|\cdot\|=\|\cdot\|_2$, and that \cref{assum:margin,assum:bounded} hold. Let $\{(y_t,b_t)\}_{t\in\N}$ be generated by \cref{alg:data-driven-subgradient-averaging} with $\gamma_t=1/\sqrt{t}$. Then, \cref{alg:data-driven-subgradient-averaging} makes finitely many mistakes. If we further assume $d_*>\frac2c$ in \cref{assum:margin}, then it also makes finitely many manipulations. When \cref{assum:data-driven-stochastic} and $d_* > \frac2c$ both hold in addition to the previous assumptions, then $(y_t,b_t) \to (y_*/\|y_*\|_2,b_*/\|y_*\|_2)$ almost surely. 
\end{theorem}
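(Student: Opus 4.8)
The plan is to first prove convergence of the iterates $(y_t,b_t)$ and then \emph{deduce} the finite‑mistake and finite‑manipulation guarantees from it (rather than the reverse), which sidesteps any margin‑decrease estimate like \cref{prop:margin-decrease}. Normalize $\|y_*\|_2=1$. Since $\|\cdot\|=\|\cdot\|_2$, \cref{assum:strictly-convex-norm} holds automatically, so \cref{lem:subgrad-averaging-positive-inner-product} gives $y_*^\top v(y_t)\ge0$ for all $t$; by \cref{cor:proxy-inclusion} every proxy point $s(A_\tau,y_\tau,b_\tau)$ is then separated by $(y_*,b_*)$ with margin at least $d_*$, so the concave functions $g_t(y):=\tfrac12\big(\min_{x\in\widetilde{\cA}_t^+}y^\top x-\max_{x\in\widetilde{\cA}_t^-}y^\top x\big)=\max_b h(y,b;\widetilde{\cA}_t^+,\widetilde{\cA}_t^-)$ satisfy $g_t(y_*)\ge d_*$. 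By \cref{assum:bounded} and \eqref{eq:proxy-bounded} all proxy points lie in a fixed $\ell_2$‑ball, so \cref{lem:uniform-convergence} shows $g_t$ converges uniformly on $B_{\|\cdot\|_2}$ (monotonically, since the data sets grow) to a concave $g_\infty$ with $g_\infty(y_*)\ge d_*>0$; \cref{lem:unique-sol} then yields a unique maximizer $(\hat y,\hat b)$ of $h_\infty:=h(\cdot\,;\widetilde{\cA}_\infty^+,\widetilde{\cA}_\infty^-)$ over $\|y\|_2\le1$, with $\|\hat y\|_2=1$, $\hat y$ the unique maximizer of $g_\infty$, and value $\hat d:=g_\infty(\hat y)=h_\infty(\hat y,\hat b)\ge d_*$.

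Next I would run the standard projected online‑gradient‑ascent analysis on the $z$‑iterates. Writing $\zeta_t:=s_t^+-s_t^-$, one has $\tfrac12\zeta_t\in\partial g_t(z_t)$ and $\|\zeta_t\|_2\le2\tilD$, so non‑expansiveness of $\Proj_{B_{\|\cdot\|_2}}$ yields, for any $u$ with $\|u\|_2\le1$, the telescoping bound $\sum_{\tau\le T}\gamma_\tau\big(g_\tau(u)-g_\tau(z_\tau)\big)\le\tfrac14\|z_1-u\|_2^2+\tfrac14\sum_{\tau\le T}\gamma_\tau^2\|\zeta_\tau\|_2^2$, which for $\gamma_\tau=1/\sqrt\tau$ is $O(\log T)=o\big(\sum_{\tau\le T}\gamma_\tau\big)$. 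Taking $u=\hat y$ and using $g_\tau(\hat y)\ge g_\infty(\hat y)=\hat d$ gives $\big(\sum_{\tau\le T}\gamma_\tau g_\tau(z_\tau)\big)/\big(\sum_{\tau\le T}\gamma_\tau\big)\ge\hat d-o(1)$; since $g_\infty(z_\tau)\ge g_\tau(z_\tau)-\epsilon_\tau$ with $\epsilon_\tau:=\sup_{\|y\|_2\le1}\big(g_\tau(y)-g_\infty(y)\big)\to0$, and $y_T=\big(\sum_{\tau\le T}\gamma_\tau z_\tau\big)/\big(\sum_{\tau\le T}\gamma_\tau\big)$ is a convex combination of the $z_\tau$ by \eqref{eq:data-driven-subgradient-averaging}, concavity of $g_\infty$ gives $g_\infty(y_T)\ge\hat d-o(1)$. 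Combined with $g_\infty(y_T)\le\hat d$ and the uniqueness of the maximizer of the continuous $g_\infty$ over the compact ball, a standard subsequence argument forces $y_t\to\hat y$; and since $y\mapsto\min_{x\in\widetilde{\cA}_t^+}y^\top x$ and $y\mapsto\max_{x\in\widetilde{\cA}_t^-}y^\top x$ also converge uniformly on the ball, the formula for $b_{t+1}$ in \eqref{eq:data-driven-subgradient-averaging} yields $b_t\to\hat b$.

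The mistake and manipulation bounds then follow cheaply. If a mistake occurs at $t$, \cref{lem:classifier-proxy-inner-product} gives $\lbl(A_t)\big(y_t^\top s_t+b_t\big)\le0$ where $s_t:=s(A_t,y_t,b_t)$ lies in $\widetilde{\cA}_\infty^+$ or $\widetilde{\cA}_\infty^-$ according to $\lbl(A_t)$, while $(\hat y,\hat b)$ separates $\widetilde{\cA}_\infty^{\pm}$ with margin $\hat d$, so $0\ge\lbl(A_t)\big(y_t^\top s_t+b_t\big)\ge\hat d-\|y_t-\hat y\|_2\,\tilD-|b_t-\hat b|$, impossible for large $t$; hence finitely many mistakes (no need for $d_*>2/c$). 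If $d_*>2/c$ and $A_t$ manipulates: when $\lbl(A_t)=-1$, \eqref{eq:proxy_product} gives $y_t^\top s_t+b_t=0$, contradicting $\lbl(A_t)\big(y_t^\top s_t+b_t\big)\ge\hat d-\|y_t-\hat y\|_2\,\tilD-|b_t-\hat b|>0$ for large $t$; when $\lbl(A_t)=+1$, \eqref{eq:proxy_product} gives $y_t^\top s_t+b_t=\tfrac2c\|y_t\|_2$, so $\tfrac2c\|y_t\|_2\ge\hat d-\|y_t-\hat y\|_2\,\tilD-|b_t-\hat b|$, and letting $t\to\infty$ (with $\|y_t\|_2\to1$) gives $2/c\ge\hat d\ge d_*$, contradicting $d_*>2/c$; hence finitely many manipulations. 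Finally, under \cref{assum:data-driven-stochastic} with $d_*>2/c$, finiteness of manipulations gives $t_0$ with $s(A_t,y_t,b_t)=A_t$ for all $t\ge t_0$, and $\{A_t:t\ge t_0\}$ is a.s. dense in $\cA$ exactly as in the proof of \cref{thm:data-driven-convergence}; discarding the finitely many earlier proxy points then shows $h_\infty(y,b)\le h(y,b;\cA^+,\cA^-)\le d_*$ for all $\|y\|_2\le1$, which with $h_\infty(y_*,b_*)\ge d_*$ forces $(\hat y,\hat b)=(y_*,b_*)$, and undoing the normalization gives $(y_t,b_t)\to(y_*/\|y_*\|_2,b_*/\|y_*\|_2)$. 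The crux is the online‑convex‑optimization step with a \emph{moving} objective sequence $g_t$: the limit of $y_t$ must be identified by pairing the $\gamma_t$‑weighted regret bound with the uniform (and monotone) convergence $g_t\to g_\infty$ and a concavity/averaging argument, since—as the paper notes—this does not reduce to off‑the‑shelf JEO results because the ``parameter'' describing $g_t$ has unbounded dimension.
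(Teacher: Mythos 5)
Your proposal is correct, and it assembles exactly the same ingredients as the paper's proof --- \cref{lem:subgrad-averaging-positive-inner-product} plus \cref{cor:proxy-inclusion} for separability of the proxies, \cref{lem:uniform-convergence} for the (monotone) uniform convergence $g_t \to g_\infty$, the $\gamma_t$-weighted online supergradient regret bound, Stolz--Ces\`{a}ro, and Jensen applied to the averaged iterate $y_T$ --- but in a genuinely different logical order. The paper first extracts the finite-mistake and finite-manipulation claims from the lower bound $g_\infty(y_t) \geq d_* - \epsilon$ together with a direct estimate of $|\objt{y_t,b_t} - g_t(y_t)|$, and only identifies a limit of $(y_t,b_t)$ under \cref{assum:data-driven-stochastic}. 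You instead first prove, path by path and without any stochastic assumption, that $(y_t,b_t)$ converges to the \emph{unique} maximizer $(\hat y,\hat b)$ of $h(\cdot;\widetilde{\cA}_\infty^+,\widetilde{\cA}_\infty^-)$ (using $u=\hat y$ as the comparator, which is legitimate since $g_\tau \geq g_\infty$ by nestedness, and \cref{lem:unique-sol} for uniqueness since the optimal value is $\geq d_*>0$), and then read off finite mistakes from $\hat d>0$ and finite manipulations from $\hat d \geq d_* > 2/c$ via \cref{lem:classifier-proxy-inner-product} and \eqref{eq:proxy_product}. This buys a mild strengthening --- convergence of the iterates to a well-defined (trajectory-dependent) max-margin classifier of the realized proxy data even without \cref{assum:data-driven-stochastic} --- and lets you replace the paper's Carath\'{e}odory construction in the final identification step by the simpler density argument $g_\infty(y) \leq \max_b h(y,b;\cA^+,\cA^-) \leq d_*$ already used for \cref{thm:data-driven-convergence}. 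The only points worth tightening in a write-up are the measure-theoretic handling of the random index $t_0$ in the density argument (intersect over countably many deterministic $t_0$; the paper glosses over this identically) and the degenerate case $y_t=0$ in the manipulation argument, which is vacuous for large $t$ since $\|y_t\|_2 \to 1$.
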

\begin{proof}[Proof of \cref{thm:averaging-convergence}]
	Let 
	\[ g_t(y) \coloneq \max_{b \in \bbR} h(y,b; \widetilde{\cA}_t^+, \widetilde{\cA}_t^-) = \frac{1}{2} \left( \min_{x\in\widetilde{\cA}_t^+} y^\top x - \max_{x\in\widetilde{\cA}_t^-} y^\top x \right). \]
	Observe that $g_t(y)$ is a concave function of $y$.
	Define sets
	\begin{align*}
		\widetilde{\cA}_\infty^+ &:= \widetilde{\cA}_0^+ \cup \left\{ s(A_t,y_t,b_t) : t \in \bbN,\  \lbl(A_t) = +1 \right\}\\ \widetilde{\cA}_\infty^- &:= \widetilde{\cA}_0^- \cup \left\{ s(A_t,y_t,b_t) : t \in \bbN,\  \lbl(A_t) = -1 \right\}.
	\end{align*}
	Then the pointwise limit of $g_t$ is
	\[ g_\infty(y) := \frac{1}{2} \left( \min_{x\in\widetilde{\cA}_\infty^+} y^\top x - \max_{x\in\widetilde{\cA}_\infty^-} y^\top x \right) \]
	Since $\|s(A_\tau,y_\tau,b_\tau)\|_2 \leq \tilD$, by arguments similar to \cref{lem:uniform-convergence} we can show that $g_t(\cdot) \to g_\infty(\cdot)$ uniformly over $\{y : \|y\|_2 \leq 1\}$.

	Note that $s_t^+ - s_t^-$ is simply a supergradient of $g_t(z_t)$, so the sequence $\{z_t\}_{t \geq 1}$ is obtained by online projected supergradient ascent on functions $g_t(\cdot)$, which are $\tilD$-Lipschitz continuous, with stepsizes $\gamma_t>0$. The standard analysis of online supergradient ascent gives, for any $z$ such that $\|z\|_2 \leq 1$,
	\begin{align}\label{eq:sgd-averaging}
		\frac{\sum_{t \in [T]} \gamma_t (g_t(z) - g_t(z_t))}{\sum_{t\in[T]} \gamma_t}
		&\leq \frac{\frac{1}{2} \|z_1 - z\|_2^2 + \frac{\tilD^2}{2} \sum_{t \in [T]} \gamma_t^2}{\sum_{t\in[T]} \gamma_t}. 
\end{align}
Since $\gamma_t=1/\sqrt{t}$, the right hand side converges to $0$ as $T\to\infty$, uniformly over $z$.

	By \cref{lem:subgrad-averaging-positive-inner-product} and \cref{cor:proxy-inclusion}, the sets $\widetilde{\cA}_t^+, \widetilde{\cA}_t^-$ are separated by $(y_*,b_*)$ with margin of at least $d_*$, so $g_t(y_*/\|y_*\|_2) \geq d_*$ for any $t \geq 1$. Thus, we have
	\begin{align*}
		\frac{\sum_{t \in [T]} \gamma_t (g_t(y_*/\|y_*\|_2) - g_t(z_t))}{\sum_{t\in[T]} \gamma_t}
		&= \frac{\sum_{t \in [T]} \gamma_t g_t(y_*/\|y_*\|_2)}{\sum_{t\in[T]} \gamma_t} - \frac{\sum_{t \in [T]} \gamma_t g_\infty(z_t)}{\sum_{t\in[T]} \gamma_t}\\
		&\quad - \frac{\sum_{t \in [T]} \gamma_t (g_t(z_t)-g_\infty(z_t))}{\sum_{t\in[T]} \gamma_t} \\
&\geq d_* - g_\infty(y_T) - \frac{\sum_{t \in [T]} \gamma_t \sup_{\|z\|_2 \leq 1} |g_t(z)-g_\infty(z)|}{\sum_{t\in[T]} \gamma_t}.
	\end{align*}
	Here, the inequality uses the concavity of $g_\infty$ which holds by \citet[Theorem 10.8]{rockafellar_convex_1970} as the functions $g_t$ are concave and converge to $g_\infty$ pointwise,
	and the definition of $y_T$. By Stolz-Ces\`{a}ro theorem and uniform convergence of $g_t$, the last term converges to $\lim_{T\to\infty} \sup_{\|z\|_* \leq 1} |g_T(z)-g_\infty(z)|=0$. Therefore, for any $\epsilon>0$, there exists $t_1(\epsilon)\in\N$ such that $d_*-g_\infty(y_t)\leq\epsilon$ for any $t\geq t_1(\epsilon)$.
	
	Using \eqref{eq:h-general-alternate}, we have
	\begin{align*}
		\objt{y_t,b_t}
&= g_t{(y_t)}- \left|  b_t + \frac{1}{2} \left( \min_{x\in\widetilde{\cA}_t^+} y_t^\top x + \max_{x\in\widetilde{\cA}_t^-} y_t^\top x \right) \right|,
	\end{align*}
	where the second term is
	\begin{align*}
		& \left| \frac{1}{2} \left( \min_{x\in\widetilde{\cA}_t^+} y_t^\top x + \max_{x\in\widetilde{\cA}_t^-} y_t^\top x \right) + b_t \right| \\
		&= \left| \frac{1}{2} \left( \min_{x\in\widetilde{\cA}_t^+} y_t^\top x + \max_{x\in\widetilde{\cA}_t^-} y_t^\top x \right) - \frac{1}{2} \left( \min_{x\in\widetilde{\cA}_{t-1}^+} y_t^\top x + \max_{x\in\widetilde{\cA}_{t-1}^-} y_t^\top x \right) \right| \\
		&\leq \frac{1}{2} \left| \min_{x\in\widetilde{\cA}_t^+} y_t^\top x - \min_{x\in\widetilde{\cA}_{t-1}^+} y_t^\top x \right| + \frac{1}{2} \left| \max_{x\in\widetilde{\cA}_t^-} y_t^\top x - \max_{x\in\widetilde{\cA}_{t-1}^-} y_t^\top x \right| \\
		&= \frac{1}{2} \left( \min_{x\in\widetilde{\cA}_{t-1}^+} y_t^\top x - \min_{x\in\widetilde{\cA}_t^+} y_t^\top x \right) + \frac{1}{2} \left( \max_{x\in\widetilde{\cA}_t^-} y_t^\top x - \max_{x\in\widetilde{\cA}_{t-1}^-} y_t^\top x \right) \\
		&= g_{t-1}(y_t) - g_t(y_t) \leq \sup_{\|z\|_* \leq 1} |g_{t-1}(z) - g_t(z)|. 
	\end{align*}
	Since $g_t$ converges uniformly to some $g_\infty$, this term converges to $0$, so there exists some $t_2(\epsilon)\in\N$ such that $\objt{y_t,b_t} \geq g_t(y_t) - \epsilon$ for any $t\geq t_2(\epsilon)$. Therefore, for $t\geq\max\{t_1(\epsilon),t_2(\epsilon)\}$, we have 
	\begin{align*}
		\lbl(A_t) (y_t^\top s(A_t,y_t,b_t) + b_t) \geq \objt{y_t,b_t} 
		\geq g_t(y_t) - \epsilon
		\geq g_\infty(y_t) - \epsilon
		\geq d_* - 2\epsilon. 
	\end{align*}
	By definition of $\widetilde{\cA}_t^+, \widetilde{\cA}_t^-$, this shows that
	for $0 < \epsilon < d_*/2$, $\lbl(A_t) (y_t^\top s(A_t,y_t,b_t) + b_t) > 0$. Then, by \cref{lem:classifier-proxy-inner-product},we must have $\plbl(r(A_t,y_t,b_t),y_t,b_t)=\lbl(A_t)$, i.e., the classifier $(y_t,b_t)$ predicts correctly at the $t$\textsuperscript{th} iteration. This shows \cref{alg:data-driven-subgradient-averaging} makes no mistakes after time $\max\{t_1(\epsilon),t_2(\epsilon)\}$, i.e., it makes finitely many mistakes.
	
	If we further assume $d_*>\frac2c$, then for $0 < \epsilon < (d_* - 2/c)/2$ sufficiently small, $\lbl(A_t) (y_t^\top s(A_t,y_t,b_t) + b_t) > \frac2c$, thus according to \eqref{eq:proxy} we must have $ s(A_t,y_t,b_t) = A_t$ and from \eqref{eq:manipulated} we must have $r(A_t,y_t,b_t) = A_t$ as well. Hence, there are finitely many manipulations when $d_* \geq \frac{2}{c}$.

We now show that, under \cref{assum:data-driven-stochastic}, we have $(y_t,b_t) \to (y_*/\|y_*\|_2,b_*/\|y_*\|_2)$. Under \cref{assum:bounded,assum:data-driven-stochastic}, $\cA^+,\cA^-$ are compact, and under \cref{assum:margin}, $(y_*/\|y_*\|_2, b_*/\|y_*\|_2)$ maximize \eqref{eq:arbitrary-margin} with $\widetilde{\cA}^+ = \cA^+$ and $\widetilde{\cA}^- = \cA^-$. Then, by \cref{lem:margin-prod} there exists two points $x^+ \in \conv(\cA^+), x^- \in \conv(\cA^-)$ such that
	\begin{align}\label{eq:witness2} 
	\frac{y_*}{\|y_*\|_2} = \frac{x^+-x^-}{\|x^+-x^-\|_2}, \quad  \frac{b_*}{\|y_*\|_2}= -\frac{1}{2\|y_*\|_2}\, y_*^\top \left( x^+ + x^- \right), \quad d_* = \frac{\|x^+-x^-\|_2}{2} . 
	\end{align}
	For $x^+$, by Carath\'{e}odory's theorem there exists $A_1^+,\ldots,A_k^+ \in \cA^+$ such that $x^+$ is a convex combination of these. Since $\{A_t\}_{t \in \bbN}$ is generated i.i.d. from $\bbP$, and the support of $\bbP$ is $\cA$, by the same argument as in the proof of \cref{thm:data-driven-convergence}, $\{A_t\}_{t \in \bbN}$ is dense in $\cA$ almost surely. Therefore, almost surely, there exists subsequences in $\{A_{t,i}\}_{t \in \bbN} \subset \{A_t\}_{t \in \bbN}$ such that $A_{t,i} \to A_i^+$ for all $i \in [k]$. 
	Now, since $s(A_t,y_t,b_t) = A_t$ for $t > t(\epsilon) := \max\{ t_1(\epsilon), t_2(\epsilon)\}$, we have that $\{A_{t,i}\}_{t > t(\epsilon)} \subset \widetilde{\cA}_\infty^+$, hence we have for any $y$ and $t > t(\epsilon)$,
	\[ \min_{x \in \widetilde{\cA}_\infty^+} y^\top x \leq y^\top A_{t,i}. \]
	Taking the limit as $t \to \infty$, this means we have for all $i \in [k]$
	\[ \min_{x \in \widetilde{\cA}_\infty^+} y^\top x \leq y^\top A_i^+ \implies \min_{x \in \widetilde{\cA}_\infty^+} y^\top x \leq y^\top x^+. \]
	By a similar argument, we have
	\[ \max_{x \in \widetilde{\cA}_\infty^-} y^\top x \geq y^\top x^-. \]
	However, this shows that for any $y$ such that $\|y\|_2 \leq 1$, we have
	\[ g_\infty(y) = \frac{1}{2} \left( \min_{x \in \widetilde{\cA}_\infty^+} y^\top x - \max_{x \in \widetilde{\cA}_\infty^-} y^\top x \right) \leq \frac{1}{2} y^\top (x^+ - x^-) \leq \frac{\|x^+ - x^-\|_2}{2} = d_*. \]
But now from \cref{lem:subgrad-averaging-positive-inner-product} we deduce $y_t^\top y_* \geq 0$, and so by \cref{lem:proxy-inclusion} we have that $\lbl(A_t)\cdot (y_*^\top s(A_t,y_t,b_t)+b_*) \geq \lbl(A_t)\cdot (y_*^\top A_t+b_*) \geq d_* \|y_*\|_2$ for all $t \in \bbN$.
	This implies that for all $t \in \bbN$ such that $\lbl(A_t) = +1$, we have
	\[ \left(y_*/\|y_*\|_2\right)^\top (s(A_t,y_t,b_t)) \geq d_* - b_*/\|y_*\|_2, \]
	and for all $t \in \bbN$ such that $\lbl(A_t) = -1$, we have
	\[ \left(y_*/\|y_*\|_2\right)^\top (s(A_t,y_t,b_t)) \leq -d_* - b_*/\|y_*\|_2. \]
	Since $\widetilde{A}_0^+ \subseteq \cA^+$ and $\widetilde{A}_0^- \subseteq \cA^-$, we also have $\left(y_*/\|y_*\|_2\right)^\top x \geq d_* - b_*/\|y_*\|_2$ for $x \in \widetilde{\cA}_0^+$ and $\left(y_*/\|y_*\|_2\right)^\top x \leq -d_* - b_*/\|y_*\|_2$ for $x \in \widetilde{\cA}_0^-$. Together, this means that
	\[ \min_{x \in \widetilde{\cA}_\infty^+} \left(y_*/\|y_*\|_2\right)^\top x \geq d_* - b_*/\|y_*\|_2 \quad \text{and} \quad  \min_{x \in \widetilde{\cA}_\infty^-} \left(y_*/\|y_*\|_2\right)^\top x \leq -d_* - b_*/\|y_*\|_2, \]
	thus 
	\[ g_{\infty}(y_*/\|y_*\|_2) \geq \frac{d_*-b*/\|y_*\|_2+d_*+b_*/\|y_*\|_2}{2} = d_*,\]
	which implies $y_*/\|y_*\|_2$ is an optimal solution to $\max_{y : \|y\|_2 \leq 1} g_\infty(y)$.
	By a similar argument to the proof of \cref{lem:unique-sol}, using the fact that $g_\infty(y)$ is positive homogeneous, it is in fact the unique optimal solution. 
	
	We showed above that $g_\infty(y_t) \geq d_* - \epsilon$ when $t > t(\epsilon)$, thus $\liminf_{t \to \infty} g_\infty(y_t) \geq d_*$. However, this together with $g_\infty(y)\leq d_*$ for all $y$ satisfying $\|y\|_2\leq 1$ implies $g_\infty(y_t) \to d_*$. Then, we must have that limit points of $\{y_t\}_{t \in \bbN}$ are optimal for $\max_{y : \|y\|_2 \leq 1} g_\infty(y)$. But since $y_*/\|y_*\|_2$ is the unique optimal solution, $y_t \to y_*/\|y_*\|_2$. This holds whenever $\{A_t\}_{t \in \bbN}$ is dense in $\cA$, which happens almost surely.
	
	To complete the proof, we will show that $b_t \to \frac{b_*}{\|y_*\|_2}$. 
	Recall that \cref{alg:data-driven-subgradient-averaging} computes
	\[ b_{t+1} = -\frac{1}{2} \left( \min_{x \in \widetilde{\cA}_t^+} y_{t+1}^\top x + \max_{x \in \widetilde{\cA}_t^-} y_{t+1}^\top x \right). \]
	When $y_t \to y_*/\|y_*\|_2$, taking the limit as $t \to \infty$ results in
	\[ \lim_{t \to \infty} b_t =  -\frac{1}{2 \|y_*\|_2} \left( \min_{x \in \widetilde{\cA}_\infty^+} y_*^\top x + \max_{x \in \widetilde{\cA}_\infty^-} y_*^\top x \right). \]
	Also, using the form of the function $g_\infty(\cdot)$ and \eqref{eq:witness2}  we deduce
	\begin{align*}
		0 &= g_\infty(y_*/\|y_*\|) - d_*\\
		&= \frac{1}{2} \left( \min_{x \in \widetilde{\cA}_\infty^+} (y_*/\|y_*\|_2)^\top x - (y_*/\|y_*\|_2)^\top x^+ \right) + \frac{1}{2} \left( (y_*/\|y_*\|_2)^\top x^- - \max_{x \in \widetilde{\cA}_\infty^-} (y_*/\|y_*\|_2)^\top x \right).
	\end{align*}
	Recall that in the preceding arguments we have shown $\min_{x \in \widetilde{\cA}_\infty^+} y^\top x \leq y^\top x^+$ and $\max_{x \in \widetilde{\cA}_\infty^-} y^\top x \geq y^\top x^-$, thus
both terms in the parentheses of the above expression must be $\leq0$
	which means that they must be $0$, hence
	\[ \lim_{t \to \infty} b_t =  -\frac{1}{2 \|y_*\|_2} \left( \min_{x \in \widetilde{\cA}_\infty^+} y_*^\top x + \max_{x \in \widetilde{\cA}_\infty^-} y_*^\top x \right) =  -\frac{1}{2} (y_*/\|y_*\|_2)^\top \left( x^+ + x^- \right) = \frac{b_*}{\|y_*\|_2}, \]
	as required.
\end{proof}

\subsection{Guarantees for \texorpdfstring{\cref{alg:projected-perceptron}}{Algorithm 3}}\label{sec:guarantee-perceptron}

Next, we will examine \cref{alg:projected-perceptron} with $\L=\R^d \times \R$, $\L=\R^d\times \{0\}$ and {$\L=\R_+^d\times \R$} under different structural assumptions and provide finite mistake bounds. To this end, given a sequence of classifiers $(y_t,b_t)$ and prediction labels $\widetilde{\ell}_t$ generated by \cref{alg:projected-perceptron}, we define
\[
\mathcal{M}_T \coloneqq \set{ t\in[T]:~ \widetilde{\ell}_t \neq \lbl(A_t) },
\]
which consists of all iterations where the algorithm makes a mistake, which in the case of \cref{alg:projected-perceptron} corresponds to all the iterations in which the classifier is updated.

The following lemma is an important ingredient for our results on bounding $|\mathcal{M}_T|$.
The proof of this result is based on a standard subgradient descent analysis using the hinge loss. To this end, recall that given a data point $\xi\in\R^{d+1}$ with label $\ell\in\{-1,+1\}$ and classifier $\xi \mapsto \sign\left(q^\top \xi \right)$,
the hinge loss function is defined as {$L_{\hinge}(q;\xi,\ell)\coloneqq\max\{0,1-\ell [q^\top \xi]\}$}. Note that the function $L_{\hinge}(q; \xi,\ell) $ is convex in $q$ for any $(\xi,\ell)$. Note also that the proof of \cref{lem:perceptron-mistake-bound} below requires an auxiliary technical result, \cref{lem:perceptron-stepsize_invariance}, which is given in the appendix. \begin{lemma}\label{lem:mistake-bound}\label{lem:perceptron-mistake-bound}
	Let $\L$ be any closed convex cone. Suppose we are given a sequence of points $A_t\in\cA$ for $t\in[T]$.
	Under \cref{assum:cost-norm,assum:unique-direction,assum:bounded}, for any $(y,b)\in\L\setminus\{0\}$, \cref{alg:projected-perceptron} with $\L$ and stepsizes $\gamma=1$ leads to the following inequality: 
	\begin{align*}
		|\mathcal{M}_T| - \sum_{t\in\mathcal{M}_T} L_{\hinge}(y,b; (s(A_t, y_t, b_t),1),\lbl(A_t)) 
		&\leq \sqrt{\|y\|_2^2 +b^2} \,\sqrt{\tilD^2+1} \sqrt{|\mathcal{M}_T|}, 
	\end{align*}
	where $\tilD$ is defined by \eqref{eq:proxy-bounded}. 
\end{lemma}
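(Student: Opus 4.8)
The plan is to run the standard online (sub)gradient descent regret analysis for the perceptron, but phrased in the projected setting onto the cone $\L$ and with the proxy data $\xi_t = (s(A_t,y_t,b_t),1)$ playing the role of the feature vectors. Recall that \cref{alg:projected-perceptron} only updates on mistake iterations $t \in \mathcal{M}_T$, and that on such iterations $z_{t+1} = q_t + \lbl(A_t)\xi_t$ (taking $\gamma=1$) followed by $q_{t+1} = \Proj_{\L}(z_{t+1})$. The first step is to observe that $\lbl(A_t)\xi_t$ is (up to sign conventions) a negative subgradient of the hinge loss $L_{\hinge}(\,\cdot\,;\xi_t,\lbl(A_t))$ at $q_t$: indeed, when a mistake is made at $t$, \cref{lem:classifier-proxy-inner-product} gives $\lbl(A_t)[y_t^\top s(A_t,y_t,b_t)+b_t] \leq 0 < 1$, so the hinge loss is active at $q_t$ and its subgradient there is $-\lbl(A_t)\xi_t$. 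Thus the mistake-driven perceptron update is exactly a projected subgradient step on the sequence of hinge losses, restricted to the rounds where a mistake occurs.

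Next I would carry out the usual ``three-point'' / Pythagorean expansion. For any $q = (y,b) \in \L$, using that $\L$ is a closed convex cone (so $\Proj_\L$ is nonexpansive and $q \in \L$), on a mistake round $t$,
\begin{align*}
\|q_{t+1} - q\|_2^2 &\leq \|z_{t+1} - q\|_2^2 = \|q_t - q\|_2^2 + 2\lbl(A_t)\xi_t^\top(q_t - q) + \|\xi_t\|_2^2.
\end{align*}
Since $-\lbl(A_t)\xi_t$ is a subgradient of the convex function $L_{\hinge}(\,\cdot\,;\xi_t,\lbl(A_t))$ at $q_t$, convexity yields $\lbl(A_t)\xi_t^\top(q_t - q) \leq L_{\hinge}(q;\xi_t,\lbl(A_t)) - L_{\hinge}(q_t;\xi_t,\lbl(A_t))$, and on a mistake round $L_{\hinge}(q_t;\xi_t,\lbl(A_t)) \geq 1$ by the argument above. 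Also $\|\xi_t\|_2^2 = \|s(A_t,y_t,b_t)\|_2^2 + 1 \leq \tilD^2 + 1$ by \eqref{eq:proxy-bounded}. On non-mistake rounds $q_{t+1} = q_t$, so these contribute nothing. Summing the telescoping inequality over $t \in \mathcal{M}_T$ and dropping the nonnegative $\|q_{T+1}-q\|_2^2$ term gives
\begin{align*}
|\mathcal{M}_T| - \sum_{t \in \mathcal{M}_T} L_{\hinge}(y,b;\xi_t,\lbl(A_t)) \leq \tfrac{1}{2}\|q\|_2^2 + \tfrac{1}{2}(\tilD^2+1)\,|\mathcal{M}_T|,
\end{align*}
using $q_0 = 0$. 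This is an inequality of the right shape but with a $\frac12\|q\|_2^2$ term rather than the stated $\sqrt{\|y\|_2^2+b^2}\sqrt{\tilD^2+1}\sqrt{|\mathcal{M}_T|}$; the remaining step is to optimize over rescalings.

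The final step exploits the scale-invariance of the problem: replacing $(y,b)$ by $(\lambda y, \lambda b)$ for $\lambda > 0$ leaves the sign of $q^\top\xi$ unchanged, and this is presumably the content of the cited auxiliary result \cref{lem:perceptron-stepsize_invariance}. Applying the bound above with $q$ replaced by $\lambda q$, noting $L_{\hinge}(\lambda y,\lambda b;\xi_t,\ell) \leq \max\{0, 1 - \lambda \ell q^\top\xi_t\}$ — more carefully, one uses $\lambda L_{\hinge}(\cdot)$ style scaling or directly that $\mathcal{M}_T$ is the same set — one gets, after the dust settles, a bound of the form $|\mathcal{M}_T| - \lambda\sum_t L_{\hinge}(y,b;\xi_t,\ell_t) \leq \frac{\lambda^2}{2}\|q\|_2^2 + \frac12(\tilD^2+1)|\mathcal{M}_T|$ (or the analogous statement obtained by applying \cref{lem:perceptron-stepsize_invariance} to rescale the stepsize); then choosing $\lambda$ to minimize the right-hand side after moving $\frac12(\tilD^2+1)|\mathcal{M}_T|$ appropriately, i.e. $\lambda = \sqrt{|\mathcal{M}_T|}\sqrt{\tilD^2+1}/\|q\|_2$ balancing the two quadratic-in-$\lambda$ contributions, produces exactly $\sqrt{\|y\|_2^2+b^2}\sqrt{\tilD^2+1}\sqrt{|\mathcal{M}_T|}$ on the right. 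I expect the main obstacle to be precisely this last bookkeeping with the rescaling and the hinge loss — keeping track of how $\lambda$ interacts with $L_{\hinge}$ and ensuring the optimization is valid uniformly in $q \in \L\setminus\{0\}$ — which is exactly why the paper isolates it into the auxiliary \cref{lem:perceptron-stepsize_invariance} about stepsize invariance; everything else is the textbook projected-subgradient mistake-bound computation.
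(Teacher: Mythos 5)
The skeleton of your argument is the same as the paper's: identify the mistake-driven perceptron update as a projected subgradient step on the hinge losses $L_{\hinge}(\cdot\,;\xi_t,\lbl(A_t))$, use \cref{lem:classifier-proxy-inner-product} to see that on a mistake round the hinge loss at $q_t$ is at least $1$ with subgradient $-\lbl(A_t)\xi_t$, run the Pythagorean/telescoping expansion with the nonexpansiveness of $\Proj_{\L}$, and bound $\|\xi_t\|_2^2\leq\tilD^2+1$. All of that is correct and is exactly what the paper does. The problem is the last step, which you leave genuinely unresolved, and the primary route you sketch for it does not work.

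Rescaling the comparator $q\mapsto\lambda q$ cannot produce the stated bound, for two reasons. First, the hinge loss is not positively homogeneous: $L_{\hinge}(\lambda q;\xi,\ell)=\max\{0,1-\lambda\ell q^\top\xi\}\neq\lambda L_{\hinge}(q;\xi,\ell)$, and the inequality you would obtain controls $|\mathcal{M}_T|-\sum_t L_{\hinge}(\lambda q;\xi_t,\ell_t)$ (or, after the estimate $L_{\hinge}(\lambda q)\leq\lambda L_{\hinge}(q)$ valid for $\lambda\geq1$, the quantity $|\mathcal{M}_T|-\lambda\sum_t L_{\hinge}(q;\xi_t,\ell_t)$), which for $\lambda>1$ is \emph{smaller} than the quantity $|\mathcal{M}_T|-\sum_t L_{\hinge}(q;\xi_t,\ell_t)$ in the lemma, so you are bounding the wrong thing. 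Second, and more fundamentally, with the stepsize frozen at $\gamma=1$ the right-hand side retains the term $\tfrac12(\tilD^2+1)|\mathcal{M}_T|$, which is of the same order as $|\mathcal{M}_T|$ and has coefficient at least $1$ whenever $\tilD\geq1$; no choice of comparator scaling can remove it, so the resulting inequality is vacuous. The only way to get the $\sqrt{|\mathcal{M}_T|}$ rate is to rescale the \emph{stepsize}: this is precisely the content of \cref{lem:perceptron-stepsize_invariance}, which shows that running \cref{alg:projected-perceptron} with stepsize $\gamma$ produces iterates $q_t'=\gamma q_t$ with \emph{identical} responses, proxies $\xi_t$, predictions and hence mistake set $\mathcal{M}_T$ (this uses that $\L$ is a cone, so $\Proj_{\L}$ is positively homogeneous, and that $\plbl$ is invariant under positive scaling of $(y,b)$). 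One then runs your telescoping argument along the $\gamma$-scaled trajectory, obtaining $\sum_{t\in\mathcal{M}_T}L_{\hinge}(q_t')-\sum_{t\in\mathcal{M}_T}L_{\hinge}(q)\leq\tfrac{1}{2\gamma}\|q\|_2^2+\tfrac{\gamma}{2}(\tilD^2+1)|\mathcal{M}_T|$ with the comparator term unscaled on the left (and still $L_{\hinge}(q_t')\geq1$ on mistake rounds since $\ell_t q_t'^\top\xi_t=\gamma\,\ell_t q_t^\top\xi_t\leq0$), and finally optimizes over $\gamma>0$ to get $\|q\|_2\sqrt{\tilD^2+1}\sqrt{|\mathcal{M}_T|}$. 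You name this alternative parenthetically and correctly guess why the invariance lemma exists, but as written your proof commits to the wrong branch and does not verify the invariance that makes the correct branch legitimate.
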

\begin{proof}[Proof of \cref{lem:mistake-bound}]
	Given a sequence of points $A_t$, let $q_t:=(y_t,b_t)$ be the sequence of classifiers obtained from \cref{alg:projected-perceptron} when we set $\gamma=1$. For the same sequence of points, let $q_t':=(y_t', b_t')$ be the sequence of classifiers obtained from \cref{alg:projected-perceptron} for a fixed $\gamma>0$. 
	To ease our notation, we let $q:=(y,b)$, $q_t:=(y_t,b_t)$, $q_t':=(y_t',b_t')$, $\ell_t:=\lbl(A_t)$, $r_t:=r(A_t,y_t,b_t)$, $r_t':=r(A_t,y_t',b_t')$, $s_t:=s(A_t,y_t,b_t)$, $\xi_t:=(s(A_t,y_t,b_t),1)$ and $\xi_t':=(s(A_t,y_t',b_t'),1)$. As \cref{assum:unique-direction} holds, by \cref{lem:perceptron-stepsize_invariance}, we conclude that $q_t' = \gamma q_t$, $r_t'=r_t$ and $\xi_t'=\xi_t$ for all $t$.
	
	Now, suppose $\gamma>0$ is fixed (we will determine its value at the end of the proof). Consider any $t\in\mathcal{M}_T$. Then, by definition of $\mathcal{M}_T$, we must have $\plbl(r_t,y_t,b_t) \neq \ell_t$. Then, by \cref{lem:classifier-proxy-inner-product} this implies that $0 \geq \ell_t \left(y_t^\top s_t +b_t \right) = \ell_t \cdot q_t^\top\xi_t$. Hence, from $q_t'=\gamma q_t$ with $\gamma>0$, we deduce $0\geq \ell_t \cdot q_t'^\top\xi_t$, and so we arrive at 
	\begin{align}\label{eq:hinge-vs-0-1}
		\sum_{t\in\mathcal{M}_T} L_{\hinge}(q_t'; \xi_t,\ell_t)
		&= \sum_{t\in\mathcal{M}_T} \max\left\{0, 1 - \ell_t \cdot q_t'^\top \xi_t\right\} 
		\geq \sum_{t\in\mathcal{M}_T} 1
		= |\mathcal{M}_T|. 
	\end{align}
	That is, the hinge loss is an upper bound on the $0$-$1$ loss, i.e., the number of mistakes made.
	In addition,  once again using $0\geq \ell_t \cdot q_t'^\top\xi_t$, we conclude that $ L_{\hinge}(q_t';  \xi_t,\ell_t)=1- \ell_t \cdot q_t'^\top\xi_t$ and hence 
	\begin{align}\label{eq:hinge-subgradient}
		\nabla L_{\hinge}(q_t';  \xi_t,\ell_t) = -\ell_t \cdot  \xi_t. 
	\end{align}
	Then, using the subgradient inequality on $L_{\hinge}$, we have  for any $\gamma>0$ and $q\in\L\setminus\{0\}$,
	\begin{align*}
		&\sum_{t\in\mathcal{M}_T} L_{\hinge}(q_t';  \xi_t,\ell_t) - \sum_{t\in\mathcal{M}_T} L_{\hinge}(q;  \xi_t,\ell_t) \\
		&\leq \sum_{t\in\mathcal{M}_T} \grad L_{\hinge}(q_t'; \ \xi_t,\ell_t)^\top (q_t' - q) &&\text{(subgradient ineq.)} \\
		&=   \sum_{t\in\mathcal{M}_T}  \left(-\ell_t [ \xi_t^\top  (q_t' - q) ] \right) &&\text{(by \eqref{eq:hinge-subgradient})} \\
		&=   \sum_{t\in\mathcal{M}_T}  \left(  
		-\frac{1}{2\gamma}\|q_t' - q - \gamma  \left(-\ell_t  \xi_t \right) \|_2^2 
		+ \frac{1}{2\gamma}\|q_t'-q\|_2^2 + 
		\frac{\gamma}{2} \| (-\ell_t  \xi_t )  \|_2^2
		\right) &&\text{(by perfect squares)} \\
		&= \sum_{t\in\mathcal{M}_T} \left(-\frac{1}{2\gamma}\|z_{t+1}' - q\|_2^2 + \frac{1}{2\gamma}\|q_t'-q\|_2^2 + \frac{\gamma}{2} \|\xi_t\|_2^2 \right) &&\text{(by \eqref{eq:projected-gd-stepsize})} \\
		&\leq \sum_{t\in\mathcal{M}_T} \left(-\frac{1}{2\gamma}\|q_{t+1}' - q\|_2^2 + \frac{1}{2\gamma}\|q_t'-q\|_2^2 + \frac{\gamma}{2} \|\xi_t\|_2^2 \right) &&\text{(projection)} \\
		&= -\frac{1}{2\gamma}\|q_{T+1}' - q\|_2^2 + \frac{1}{2\gamma}\|q_{0}'-q\|_2^2 + \sum_{t\in\mathcal{M}_T}\frac{\gamma}{2} \|\xi_t\|_2^2 &&\text{(telescoping)} \\
		&\leq \frac{1}{2\gamma}\|q\|_2^2 + \frac{\gamma}{2} |\mathcal{M}_T| \max_{t\in \mathcal{M}_T} \set{\|\xi_t\|_2^2} &&\text{(by $q_0=0$)}.
	\end{align*}
	Recall that $\xi_t=(s(A_t,y_t,b_t),1)$ and by \cref{assum:bounded} we have $\sup_{A\in\cA}\|s(A,y,b)\|_2 \leq \tilD$. Therefore, $\|\xi_t\|_2^2\leq {\tilD^2+1}$ for all $t$.   In summary, taking also \eqref{eq:hinge-vs-0-1} into account, we have arrived at for any $\gamma>0$ and $q\in\L\setminus\{0\}$
	\begin{align*}
		|\mathcal{M}_T| - \sum_{t\in\mathcal{M}_T} L_{\hinge}(q; \xi_t,\ell_t) 
		&\leq  \sum_{t\in\mathcal{M}_T} L_{\hinge}(q_t'; \xi_t,\ell_t) - \sum_{t\in\mathcal{M}_T} L_{\hinge}(q; \xi_t,\ell_t) \\
		&\leq \frac{1}{2\gamma}\|q\|_2^2 + \frac{\gamma}{2}(\tilD^2+1) |\mathcal{M}_T| \\
		&= \|q\|_2 \sqrt{\tilD^2+1} \sqrt{|\mathcal{M}_T|},
	\end{align*}
	where the last step follows by setting $\gamma = \frac{\|q\|_2}{\sqrt{\tilD^2+1}\sqrt{|\mathcal{M}_T|}}$. Note that as $q\neq0$, we have $\gamma>0$. Thus, we reach the desired conclusion by plugging the expressions $q=(y,b)$, $\xi_t$ and $\ell_t$.
\end{proof}

For the case $\L=\R^d \times \R$, i.e., \cref{alg:projected-perceptron} without the projection step, we establish the following upper bound on $|\mathcal{M}_T|$.

\begin{theorem}\label{prop:mistake-bound}\label{thm:perceptron-mistake-bound}
Suppose \cref{assum:cost-norm,assum:unique-direction,assum:margin,assum:bounded} hold. Given a sequence of points $A_t\in\cA$, let $(y_t,b_t)$ be the classifiers generated by \cref{alg:projected-perceptron} with $\L=\R^d \times \R$ and $\gamma=1$. Then, 
we have 
	\[
	\sum_{t\in \mathcal{M}_T} L_{\hinge}\left({y_*\over d_*\|y_*\|_*},{b_*\over d_*\|y_*\|_*}; (s(A_t, y_t, b_t),1),\lbl(A_t)\right) \leq\frac{2}{cd_*}|\mathcal{M}_T|.
	\]
	As a result, \cref{alg:projected-perceptron} with $\L=\R^d \times \R$ guarantees 
	\begin{align*}
		|\mathcal{M}_T| 
		\leq \frac{(\|y_*\|_2^2 + b_*^2)}{\|y_*\|_*^2} \cdot
		\frac{\left(\tilD^2+ 1\right)}{ \max\left\{0, {d_*-\frac{2}{c}}\right\}^2},
	\end{align*}
	where $\tilD$ is defined by \eqref{eq:proxy-bounded}.
\end{theorem}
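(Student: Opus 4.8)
The plan is to combine a worst-case lower bound on the margin of the proxy data against $(y_*,b_*)$ with the generic subgradient-descent estimate of \cref{lem:perceptron-mistake-bound}, specialized to the comparator $(y,b)=\bigl(\tfrac{y_*}{d_*\|y_*\|_*},\tfrac{b_*}{d_*\|y_*\|_*}\bigr)\in\L\setminus\{0\}$ (which is nonzero since $y_*\neq0$).

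\emph{Step 1: bounding the proxy margin.} Recall from the discussion after \eqref{eq:proxy} that $s(A,y,b)=A+\lbl(A)\,\alpha(A,y,b)\,v(y)$ for some $\alpha(A,y,b)\in[0,2/c]$. Hence, for every $t\in[T]$,
\[
\lbl(A_t)\bigl(y_*^\top s(A_t,y_t,b_t)+b_*\bigr) = \lbl(A_t)\bigl(y_*^\top A_t+b_*\bigr) + \alpha(A_t,y_t,b_t)\,y_*^\top v(y_t).
\]
The first term is at least $d_*\|y_*\|_*$ by \cref{assum:margin}. Unlike \cref{alg:data-driven}, \cref{alg:projected-perceptron} with $\L=\R^d\times\R$ gives no control on the sign of $y_*^\top v(y_t)$, so \cref{cor:proxy-inclusion} is unavailable; instead we use the crude estimate $\alpha(A_t,y_t,b_t)\,y_*^\top v(y_t)\geq -\tfrac{2}{c}\lvert y_*^\top v(y_t)\rvert\geq -\tfrac{2}{c}\|y_*\|_*$, where the last inequality uses $\|v(y_t)\|\leq 1$ from \eqref{eq:manipulation-direction} together with Hölder. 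This yields $\lbl(A_t)\bigl(y_*^\top s(A_t,y_t,b_t)+b_*\bigr)\geq (d_*-\tfrac2c)\|y_*\|_*$, i.e.\ $\lbl(A_t)\cdot\tfrac{y_*^\top s(A_t,y_t,b_t)+b_*}{d_*\|y_*\|_*}\geq 1-\tfrac{2}{cd_*}$.

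\emph{Step 2: hinge loss and the first bound.} From the definition of $L_{\hinge}$ and Step 1,
\[
L_{\hinge}\!\left(\frac{y_*}{d_*\|y_*\|_*},\frac{b_*}{d_*\|y_*\|_*};\,(s(A_t,y_t,b_t),1),\lbl(A_t)\right)
= \max\!\left\{0,\,1-\lbl(A_t)\frac{y_*^\top s(A_t,y_t,b_t)+b_*}{d_*\|y_*\|_*}\right\}\leq \frac{2}{cd_*},
\]
and summing over $t\in\mathcal{M}_T$ gives the first displayed inequality of the theorem. For the mistake bound, apply \cref{lem:perceptron-mistake-bound} with $(y,b)=\bigl(\tfrac{y_*}{d_*\|y_*\|_*},\tfrac{b_*}{d_*\|y_*\|_*}\bigr)$, noting $\|(y,b)\|_2=\tfrac{1}{d_*\|y_*\|_*}\sqrt{\|y_*\|_2^2+b_*^2}$, and substitute the per-step hinge bound from Step 2 to get
\[
\left(1-\frac{2}{cd_*}\right)\lvert\mathcal{M}_T\rvert \leq \frac{\sqrt{\|y_*\|_2^2+b_*^2}}{d_*\|y_*\|_*}\,\sqrt{\tilD^2+1}\,\sqrt{\lvert\mathcal{M}_T\rvert}.
\]
If $d_*\leq 2/c$ the asserted bound is vacuous (the right-hand side is $+\infty$); if $d_*>2/c$, the coefficient $1-\tfrac{2}{cd_*}$ is positive, so dividing through by it and by $\sqrt{\lvert\mathcal{M}_T\rvert}$, squaring, and using $d_*\|y_*\|_*\bigl(1-\tfrac{2}{cd_*}\bigr)=\|y_*\|_*\bigl(d_*-\tfrac2c\bigr)$ produces exactly the claimed inequality.

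\emph{Main obstacle.} There is no heavy computation here; the only conceptual point — and the reason the bound degrades from $d_*$ to $d_*-2/c$ — is precisely that the perceptron update with $\L=\R^d\times\R$ does not preserve $y_*^\top v(y_t)\geq0$, so the proxy correction may point \emph{against} $y_*$ and must be absorbed as a worst-case loss of $\tfrac2c\|y_*\|_*$ in the margin. The remaining care is bookkeeping: respecting the positive homogeneity of $h$ when scaling the comparator to have hinge "target" $1$, and tracking the $\max\{0,\cdot\}$ in both $L_{\hinge}$ and the final denominator so that the $d_*\le 2/c$ case is handled by vacuity.
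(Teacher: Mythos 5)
Your proposal is correct and follows essentially the same route as the paper: plug the comparator $(y_*,b_*)/(d_*\|y_*\|_*)$ into \cref{lem:perceptron-mistake-bound}, use the decomposition $s(A,y,b)=A+\lbl(A)\alpha(A,y,b)v(y)$ with \cref{assum:margin} and H\"older's inequality to bound each hinge term by $2/(cd_*)$, and solve the resulting quadratic inequality in $\sqrt{|\mathcal{M}_T|}$. The only (immaterial) difference is bookkeeping: the paper first discards the nonpositive term $1-\lbl(A_t)[y_*^\top A_t+b_*]/(d_*\|y_*\|_*)$ inside the $\max\{0,\cdot\}$ before bounding the correction term, whereas you bound the whole expression at once; both yield the same per-step estimate and the same final bound.
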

\begin{proof}[Proof of \cref{thm:perceptron-mistake-bound}]
We define $\beta_*:=d_*\|y_*\|_*$. 
We will plug in $(y,b)={1\over \beta_*}(y_*,b_*)$ in \cref{lem:mistake-bound} and analyze the resulting bound from \cref{lem:mistake-bound}. To this end, we have 
\begin{align}
	&\sum_{t\in\mathcal{M}_T} L_{\hinge}\left({y_*\over \beta_*},{b_*\over \beta_*}; (s(A_t, y_t, b_t),1),\lbl(A_t)\right) \notag \\
	&= \sum_{t\in\mathcal{M}_T} \max\set{0, 1 - \lbl(A_t) \left[{1\over \beta_*}y_*^\top s(A_t, y_t, b_t)+{1\over \beta_*}b_*\right]}  \notag \\
	&= \sum_{t\in\mathcal{M}_T} \max\set{0, 1 - \lbl(A_t) {1\over \beta_*}[y_*^\top A_t  + b_* ] -  {1\over \beta_*}\alpha(A_t, y_t, b_t)\, y_*^\top v(y_t) }  \notag \\
	&\leq \sum_{t\in\mathcal{M}_T} \max\set{0,  - {1\over \beta_*}\alpha(A_t, y_t, b_t)\, y_*^\top v(y_t) } \label{eq:perceptron:hinge_bound}\\
	&\leq \sum_{t\in\mathcal{M}_T} {1\over \beta_*} \alpha(A_t, y_t, b_t)\, \|y_*\|_* \|v(y_t)\|  \notag \\
	&\leq \sum_{t\in\mathcal{M}_T} {2\over c d_* }   \|v(y_t)\|  \notag
\end{align}
Here, the second equation follows from the fact that $s(A,y,b)=A+\lbl(A)\cdot\alpha(A,y,b)\cdot v(y)$ for some $\alpha(A,y,b)\in[0,2/c]$, see e.g., \eqref{eq:proxy}, and the subsequent remark, and $\lbl(A)\in\{{-1},1\}$. The first inequality follows from $A_t\in\cA$ and \cref{assum:separable}, and the second inequality holds from Cauchy-Schwarz inequality, and the last one is due to $\beta_*:=d_*\|y_*\|_*$ and $\alpha(A_t,y_t,b_t)\in[0,2/c]$. As $\|v(y_t)\|\leq 1$ by definition, we conclude that
\[
\sum_{t\in\mathcal{M}_T} L_{\hinge}\left({y_*\over \beta_*},{b_*\over \beta_*}; (s(A_t, y_t, b_t),1),\lbl(A_t)\right) \leq  {2\over c d_* } |\mathcal{M}_T|.
\]
Plugging this relation in \cref{lem:mistake-bound} leads to
\[
|\mathcal{M}_T| - \frac{2}{cd_*}|\mathcal{M}_T|  \leq \frac{\sqrt{\|y_*\|_2^2 + b_*^2}\, 
\sqrt{\tilD^2+1}}{d_* \|y_*\|_*}\sqrt{|\mathcal{M}_T|}, \]
from which the mistake bound follows.
\end{proof}

\cref{prop:mistake-bound} shows that, if the margin $d_*>\frac{2}{c}$, then \cref{alg:projected-perceptron} with $\L=\R^d \times \R$ will make only a finite number of mistakes. That is, after sufficiently many iterations, the algorithm will stop updating and correctly classify every data point that follows \cref{assum:margin} thereon. As we will see in \cref{lem:mistake-bound-counterexample}, whenever the margin $0<d_*\leq\frac{2}{c}$, there are examples such that \cref{alg:projected-perceptron} makes an infinite number of mistakes and never stops at a correct classifier nor converges to $(y_*,b_*)$ unless we use $\ell_2$-norm. 

Note also that \cref{alg:projected-perceptron} with $\L=\R^d \times \R$ does not make an assumption of $b_*=0$, and thus it does not require a linear search procedure used by \citet[Section 7, Algorithm 5]{ahmadi_strategic_2021}. Moreover, whenever $d_*>\frac2c$ holds, \cref{prop:mistake-bound} gives a finite mistake bound for \cref{alg:projected-perceptron} for general norms $\|\cdot\|$ in \cref{assum:cost-norm}, and does so without relying on a linear search to find an intercept $b_*$. To the best of our knowledge, this is the first mistake bound for \cref{alg:projected-perceptron} for general norms $\|\cdot\|$ with or without a linear search for $b_*$.

Next, we analyze \cref{alg:projected-perceptron} with $\L=\R^d\times\{0\}$ under the assumption that $b_*=0$, establish its equivalence to \citet[Algorithm 2]{ahmadi_strategic_2021}. In this particular setting, when we have $\|\cdot\|=\|\cdot\|_2$ in \cref{assum:cost-norm}, we also establish that it achieves a finite mistake bound even when $d_*\leq \frac2c$. 
{\cref{thm:perceptron-mistake-bound2} below almost precisely matches the bound of \citet[Theorem 1]{ahmadi_strategic_2021}, except that we have an extra additive $1/d_*^2$ term. 
In fact, a more careful analysis of  \cref{lem:mistake-bound} that utilizes the information of $b_t=0$ for all $t$ would eliminate this additional term. To keep our exposition simple, we leave this to the reader.
} \begin{theorem}\label{prop:mistake-bound2}\label{thm:perceptron-mistake-bound2}
	Suppose \cref{assum:cost-norm} holds with $\|\cdot\|=\|\cdot\|_2$, and \cref{assum:unique-direction,assum:margin-zero-b,assum:bounded} hold. Given a sequence of points $A_t\in\cA$, let $(y_t,b_t)$ be the classifiers generated by \cref{alg:projected-perceptron} with $\L=\R^d\times\{0\}$ and $\gamma=1$. Then, the update rule in \cref{alg:projected-perceptron} is precisely the same as \citet[Algorithm 2]{ahmadi_strategic_2021} and also
	we have 
	\[
	\sum_{t\in\mathcal{M}_T} L_{\hinge}\left({y_*\over d_*\|y_*\|_2}, 0; (s(A_t, y_t, 0),1),\lbl(A_t)\right) \leq 0.
	\]
	As a result, \cref{alg:projected-perceptron} with $\L=\R^d\times\{0\}$ and $\gamma=1$ guarantees 
	\begin{align*}
		|\mathcal{M}_T| 
		\leq {\frac{\tilD^2+ 1 }{d_*^2}= \frac{(D+2/c)^2+ 1}{d_*^2}}.
	\end{align*}
\end{theorem}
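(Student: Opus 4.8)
The plan is to reduce the whole statement to \cref{lem:perceptron-mistake-bound} evaluated at the rescaled maximum-margin direction, after first checking that the proxy data generated by \cref{alg:projected-perceptron} with $\L=\R^d\times\{0\}$ stays separable by $(y_*,0)$ with margin $d_*$. The claimed equivalence with \citet[Algorithm 2]{ahmadi_strategic_2021} is pure bookkeeping: when $\L=\R^d\times\{0\}$ the map $\Proj_\L$ keeps the first $d$ coordinates and zeroes the last, so $b_t=0$ for every $t$ and, by \eqref{eq:projected-gd-stepsize} with $\gamma=1$, on each mistake iteration $y_{t+1}=y_t+\lbl(A_t)\,s(A_t,y_t,0)$; since $\|\cdot\|=\|\cdot\|_2$ forces $v(y)=y/\|y\|_2$, the proxy rule \eqref{eq:proxy} specializes to exactly the ``shift a manipulated negatively labeled point back onto the separating hyperplane'' correction of \citet{ahmadi_strategic_2021}, and the declared classifier $x\mapsto\sign(y_t^\top x-2\|y_t\|_2/c)$ is theirs.

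The one genuinely structural ingredient is the invariant $y_*^\top v(y_t)\geq0$ for all $t$, which under $\|\cdot\|=\|\cdot\|_2$ is the same as $y_*^\top y_t\geq0$ (with $v(0)=0$ covering $y_t=0$). I would prove it by induction, exactly as in \cref{lem:subgrad-averaging-positive-inner-product}: the base case $y_0=0$ is immediate; in the inductive step, if no mistake occurs then $y_{t+1}=y_t$, and if a mistake occurs then the induction hypothesis gives $y_*^\top v(y_t)\geq0$, so \cref{cor:proxy-inclusion} applied with $(\bar y,\bar b)=(y_*,0)$ and $\rho=d_*\|y_*\|_2$ (legitimate because \cref{assum:margin-zero-b} gives $\lbl(A)\,y_*^\top A\geq d_*\|y_*\|_2>0$ for all $A\in\cA$) yields $\lbl(A_t)\,y_*^\top s(A_t,y_t,0)\geq d_*\|y_*\|_2>0$, whence $y_*^\top y_{t+1}=y_*^\top y_t+\lbl(A_t)\,y_*^\top s(A_t,y_t,0)>0$. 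The same bound $\lbl(A_t)\,y_*^\top s(A_t,y_t,0)\geq d_*\|y_*\|_2$, now available at every $t\in\mathcal{M}_T$, shows $\lbl(A_t)\cdot\frac{y_*^\top s(A_t,y_t,0)}{d_*\|y_*\|_2}\geq1$, so each hinge term $\max\{0,\,1-\lbl(A_t)\frac{y_*^\top s(A_t,y_t,0)}{d_*\|y_*\|_2}\}$ vanishes; summing over $\mathcal{M}_T$ gives the asserted $\sum_{t\in\mathcal{M}_T}L_{\hinge}(y_*/(d_*\|y_*\|_2),0;(s(A_t,y_t,0),1),\lbl(A_t))\leq0$.

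Finally I would substitute $(y,b)=(y_*/(d_*\|y_*\|_2),0)\in(\R^d\times\{0\})\setminus\{0\}$ into \cref{lem:perceptron-mistake-bound} (recalling $b_t=0$, so $s(A_t,y_t,b_t)=s(A_t,y_t,0)$): here $\|y\|_2^2+b^2=1/d_*^2$, so together with the vanishing hinge sum the lemma gives $|\mathcal{M}_T|\leq|\mathcal{M}_T|-\sum_{t\in\mathcal{M}_T}L_{\hinge}(\cdot)\leq\frac{1}{d_*}\sqrt{\tilD^2+1}\,\sqrt{|\mathcal{M}_T|}$, hence $|\mathcal{M}_T|\leq(\tilD^2+1)/d_*^2$ (trivially true when $|\mathcal{M}_T|=0$, and obtained by dividing by $\sqrt{|\mathcal{M}_T|}$ otherwise); and since $\|\cdot\|=\|\cdot\|_2$ gives $C_{\|\cdot\|}=\max_y\|v(y)\|_2=1$ and therefore $\tilD=D+2/c$ from \eqref{eq:proxy-bounded}, this equals $((D+2/c)^2+1)/d_*^2$. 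The only step needing care is the sign invariant $y_*^\top v(y_t)\geq0$; once that is secured via \cref{cor:proxy-inclusion}, everything afterward is substitution into \cref{lem:perceptron-mistake-bound}.
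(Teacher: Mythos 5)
Your proposal is correct and follows essentially the same route as the paper: reduce to \cref{lem:perceptron-mistake-bound} with $(y,b)=(y_*/(d_*\|y_*\|_2),0)$, establish the invariant $y_*^\top y_t\geq 0$ by induction, and conclude that the hinge sum is nonpositive. The only cosmetic difference is that you obtain both the inductive step and the vanishing of the hinge terms by invoking \cref{cor:proxy-inclusion}, whereas the paper unpacks the underlying decomposition $s(A,y,b)=A+\lbl(A)\alpha v(y)$ explicitly; these are the same computation.
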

\begin{proof}[Proof of \cref{thm:perceptron-mistake-bound2}]
Due to $\L=\R^d\times\{0\}$ and the projection step, we deduce that \cref{alg:projected-perceptron} with $\L=\R^d\times\{0\}$ will result in $b_t=0$ for all $t$. Therefore, when  $\|\cdot\|=\|\cdot\|_2$ holds in \cref{assum:cost-norm}, \cref{alg:projected-perceptron} with $\L=\R^d\times\{0\}$ and $\gamma=1$ becomes precisely \citet[Algorithm 2]{ahmadi_strategic_2021}.

To prove the other claims, we will first show by induction that $y_*^\top y_t\geq0$.  The base case holds because $y_0=0$. For induction hypothesis, suppose $y_*^\top y_t\geq0$ holds for some $t$. Then, based on $\plbl(r(A_t,y_t,b_t),y_t,b_t) = \lbl(A_t)$ or not, we have either $y_{t+1}=y_t$ (if we predicted the label correctly) and so $y_*^\top y_{t+1}=y_*^\top y_t\geq0$ holds immediately, or $y_{t+1}= y_t + \lbl(A_t)s(A_t,y_t,b_t)$ (when a mistake is made). In the latter case, using $s(A_t,y_t,b_t)=A_t+\lbl(A_t)\cdot\alpha_t\cdot v(y_t)$ for some $\alpha_t\in[0,2/c]$ we have
\begin{align*}
	y_*^\top y_{t+1}
	&= y_*^\top[y_t+\lbl(A_t)\cdot(A_t+\lbl(A_t)\alpha_tv(y_t))] \\
	&= y_*^\top y_t + \lbl(A_t)\cdot y_*^\top A_t + \alpha_t y_*^\top v(y_t) \\
	&\geq \alpha_t y_*^\top v(y_t) \\
	&= \alpha_t y_*^\top y_t \\
	&\geq 0.
\end{align*}
Here, the first inequality follows from the induction hypothesis $y_*^\top y_t\geq0$ and \cref{assum:margin-zero-b} which ensures $\lbl(A_t) (y_*^\top A_t)\ge0$. The last equation follows from the key fact that $v(y_t)=y_t$ for $\ell_2$ norm. And, finally the last inequality follows from the induction hypothesis.
This then concludes the induction. 

Following the same proof for \cref{prop:mistake-bound} and plugging in $(y,b)={1\over \beta_*}(y_*,0)$ with $\beta_*:=d_*\|y_*\|_2$ in \cref{lem:mistake-bound} and using \cref{assum:separable} give us \eqref{eq:perceptron:hinge_bound}, i.e., 
\begin{align*}
	&\sum_{t\in\mathcal{M}_T} L_{\hinge}\left({y_*\over d_*\|y_*\|_2}, 0; (s(A_t, y_t, 0),1),\lbl(A_t)\right)  
	\leq \sum_{t\in\mathcal{M}_T} \max\set{0,  - {1\over d_*\|y_*\|_2} \alpha(A_t, y_t, 0)\, y_*^\top v(y_t) },
\end{align*}
where $\alpha(A_t, y_t, 0)\in[0,2/c]$. 
Recall that $\|\cdot\|=\|\cdot\|_2$, and thus $y_t=v(y_t)$.  
Then, as $0\le y_*^\top y_t=y_*^\top v(y_t)$ holds for all $t$, we conclude that  the above summation expression must be nonpositive. 
Therefore, plugging in $(y,b)={1\over \beta_*}(y_*,0)={1\over d_* \|y_*\|_2}(y_*,0)$ in \cref{lem:mistake-bound}, results in 
\begin{align*}
	|\mathcal{M}_T| \leq {1\over d_*}\sqrt{\tilD^2+1}\, \sqrt{|\mathcal{M}_T|}, 
\end{align*}
which leads to the desired relation by observing that in this case we also have $C_{\|\cdot\|}=C_{\|\cdot\|_2}=1$ in the definition of $\tilD$ in \eqref{eq:proxy-bounded}.
\end{proof}

\begin{remark}\label{rem:compare-mistake-bounds}
When the cost is based on the $\ell_2$-norm, \cref{lem:kappa-L2,thm:margin-best_mistake-bound} provide the mistake bound of \cref{alg:data-driven} as
\begin{align}\label{eq:compare-data-driven-bound}
	\frac{2\log\left( \tilD^{\pm}/d_* \right)}{\log\left( \min\left\{ \frac{4\barD^2}{4\barD^2 - d_*^2}, 2 \right\} \right)},
\end{align}
where $\tilD^{\pm}$ and $\barD$ are defined in \cref{eq:proxy-bounded}. 
In contrast, \cref{thm:perceptron-mistake-bound2} gives a bound of
\begin{align}\label{eq:compare-perceptron-bound}
	\frac{\tilD^2 + 1}{d_*^2}
\end{align}
when $b_* = 0$ (the bound in \cref{thm:perceptron-mistake-bound} is worse for $b_* \neq 0$ so we will just examine this one). 
Note that when $d_* \geq 2 \barD$, the denominator of \eqref{eq:compare-data-driven-bound} becomes $2$, and since $\tilD^{\pm} \leq \tilD$, this means that \eqref{eq:compare-data-driven-bound} is an exponential improvement over \eqref{eq:compare-perceptron-bound}. Since $\barD$ measures the diameters of $\cA^+$ and $\cA^-$, while $d_*$ is the distance between the sets, it is possible for $d_* \geq 2 \barD$ to be satisfied.

We now consider the case when $d_* < 2\barD$, for which \eqref{eq:compare-data-driven-bound} becomes
\[\frac{2\log\left( \tilD^{\pm}/d_* \right)}{\log\left( \frac{4\barD^2}{4\barD^2 - d_*^2} \right)} = \frac{ \log\left( \left( \tilD^{\pm}/d_* \right)^2 \right)}{\log\left( 1 + \frac{d_*^2}{4\barD^2 - d_*^2} \right)}. \]
Since for any $x \geq -1$ we have $\frac{x}{1+x} \leq \log(1+x) \leq x$, we thus have
\[ \left( \frac{4 \barD^2}{d_*^2} - 1 \right) \log\left( \left(\tilD^{\pm}/d_*\right)^2 \right) \leq \eqref{eq:compare-data-driven-bound} \leq \frac{4 \barD^2}{d_*^2} \log\left( \left( \tilD^{\pm}/d_* \right)^2 \right). \]
Comparing the upper and lower bounds with \eqref{eq:compare-perceptron-bound}, we have the following:
\begin{align*}
&&&&\tilD^2 + 1 &< (4 \barD^2 - d_*^2) \log\left( \left( \tilD^{\pm}/d_* \right)^2 \right) &\implies \eqref{eq:compare-data-driven-bound} > \eqref{eq:compare-perceptron-bound}&&&&&\\
&&&&\tilD^2 + 1 &> 4 \barD^2 \log\left( \left( \tilD^{\pm}/d_* \right)^2 \right) &\implies \eqref{eq:compare-data-driven-bound} < \eqref{eq:compare-perceptron-bound}&&&&&
\end{align*}
and
\begin{align*}
& (4 \barD^2 - d_*^2) \log\left( \left( \tilD^{\pm}/d_* \right)^2 \right) \leq \tilD^2 + 1 \leq  4 \barD^2 \log\left( \left( \tilD^{\pm}/d_* \right)^2 \right)\\
&\implies \left| \eqref{eq:compare-data-driven-bound} - \eqref{eq:compare-perceptron-bound} \right| \leq \log\left( \left( \tilD^{\pm}/d_* \right)^2 \right).
\end{align*}
Since $\max\left\{ \barD, \tilD^{\pm} \right\} \leq \tilD$, the upper bound also implies that
\[ \eqref{eq:compare-data-driven-bound} \leq \frac{4 \barD^2}{d_*^2} \log\left( \left( \tilD^{\pm}/d_* \right)^2 \right) \leq 4\left( \frac{\tilD^2 + 1}{d_*^2} \right) \log\left( \frac{\tilD^2 + 1}{d_*^2} \right), \]
i.e., \eqref{eq:compare-data-driven-bound} is within a multiplicative logarithmic factor of \eqref{eq:compare-perceptron-bound}.

In practice, the constants $\barD, \tilD^{\pm}$ may be much less than $\tilD$, thus it is often the case that $\eqref{eq:compare-data-driven-bound} < \eqref{eq:compare-perceptron-bound}$. Even if this does not hold, \eqref{eq:compare-data-driven-bound} is no more than a logarithmic factor worse than \eqref{eq:compare-perceptron-bound}.
\end{remark}

We next analyze the case when $y_* \in \bbR_+^d$. This assumption enables us to use the version of \cref{alg:projected-perceptron} with a projection step onto $\L=\R_+^d\times\R$, and moreover we will show that under this assumption {we can provide finite mistake bounds for any $d_* > 0$, even when the norm $\|\cdot\|$ in \cref{assum:cost-norm} satisfy \cref{assum:norm-non-negative} below but it is not necessarily $\|\cdot\|_2$. As \cref{lem:norm-non-negative} shows, this still allows for a large class of norms such as all $\ell_p$-norms for $p \in [1,\infty]$.}

\begin{assumption}\label{assum:non-negative}
	\cref{assum:margin} holds with $y_* \in \R_+^d$. 
\end{assumption}

\begin{assumption}\label{assum:norm-non-negative}
	If $y\in\R_+^d$, then the selection $v(y)$ from \cref{assum:unique-direction} satisfies $v(y)\in\R_+^d$. 
\end{assumption}

\begin{lemma}\label{lem:norm-non-negative}
	If a norm $\|\cdot\|:\R^d\to\R_+$ satisfies that, for any $(x^{(1)},\dots,x^{(d)})\in\R^d$, 
	\begin{align*}
		\|(x^{(1)}, \dots, x^{(i)}, \dots, x^{(d)})\| \geq \|(x^{(1)}, \dots, |x^{(i)}|, \dots, x^{(d)})\|, 
	\end{align*}
	then $v(y)$ can be set so that \cref{assum:norm-non-negative} holds. 
\end{lemma}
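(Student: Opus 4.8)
The plan is to prove the lemma by exhibiting an explicit valid selection $y\mapsto v(y)$. The coordinates that matter are only the nonnegative ones: for $y\notin\R_+^d$ we keep whatever selection is already in force, and for $y=0$ we already have $v(0)=0\in\R_+^d$ by \cref{assum:unique-direction}. So fix $y\in\R_+^d\setminus\{0\}$ and recall from \eqref{eq:manipulation-direction} that any admissible $v(y)$ is a maximizer of $w\mapsto y^\top w$ over the unit ball $\{w:\|w\|\le1\}$, equivalently an element of $\partial\|y\|_*$. I would take any such maximizer $w^\star$, form the vector $\bar w$ with coordinates $\bar w_i:=|w^\star_i|$, show that $\bar w$ is again a maximizer, and then set $v(y):=\bar w\in\R_+^d$.

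Here are the key steps, in order. Step (i): apply the monotonicity hypothesis one coordinate at a time --- replacing $w^\star_i$ by $|w^\star_i|$ for $i=1,\dots,d$ in turn --- to conclude $\|\bar w\|\le\|w^\star\|=1$, so $\bar w$ is feasible. Step (ii): since $y\ge0$, $y^\top\bar w=\sum_{i\in[d]}y_i|w^\star_i|\ge\sum_{i\in[d]}y_iw^\star_i=\|y\|_*$; combining this with feasibility (which forces $y^\top\bar w\le\|y\|_*$) gives $y^\top\bar w=\|y\|_*$, so $\bar w$ attains the maximum, hence $\bar w\in\partial\|y\|_*$ and $\bar w\in\R_+^d$. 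Step (iii): verify the remaining requirement of \cref{assum:unique-direction}, namely $\|v(y)\|=1$ when $y\neq0$; if $\|\bar w\|<1$, then $\bar w/\|\bar w\|$ would be feasible with objective $\|y\|_*/\|\bar w\|>\|y\|_*$, contradicting optimality, so $\|\bar w\|=1$. (One may also note that under \cref{assum:strictly-convex-norm} the maximizer is unique, whence $w^\star=\bar w$ already lies in $\R_+^d$ and no choice is needed.) Finally, to support the remark preceding the lemma, observe that every $\ell_p$-norm with $p\in[1,\infty]$ satisfies the hypothesis with equality, since $\|x\|_p$ depends only on $(|x_1|,\dots,|x_d|)$; the same holds for weighted $\ell_1$-norms.

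I do not anticipate a genuine obstacle: the argument is a short symmetrization. The one point needing a little care is step (iii) --- confirming that the substituted $v(y)$ still meets all the structural conditions imposed on the manipulation direction in \cref{assum:unique-direction} (membership in $\partial\|y\|_*$ and unit normalization for $y\neq0$), not merely nonnegativity --- but this is immediate from optimality as above.
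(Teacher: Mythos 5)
Your proposal is correct and follows essentially the same route as the paper: symmetrize a maximizer $w^\star$ of $w\mapsto y^\top w$ over the unit ball by taking coordinatewise absolute values, use the coordinate-monotonicity hypothesis to keep it feasible, and use $y\geq 0$ to see the objective does not decrease, so the nonnegative vector is again an admissible selection. Your additional normalization check in step (iii) is a harmless (and slightly more careful) addendum to what the paper does.
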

\begin{proof}[Proof of \cref{lem:norm-non-negative}]
  Consider any $y\in\R_+^d$. Recall that
  \begin{align*} 
    v(y)
    &\in \argmax_{\|x\| \leq 1} y^\top x.
  \end{align*}
  For $x=(x^{(1)},\dots,x^{(d)})\in\partial\|y\|_*$, let $x_+:=(|x^{(1)}|,\dots,|x^{(d)}|)\in\R_+^d$, then using $y\in\R^d_+$ we deduce $x^\top y \leq x_+^\top y$. Note also that by the definition of $x_+$, using the premise of the lemma we arrive at $\|x\| \geq \|x_+\|$. Then, combining these two, we conclude
  \begin{align*}
    y^\top x \leq \frac{y^\top x_+}{\|x_+\|},
  \end{align*}
 Hence $x_+\in\argmax_{\|x\| \leq 1} y^\top x$ as well. Therefore, we can always choose $v(y) \in \bbR_+^d$.
\end{proof}

Next, for any  $d_*>0$, under \cref{assum:non-negative,assum:norm-non-negative}, we present a finite mistake bound for \cref{alg:projected-perceptron} with $\L=\R_+^d\times\R$. 

\begin{theorem}\label{prop:projection-mistake-bound}\label{thm:nonnegative-perceptron-mistake-bound}
	Suppose \cref{assum:cost-norm,assum:unique-direction,assum:margin,assum:bounded,assum:non-negative,assum:norm-non-negative} hold.  
	Given a sequence of points $\{A_t\}_{t \in \bbN} \subseteq \cA$, let $(y_t,b_t)$ be the classifiers generated by \cref{alg:projected-perceptron} with $\L=\R^d_+\times\R$ and $\gamma=1$. Then we have 
	\[
	\sum_{t\in \mathcal{M}_T} L_{\hinge}\left({y_*\over d_*\|y_*\|_*},{b_*\over d_*\|y_*\|_*}; (s(A_t, y_t, b_t),1),\lbl(A_t)\right) 
	\leq 0.
	\]
	As a result, \cref{alg:projected-perceptron} with $\L=\R^d_+\times\R$ guarantees 
	\begin{align*}
		|\mathcal{M}_T| 
		\leq \frac{ (\|y_*\|_2^2 + b_*^2) (\tilD^2+1)}{\|y_*\|_*^2\, d_*^2 } 
	\end{align*}
	where $\tilD$ is defined by \eqref{eq:proxy-bounded}.
\end{theorem}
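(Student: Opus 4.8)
The plan is to mirror the proof of \cref{thm:perceptron-mistake-bound}, but to use the projection onto $\L = \R^d_+ \times \R$ to replace the $\tfrac{2}{cd_*}|\mathcal{M}_T|$ slack there with an exact zero, which is what makes the bound hold for every $d_* > 0$. The one new structural fact I would establish first is that every iterate satisfies $y_t \in \R^d_+$: this is immediate since $y_0 = 0$ and, for $t \geq 1$, $(y_t,b_t) = \Proj_\L(z_t)$ with $\L = \R^d_+ \times \R$ a product set, so the projection acts blockwise and the $y$-block is projected onto $\R^d_+$. Combined with \cref{assum:norm-non-negative} this gives $v(y_t) \in \R^d_+$ whenever $y_t \neq 0$ (and $v(y_t)=0$ when $y_t = 0$), and since $y_* \in \R^d_+$ by \cref{assum:non-negative}, I obtain $y_*^\top v(y_t) \geq 0$ for all $t$.

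Next I would bound the hinge loss of the scaled comparator. Writing $\beta_* := d_*\|y_*\|_*$ and using the representation $s(A_t,y_t,b_t) = A_t + \lbl(A_t)\,\alpha(A_t,y_t,b_t)\,v(y_t)$ with $\alpha(A_t,y_t,b_t) \in [0,2/c]$ from \eqref{eq:proxy}, the same manipulation that produces \eqref{eq:perceptron:hinge_bound} in the proof of \cref{thm:perceptron-mistake-bound} — which only invokes $A_t \in \cA$ together with the margin inequality $\lbl(A_t)(y_*^\top A_t + b_*) \geq \beta_*$ from \cref{assum:margin} — yields
\[ \sum_{t \in \mathcal{M}_T} L_{\hinge}\!\left(\tfrac{y_*}{\beta_*},\tfrac{b_*}{\beta_*}; (s(A_t,y_t,b_t),1),\lbl(A_t)\right) \leq \sum_{t \in \mathcal{M}_T} \max\!\left\{0,\, -\tfrac{1}{\beta_*}\,\alpha(A_t,y_t,b_t)\,y_*^\top v(y_t)\right\}. \]
The crucial difference from \cref{thm:perceptron-mistake-bound} is that each right-hand summand is now $\max\{0,\text{(nonpositive)}\} = 0$ by the sign fact $y_*^\top v(y_t) \geq 0$ established above, so the whole sum is $\leq 0$ (in fact each hinge term is exactly $0$). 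This is the first displayed claim of the theorem.

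Finally I would invoke \cref{lem:mistake-bound} with the comparator $(y,b) = (y_*/\beta_*,\,b_*/\beta_*)$, which lies in $\L \setminus \{0\}$ since $y_* \in \R^d_+$ and $y_* \neq 0$ by \cref{assum:margin}; this gives $|\mathcal{M}_T| - (\text{hinge sum}) \leq \tfrac{\sqrt{\|y_*\|_2^2 + b_*^2}}{\beta_*}\sqrt{\tilD^2+1}\,\sqrt{|\mathcal{M}_T|}$. Plugging in that the hinge sum is $\leq 0$, cancelling one factor of $\sqrt{|\mathcal{M}_T|}$ and squaring gives $|\mathcal{M}_T| \leq \tfrac{(\|y_*\|_2^2 + b_*^2)(\tilD^2+1)}{\beta_*^2} = \tfrac{(\|y_*\|_2^2 + b_*^2)(\tilD^2+1)}{\|y_*\|_*^2 d_*^2}$, as required.

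I do not expect a genuine obstacle — the work is essentially bookkeeping. The only two points needing a moment's care are (i) confirming that projecting onto the product cone $\R^d_+ \times \R$ keeps the $y$-coordinates in $\R^d_+$, so that the chain $y_t \geq 0 \Rightarrow v(y_t) \geq 0 \Rightarrow y_*^\top v(y_t) \geq 0$ goes through, and (ii) checking that the scaled comparator is a legitimate input to \cref{lem:mistake-bound}, i.e.\ nonzero and inside $\L$. If anything is ``hard'' here, it is recognizing up front that the role of the projection is precisely to enforce $v(y_t) \in \R^d_+$, thereby neutralizing the otherwise problematic cross term $\alpha(A_t,y_t,b_t)\,y_*^\top v(y_t)$ in the hinge loss and upgrading the weaker estimate of \cref{thm:perceptron-mistake-bound} to one valid for all $d_* > 0$.
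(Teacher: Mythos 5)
Your proposal is correct and follows essentially the same route as the paper's proof: it derives the hinge-loss bound \eqref{eq:perceptron:hinge_bound} exactly as in \cref{prop:mistake-bound}, kills the cross term via the chain $y_t\in\R^d_+\Rightarrow v(y_t)\in\R^d_+\Rightarrow y_*^\top v(y_t)\geq 0$ (using the projection step, \cref{assum:norm-non-negative}, and \cref{assum:non-negative}), and then invokes \cref{lem:mistake-bound} with the comparator $(y_*,b_*)/(d_*\|y_*\|_*)$. Your additional checks (that the projection onto the product cone acts blockwise and that the comparator lies in $\L\setminus\{0\}$) are correct and, if anything, slightly more explicit than the paper's argument.
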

\begin{proof}[Proof of \cref{thm:nonnegative-perceptron-mistake-bound}]
We define $\beta_*:=d_*\|y_*\|_*$. 
Following the same proof for \cref{prop:mistake-bound} and plugging in $(y,b)={1\over \beta_*}(y_*,b_*)$ in \cref{lem:mistake-bound} and using \cref{assum:separable} give us \eqref{eq:perceptron:hinge_bound}, i.e., 
\begin{align*}
	&\sum_{t\in\mathcal{M}_T} \ell_{\hinge}\left({y_*\over \beta_*}, {b_*\over \beta_*}
	; (s(A_t, y_t, b_t),1),\lbl(A_t)\right)  
	\leq \sum_{t\in\mathcal{M}_T} \max\set{0,  - {1\over \beta_*}\alpha(A_t, y_t, b_t)\, y_*^\top v(y_t) }.
\end{align*}
We claim that this expression is less than or equal to zero. This is 
because $y_*\in\R_+^d$ by \cref{assum:non-negative}, $y_t\in\R_+^d$ holds due to the projection step in \cref{alg:projected-perceptron} with $\L=\R^d_+\times\R$, and $v(y_t)\in\R_+^d$ by \cref{assum:norm-non-negative}.
Then, by plugging in $(y,b)={1\over \beta_*}(y_*,b_*)$ in \cref{lem:mistake-bound}, we deduce 
\begin{align*}
|\mathcal{M}_T| \leq \frac{ \sqrt{\|y_*\|_2^2 + b_*^2} \, \sqrt{\tilD^2+1}}{d_* \|y_*\|_*} \sqrt{|\mathcal{M}_T|}, 
\end{align*}
which leads to the desired relation.
\end{proof}

\begin{remark}\label{rem:perceptron-comparison}
\cref{prop:projection-mistake-bound} gives a mistake bound for \cref{alg:projected-perceptron} with $\L=\R^d_+\times\R$ for a general class of norms $\|\cdot\|$, including $\ell_p$-norms for any $p\geq1$. Note that even when we assume $b_*=0$ and use this information in \cref{alg:projected-perceptron} by setting $\L=\R^d_+\times\{0\}$ and select $\|\cdot\|$ in \cref{assum:cost-norm} to be the weighted $\ell_1$-norm, we observe that the resulting version of \cref{alg:projected-perceptron} and \citet[Algorithm 3]{ahmadi_strategic_2021} which was desiged specifically for the weighted $\ell_1$-norm are indeed distinct despite the fact that they have some similarities.
More precisely, in \cref{alg:projected-perceptron} we project onto $\R^d_+\times \{0\}$ at each iteration to ensure $y_t\in\R^d_+$, but on the other hand this is guaranteed in a different ad hoc manner using the properties of $\ell_1$-norm in \citet[Algorithm 3]{ahmadi_strategic_2021}.
In addition, we bypass potential non-uniqueness of agent responses by making \cref{assum:unique-direction} instead of adding a tie-breaking step utilized by  \citet[Algorithm 3]{ahmadi_strategic_2021}.

When restricted to the case of weighted $\ell_1$-norm and under the assumption that $b_*=0$, the mistake bound of \citet[Algorithm 3, Theorem 2]{ahmadi_strategic_2021} is $((d+1)(D+2/c)^2 + 1)/d_*^2$.
In contrast, as the $C_{\|\cdot\|}$ term in \eqref{eq:proxy-bounded} is $1$ for the $\ell_1$-norm, we deduce $\tilD = D + 2C_{\|\cdot\|}/c = D + 2/c$, and the mistake bound of \cref{thm:nonnegative-perceptron-mistake-bound} is $(\|y_*\|_2/\|y_*\|_\infty)^2 ((D+2/c)^2 + 1)/d_*^2 \leq d ((D+2/c)^2 + 1)/d_*^2$, which has a {similar} dependence on the dimension $d$.
Furthermore, when $b_*$ is unknown, their approach requires a linear search to discover $b_*$ (see  \citet[Algorithm 5]{ahmadi_strategic_2021}) which incurs an extra $O(D/d^*)$ multiplicative term in the mistake bound. In contrast, our approach directly updates $b_t$ values, and as a result our mistake bound incurs an extra additive $(b_*/\|y_*\|_\infty^2)((D+2/c)^2+1)/d_*^2$ term.

Finally, \citet[Algorithm 3]{ahmadi_strategic_2021} is specifically designed to work for only the weighted $\ell_1$-norm; in contrast \cref{assum:norm-non-negative} covers a wide variety of norms, including all $\ell_p$-norms for $p \geq 1$, and so \cref{alg:projected-perceptron} and its mistake bound guarantee given in \cref{prop:projection-mistake-bound} remain applicable in a much broader range.
\end{remark}

\subsection{Necessity of the assumptions for theoretical guarantees
}\label{sec:guarantee-examples}

In this section, through various examples, we explore the necessity of assumptions on $d_*$ that are prevalent in our guarantees on the finite mistake/manipulation bounds and convergence to $(y_*,b_*)$ for \cref{alg:data-driven,alg:projected-perceptron}.

While we focus on \cref{alg:data-driven,alg:projected-perceptron}, we conjecture that when \cref{alg:data-driven} fails, \cref{alg:data-driven-subgradient-averaging} will also fail since \cref{alg:data-driven-subgradient-averaging} is an approximation of \cref{alg:data-driven}. However, we defer rigorous investigation of this to future work.

We first show that when $d_* < 2/c$, \cref{alg:data-driven} may still have infinitely many manipulations under \cref{assum:data-driven-stochastic} \emph{with positive probability}, and furthermore that $(y_t,b_t)$ may not converge to $(y_*,b_*)$. This constrasts with \cref{thm:margin-best_manipulation-bound,thm:data-driven-convergence} which states that when $d_* > 2/c$, we guarantee finite manipulation and convergence to $(y_*,b_*)$.

\begin{example}\label{ex:data-driven-not-converge}
{Suppose that $\cA = \cA^+ \cup \cA^-$ where $\cA^+ = \{(0,1),(-2,1)\}$, $\cA^-=\{(-2,-1)\}$. With $\|\cdot\| = \|\cdot\|_2$ in \cref{assum:cost-norm}, we have that \cref{assum:margin} is satisfied with $d_* = 1$ and $y_*=(0,1)$, $b_*=0$. We set $c = \sqrt{2}$, thus $1/c < d_* < 2/c$. Let $\bbP$ be the distribution which draws any point from $\cA$ with uniform probability $1/3$. We let $A_t$ be drawn i.i.d. from this distribution, so \cref{assum:data-driven-stochastic} is satisfied. Consider the event $\cE$ that both $(0,1)$ and $(-2,-1)$ are drawn at least once before ever seeing $(-2,1)$; then it is easy to check that $\bbP[\cE] = 1/3$. On the event $\cE$, we have that \cref{alg:data-driven} computes $y_t = (1,1)/\sqrt{2}$, $b_t = 1/\sqrt{2}$ for all $t \geq 1$, and infinitely many manipulations occur.}
\end{example}
\begin{proof}[Proof of \cref{ex:data-driven-not-converge}]
{On the event $\cE$ where both $(0,1)$ and $(-2,-1)$ are drawn at least once before ever seeing $(-2,1)$, the initialization \cref{alg:initialization} sets $\widetilde{\cA}_0^+ = \{(0,1)\}$, $\widetilde{\cA}_0^- = \{(-2,-1)\}$, $y_1 = (1,1)/\sqrt{2}$, $b_1 = 1/\sqrt{2}$ and $d_1 = \sqrt{2} = 2/c$. We show that in fact $(y_t,b_t) = (y_1,b_1)$ for all $t \geq 1$, since $\widetilde{A}_t^+ = \{(0,1),(-1,2)\}$, $\widetilde{\cA}_t^- = \{(-2,-1)\}$.

For convenience, we will use the notation $(y,b) := (y_1,b_1)$,  $A_1 = (0,1)$, $A_2 = (-2,1)$ and $A_3 = (-2,-1)$ (noting that these are \emph{not} time indices). Suppose it holds that $(y_t,b_t) = (y,b)$ up to some time $t \geq 1$. First, recognize that $y^\top A_1 + b = \sqrt{2} = 2/c$, $y^\top A_2 + b = 0$ and $y^\top A_3 + b = -\sqrt{2} = -2/c$. Therefore, according to \eqref{eq:notation} the only point that is manipulated is $A_2$, for which we have $r(A_2,y,b) = A_2 + (1,1) = (-1,2)$ and as $\lbl(A_2) = +1$ we also have $s(A_2,y,b)=r(A_2,y,b)$ as well. We thus have $\widetilde{A}_t^+ = \{(0,1),(-1,2)\}$, $\widetilde{\cA}_t^- = \{(-2,-1)\}$. However, we can check that $y^\top (-1,2) + b = \sqrt{2}$, so $(y,b)$ also maximizes the  margin between the sets $\widetilde{\cA}_t^+$, $\widetilde{\cA}_t^-$. This means that $(y_{t+1},b_{t+1}) = (y,b)$. Therefore, by induction it holds for all $t \geq 0$.

Furthermore, $A_2$ will be drawn from $\bbP$ infinitely often almost surely, so infinitely many manipulations will occur on $\cE$.
}
\end{proof}

For the strategic perceptron \cref{alg:projected-perceptron}, the situation is slightly more nuanced. Recall that if $\bbL = \bbR^d \times \bbR$, then  \cref{thm:perceptron-mistake-bound} guarantees that \cref{alg:projected-perceptron} makes finitely many mistakes when $d_* > 2/c$, regardless of the norm. On the other hand, when $\bbL = \bbR^d \times \{0\}$ and $\|\cdot\|=\|\cdot\|_2$ holds in \cref{assum:cost-norm}, \cref{thm:perceptron-mistake-bound2} guarantees that finitely many mistakes are made for any $d_* > 0$.
The following example shows that if $d_* \leq 2/c$ and the norm $\|\cdot\|$ in \cref{assum:cost-norm} is selected to be anything other than $\ell_2$-norm, there exists $\cA$ and distribution $\bbP$ over it satisfying \cref{assum:data-driven-stochastic} such that \cref{alg:projected-perceptron} makes infinitely many mistakes with probability $\geq 1/3$.
\begin{example}\label{lem:mistake-bound-counterexample}
	Suppose that $\|\cdot\|$ in \cref{assum:cost-norm} is any norm \emph{other} than the $\ell_2$-norm. There exists $\cA$ that satisfies \cref{assum:margin-zero-b} with $d_* \leq 2/c$, and a distribution $\bbP$ over $\cA$ satisfying \cref{assum:data-driven-stochastic}, such that \cref{alg:projected-perceptron} with $\L=\R^d\times\{0\}$ and $\gamma=1$ makes infinitely many mistakes on this sequence with probability $\geq 1/3$.
\end{example}
\begin{proof}[Proof of \cref{lem:mistake-bound-counterexample}]
	By \cref{lem:l2-norm-symmetry}, for any norm $\|\cdot\|$ other than the $\ell_2$-norm, there exist $z,w\in\R^d \setminus\{0\}$ such that $w = v(z)$ is not parallel to $z$. Furthermore, we will choose $z$ such that $\|z\| = 2/c$.
{As $w=v(z)$, by \eqref{eq:manipulation-direction}, we have $\|w\| = 1$. We let $\cA := \{z,-z,2w/c\}$, where $\lbl(z) = +1$, $\lbl(-z)=-1$ and $\lbl(2w/c) = -1$.}
	
	We first claim that $\{z\}$ and $\{-z,2w/c\}$ are linearly separable by a hyperplane passing through the origin. To see this, simply take $\bar{y} = z/\|z\|_2 - w/\|w\|_2$. Then, $\bar{y}^\top z = \|z\|_2 - z^\top w/\|w\|_2 > 0$ since $w$ and $z$ are not parallel. Similarly, $\bar{y}^\top w < 0$ and $\bar{y}^\top (-z) < 0$. Furthermore, we have
	\[ d_* = \max_{y \neq 0} \min_{A \in \cA} \lbl(A) \cdot \frac{y^\top A}{\|y\|_*} = \max_{y \neq 0} \min\left\{ \frac{y^\top z}{\|y\|_*}, -\frac{y^\top (-z)}{\|y\|_*}, -\frac{y^\top (2w/c)}{\|y\|_*} \right\} \leq \frac{2}{c}. \]
	where the inequality follows from $\|z\| = 2/c$ and $\|w\| = 1$. We set the distribution $\bbP$ to pick each point in $\cA$ with equal probability.
	
	We consider the event when $A_0 = -z$, which occurs with probability $1/3$. On this event, $\lbl(A_0) = -1$, 
{and since \cref{alg:projected-perceptron} initializes $y_0 = 0$, $b_0=0$, it predicts the label to be $\plbl(r(A_0,y_0,b_0),y_0,b_0) = \sign(0)=+1$. The proxy is $s(A_0,y_0,b_0) = A_0 = -z$ hence the update is $y_1 = y_0 - s(A_0,y_0,0) = z$, $b_1 = 0$.}

	Once $y_1 = z$, notice that $\frac{y_1^\top z}{\|y_1\|_*} > 0$ and $\frac{y_1^\top (-z)}{\|y_1\|_*} < 0$. According to \eqref{eq:manipulated} we have $r(-z,y_1,0) = -z$. 
	Then, we have 
	\[y_1^\top r(-z,y_1,0) - 2\|y_1\|_*/c =-\|z\|_2^2 - 2\|z\|_*/c = -\|z\|_2^2 - \|z\| \|z\|_*  <0\] (as $z\neq0$) and so $\plbl(r(-z,y_1,0),y_1,0)=-1$ and $r(-z,y_1,0)$ is correctly classified.
	Moreover, using \eqref{eq:manipulated} and the fact that $b_1=0$, we deduce
	\begin{align*}
		y_1^\top r(z,y_1,0) - 2\|y_1\|_*/c 
		&= \begin{cases}
			y_1^\top z + \left( \frac{2}{c} - \frac{y_1^\top z}{\|y_1\|_*} \right) y_1^\top v(y_1) - 2\|y_1\|_*/c , &\text{if } 0 \leq \frac{y_1^\top z}{\|y_1\|_*} < \frac{2}{c} \\
			y_1^\top z - 2\|y_1\|_*/c, & \text{otherwise}\\
		\end{cases} \\
		&= \begin{cases}
			z^\top z + \left( \frac{2}{c} - \frac{z^\top z}{\|z\|_*} \right) z^\top v(z) - 2\|z\|_*/c , &\text{if } 0 \leq \frac{z^\top z}{\|z\|_*} < \frac{2}{c} \\
			z^\top z - 2\|z\|_*/c, & \text{otherwise}\\
		\end{cases} \\
		&= \begin{cases}
			z^\top z + \left( \frac{2}{c} \|z\|_* - z^\top z \right)  - 2\|z\|_*/c , &\text{if } 0 \leq \frac{z^\top z}{\|z\|_*} < \frac{2}{c} \\
			z^\top z - 2\|z\|_*/c, & \text{otherwise}\\
		\end{cases} \\
		&= \begin{cases}
			0 , &\text{if } 0 \leq \frac{z^\top z}{\|z\|_*} < \frac{2}{c} \\
			z^\top z - 2\|z\|_*/c, & \text{otherwise}\\
		\end{cases} \\
		& \geq 0.
	\end{align*} 
	Here, the second equality follows from $y_1=z$, and the third one from \eqref{eq:manipulation-direction}, and the last inequality holds as $z^\top z>0$ since $z\neq0$.
	Therefore, $y_1$ correctly classifies $r(z,y_1,0)$, as well.
On the other hand, for $2w/c$, we have $\frac{y_1^\top (2w/c)}{\|y_1\|_*} = \frac{2}{c} \frac{z^\top v(z)}{\|z\|_*} = \frac{2}{c}$. Thus, $r(2w/c,y_1,0) = 2w/c$, and $\plbl(r(2w/c,y_1,0),y_1,0) = +1$, which is a mistake. On the other hand, $s(2w/c,y_1,0) = r(2w/c,y_1,0) - \frac{2}{c} v(y_1) = \frac{2}{c} w - \frac{2}{c} v(z) = 0$. Thus, even though a mistake is made, the step in \cref{alg:projected-perceptron} to update the classifier will not result in any change in the classifier. 

	Hence, on the event that $A_0 = -z$, we have that \cref{alg:projected-perceptron} sets $y_t = z$, $b_t = 0$ for all $t \geq 1$. Thus, whenever $A_t = 2w/c$, we encounter a mistake, and $\bbP$ will ensure that $A_t = 2w/c$ infinitely often.
\end{proof}

Recall that when $b_*=0$ is known a priori, \cref{assum:margin-zero-b} holds, and also 
if $\|\cdot\|=\|\cdot\|_2$, then \cref{thm:perceptron-mistake-bound2} guarantees that \cref{alg:projected-perceptron} with $\bbL = \bbR^d \times \{0\}$ and $\gamma=1$ makes finitely many mistakes for any $d_*>0$. In the following example, we will show that in this setting even when $d_*>2/c$, \cref{alg:projected-perceptron} with $\L=\R^d\times\R$,  $\L=\R^d\times\{0\}$ or $\L = \R_+^d \times \R$ may result in infinitely many manipulations, and that $(y_t,b_t)$ may not converge to the maximum margin classifier $(y_*,b_*)$. This is in contrast to \cref{alg:data-driven}, for which \cref{thm:data-driven-convergence,thm:margin-best_manipulation-bound} ensure that finitely many manipulations occur whenever $d_*>2/c$ and $(y_t,b_t) \to (y_*/\|y_*\|_*, b_*/\|y_*\|_*)$ almost surely under suitable assumptions.
\begin{example}\label{ex:perceptron-margin}
Consider the same setting as \cref{ex:truthful-max-margin}. Since \cref{assum:margin} is satisfied with $b_*=0$, \cref{assum:margin-zero-b} is also satisfied with the same $y_*=(0,1)$ and $d_*=1$. Therefore, \cref{alg:projected-perceptron} with $\gamma=1$ and $\L=\R^d\times\R$ (recall that $2/c=1/2<1=d_*$),  $\L=\R^d\times\{0\}$ or $\L = \R_+^d \times \R$ may be applied to find a classifier $(\bar{y},\bar{b}) \neq (y_*,b_*)$ that correctly classifies all agents, and thus \cref{alg:projected-perceptron} never updates again. However, this classifier will always cause some point $A \in \cA^+$ to manipulate.
	\end{example}

	\begin{proof}[Proof of \cref{ex:perceptron-margin}]
	Starting from $y_0 = (0,0)$, we will compute the updates for $A_0 = (1,-1) \in \cA^-$ and $A_1 = (2,1) \in \cA^+$ and $\L = \bbR^d \times \bbR$.
	
	First, when $y_0 = (0,0)$, $\plbl(A_0,y_0,0) = \sign(0) = 1 \neq \lbl(A_0) = -1$,
	so a mistake is made. Then, the update is $y_1 = y_0 + \lbl(A_0)\, s(A_0,y_0,0) = -s(A_0,y_0,0) = -A_0 = (-1,1)$ and $b_1= \lbl(A_0) = -1$. Second, when $y_1 = (-1,1)$, we have $y_1^\top A_1+b_1=-2$ so $r(A_1,y_1,b_1)=s(A_1,y_1,b_1)=A_1$ and $\plbl(A_1,y_1,b_1) = \sign(-2 - 1/\sqrt{2}) = -1 \neq \lbl(A_1) = 1$, so the update is $y_2 = y_1 + \lbl(A_1)\, s(A_1,y_1, b_1) = y_1 + A_1 = (1,2)$ and $b_2 = b_1 + \lbl(A_1) = 0$.
	
	From \cref{ex:truthful-max-margin}, we know that the classifier given by $\plbl(A,y_2,b_2)$ where  $y_2 = (1,2)$ and $b_2=0$
	makes no mistakes on any $A \in \cA$, thus \cref{alg:projected-perceptron} will no longer update. However, $y_2 \neq y_*$, and the agent $A=(-1,1)$ will always manipulate.
	
	When $\bbL = \bbR^d \times 0$, we can easily check that $y_1 = (-1,1)$, $b_1 = 0$ and $y_2 = (1,2)$, $b_2 = 0$. Thus, the same conclusion holds.
	
	When $\bbL = \bbR_+^d \times \bbR$, we compute the update for $A_0 = (1,-1)$. Since $\lbl(A_0) = -1$, when $(y_0,b_0) = (0,0)$ we have $r(A_0,y_0,b_0) = A_0$ and $\plbl(A_0,y_0,b_0) = +1 \neq \lbl(A_0)$, so a mistake is made and we compute $y_1 = \Proj_{\bbR_+^d}(y_0 - A_0) = \Proj_{\bbR_+^d}((0,0)-(1,-1))= \Proj_{\bbR_+^d}((-1,1))=(0,1)$ and $b_1 = \lbl(A_0) = -1$. We can check that $y_1^\top A + b_1 = 0$ for all $A \in \cA^+$, and $y_1^\top A + b_1 = -2$ for all $A \in \cA^-$. Hence,  $r(A,y_1,b_1) = s(A,y_1,b_1) = A + \frac{2}{c} y_1$ for all $A \in \cA^+$, and $r(A,y_1,b_1) = A$ for all $A \in \cA^-$. However, we have that $\plbl(r(A,y_1,b_1),y_1,b_1) = \lbl(A)$ for all $A \in \cA$. Therefore, \cref{alg:projected-perceptron} never updates again, but each point $A \in \cA^+$ is manipulated since $r(A,y_1,b_1) = A + \frac{2}{c} y_1 \neq A$ for all $A\in\cA^+$.
\end{proof}

\section{Numerical Study}\label{sec:numerical}

In this section, we compare the performance of the three algorithms detailed in \cref{sec:algorithms}. We set the norm $\|\cdot\|=\|\cdot\|_2$ in \cref{assum:cost-norm} as our theoretical analysis for \cref{alg:data-driven-subgradient-averaging} is based on this cost structure. Since in general it is not clear whether the data may satisfy \cref{assum:non-negative,assum:margin-zero-b},
we implement \cref{alg:projected-perceptron} with $\L=\R^d\times\R$. 
The algorithms are implemented in Python 3.8.18 and tested on a server with a 12-core 12-thread 2.8-GHz processor and 64 GB memory. The best margin classifier and the problem \eqref{eq:data-driven} in \cref{alg:data-driven} are solved using MOSEK version 10.1.27 (via the Python Fusion API), where the numerical tolerance parameters \texttt{intpntCoTolPfeas}, \texttt{intpntCoTolRelGap}, \texttt{intpntTolRelGap}, \texttt{intpntTolStepSize}, and \texttt{intpntCoTolInfeas} for the conic problem solver are set to $10^{-12}$, and other configurations are set as default.

\paragraph{Data and preprocessing.} We tested the numerical performance of algorithms on both real and synthetic data. 
Here we present and discuss the results from the real data. See \cref{sec:experiment-synthetic} for synthetic data; the results largely align for both types of data. Our real data originates from the loan application records collected by an online platform named Prosper. Detailed information of applicants such as credit history length and bank card utilization are encoded as features, and the loan status serves as the label for binary classification. The dimension of this dataset is $d=6$. The subset of loan applicants' features used for classification follows from \citep{ghalme_strategic_2021}. This data set has $20,222$ data points and $41.70\%$ of the data points have $+1$ labels.

In order to ensure that our assumptions are met, we preprocess this data set as follows. To satisfy \cref{assum:separable} of separability and guarantee a positive margin of at least $\rho>0$ as in \cref{assum:margin}, we first obtain a support vector classifier on the original data, and then remove data points that are misclassified or within distance $\rho$ of the decision boundary, where $\rho\in\set{0.01, 0.02, 0.04}$. 
(The resulting data sets have respectively $16,580$, $16,271$, and $15,560$ points. Moreover, the proportion of $+1$ labels in the resulting datasets are $39.33\%$, $39.24\%$, $39.06\%$, respectively). For each preprocessed data set, we also compute the maximum margin classifier $(y_*,b_*)$ on the non-manipulated data, achieving a maximum margin of $d_*\geq\rho$.
This classifier serves as a benchmark against which we compare the iterates of our algorithms. We set the constant $c$ for unit cost of manipulation such that $2/c \in \set{0.8\rho, \rho, 1.2\rho}$. As a result, the condition $d_*>2/c$ is satisfied by some combinations of $\rho$ and $2/c$ while not satisfied by others.

\paragraph{Performance metrics.} 
We compare the performance of algorithms as the distance between $(y_*,b_*)$ and $(y_t,b_t)$ normalized by $y_*$ and $y_t$ repsectively, i.e., $\left\|\frac{(y_t,b_t)}{\|y_t\|_2} - \frac{(y_*,b_*)}{\|y_*\|_2}\right\|_2$ (\cref{fig:distance-real-data,fig:2rounds}), as well as the number of mistakes (\cref{tab:mistake-real-data-noiseless}) {and the number of manipulations (\cref{tab:manipulation-real-data-noiseless})}. In addition, we also consider the margin of $(y_t,b_t)$ on the entire dataset, i.e., $h(y_t,b_t;\cA^+,\cA^-)$ (\cref{fig:2rounds}). Since its trend is similar to the distance metric (see \cref{fig:2rounds}), we present only the case of $\rho=0.01$, $2/c=0.009$ for conciseness. 
We report the computation time of each algorithm in \cref{tab:time-real-data-noiseless}. {We also include additional figures that display how the number of mistakes and manipulations grow with the iterations in \cref{fig:mistake-real-data,fig:manipulation-real-data} in \cref{sec:experiment-fig}.}

\paragraph{Distance to best margin classifier.}
Examining \cref{fig:distance-real-data}, we see that in the cases where $\rho > 2/c$, \cref{alg:data-driven} quickly finds a $(y_t, b_t)$ close to $(y_*, b_*)$. 
This is in line with \cref{thm:data-driven-convergence} which asserts convergence of \cref{alg:data-driven} to $(y_*, b_*)$. 
\cref{alg:data-driven-subgradient-averaging} is designed to be a simplified and cost-efficient version of \cref{alg:data-driven}. Given this, numerically we also observe that \cref{alg:data-driven-subgradient-averaging} converges slower compared to \cref{alg:data-driven}.
On the other hand, the trend of \cref{alg:projected-perceptron} is much noisier and it is not clear if \cref{alg:projected-perceptron} will convergence to the best margin classifier even after $\approx15,000$ iterations. Note that non-convergence behavior of \cref{alg:projected-perceptron}
is in line with \cref{ex:perceptron-margin}.
Even though our theory does not cover the case of $\rho \leq 2/c$, we notice from \cref{fig:distance-real-data} that \cref{alg:data-driven} performs the best in this setting as well.
Notice that \cref{alg:data-driven} converges to $(y_*,b_*)$ to a high accuracy when $\rho=0.02$, whereas it is stuck at around $10^{-3}$ when $\rho=\set{0.01,0.04}$. This is because a critical data point that defines the best margin classifier arrives at an early stage when $(y_t,b_t)$ is yet far from $(y_*,b_*)$ and results in a manipulated reporting of the data point. As a result, the learner never gets to see this particular critical point, but only a rough estimate of it. Under our probabilistic model \cref{assum:data-driven-stochastic}, there will be future data points that are arbitrarily close to this critical point almost surely. However, as the time horizon in this experiment is fixed, {and the data points are visited only once,} the neighborhood of the critical point happens to be absent in the dataset.
To illustrate this, we input the same dataset for a second round for $\rho=0.01$, {$2/c=0.8 \rho$}.
\cref{fig:2rounds} shows that \cref{alg:data-driven} indeed converges to a high accuracy when the critical points are revisited {as we continue running the algorithms for a 2nd round on the same dataset}.

\paragraph{Number of mistakes.}
From \cref{tab:mistake-real-data-noiseless} we observe that, in any setting of $\rho$ and $2/c$, \cref{alg:data-driven} makes the fewest mistakes (9 over more than 15,000 data points) and \cref{alg:projected-perceptron} makes the most mistakes (up to {1017}). In general, as $\rho$ decreases,
\cref{alg:projected-perceptron} makes more mistakes and
its difference to \cref{alg:data-driven,alg:data-driven-subgradient-averaging} becomes more pronounced. By contrast, the impact of $\rho$ on \cref{alg:data-driven-subgradient-averaging} is not as clear. Finally, \cref{alg:data-driven} has a tendency of making all of its mistakes early on in the first 250 iterations and not making \emph{any}
mistakes afterwards. \cref{fig:mistake-real-data} in \cref{sec:experiment-fig} shows that, in contrast, the number of mistakes made by \cref{alg:projected-perceptron,alg:data-driven-subgradient-averaging} both grow with the number of iterations, with a noticeably faster growth rate for \cref{alg:projected-perceptron}.

\paragraph{Number of manipulations.}
Acoording to \cref{tab:manipulation-real-data-noiseless}, \cref{alg:data-driven} is especially good at encouraging truthfulness, leading to the fewest manipulation in all the cases, followed by \cref{alg:data-driven-subgradient-averaging} then \cref{alg:projected-perceptron}. \cref{fig:manipulation-real-data} in \cref{sec:experiment-fig} shows that more manipulations occur early on in \cref{alg:data-driven-subgradient-averaging} but eventually \cref{alg:projected-perceptron} overtakes it. As $2/c$ decreases, generally all three algorithms incur less manipulations, since the manipulation cost increases. Although our theoretical guarantees on finite manipulation does not cover $\rho<2/c$, the same pattern can be observed: \cref{alg:data-driven} incurs much less manipulation compared to \cref{alg:data-driven-subgradient-averaging,alg:projected-perceptron}.

\paragraph{CPU time.} \cref{tab:time} indicates that \cref{alg:projected-perceptron} is the fastest, \cref{alg:data-driven-subgradient-averaging} require more time, and \cref{alg:data-driven} is the most expensive, taking about $5$ times more computation time than \cref{alg:data-driven-subgradient-averaging}. This is expected as the updating rule of \cref{alg:projected-perceptron} involves only very simple calculation of constant complexity, whereas \cref{alg:data-driven-subgradient-averaging} requires finding a minimizer or maximizer over the growing sets $\widetilde{\cA}_t^+$ and $\widetilde{\cA}_t^-$. \cref{alg:data-driven} is even more expensive as it requires at each iteration solving an optimization problem with an increasing number of constraints. In fact, our implementation of \cref{alg:data-driven} has partly eased the computation difficulty by first checking if the new proxy point $s(A_t,y_t,b_t)$ decreases the objective $h(y_t,b_t;\widetilde{\cA}_t^+,\widetilde{\cA}_t^-)$ of \eqref{eq:data-driven} and accordingly deciding whether it is necessary to solve \eqref{eq:data-driven}. Without this efficient implementation, the running time of \cref{alg:data-driven} can be up to $2\times10^3$ seconds, which is orders of magnitude larger compared to \cref{alg:projected-perceptron,alg:data-driven-subgradient-averaging}. Note that even with this efficient implementation, in the worst case, there can be adversarial order of data point arrivals such that \eqref{eq:data-driven} has to be solved at almost every iteration, making it possible for \cref{alg:data-driven} to be prohibitively expensive.

\paragraph{Additional results.} \cref{sec:experiment-supp} contains supplementary numerical results. We tested the effect of adding noise to agent responses (\cref{sec:experiment-noise}) and ran experiments on synthetic data (\cref{sec:experiment-synthetic}). While our theory does not apply to noisy agent responses, this reflects real-world settings where agents may not be fully rational or the learner observations are inaccurate. We found that adding noise affects the performance of \cref{alg:data-driven}, while \cref{alg:data-driven-subgradient-averaging,alg:projected-perceptron} are relatively robust to noise. This is worthy of investigation in future work. In our synthetic data experiments, we generated data such that \cref{assum:margin-zero-b} holds and provided this information to \cref{alg:projected-perceptron} by selecting $\L=\R^d\times\{0\}$; note that this information is not provided to \cref{alg:data-driven,alg:data-driven-subgradient-averaging} in these experiments. Even then, our synthetic data experiment conclusions are consistent with the ones we presented here.

\begin{landscape}
\begin{figure}
  \centering
  \includegraphics[scale=.9]{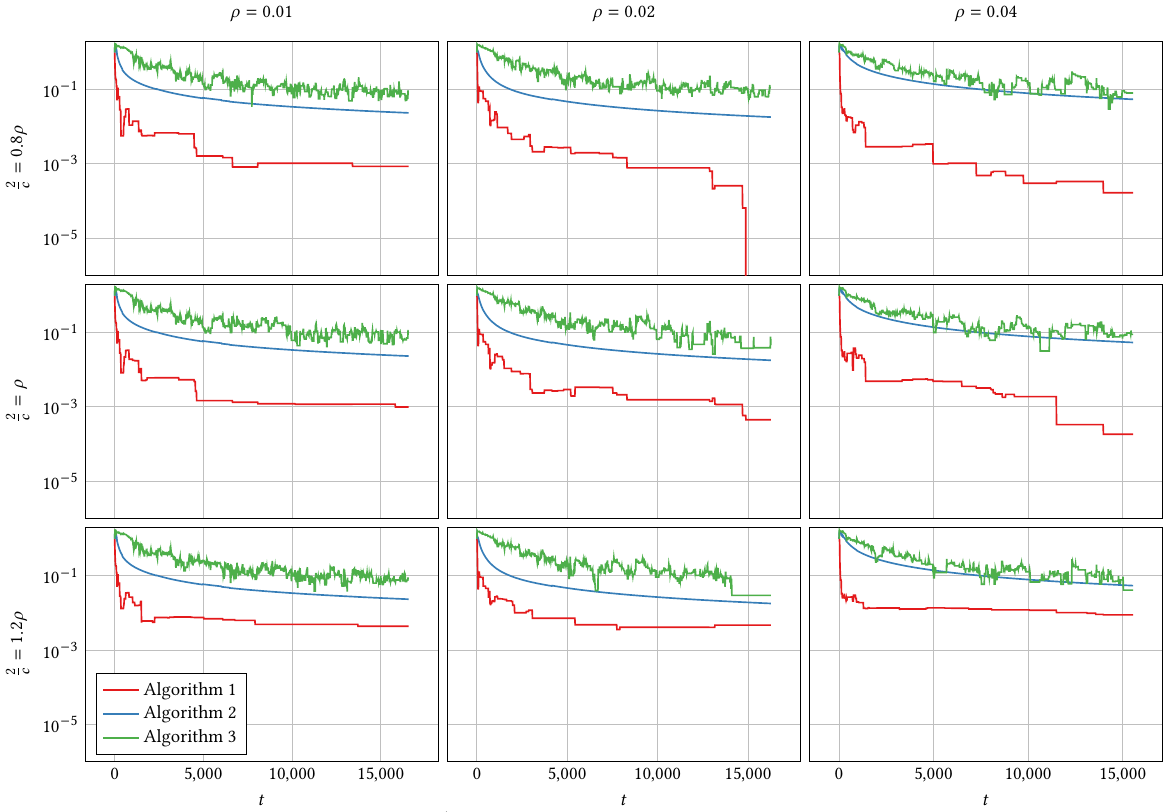}
  \caption{Distance $\left\|\frac{(y_t,b_t)}{\|y_t\|_2} - \frac{(y_*,b_*)}{\|y_*\|_2}\right\|_2$ between $(y_*,b_*)$ and $(y_t,b_t)$ normalized by $y_*$ and $y_t$ respectively for \cref{alg:projected-perceptron,alg:data-driven,alg:data-driven-subgradient-averaging} on loan data, with different margins $\rho\in\{0.01, 0.02, 0.04\}$, $2/c\in\{0.8\rho, \rho, 1.2\rho\}$, and $\sigma=0$.}
  \label{fig:distance-real-data}
  
\end{figure}
\end{landscape}

\begin{figure}
	\centering
  \includegraphics[scale=.9]{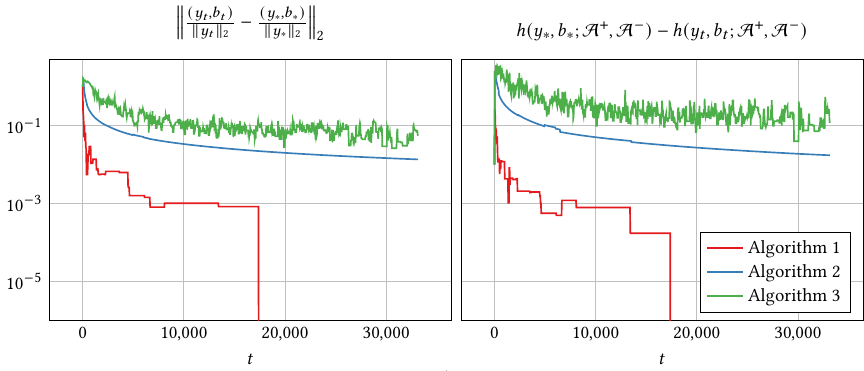}
  \caption{Distance $\left\|\frac{(y_t,b_t)}{\|y_t\|_2} - \frac{(y_*,b_*)}{\|y_*\|_2}\right\|_2$ and margin $h(y_*,b_*;\cA^+,\cA^-) - h(y_t,b_t;\cA^+,\cA^-)$ for \cref{alg:projected-perceptron,alg:data-driven,alg:data-driven-subgradient-averaging} on loan data for two rounds, with $\rho=0.01$, $2/c=0.8\rho$, and $\sigma=0$.}
\label{fig:2rounds}
  
\end{figure}

\begin{table}[htbp]
  \centering
\footnotesize
  \begin{tabular}{c|r|rrr|rrr|rrr}
\toprule
    \multirow{2}{*}{\centering $\frac2c$} & \multirow{2}{*}{\centering $t$} & \multicolumn{3}{c|}{$\rho=0.01$} & \multicolumn{3}{c|}{$\rho=0.02$} & \multicolumn{3}{c}{$\rho=0.04$} \\
& & Alg \ref{alg:data-driven} & Alg \ref{alg:data-driven-subgradient-averaging} & Alg \ref{alg:projected-perceptron} & Alg \ref{alg:data-driven} & Alg \ref{alg:data-driven-subgradient-averaging} & Alg \ref{alg:projected-perceptron} & Alg \ref{alg:data-driven} & Alg \ref{alg:data-driven-subgradient-averaging} & Alg \ref{alg:projected-perceptron} \\ 
    \midrule
    \multirow{2}{*}{$0.8\rho$} &  
        250 &     9 &    37 &    37 &     9 &    28 &    35 &     9 &    31 &    32 \\
    & 15000 &     9 &   308 &   985 &     9 &   130 &   861 &     9 &   319 &   631 \\
    \midrule
    \multirow{2}{*}{$1.0\rho$} &  
        250 &     9 &    37 &    37 &     9 &    28 &    35 &     9 &    31 &    32 \\
    & 15000 &     9 &   309 &   954 &     9 &   130 &   799 &     9 &   311 &   630 \\
    \midrule
    \multirow{2}{*}{$1.2\rho$} &  
        250 &     9 &    37 &    37 &     9 &    28 &    35 &     9 &    31 &    32 \\
    & 15000 &     9 &   310 &  1017 &     9 &   128 &   823 &     9 &   311 &   625 \\
    \bottomrule \end{tabular}
\caption{Number of mistakes made by \cref{alg:projected-perceptron,alg:data-driven,alg:data-driven-subgradient-averaging} on loan data, with different margins  $\rho\in\{0.01, 0.02, 0.04\}$, $2/c\in\{0.8\rho, \rho, 1.2\rho\}$, and $\sigma=0$.}
  \label{tab:mistake-real-data-noiseless}
\end{table}

\begin{table}[htbp]
  \centering
\footnotesize
  \begin{tabular}{c|r|rrr|rrr|rrr}
\toprule
    \multirow{2}{*}{\centering $\frac2c$} & \multirow{2}{*}{\centering $t$} & \multicolumn{3}{c|}{$\rho=0.01$} & \multicolumn{3}{c|}{$\rho=0.02$} & \multicolumn{3}{c}{$\rho=0.04$} \\
& & Alg \ref{alg:data-driven} & Alg \ref{alg:data-driven-subgradient-averaging} & Alg \ref{alg:projected-perceptron} & Alg \ref{alg:data-driven} & Alg \ref{alg:data-driven-subgradient-averaging} & Alg \ref{alg:projected-perceptron} & Alg \ref{alg:data-driven} & Alg \ref{alg:data-driven-subgradient-averaging} & Alg \ref{alg:projected-perceptron} \\ 
    \midrule
    \multirow{2}{*}{$0.8\rho$} &  
        250 &     0 &     0 &     0 &     2 &     1 &     0 &     2 &     3 &     6 \\
    & 15000 &     2 &    70 &    83 &     4 &    63 &   155 &     4 &   175 &   241 \\
    \midrule
    \multirow{2}{*}{$1.0\rho$} &
        250 &     0 &     1 &     0 &     2 &     3 &     0 &     3 &     5 &     6 \\
    & 15000 &     9 &    96 &   102 &    13 &    81 &   184 &    15 &   243 &   312 \\
    \midrule
    \multirow{2}{*}{$1.2\rho$} &
        250 &     0 &     2 &     0 &     4 &     3 &     0 &     4 &     7 &     6 \\
    & 15000 &    45 &   121 &   131 &    97 &    94 &   209 &   197 &   304 &   355 \\
    \bottomrule \end{tabular}
\caption{Number of manipulations caused by \cref{alg:projected-perceptron,alg:data-driven,alg:data-driven-subgradient-averaging} on loan data, with different margins  $\rho\in\{0.01, 0.02, 0.04\}$, $2/c\in\{0.8\rho, \rho, 1.2\rho\}$, and $\sigma=0$.}
  \label{tab:manipulation-real-data-noiseless}
\end{table}

\begin{table}[htbp]
  \centering
  \footnotesize
  \begin{tabular}{c|r|rrr|rrr|rrr}
\toprule
    \multirow{2}{*}{\centering $\frac2c$} & \multirow{2}{*}{\centering $t$} & \multicolumn{3}{c|}{$\rho=0.01$} & \multicolumn{3}{c|}{$\rho=0.02$} & \multicolumn{3}{c}{$\rho=0.04$} \\
& & Alg \ref{alg:data-driven} & Alg \ref{alg:data-driven-subgradient-averaging} & Alg \ref{alg:projected-perceptron} & Alg \ref{alg:data-driven} & Alg \ref{alg:data-driven-subgradient-averaging} & Alg \ref{alg:projected-perceptron} & Alg \ref{alg:data-driven} & Alg \ref{alg:data-driven-subgradient-averaging} & Alg \ref{alg:projected-perceptron} \\
    \midrule
    \multirow{2}{*}{$0.8\rho$} &
        250 &  0.54 &  0.07 &  0.01 &  0.54 &  0.06 &  0.01 &  0.51 &  0.06 &  0.01 \\
    & 15000 & 24.91 &  5.63 &  0.47 & 26.49 &  5.58 &  0.46 & 25.78 &  5.60 &  0.46 \\
    \midrule
    \multirow{2}{*}{$1.0\rho$} &
        250 &  0.54 &  0.07 &  0.01 &  0.54 &  0.05 &  0.01 &  0.51 &  0.05 &  0.01 \\
    & 15000 & 24.93 &  5.59 &  0.47 & 26.66 &  5.56 &  0.47 & 26.57 &  5.59 &  0.47 \\
    \midrule
    \multirow{2}{*}{$1.2\rho$} &
        250 &  0.54 &  0.07 &  0.01 &  0.53 &  0.05 &  0.01 &  0.51 &  0.06 &  0.01 \\
    & 15000 & 25.32 &  5.58 &  0.47 & 25.26 &  5.86 &  0.46 & 26.28 &  5.61 &  0.47 \\
    \bottomrule \end{tabular}
  \caption{CPU running time (in seconds) of \cref{alg:projected-perceptron,alg:data-driven,alg:data-driven-subgradient-averaging} on loan data, with different margins  $\rho\in\{0.01, 0.02, 0.04\}$, $2/c\in\{0.8\rho, \rho, 1.2\rho\}$, and $\sigma=0$.}
  \label{tab:time-real-data-noiseless}
\end{table}

\bibliographystyle{plainnat}
\bibliography{ref}

\newpage
\begin{appendix}

\section{Proofs of auxiliary results}

\begin{lemma}\label{lem:subdiff-dual-norm}
  The subdifferential set of the function $\|\cdot\|_*$ has the following characterizations:
\begin{align*} 
      \partial\|y\|_* 
      &= \{v\in\R^d: v^\top y = \|y\|_*, \|v\|\leq1\} \\
      &= \argmax_{v\in\R^d}\{v^\top y:\|v\|\leq1\}. \end{align*}
  \end{lemma}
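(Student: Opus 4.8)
The statement is a standard fact from convex analysis: the subdifferential of a norm (here $\|\cdot\|_*$) at a point $y$ is exactly the set of dual-norm-ball elements that achieve equality in the defining inequality $v^\top y \le \|v\| \cdot \|y\|_*$. The plan is to prove the two set equalities in turn.

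First, I would establish $\partial\|\cdot\|_*(y) = \{v : v^\top y = \|y\|_*, \|v\| \le 1\}$ directly from the definition of the subdifferential. Recall $\|y\|_* = \max_{\|x\|\le 1} x^\top y$, so $\|\cdot\|_*$ is the support function of the unit ball $B_{\|\cdot\|} = \{x : \|x\| \le 1\}$. For the inclusion $\subseteq$: if $v \in \partial\|\cdot\|_*(y)$, then by the subgradient inequality $\|z\|_* \ge \|y\|_* + v^\top(z - y)$ for all $z$. Plugging $z = 2y$ and $z = 0$ (using positive homogeneity $\|2y\|_* = 2\|y\|_*$, $\|0\|_* = 0$) forces $v^\top y = \|y\|_*$. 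Then for any $z$ with $\|z\| \le 1$ we get $v^\top z \le \|z\|_* \le 1$... wait, more carefully: $v^\top z \le \|z\|_* + v^\top y - \|y\|_* = \|z\|_*$, and since $\|z\|_* \le 1$ whenever we pick $z$ in the predual unit ball — actually I should instead note $v^\top z - \|z\|_* \le v^\top y - \|y\|_* = 0$ for all $z$, so $v^\top z \le \|z\|_*$ for all $z$, which by duality (bipolar theorem, or directly: $\sup_{\|z\|_*\le 1} v^\top z = \|v\|$) gives $\|v\| \le 1$. For the reverse inclusion $\supseteq$: if $v^\top y = \|y\|_*$ and $\|v\| \le 1$, then for any $z$, $v^\top z \le \|v\| \|z\|_* \le \|z\|_*$ by Hölder/Cauchy–Schwarz duality, hence $\|z\|_* \ge v^\top z = v^\top y + v^\top(z-y) = \|y\|_* + v^\top(z - y)$, which is precisely the subgradient inequality.

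Second, I would show $\{v : v^\top y = \|y\|_*, \|v\| \le 1\} = \argmax_{v : \|v\| \le 1} v^\top y$. This is immediate: the maximum of $v^\top y$ over $\|v\| \le 1$ is $\|y\|_*$ by the very definition of $\|\cdot\|_*$ as the dual norm (equivalently, by Hölder's inequality with equality conditions), so $v$ attains the max iff $\|v\| \le 1$ and $v^\top y = \|y\|_*$. Combining the two displayed equalities completes the proof.

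I do not expect any serious obstacle here — the only mild care needed is to invoke the correct duality relation ($\sup_{\|z\|\le 1} v^\top z = \|v\|_*$ and its dual form) rather than re-deriving it, and to handle the degenerate case $y = 0$, where every term is vacuous: $\|0\|_* = 0$, the defining condition $v^\top 0 = 0$ holds for all $v$, and $\partial\|\cdot\|_*(0) = \{v : \|v\| \le 1\} = B_{\|\cdot\|}$, consistent with all three expressions. I would simply remark that the argument above goes through verbatim at $y = 0$.
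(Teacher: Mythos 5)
Your proposal is correct and follows essentially the same route as the paper: both directions of the first equality are handled via the subgradient inequality together with H\"older's inequality and the duality relation $\sup_{\|z\|_*\le 1} v^\top z = \|v\|$, and the second equality is read off from the definition of the dual norm. The only cosmetic difference is that you extract $v^\top y = \|y\|_*$ first (via homogeneity, plugging $z=2y$ and $z=0$) and then deduce $\|v\|\le 1$, whereas the paper establishes $\|v\|\le 1$ first by an infimum argument; both orderings work.
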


  \begin{proof}[Proof of \cref{lem:subdiff-dual-norm}]
  Fix $y\in\R^d$. By definition of the subdifferential, we have
  \begin{align*}
      \partial\|y\|_* = \set{v\in\R^d:~ \|y'\|_* \geq \|y\|_* + v^\top (y'-y),\ \forall y'\in\R^d}. 
  \end{align*}
  Let us define the set $\Delta := \set{v\in\R^d:~ y^\top v = \|y\|_*,~ \|v\|\leq1}$. 
First, for any $v\in \Delta$ and any $y'\in\R^d$,  using $1\geq \|v\|$ and H\"{o}lder's inequality we arrive at
  \begin{align*}
      \|y'\|_* \geq \|v\|\|y'\|_* \geq v^\top y' = v^\top y + v^\top (y'-y)= \|y\|_* + v^\top (y'-y), 
  \end{align*}
  and so $v\in\Delta$ implies $v\in \partial\|y\|_* $.
  
  Conversely, for any $v\in\partial\|y\|_*$ and any $y'\in\R^d$, from the subdifferential inequality we arrive at
  \begin{align*}
      \|y'\|_* - v^\top y' \geq \|y\|_* - v^\top y. 
  \end{align*}
As this holds for all $y'\in\R^d$, we conclude
  \begin{align*}
      \inf_{y'} \set{ \|y'\|_* - v^\top y' } 
\geq \|y\|_* - v^\top y.
  \end{align*}
As the right-hand side is a finite number, we must have $ \inf_{y'} \set{ \|y'\|_* - v^\top y' }>-\infty$. Note that whenever $\|v\|>1$, we have $ \inf_{y'} \set{ \|y'\|_* - v^\top y' }=-\infty$. So, we conclude $\|v\|\leq1$, and in this case we have $ \inf_{y'} \set{ \|y'\|_* - v^\top y' }=0$.
This implies $0 \geq \|y\|_* - v^\top y \geq \|v\|\|y\|_* - v^\top y \geq 0$, where the second inequality follows from $1\ge \|v\|$ and the last one from H\"{o}lder's inequality. Thus, each inequality in this chain must hold as equality, and $v^\top y=\|y\|_*$. Thus, we have shown $v\in\partial\|y\|_*$ implies $v\in \Delta$. 
  
  To show the last equivalent expression, it suffices to notice that $v^\top y\leq\|y\|_*$ for $\|v\|\leq1$. 
\end{proof}

\begin{lemma}\label{lem:cone-projection}
	Suppose $K\subseteq\R^m$ is a closed convex cone. Then, the map $\Proj_K(\cdot)$, i.e., the projection operation onto the cone $K$ is positively homogeneous. That is, for any {$\gamma \geq 0$,}
	\begin{align*}
		\Proj_K(\gamma x) = \gamma \Proj_K(x) \qquad\forall x\in\R^m. 
	\end{align*}
\end{lemma}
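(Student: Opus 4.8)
The plan is to use the least-squares definition of $\Proj_K$ together with the fact that multiplication by a positive scalar maps the cone $K$ bijectively onto itself.

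First I would dispose of the degenerate case $\gamma = 0$. Since $K$ is a nonempty closed convex cone, it contains the origin, so $\Proj_K(0) = 0$ (the origin is its own nearest point in $K$); hence $\Proj_K(0\cdot x) = 0 = 0\cdot\Proj_K(x)$, and the claim holds.

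Next, fix $\gamma > 0$ and any $x \in \R^m$. By definition, $\Proj_K(\gamma x)$ is the minimizer of $z \mapsto \|z - \gamma x\|_2^2$ over $z \in K$. Because $K$ is a cone and $\gamma > 0$, the map $w \mapsto \gamma w$ is a bijection from $K$ onto $K$ (injectivity is clear, and surjectivity uses $\gamma^{-1}>0$ with the cone property: $z = \gamma(\gamma^{-1}z)$ with $\gamma^{-1}z \in K$). So I can reparametrize $z = \gamma w$ with $w \in K$:
\[ \min_{z \in K} \|z - \gamma x\|_2^2 = \min_{w \in K} \|\gamma w - \gamma x\|_2^2 = \gamma^2 \min_{w \in K} \|w - x\|_2^2. \]
Since $\gamma^2 > 0$, the minimizer $w^\star$ of the right-hand side is exactly $\Proj_K(x)$, and the corresponding $z^\star = \gamma w^\star$ is the minimizer on the left, i.e.\ $\Proj_K(\gamma x) = \gamma \Proj_K(x)$, as desired.

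There is essentially no serious obstacle here; the only points requiring a little care are confirming that $0 \in K$ (so the $\gamma=0$ case is well-defined) and that $w \mapsto \gamma w$ is onto all of $K$, which is precisely what licenses the change of variables. An equivalent and equally short route would be to verify the variational characterization $\langle \gamma x - \gamma p,\, z - \gamma p\rangle \le 0$ for all $z \in K$ directly from the characterization $\langle x - p,\, z' - p\rangle \le 0$ for all $z' \in K$ with $p = \Proj_K(x)$, by substituting $z' = \gamma^{-1} z \in K$ and multiplying through by $\gamma^2$; either argument suffices.
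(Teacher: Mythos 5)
Your argument is correct and follows essentially the same route as the paper's proof: dispose of $\gamma=0$ using $0\in K$, then for $\gamma>0$ perform the change of variables $z=\gamma w$ in the minimization defining the projection, which is licensed precisely because multiplication by $\gamma$ maps the cone $K$ onto itself. The only cosmetic difference is that you minimize the squared norm and the paper minimizes the norm itself, which changes nothing.
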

\begin{proof}[Proof of \cref{lem:cone-projection}]
	The stated relation clearly holds when $\gamma=0$ as $K$ is a closed convex cone. So, we assume $\gamma>0$.
	By definition, the projection operation is given by
	\begin{align*}
		\Proj_K(x) 
		&= \argmin_{y\in K} \|x-y\|_2. 
	\end{align*}
	Thus, {for any $\gamma>0$, }
	\begin{align*}
		\Proj_K(\gamma x)
		&= \argmin_{y\in K} \|\gamma x-y\|_2 
		= \argmin_{\gamma z\in K} \|\gamma x-\gamma z\|_2 
		= \gamma
		\argmin_{z\in K} \|x-z\|_2
		= \gamma
		\Proj_K(x), 
	\end{align*}
	where the second equation follows from the fact that $\gamma>0$ and $K$ is a closed convex cone.
\end{proof}

\begin{remark}\label{rem:resp_invariance_to_classifier_pos_scaling}
Note that under \cref{assum:unique-direction}, the response and proxy data functions are invariant to positive scalings of a classifier. That is, consider the classifiers given by $(y,b)$ and $(\bar y,\bar b)$ such that  $(\bar y,\bar b)=\gamma (y,b)$ for some $\gamma>0$. Then, for any point $A\in\cA$, we have $r(A,y,b)= r(A,\bar y,\bar b)$ and $s(A,y,b)= s(A,\bar y,\bar b)$.
		
To see this, observe that 
\[
\frac{\bar{y}^\top A+\bar{b}}{\|\bar{y}\|_*} = \frac{\gamma (y^\top A+b)}{\gamma \|y\|_*} = \frac{y^\top A+b}{\|y\|_*}. 
\]
Then, the if conditions in $r(\cdot)$ and $s(\cdot)$ will be satisfied precisely at the same time for both of the classifiers. 
Moreover, by definition of $v(\cdot)$ in \eqref{eq:manipulation-direction} and as $\gamma>0$, we have  $v(\bar{y})=v(y)$ under \cref{assum:unique-direction}. Thus, we conclude that $r(A,y,b)= r(A,\bar y,\bar b)$ and $s(A,y,b)= s(A,\bar y,\bar b)$ hold as well.
\end{remark}
	
\begin{lemma}\label{lem:perceptron-stepsize_invariance}
	Suppose \cref{assum:unique-direction} holds and $\L$ is a closed convex cone.
	Given a sequence of points $\set{A_t}_{t\in[T]}$, let $(y_t,b_t)$ be the sequence of classifiers obtained from \cref{alg:projected-perceptron} when we set $\gamma=1$. For the same sequence of points, let $(y_t', b_t')$ be the sequence of classifiers obtained from \cref{alg:projected-perceptron} for a fixed $\gamma>0$. Then, $(y_t',b_t') = \gamma (y_t,b_t)$, {$r(A_t,y_t',b_t')=r(A_t,y_t,b_t)$} and $s(A_t,y_t',b_t')=s(A_t,y_t,b_t)$ for all $t\in[T]$.
\end{lemma}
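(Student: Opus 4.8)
\textbf{Proof proposal for \cref{lem:perceptron-stepsize_invariance}.}

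The plan is to proceed by induction on $t$, proving simultaneously the three claims $(y_t',b_t') = \gamma(y_t,b_t)$, $r(A_t,y_t',b_t') = r(A_t,y_t,b_t)$, and $s(A_t,y_t',b_t') = s(A_t,y_t,b_t)$. The key observation is \cref{rem:resp_invariance_to_classifier_pos_scaling}: under \cref{assum:unique-direction}, the response and proxy maps are invariant under positive scaling of the classifier. So if we can establish $(y_t',b_t') = \gamma(y_t,b_t)$, the other two claims follow immediately from that remark (with the same $\gamma>0$), and we need only propagate the scaling relation forward through the update.

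For the base case $t=0$, both runs initialize $q_0 = q_0' = (0,0)$, so $(y_0',b_0') = (0,0) = \gamma(y_0,b_0)$ trivially, and then the response/proxy equality at $t=0$ follows from \cref{rem:resp_invariance_to_classifier_pos_scaling}. For the inductive step, assume $(y_t',b_t') = \gamma(y_t,b_t)$. By \cref{rem:resp_invariance_to_classifier_pos_scaling} we get $r(A_t,y_t',b_t') = r(A_t,y_t,b_t) =: r_t$ and $s(A_t,y_t',b_t') = s(A_t,y_t,b_t) =: s_t$, and also $v(y_t') = v(y_t)$. Crucially, the mistake condition $\plbl(r_t,y_t,b_t) \neq \lbl(A_t)$ is \emph{the same event} for both runs, since the response $r_t$ is identical and $\plbl(r_t, y_t', b_t') = \sign(\gamma(y_t^\top r_t + b_t) - \gamma \tfrac{2\|y_t\|_*}{c}) = \sign(y_t^\top r_t + b_t - \tfrac{2\|y_t\|_*}{c}) = \plbl(r_t, y_t, b_t)$ because $\gamma > 0$ (and $\|\gamma y_t\|_* = \gamma\|y_t\|_*$). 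Then in the update \eqref{eq:projected-gd-stepsize}: if no mistake, $z_{t+1}' = q_t' = \gamma q_t = \gamma z_{t+1}$; if a mistake, $z_{t+1}' = q_t' + \gamma\,\lbl(A_t)\,\xi_t' = \gamma q_t + \gamma\,\lbl(A_t)\,(s_t,1) = \gamma(q_t + \lbl(A_t)\,\xi_t) = \gamma z_{t+1}$, where we used $\xi_t' = (s(A_t,y_t',b_t'),1) = (s_t,1) = \xi_t$. In both cases $z_{t+1}' = \gamma z_{t+1}$, and since $\L$ is a closed convex cone, \cref{lem:cone-projection} gives $q_{t+1}' = \Proj_\L(z_{t+1}') = \Proj_\L(\gamma z_{t+1}) = \gamma\Proj_\L(z_{t+1}) = \gamma q_{t+1}$. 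This closes the induction on the scaling relation, and another application of \cref{rem:resp_invariance_to_classifier_pos_scaling} at index $t+1$ yields the response and proxy equalities there.

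I do not anticipate a genuine obstacle here: the lemma is essentially a bookkeeping exercise combining three facts already available in the excerpt — scale-invariance of responses/proxies (\cref{rem:resp_invariance_to_classifier_pos_scaling}), positive homogeneity of cone projection (\cref{lem:cone-projection}), and scale-invariance of the $\sign(\cdot)$-based prediction. The one place requiring a little care is verifying that the \emph{mistake indicator} agrees across the two runs, since the update branches on it; this hinges on $\plbl$ being insensitive to positive rescaling of $(y_t,b_t)$ together with the fact that $y_t'$ differs from $y_t$ only by the positive factor $\gamma$, which is exactly the inductive hypothesis. Everything else is routine.
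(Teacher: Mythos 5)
Your proposal is correct and follows essentially the same route as the paper's own proof: induction on $t$ with base case $q_0'=q_0=0$, using \cref{rem:resp_invariance_to_classifier_pos_scaling} to equate responses, proxies, and the mistake indicator under the positive scaling $\gamma$, and \cref{lem:cone-projection} to push the scaling through the projection onto $\L$. No gaps.
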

\begin{proof}[Proof of \cref{lem:perceptron-stepsize_invariance}]
	To ease our notation, we let $q_t\coloneq(y_t,b_t)$, $q_t'\coloneq(y_t',b_t')$, $\ell_t\coloneq\lbl(A_t)$, $r_t\coloneq r(A_t,y_t,b_t)$, $r_t'\coloneq r(A_t,y_t',b_t')$, $\xi_t\coloneq(s(A_t,y_t,b_t),1)$ and $\xi_t'\coloneq(s(A_t,y_t',b_t'),1)$. 
	
	We will prove that $q_t'=\gamma q_t$, $r_t'=r_t$, and $\xi_t'=\xi_t$ holds for all $t$ by induction. The base case holds as $q_0' = q_0 =0$. 
	Suppose by induction hypothesis that $q_t' = \gamma q_t$ for some $t$. 
	Then, by \cref{rem:resp_invariance_to_classifier_pos_scaling}, we have $r_t'=r_t$ and $\xi_t'=\xi_t$. 
	Let $z_t'$ be computed based on \eqref{eq:projected-gd-stepsize} for the classifier $q_t'$. Thus,
	\begin{align*}
		q_{t+1}' &= \begin{cases}
			\Proj_{\L}( q_t' + \gamma \ell_t \cdot \xi_t'), & \text{if } \plbl(r_t',y_t',b_t') \neq \ell_t, \\
			\Proj_{\L}( q_t' ), & \text{otherwise} \\
		\end{cases} \\
		&= \begin{cases}
			\gamma \Proj_{\L}( q_t + \ell_t \cdot \xi_t), & \text{if } \plbl(r_t',y_t',b_t') \neq \ell_t, \\
			\gamma \Proj_{\L}( q_t), & \text{otherwise} \\
		\end{cases} \\
		&= \gamma q_{t+1}, 
	\end{align*}
	where the first equation follows by definition and the second equation follows from the induction hypothesis that $q_t'=\gamma q_t$, $\xi_t'=\xi_t$ and applying \cref{lem:cone-projection}. In order to justify the third equation, note that as $r_t'=r_t$ and $(y_t',b_t')=\gamma (y_t,b_t)$, we have
	\begin{align*}
		\plbl(r_t',y_t',b_t') = \plbl(r_t,\gamma y_t,\gamma b_t) 
		= \sign\left( \gamma (y_t^\top r_t+b_t) - \tfrac{2\gamma \|y_t\|_*}{c} \right) 
		&= \sign\left( y_t^\top r_t+b_t - \tfrac{2 \|y_t\|_*}{c} \right) \\
		&= \plbl(r_t,y_t,b_t), 
	\end{align*}
	where the third equation follows from $\gamma>0$.
\end{proof}

\begin{lemma}\label{lem:l2-norm-symmetry}
{If a norm $\|\cdot\|$ is different to any positive multiple of $\|\cdot\|_2$}, then there exists $y,v\in\R^d$ such that $v\in\partial\|y\|_*$ and $y$, $v$ are not parallel. 
\end{lemma}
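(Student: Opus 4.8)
The plan is to argue by contraposition: I will show that if the conclusion fails --- i.e., for \emph{every} $y \in \bbR^d \setminus \{0\}$ every $v \in \partial \|y\|_*$ is parallel to $y$ --- then $\|\cdot\|$ must be a positive multiple of $\|\cdot\|_2$. By \cref{lem:subdiff-dual-norm}, $\partial \|y\|_* = \{ v : v^\top y = \|y\|_*,\ \|v\| \le 1\}$, so the hypothesis says: for each $y \neq 0$ the unique (up to the argmax being a singleton) maximizer $v(y)$ of $\max\{ v^\top y : \|v\| \le 1\}$ is a positive scalar multiple of $y$. Writing $v(y) = \lambda(y)\, y$ with $\lambda(y) > 0$, the identity $v(y)^\top y = \|y\|_*$ gives $\lambda(y) = \|y\|_* / \|y\|_2^2$, and the normalization $\|v(y)\| = 1$ (valid since $y \neq 0$) gives $\lambda(y)\,\|y\| = 1$, hence $\|y\|\,\|y\|_* = \|y\|_2^2$ for all $y \neq 0$.

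First I would extract the geometric content of $\|y\|\,\|y\|_* = \|y\|_2^2$. Consider the unit ball $B = \{y : \|y\| \le 1\}$ with support function $\|\cdot\|_*$. For $y$ on the boundary of $B$ (so $\|y\| = 1$) the relation gives $\|y\|_* = \|y\|_2^2$. But more usefully, I would dualize the whole argument: the hypothesis is symmetric enough that the same reasoning applied to $\|\cdot\|_*$ (whose dual is $\|\cdot\|$, and whose subdifferential maximizers are the faces touched by $\|\cdot\|$) shows that for every $v \neq 0$, $v/\|v\|$ is the argmax of $\max\{ y^\top v : \|y\| \le 1 \}$ attained at a multiple of $v$ --- i.e. the pairing $(y, v(y))$ sets up a bijection between the two unit spheres sending $y$ to a positive multiple of itself. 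Then the key step is: the map $y \mapsto v(y) = y/\|y\|_2^2 \cdot \|y\|$ restricted to $\{\|y\|_2 = 1\}$ must land in $\{\|v\| = 1\}$, and combining with $\|y\|\|y\|_* = \|y\|_2^2$ I can show $\|y\|_* = \|y\|$ for all $y$ with $\|y\|_2 = 1$, forcing $\|\cdot\| = \|\cdot\|_*$ on the $\ell_2$-sphere and hence (by homogeneity) $\|\cdot\| = \|\cdot\|_*$ everywhere. A norm equal to its own dual must be $\|\cdot\|_2$ (this is a classical fact: the Legendre transform of $\tfrac12\|\cdot\|^2$ equals $\tfrac12\|\cdot\|_*^2$, and the quadratic form is the unique self-conjugate one, up to scaling if one tracks constants carefully).

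Alternatively --- and this is probably the cleaner route to write --- I would use the condition $v(y) \parallel y$ directly to show the unit ball $B$ is an ellipsoid. The condition says the outward normal to $\partial B$ at a boundary point $y$ is parallel to $y$ itself, i.e. every boundary point of $B$ is a point where the supporting hyperplane is orthogonal (in the $\ell_2$ inner product) to the radius vector. Equivalently, $B$ is a level set of a function whose gradient is everywhere radial, which forces $B = \{ y : \|y\|_2 \le \rho \}$ for some $\rho > 0$; hence $\|y\| = \|y\|_2 / \rho$. I would make this rigorous by noting $\partial \|y\|_2 \ni y/\|y\|_2$ and that $v(y) \parallel y$ with $\|v(y)\| = 1$ pins down $v(y)$, then differentiating/integrating along rays, or more slickly: $\frac{d}{dt}\|y + t w\|$ at $t=0$ equals $v(y)^\top w / (\text{appropriate normalization})$ which, being proportional to $y^\top w$, means $\nabla \|\cdot\|$ is radial, and a radial gradient field integrates to a function of $\|y\|_2$ only.

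The main obstacle I anticipate is bookkeeping the positive scalar constants correctly: the hypothesis $v(y) \parallel y$ allows a $y$-dependent positive factor $\lambda(y)$, and I must show $\lambda(y)$ is actually constant (this is exactly what the two normalizations $v(y)^\top y = \|y\|_*$ and $\|v(y)\| = 1$ buy me, via the identity $\|y\|\|y\|_* = \|y\|_2^2$), and then convert ``$\|\cdot\|$ is a level-set-of-$\ell_2$ norm'' into ``$\|\cdot\| = $ positive multiple of $\|\cdot\|_2$'' --- which matches the statement's phrasing ``different to any positive multiple of $\|\cdot\|_2$''. Everything else (Hölder, homogeneity, the self-duality characterization of $\ell_2$) is standard and citable, so the writeup should be short.
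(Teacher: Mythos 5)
Your second route (the one you say you would actually write) is essentially the paper's argument and is sound once one detail is straightened out: the hypothesis controls the subdifferential of the \emph{dual} norm, so what the contrapositive assumption gives you is $\partial\|y\|_*=\{y/\|y\|\}$ for every $y\neq 0$, i.e.\ $\nabla\|y\|_*$ exists and is radial (a singleton subdifferential implies differentiability); your sketch slides between ``$\nabla\|\cdot\|$ is radial'' and facts about $\|\cdot\|_*$, and the argument must be run on $\|\cdot\|_*$. With that fixed, $\tfrac{d}{dt}\|y(t)\|_*=\|y(t)\|^{-1}\,y(t)^\top y'(t)=0$ along any curve lying in an $\ell_2$-sphere, so $\|\cdot\|_*$ is constant on $\ell_2$-spheres, hence by homogeneity a positive multiple of $\|\cdot\|_2$, hence so is $\|\cdot\|$. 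This is the same derivative computation the paper performs; the paper merely runs the curve along the dual unit sphere $\{\|y\|_*=1\}$ instead and derives the contradiction $0<\int_0^1 y(t)^\top y'(t)\,\mathrm{d}t$ against $y(t_0)^\top y'(t_0)\le 0$ pointwise, the latter obtained directly from the subgradient inequality so that differentiability of $\|\cdot\|_*$ need not even be invoked.

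Your first route, by contrast, has a genuine gap. The identity $\|y\|\,\|y\|_*=\|y\|_2^2$ is correctly derived, but it is merely a restatement of the hypothesis: by \cref{lem:subdiff-dual-norm} it is equivalent to $y/\|y\|\in\partial\|y\|_*$ (H\"older being tight at the pair $(y,y)$), so it cannot by itself yield $\|y\|=\|y\|_*$ on the $\ell_2$-sphere. The ``dualized bijection sending $y$ to a positive multiple of itself'' does not help either, since the radial projection between $\{\|y\|_*=1\}$ and $\{\|y\|=1\}$ is a bijection for \emph{any} pair of norms and does not force the two spheres to coincide. Extracting Euclidean-ness from the identity requires exactly the first-order/derivative argument of your second route. (The classical fact you invoke at the end is fine: $\|\cdot\|=\|\cdot\|_*$ does force $\|\cdot\|=\|\cdot\|_2$, via $\|x\|^2=\|x\|\,\|x\|_*\ge\|x\|_2^2$ combined with $\|x\|_*=\sup_{v\ne 0}v^\top x/\|v\|\le\|x\|_2$; the unjustified step is the one that would get you to self-duality in the first place.)
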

\begin{proof}[Proof of \cref{lem:l2-norm-symmetry}]
	Assume for contradiction that $\partial\|y\|_*=\{y/\|y\|\}$ for any $y\neq0$. Then, since $\partial\|y\|_*$ is a singleton for any $y\neq0$, we deduce that it is differentiable at any $y\neq0$.
{Since $\|\cdot\|$ is not a positive multiple of $\|\cdot\|_2$, the same must hold for the dual norm. Then, consider any $w$ such that $\|w\|_* = 1$, and let $\alpha := \|w\|_2$. The sets $\{y : \|y\|_*=1\}$ and $\{y : \|y\|_2 = \alpha\}$ must be different, therefore there exists $w'$ such that $\|w'\|_* = 1$ but $\|w'\|_2 \neq \alpha$. We let $y_0$ be the point with the smaller $\ell_2$-norm, and $y_1$ be the point with the larger $\ell_2$-norm, i.e., we have $\|y_0\|_* = \|y_1\|_* = 1$ and $\|y_0\|_2 < \|y_1\|_2$.}
In particular, this implies $y_0,\ y_1$ cannot be parallel, hence $ty_1+(1-t)y_0\neq0$ for $t\in[0,1]$. Therefore, we can define the following parameterized curve on the dual norm ball: 
	\begin{align*}
		y(t) = \frac{ty_1+(1-t)y_0}{\|ty_1+(1-t)y_0\|_*},\quad t\in[0,1]. 
	\end{align*}
	Then, we have $y(0)=y_0$, $y(1)=y_1$ (as $\|y_0\|_*=\|y_1\|_*=1$) and $y(t)$ is differentiable (as $y_0,\ y_1\in\R^d\setminus\{0\}$ we have $ty_1+(1-t)y_0\neq0$ for any $t\in[0,1]$), and $\|y(t)\|_*=1$ holds for all $t\in[0,1]$. Next, define $f(t)\coloneqq\frac12\|y(t)\|_2^2$, then using  $\|y_0\|_2<\|y_1\|_2$ we arrive at
	\begin{align*}
		0 
		< \frac12\|y_1\|_2^2 - \frac12\|y_0\|_2^2 
		= f(1) - f(0)
		= \int_0^1 f'(t)\mathrm{d}t 
		= \int_0^1 y(t)^\top y'(t)\mathrm{d}t. 
	\end{align*}
	Now, recall that $\|y(t)\|_*=1$ holds for all $t\in[0,1]$, and thus for any $t,\ t_0\in[0,1]$, we have
	\[ 0=\|y(t)\|_*-\|y(t_0)\|_*\geq \left(\frac{y(t_0)}{\|y(t_0)\|} \right)^\top(y(t)-y(t_0)),\] 
	where the inequality holds by the subgradient inequality applied to $\|\cdot\|_*$  and the assumption that $\partial\|y(t_0)\|_*=\{y(t_0)/\|y(t_0)\|\}$ (recall that $y(t_0)\neq0$ as $\|y(t_0)\|_*=1$ for all $t_0\in[0,1]$). 
	Dividing both sides by $t-t_0>0$ and letting $t\to t_0$, we obtain $y(t_0)^\top y'(t_0)\leq0$. {Since this holds for any $t_0 \in [0,1]$, this contradicts
	\[ \int_0^1 y(t)^\top y'(t) \mathrm{d}t > 0. \]
	Thus, there must exist some $y \neq 0$ such that $\partial \|y\|_* \neq \{y/\|y\|\}$.
	}
\end{proof}

\newpage
\section{Supplementary numerical results}\label{sec:experiment-supp}

\subsection{Real data with noisy agent responses}
\label{sec:experiment-noise}\label{sec:experiment-fig}

In this appendix, we investigate performance of the algorithms when noise is added to the agent responses. While we do not have theory for this setting we investigate the tolerance/robustness of our algorithms when the agent responses that the learner observes contain noise, which reflects the real world where agents may not be purely rational or our observations of their responses may be imperfect.
We assume that instead of the learner observing directly the agent's response $r(A_t,y_t,b_t)$, they observe $r(A_t,y_t,b_t) + \varepsilon_t$, where $\varepsilon_t \sim \mathcal{N}(0, \sigma^2 I_d)$ is i.i.d. Gaussian noise. Using the same dataset from~\cref{sec:numerical}, we tested the impact of having $\sigma\in\{0, 10^{-3},10^{-2}\}$. Note that $\sigma=0$ corresponds to the noiseless setting which was discussed in \cref{sec:numerical}.

\paragraph{Distance to best margin classifier with noise.} Comparing \cref{fig:distance-real-data,fig:distance-real-data-noise1e-3,fig:distance-real-data-noise1e-2}, we observe that \cref{alg:data-driven,alg:data-driven-subgradient-averaging} are more sensitive to noise $\sigma$. In particular, performance in terms of convergence to the classifier $(y_*,b_*)$ of \cref{alg:data-driven} drastically deteriorates as the noise increases to $\sigma=10^{-2}$. This is expected because the noise means that the proxy data are no longer guaranteed to be separable, and in fact inseparability becomes more likely as the noise level $\sigma$ increases. When inseparability occurs, problem~\eqref{eq:data-driven} in \cref{alg:data-driven} will give a solution of $(y_{t+1},b_{t+1})=(0,0)$ resulting in a prediction of $\plbl(r(A_{t+1},y_{t+1},b_{t+1}),y_{t+1},b_{t+1})=\sign(0)=+1$ always. As a result, \cref{alg:data-driven} breaks down and does not provide any useful information about $(y_*,b_*)$ once the separability assumption is violated, {e.g., when the noise level $\sigma$ is of the same order as $\rho$ and $2/c$ is large relative to $\rho$}.
{In contrast, \cref{alg:data-driven-subgradient-averaging} does not seem to encounter this issue even when data may be inseparable, and provides a reasonable estimate of $(y_*,b_*)$. It shows notably better performance than \cref{alg:projected-perceptron}.}
In fact, \cref{fig:distance-real-data-noise1e-3} shows that when $\sigma=10^{-3}$, it behaves mostly the same as in the noiseless case. As $\sigma$ further increases to $10^{-2}$, however, the performance of \cref{alg:data-driven-subgradient-averaging} deteriorates slightly as well. Finally, \cref{alg:projected-perceptron} seems to be robust to the noise, achieving roughly the same (yet not very good) performance as in the noiseless setting even for $\sigma=10^{-2}$.

\paragraph{Number of mistakes with noise.} \cref{fig:mistake-real-data,fig:mistake-real-data-noise1e-2,fig:mistake-real-data-noise1e-3,tab:mistake-bound-real-data} compare the performance of the algorithms with noise. In terms of number of mistakes, the performance of \cref{alg:data-driven} diminishes {when the data $\widetilde{\cA}_t^+$ and $\widetilde{\cA}_t^-$ become inseparable} as $(y_{t+1},b_{t+1})=(0,0)$ amounts to always predicting a label of $+1$ for any data point. In the cases where the proxy data are still separable, \cref{alg:data-driven} remains the best and makes the fewest mistakes. \cref{alg:data-driven-subgradient-averaging,alg:projected-perceptron} are more robust to noise and inseparability.
From \cref{tab:mistake-bound-real-data} we see that as $\sigma$ increases the performance degradation (in terms of number of mistakes) of \cref{alg:projected-perceptron} appears to be very minor compared to the other two algorithms.
{That said, from \cref{fig:mistake-real-data-noise1e-3,fig:mistake-real-data-noise1e-2} we see that \cref{alg:data-driven-subgradient-averaging} typically makes less mistakes than \cref{alg:projected-perceptron} in all parameter and noise settings.}
The robustness of \cref{alg:projected-perceptron} is somewhat unsurprising, since finite mistake bounds {for non-separable data} exist for the perceptron in the non-strategic setting.
That said, the robustness of \cref{alg:data-driven-subgradient-averaging} is somewhat surprising, and warrants further investigation in future work. One possible explanation is that, even when the proxy data $s(A_t,y_t,b_t)$ are inseparable, $(y_{t+1},b_{t+1})$ may be updated in  \cref{alg:data-driven-subgradient-averaging}, whereas \cref{alg:data-driven} will be stuck at $(y_{t+1},b_{t+1})=0$.

\paragraph{Number of manipulations with noise.} In \cref{fig:manipulation-real-data,fig:manipulation-real-data-noise1e-2,fig:manipulation-real-data-noise1e-3,tab:manipulation-real-data}, we compare the performance of the algorithms in terms of number of manipulations they induce under noise. Note that the results for \cref{alg:data-driven} are meaningful only when the dataset remains separable. Otherwise, no manipulation will happen once $(y_t,b_t)=(0,0)$, but this is not really an indication of better performance. For \cref{alg:data-driven-subgradient-averaging,alg:projected-perceptron}, we do not see drastic changes in the number of manipulations depending on the noise level, and in some cases there are even fewer manipulations when noise is present.

\paragraph{Solution times with noise.} \cref{tab:time} shows very similar performance across all three algorithms when compared to the noiseless case.

\begin{landscape}
	\begin{figure}
		\centering
		\includegraphics[scale=.9]{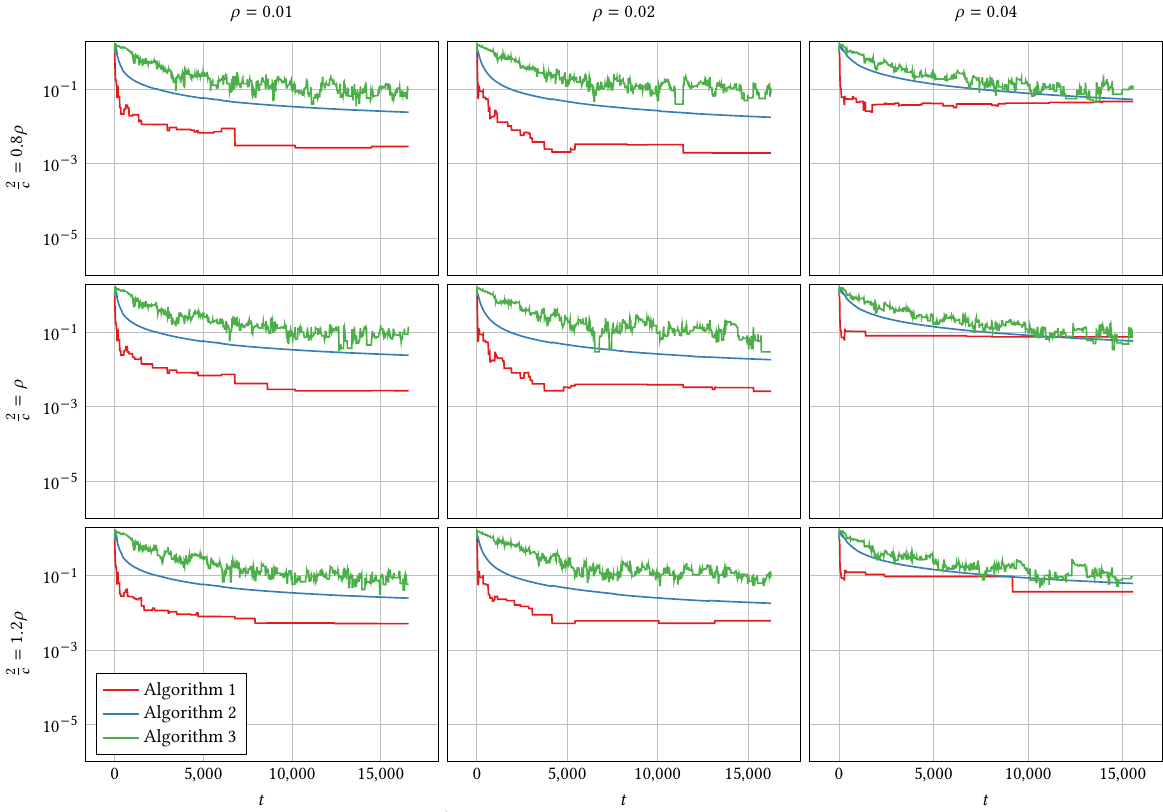}
		\caption{Distance $\left\|\frac{(y_t,b_t)}{\|y_t\|_2} - \frac{(y_*,b_*)}{\|y_*\|_2}\right\|_2$ between $(y_*,b_*)$ and $(y_t,b_t)$ normalized by $y_*$ and $y_t$ respectively for \cref{alg:projected-perceptron,alg:data-driven,alg:data-driven-subgradient-averaging} on loan data, with different margins $\rho\in\{0.01, 0.02, 0.04\}$ and $2/c\in\{0.8\rho, \rho, 1.2\rho\}$ and agent response noise level $\sigma=10^{-3}$.}
		\label{fig:distance-real-data-noise1e-3}
		
	\end{figure}
\end{landscape}

\begin{landscape}
	\begin{figure}
		\centering
		\includegraphics[scale=.9]{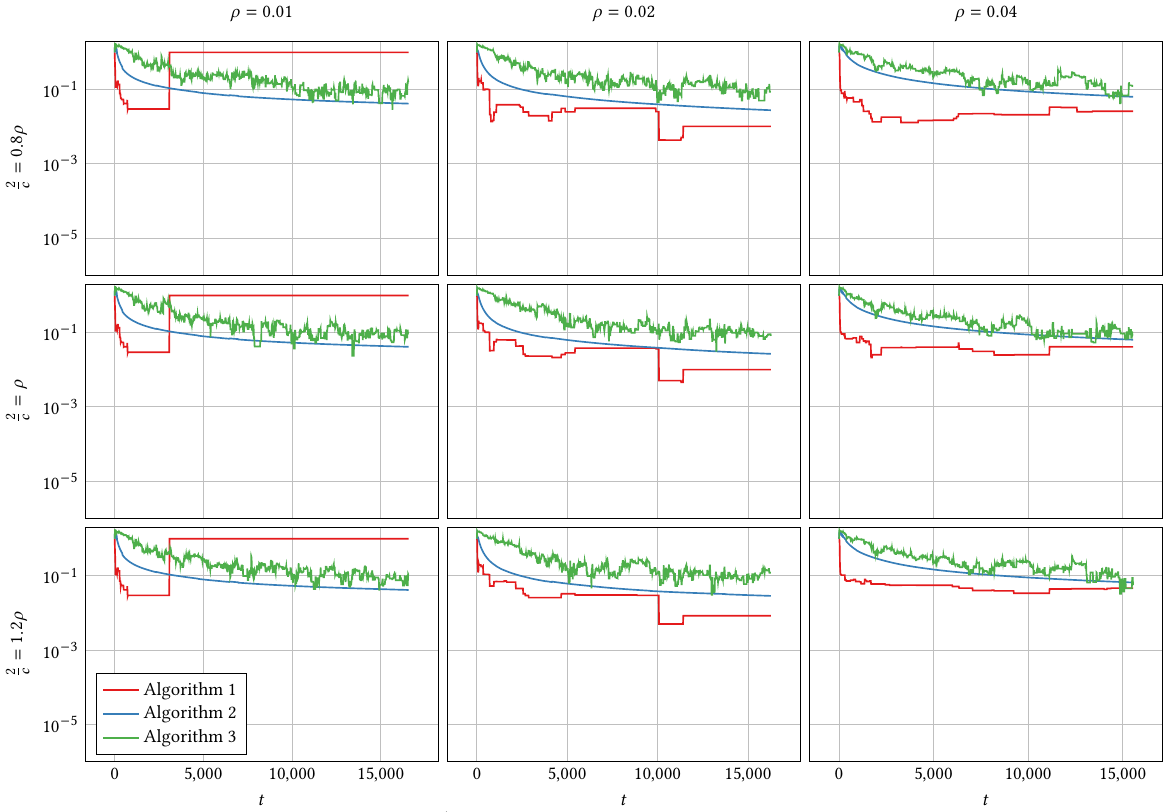}
		\caption{Distance $\left\|\frac{(y_t,b_t)}{\|y_t\|_2} - \frac{(y_*,b_*)}{\|y_*\|_2}\right\|_2$ between $(y_*,b_*)$ and $(y_t,b_t)$ normalized by $y_*$ and $y_t$ respectively for \cref{alg:projected-perceptron,alg:data-driven,alg:data-driven-subgradient-averaging} on loan data, with different margins $\rho\in\{0.01, 0.02, 0.04\}$ and $2/c\in\{0.8\rho, \rho, 1.2\rho\}$ and agent response noise level $\sigma=10^{-2}$.}
		\label{fig:distance-real-data-noise1e-2}
		
	\end{figure}
\end{landscape}

\begin{landscape}
	\begin{figure}
		\centering
		\includegraphics[scale=.9]{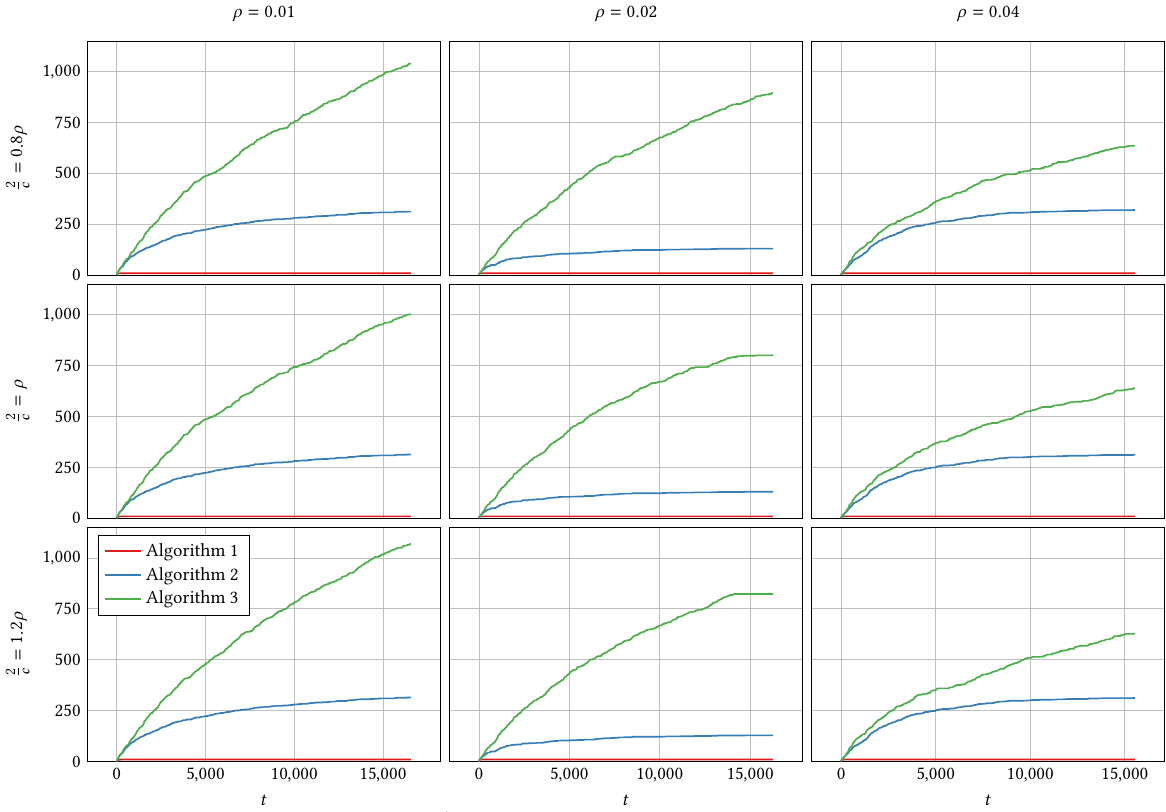}
		\caption{Number of mistakes made by \cref{alg:projected-perceptron,alg:data-driven,alg:data-driven-subgradient-averaging} on loan data, with different margins $\rho\in\{0.01, 0.02, 0.04\}$ and $2/c\in\{0.8\rho, \rho, 1.2\rho\}$ and agent response noise level $\sigma=0$.}
		\label{fig:mistake-real-data}
		
	\end{figure}
\end{landscape}

\begin{landscape}
	\begin{figure}
		\centering
		\includegraphics[scale=.9]{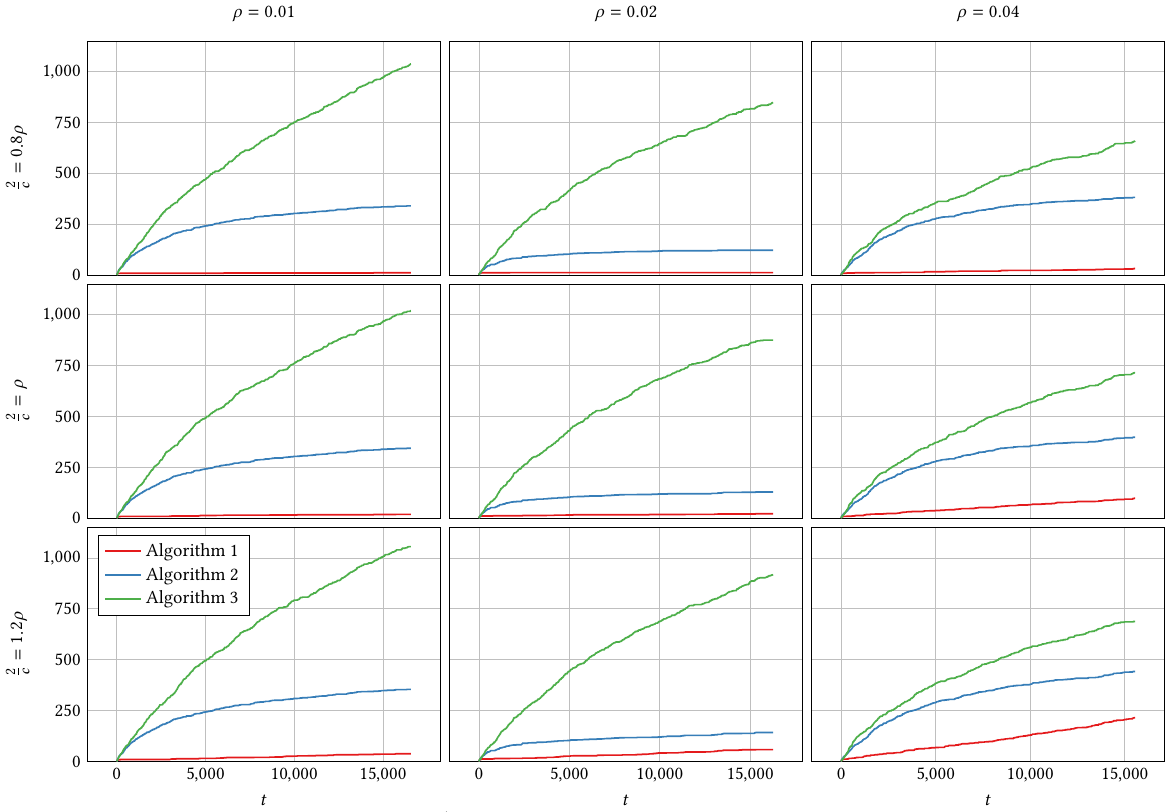}
		\caption{Number of mistakes made by \cref{alg:projected-perceptron,alg:data-driven,alg:data-driven-subgradient-averaging} on loan data, with different margins $\rho\in\{0.01, 0.02, 0.04\}$ and $2/c\in\{0.8\rho, \rho, 1.2\rho\}$ and agent response noise level $\sigma=10^{-3}$.}
		\label{fig:mistake-real-data-noise1e-3}
		
	\end{figure}
\end{landscape}

\begin{landscape}
	\begin{figure}
		\centering
		\includegraphics[scale=.9]{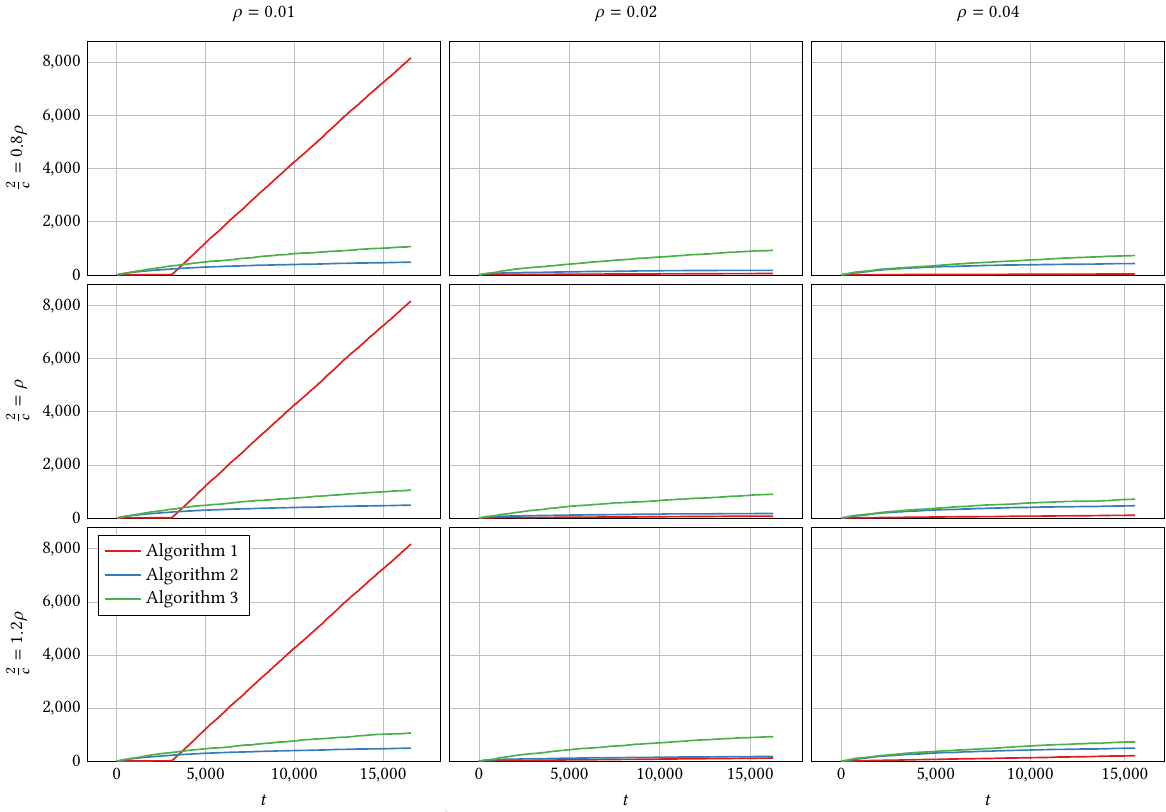}
		\caption{Number of mistakes made by \cref{alg:projected-perceptron,alg:data-driven,alg:data-driven-subgradient-averaging} on loan data, with different margins $\rho\in\{0.01, 0.02, 0.04\}$ and $2/c\in\{0.8\rho, \rho, 1.2\rho\}$ and agent response noise level $\sigma=10^{-2}$.}
		\label{fig:mistake-real-data-noise1e-2}
		
	\end{figure}
\end{landscape}

\begin{landscape}
	\begin{figure}
		\centering
		\includegraphics[scale=.9]{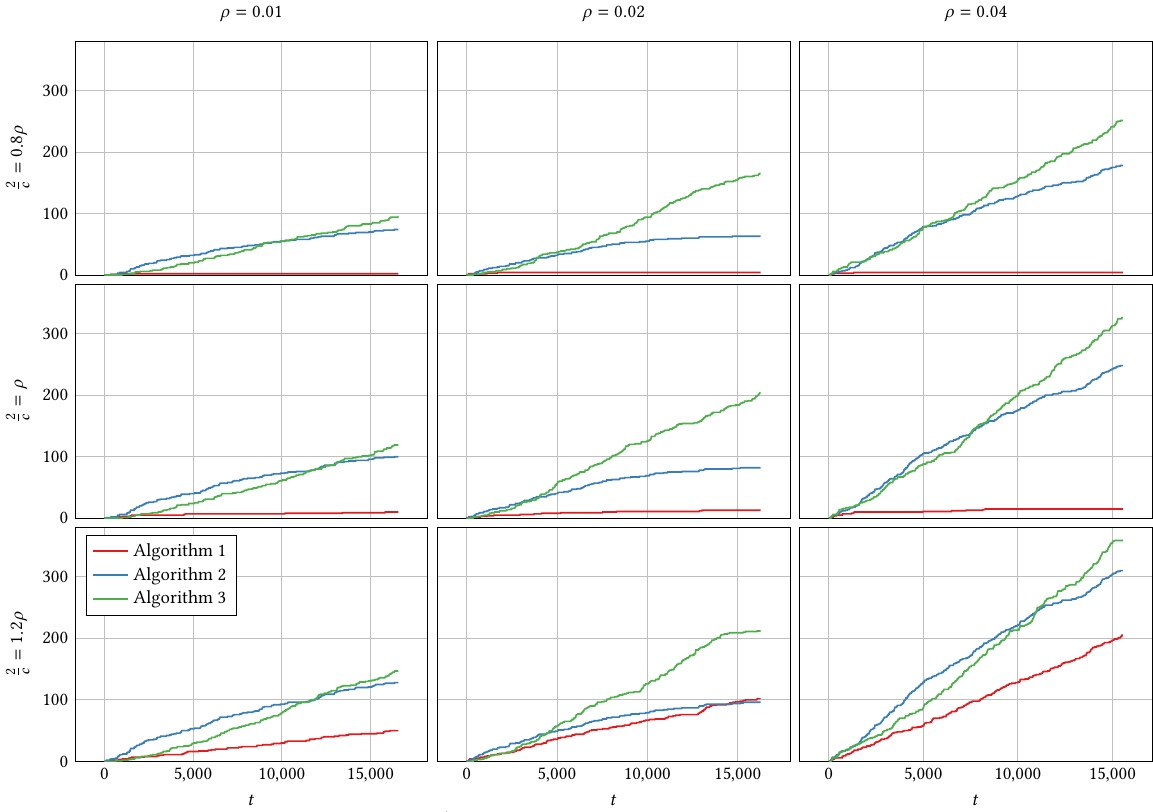}
		\caption{Number of manipulations incurred by \cref{alg:projected-perceptron,alg:data-driven,alg:data-driven-subgradient-averaging} on loan data, with different margins $\rho\in\{0.01, 0.02, 0.04\}$ and $2/c\in\{0.8\rho, \rho, 1.2\rho\}$ and agent response noise level $\sigma=0$.}
		\label{fig:manipulation-real-data}
		
	\end{figure}
\end{landscape}

\begin{landscape}
	\begin{figure}
		\centering
		\includegraphics[scale=.9]{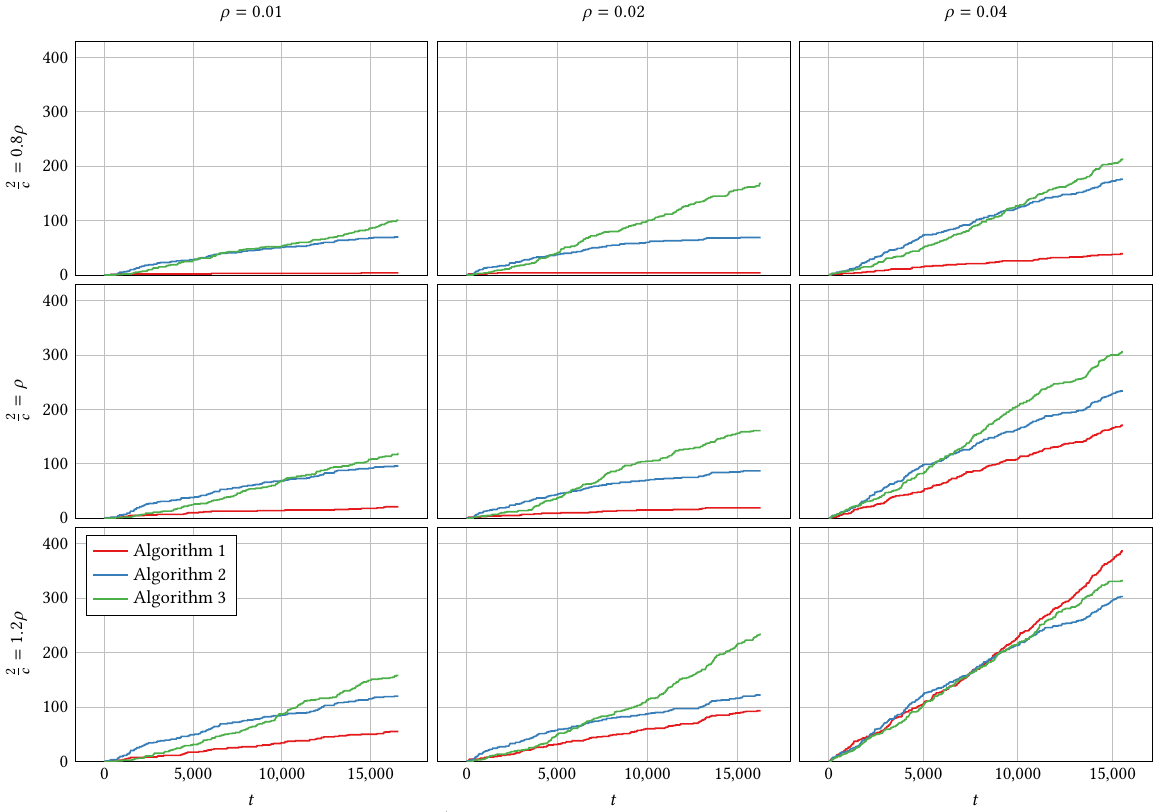}
		\caption{Number of manipulations incurred by \cref{alg:projected-perceptron,alg:data-driven,alg:data-driven-subgradient-averaging} on loan data, with different margins $\rho\in\{0.01, 0.02, 0.04\}$ and $2/c\in\{0.8\rho, \rho, 1.2\rho\}$ and agent response noise level $\sigma=10^{-3}$.}
		\label{fig:manipulation-real-data-noise1e-3}
		
	\end{figure}
\end{landscape}

\begin{landscape}
	\begin{figure}
		\centering
		\includegraphics[scale=.9]{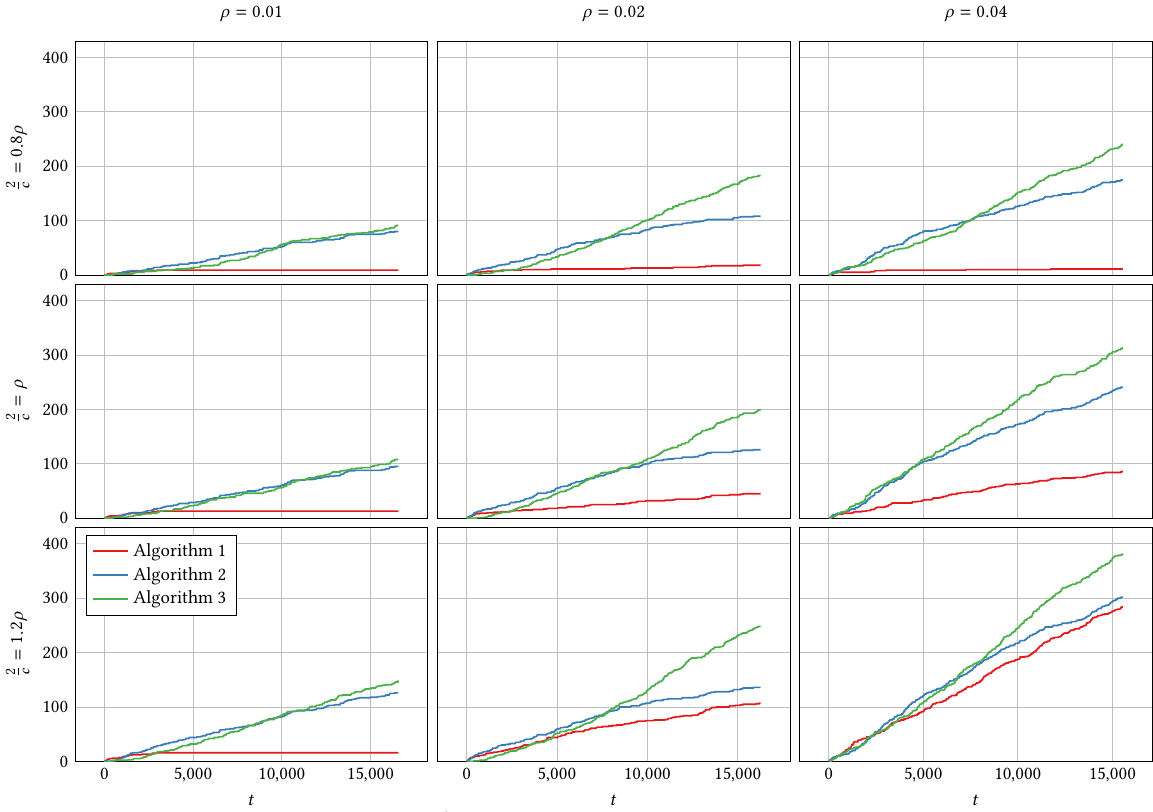}
		\caption{Number of manipulations incurred by \cref{alg:projected-perceptron,alg:data-driven,alg:data-driven-subgradient-averaging} on loan data, with different margins $\rho\in\{0.01, 0.02, 0.04\}$ and $2/c\in\{0.8\rho, \rho, 1.2\rho\}$ and agent response noise level $\sigma=10^{-2}$.}
		\label{fig:manipulation-real-data-noise1e-2}
		
	\end{figure}
\end{landscape}

\begin{table}[htbp]
	\centering
	\footnotesize
	\begin{tabular}{c|r|rrr|rrr|rrr}
		\multicolumn{11}{c}{$\sigma = 0$} \\
		\toprule
		\multirow{2}{*}{\centering $\frac2c$} & \multirow{2}{*}{\centering $t$} & \multicolumn{3}{c|}{$\rho=0.01$} & \multicolumn{3}{c|}{$\rho=0.02$} & \multicolumn{3}{c}{$\rho=0.04$} \\
& & Alg \ref{alg:data-driven} & Alg \ref{alg:data-driven-subgradient-averaging} & Alg \ref{alg:projected-perceptron} & Alg \ref{alg:data-driven} & Alg \ref{alg:data-driven-subgradient-averaging} & Alg \ref{alg:projected-perceptron} & Alg \ref{alg:data-driven} & Alg \ref{alg:data-driven-subgradient-averaging} & Alg \ref{alg:projected-perceptron} \\ 
		\midrule
		\multirow{2}{*}{$0.8\rho$} &  
		250 &     9 &    37 &    37 &     9 &    28 &    35 &     9 &    31 &    32 \\
		& 15000 &     9 &   308 &   985 &     9 &   130 &   861 &     9 &   319 &   631 \\
		\midrule
		\multirow{2}{*}{$1.0\rho$} &  
		250 &     9 &    37 &    37 &     9 &    28 &    35 &     9 &    31 &    32 \\
		& 15000 &     9 &   309 &   954 &     9 &   130 &   799 &     9 &   311 &   630 \\
		\midrule
		\multirow{2}{*}{$1.2\rho$} &  
		250 &     9 &    37 &    37 &     9 &    28 &    35 &     9 &    31 &    32 \\
		& 15000 &     9 &   310 &  1017 &     9 &   128 &   823 &     9 &   311 &   625 \\
		
		\bottomrule\addlinespace[.5em]
		\multicolumn{11}{c}{$\sigma = 10^{-3}$} \\
		\toprule
		\multirow{2}{*}{\centering $\frac2c$} & \multirow{2}{*}{\centering $t$} & \multicolumn{3}{c|}{$\rho=0.01$} & \multicolumn{3}{c|}{$\rho=0.02$} & \multicolumn{3}{c}{$\rho=0.04$} \\
& & Alg \ref{alg:data-driven} & Alg \ref{alg:data-driven-subgradient-averaging} & Alg \ref{alg:projected-perceptron} & Alg \ref{alg:data-driven} & Alg \ref{alg:data-driven-subgradient-averaging} & Alg \ref{alg:projected-perceptron} & Alg \ref{alg:data-driven} & Alg \ref{alg:data-driven-subgradient-averaging} & Alg \ref{alg:projected-perceptron} \\ 
		\midrule
		\multirow{2}{*}{$0.8\rho$} &  
		250 &     9 &    37 &    37 &    11 &    29 &    35 &     9 &    32 &    32 \\
		& 15000 &    11 &   335 &   971 &    12 &   122 &   817 &    30 &   380 &   648 \\
		\midrule
		\multirow{2}{*}{$1.0\rho$} &  
		250 &     9 &    36 &    37 &    11 &    29 &    35 &     9 &    31 &    33 \\
		& 15000 &    18 &   339 &   966 &    21 &   128 &   860 &    94 &   396 &   706 \\
		\midrule
		\multirow{2}{*}{$1.2\rho$} &  
		250 &     9 &    36 &    37 &    12 &    30 &    35 &    11 &    32 &    33 \\
		& 15000 &    36 &   348 &  1006 &    57 &   138 &   883 &   206 &   439 &   686 \\
		\bottomrule\addlinespace[.5em]
		\multicolumn{11}{c}{$\sigma = 10^{-2}$} \\
		\toprule
		\multirow{2}{*}{\centering $\frac2c$} & \multirow{2}{*}{\centering $t$} & \multicolumn{3}{c|}{$\rho=0.01$} & \multicolumn{3}{c|}{$\rho=0.02$} & \multicolumn{3}{c}{$\rho=0.04$} \\
& & Alg \ref{alg:data-driven} & Alg \ref{alg:data-driven-subgradient-averaging} & Alg \ref{alg:projected-perceptron} & Alg \ref{alg:data-driven} & Alg \ref{alg:data-driven-subgradient-averaging} & Alg \ref{alg:projected-perceptron} & Alg \ref{alg:data-driven} & Alg \ref{alg:data-driven-subgradient-averaging} & Alg \ref{alg:projected-perceptron} \\ 
		\midrule
		\multirow{2}{*}{$0.8\rho$} &  
		250 &     8 &    37 &    39 &    15 &    30 &    35 &    10 &    32 &    33 \\
		& 15000 &  7235 &   463 &  1007 &    53 &   176 &   895 &    39 &   430 &   724 \\
		\midrule
		\multirow{2}{*}{$1.0\rho$} &  
		250 &     8 &    37 &    39 &    16 &    30 &    35 &    10 &    32 &    31 \\
		& 15000 &  7236 &   470 &   991 &    72 &   172 &   866 &   108 &   463 &   703 \\
		\midrule
		\multirow{2}{*}{$1.2\rho$} &  
		250 &     8 &    37 &    39 &    17 &    31 &    38 &    11 &    32 &    31 \\
		& 15000 &  7238 &   475 &  1020 &   115 &   179 &   895 &   205 &   493 &   728 \\
		\bottomrule\addlinespace[.5em]
	\end{tabular}
	\caption{Number of mistakes made by \cref{alg:projected-perceptron,alg:data-driven,alg:data-driven-subgradient-averaging} on loan data, with different margins  $\rho\in\{0.01, 0.02, 0.04\}$, $2/c\in\{0.8\rho, \rho, 1.2\rho\}$, and agent response noise level $\sigma\in\{0,10^{-3},10^{-2}\}$.}
	\label{tab:mistake-bound-real-data}
\end{table}

\begin{table}[htbp]
	\centering
	\footnotesize
	\begin{tabular}{c|r|rrr|rrr|rrr}
		\multicolumn{11}{c}{$\sigma = 0$} \\
		\toprule
		\multirow{2}{*}{\centering $\frac2c$} & \multirow{2}{*}{\centering $t$} & \multicolumn{3}{c|}{$\rho=0.01$} & \multicolumn{3}{c|}{$\rho=0.02$} & \multicolumn{3}{c}{$\rho=0.04$} \\
& & Alg \ref{alg:data-driven} & Alg \ref{alg:data-driven-subgradient-averaging} & Alg \ref{alg:projected-perceptron} & Alg \ref{alg:data-driven} & Alg \ref{alg:data-driven-subgradient-averaging} & Alg \ref{alg:projected-perceptron} & Alg \ref{alg:data-driven} & Alg \ref{alg:data-driven-subgradient-averaging} & Alg \ref{alg:projected-perceptron} \\ 
		\midrule  
		\multirow{2}{*}{$0.8\rho$} &  
		250 &     0 &     0 &     0 &     2 &     1 &     0 &     2 &     3 &     6 \\
		& 15000 &     2 &    70 &    83 &     4 &    63 &   155 &     4 &   175 &   241 \\
		\midrule
		\multirow{2}{*}{$1.0\rho$} &
		250 &     0 &     1 &     0 &     2 &     3 &     0 &     3 &     5 &     6 \\
		& 15000 &     9 &    96 &   102 &    13 &    81 &   184 &    15 &   243 &   312 \\
		\midrule
		\multirow{2}{*}{$1.2\rho$} &
		250 &     0 &     2 &     0 &     4 &     3 &     0 &     4 &     7 &     6 \\
		& 15000 &    45 &   121 &   131 &    97 &    94 &   209 &   197 &   304 &   355 \\
		\bottomrule\addlinespace[.5em]
		\multicolumn{11}{c}{$\sigma = 10^{-3}$} \\
		\toprule
		\multirow{2}{*}{\centering $\frac2c$} & \multirow{2}{*}{\centering $t$} & \multicolumn{3}{c|}{$\rho=0.01$} & \multicolumn{3}{c|}{$\rho=0.02$} & \multicolumn{3}{c}{$\rho=0.04$} \\
& & Alg \ref{alg:data-driven} & Alg \ref{alg:data-driven-subgradient-averaging} & Alg \ref{alg:projected-perceptron} & Alg \ref{alg:data-driven} & Alg \ref{alg:data-driven-subgradient-averaging} & Alg \ref{alg:projected-perceptron} & Alg \ref{alg:data-driven} & Alg \ref{alg:data-driven-subgradient-averaging} & Alg \ref{alg:projected-perceptron} \\ 
		\midrule  
		\multirow{2}{*}{$0.8\rho$} &  
		250 &     0 &     0 &     0 &     2 &     1 &     0 &     1 &     3 &     3 \\
		& 15000 &     4 &    68 &    86 &     4 &    68 &   157 &    38 &   173 &   205 \\
		\midrule
		\multirow{2}{*}{$1.0\rho$} &
		250 &     0 &     1 &     0 &     2 &     1 &     0 &     1 &     5 &     4 \\
		& 15000 &    18 &    92 &   108 &    19 &    85 &   156 &   166 &   229 &   300 \\
		\midrule
		\multirow{2}{*}{$1.2\rho$} &
		250 &     0 &     1 &     0 &     4 &     3 &     0 &     6 &     7 &     5 \\
		& 15000 &    50 &   115 &   149 &    89 &   116 &   216 &   371 &   296 &   331 \\
		\bottomrule\addlinespace[.5em]
		\multicolumn{11}{c}{$\sigma = 10^{-2}$} \\
		\toprule
		\multirow{2}{*}{\centering $\frac2c$} & \multirow{2}{*}{\centering $t$} & \multicolumn{3}{c|}{$\rho=0.01$} & \multicolumn{3}{c|}{$\rho=0.02$} & \multicolumn{3}{c}{$\rho=0.04$} \\
& & Alg \ref{alg:data-driven} & Alg \ref{alg:data-driven-subgradient-averaging} & Alg \ref{alg:projected-perceptron} & Alg \ref{alg:data-driven} & Alg \ref{alg:data-driven-subgradient-averaging} & Alg \ref{alg:projected-perceptron} & Alg \ref{alg:data-driven} & Alg \ref{alg:data-driven-subgradient-averaging} & Alg \ref{alg:projected-perceptron} \\ 
		\midrule  
		\multirow{2}{*}{$0.8\rho$} &  
		250 &     2 &     0 &     0 &     3 &     3 &     0 &     3 &     5 &     3 \\
		& 15000 &     9 &    75 &    79 &    17 &   105 &   167 &    11 &   171 &   232 \\
		\midrule
		\multirow{2}{*}{$1.0\rho$} &
		250 &     3 &     0 &     0 &     4 &     4 &     0 &     3 &     6 &     3 \\
		& 15000 &    13 &    88 &    93 &    43 &   123 &   186 &    84 &   234 &   305 \\
		\midrule
		\multirow{2}{*}{$1.2\rho$} &
		250 &     4 &     0 &     0 &     7 &     5 &     0 &     6 &     7 &     4 \\
		& 15000 &    16 &   118 &   134 &   103 &   132 &   231 &   276 &   294 &   373 \\
		\bottomrule\addlinespace[.5em]
	\end{tabular}
	\caption{Number of manipulations caused by \cref{alg:projected-perceptron,alg:data-driven,alg:data-driven-subgradient-averaging} on loan data, with different margins  $\rho\in\{0.01, 0.02, 0.04\}$, $2/c\in\{0.8\rho, \rho, 1.2\rho\}$, and agent response noise level $\sigma\in\{0,10^{-3},10^{-2}\}$.}
	\label{tab:manipulation-real-data}
\end{table}

\begin{table}[htbp]
  \centering
  \footnotesize
  \begin{tabular}{c|r|rrr|rrr|rrr}
    \multicolumn{11}{c}{$\sigma = 0$} \\
    \toprule
    \multirow{2}{*}{\centering $\frac2c$} & \multirow{2}{*}{\centering $t$} & \multicolumn{3}{c|}{$\rho=0.01$} & \multicolumn{3}{c|}{$\rho=0.02$} & \multicolumn{3}{c}{$\rho=0.04$} \\
& & Alg \ref{alg:data-driven} & Alg \ref{alg:data-driven-subgradient-averaging} & Alg \ref{alg:projected-perceptron} & Alg \ref{alg:data-driven} & Alg \ref{alg:data-driven-subgradient-averaging} & Alg \ref{alg:projected-perceptron} & Alg \ref{alg:data-driven} & Alg \ref{alg:data-driven-subgradient-averaging} & Alg \ref{alg:projected-perceptron} \\
    \midrule
    \multirow{2}{*}{$0.8\rho$} &
        250 &  0.54 &  0.07 &  0.01 &  0.54 &  0.06 &  0.01 &  0.51 &  0.06 &  0.01 \\
    & 15000 & 24.91 &  5.63 &  0.47 & 26.49 &  5.58 &  0.46 & 25.78 &  5.60 &  0.46 \\
    \midrule
    \multirow{2}{*}{$1.0\rho$} &
        250 &  0.54 &  0.07 &  0.01 &  0.54 &  0.05 &  0.01 &  0.51 &  0.05 &  0.01 \\
    & 15000 & 24.93 &  5.59 &  0.47 & 26.66 &  5.56 &  0.47 & 26.57 &  5.59 &  0.47 \\
    \midrule
    \multirow{2}{*}{$1.2\rho$} &
        250 &  0.54 &  0.07 &  0.01 &  0.53 &  0.05 &  0.01 &  0.51 &  0.06 &  0.01 \\
    & 15000 & 25.32 &  5.58 &  0.47 & 25.26 &  5.86 &  0.46 & 26.28 &  5.61 &  0.47 \\
    \bottomrule\addlinespace[.5em]
    \multicolumn{11}{c}{$\sigma = 10^{-3}$} \\
    \toprule
    \multirow{2}{*}{\centering $\frac2c$} & \multirow{2}{*}{\centering $t$} & \multicolumn{3}{c|}{$\rho=0.01$} & \multicolumn{3}{c|}{$\rho=0.02$} & \multicolumn{3}{c}{$\rho=0.04$} \\
& & Alg \ref{alg:data-driven} & Alg \ref{alg:data-driven-subgradient-averaging} & Alg \ref{alg:projected-perceptron} & Alg \ref{alg:data-driven} & Alg \ref{alg:data-driven-subgradient-averaging} & Alg \ref{alg:projected-perceptron} & Alg \ref{alg:data-driven} & Alg \ref{alg:data-driven-subgradient-averaging} & Alg \ref{alg:projected-perceptron} \\
    \midrule
    \multirow{2}{*}{$0.8\rho$} &
        250 &  0.53 &  0.07 &  0.01 &  0.53 &  0.05 &  0.01 &  0.51 &  0.05 &  0.01 \\
    & 15000 & 25.71 &  5.59 &  0.47 & 25.78 &  5.61 &  0.47 & 27.94 &  5.65 &  0.47 \\
    \midrule
    \multirow{2}{*}{$1.0\rho$} &
        250 &  0.53 &  0.07 &  0.01 &  0.52 &  0.05 &  0.01 &  0.54 &  0.05 &  0.01 \\
    & 15000 & 25.28 &  5.71 &  0.47 & 26.07 &  5.57 &  0.47 &  24.4 &  5.60 &  0.48 \\
    \midrule
    \multirow{2}{*}{$1.2\rho$} &
        250 &  0.53 &  0.07 &  0.01 &  0.52 &  0.05 &  0.01 &  0.52 &  0.05 &  0.01 \\
    & 15000 & 25.34 &  5.57 &  0.47 & 24.65 &  5.49 &  0.47 & 24.21 &  5.67 &  0.47 \\
    \bottomrule\addlinespace[.5em]
    \multicolumn{11}{c}{$\sigma = 10^{-2}$} \\
    \toprule
    \multirow{2}{*}{\centering $\frac2c$} & \multirow{2}{*}{\centering $t$} & \multicolumn{3}{c|}{$\rho=0.01$} & \multicolumn{3}{c|}{$\rho=0.02$} & \multicolumn{3}{c}{$\rho=0.04$} \\
& & Alg \ref{alg:data-driven} & Alg \ref{alg:data-driven-subgradient-averaging} & Alg \ref{alg:projected-perceptron} & Alg \ref{alg:data-driven} & Alg \ref{alg:data-driven-subgradient-averaging} & Alg \ref{alg:projected-perceptron} & Alg \ref{alg:data-driven} & Alg \ref{alg:data-driven-subgradient-averaging} & Alg \ref{alg:projected-perceptron} \\
    \midrule  
    \multirow{2}{*}{$0.8\rho$} &
        250 &  0.50 &  0.07 &  0.01 &  0.55 &  0.05 &  0.01 &  0.50 &  0.05 &  0.01 \\
    & 15000 & 23.73 &  5.57 &  0.47 & 24.97 &  5.45 &  0.47 & 26.53 &  5.60 &  0.47 \\
    \midrule
    \multirow{2}{*}{$1.0\rho$} &
        250 &  0.49 &  0.07 &  0.01 &  0.55 &  0.05 &  0.01 &  0.50 &  0.05 &  0.01 \\
    & 15000 & 23.83 &  5.53 &  0.47 & 25.01 &  5.61 &  0.46 & 25.86 &  5.55 &  0.48 \\
    \midrule
    \multirow{2}{*}{$1.2\rho$} &
        250 &  0.49 &  0.07 &  0.01 &  0.54 &  0.05 &  0.01 &  0.50 &  0.06 &  0.01 \\
    & 15000 & 24.18 &  5.64 &  0.47 & 24.86 &  5.56 &  0.46 & 26.74 &  5.56 &  0.48 \\
    \bottomrule\addlinespace[.5em]
  \end{tabular}
  \caption{CPU running time (in seconds) of \cref{alg:projected-perceptron,alg:data-driven,alg:data-driven-subgradient-averaging} on loan data, with different margins  $\rho\in\{0.01, 0.02, 0.04\}$, $2/c\in\{0.8\rho, \rho, 1.2\rho\}$, and agent response noise level $\sigma\in\{0,10^{-3},10^{-2}\}$.}
  \label{tab:time}
\end{table}

\subsection{Numerical results on synthetic data}
\label{sec:experiment-synthetic}

In this section, we explore a synthetic dataset different from the loan dataset considered in \cref{sec:numerical}. Here, feature vectors are generated by the truncated normal distribution on $\{x\in\R^6: \|x\|_2\leq1/\sqrt5\}$ with mean $0$ and covariance $0.04I_6$. Labels are designated using a predefined classifier $y=\mathbf{1}$, $b=0$, i.e., $\lbl(A_t) = \sign(y^\top A_t)$. Similar to the preprocessing procedure in \cref{sec:numerical}, we remove the points that are within a distance of $\rho>0$ to the hyperplane defined by $(y,b)$ to create a strictly positive margin. We also follow the same process outlined in~\cref{sec:numerical} to compute the best margin classifier $(y_*,b_*')$. Then, the data are translated by $\frac12(x_++x_-)$ where $x_+, x_-$ are computed from \cref{lem:margin-prod} (see its proof for an explicit expression using the primal and dual variables of the margin maximization problem, which can be retrieved from the MOSEK solver) so that the best margin classifier of the resulting data becomes $(y_*,b_*)=(y_*,0)$. In this way, we can implement and test the performance of \cref{alg:projected-perceptron} with $\L=\R^d\times\{0\}$. We mention that this in fact makes the task more challenging for \cref{alg:data-driven,alg:data-driven-subgradient-averaging}, as they do not have access to the information $b_*=0$ and need to figure it out via the mechanism of the algorithms themselves. However, as we will see in the numerical results, \cref{alg:data-driven,alg:data-driven-subgradient-averaging} still have better performance in spite of this disadvantage. We do not add agent response noise for our synthetic experiments.
The numerical results for the synthetic data are shown in \cref{fig:distance-synthetic,fig:manipulation-synthetic,fig:mistake-synthetic,tab:mistake-synthetic,tab:manipulation-synthetic,tab:time-synthetic}. Our observations here are consistent with the ones for the real data
in terms of performance comparison among the three algorithms. On the other hand, in the synthetic data experiments the performance of \cref{alg:data-driven-subgradient-averaging}, in terms of both convergence to $(y_*,b_*)$ and number of mistakes, seems to improve and approach to that of \cref{alg:data-driven} in some parameter settings. Also, in almost all parameter combinations tested, the number of mistakes made by any algorithm is lower than that of loan data experiments and they all seem to stabilize after certain number of iterations, unlike the loan data experiments in which only \cref{alg:data-driven} seems stabilize within the finitely many iterations tested.

\begin{landscape}
  \begin{figure}
    \centering
    \includegraphics[scale=.9]{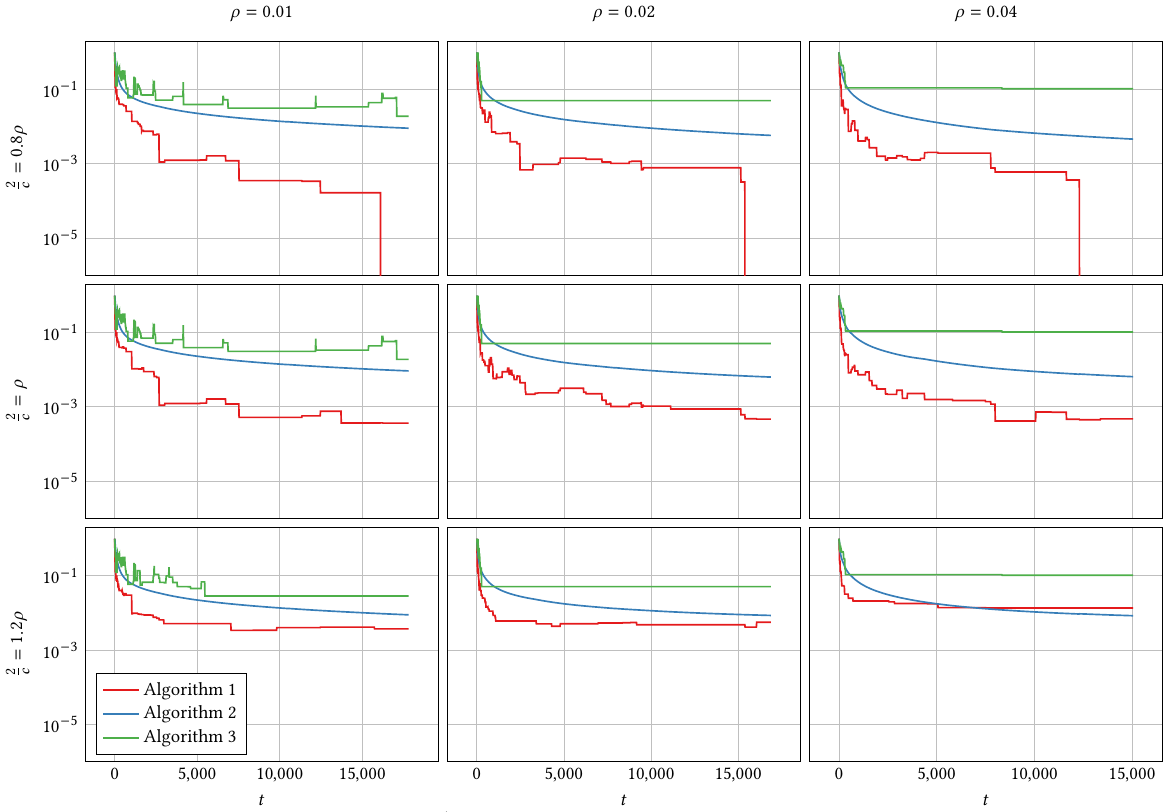} 
    \caption{Distance $\left\|\frac{(y_t,b_t)}{\|y_t\|_2} - \frac{(y_*,b_*)}{\|y_*\|_2}\right\|_2$ between $(y_*,b_*)$ and $(y_t,b_t)$ normalized by $y_*$ and $y_t$ respectively for \cref{alg:projected-perceptron,alg:data-driven,alg:data-driven-subgradient-averaging} on synthetic data, with different margins $\rho\in\{0.01, 0.02, 0.04\}$ and $2/c\in\{0.8\rho, \rho, 1.2\rho\}$.}
    \label{fig:distance-synthetic}
    
  \end{figure}
\end{landscape}  

\begin{landscape}
  \begin{figure}
    \centering
    \includegraphics[scale=.9]{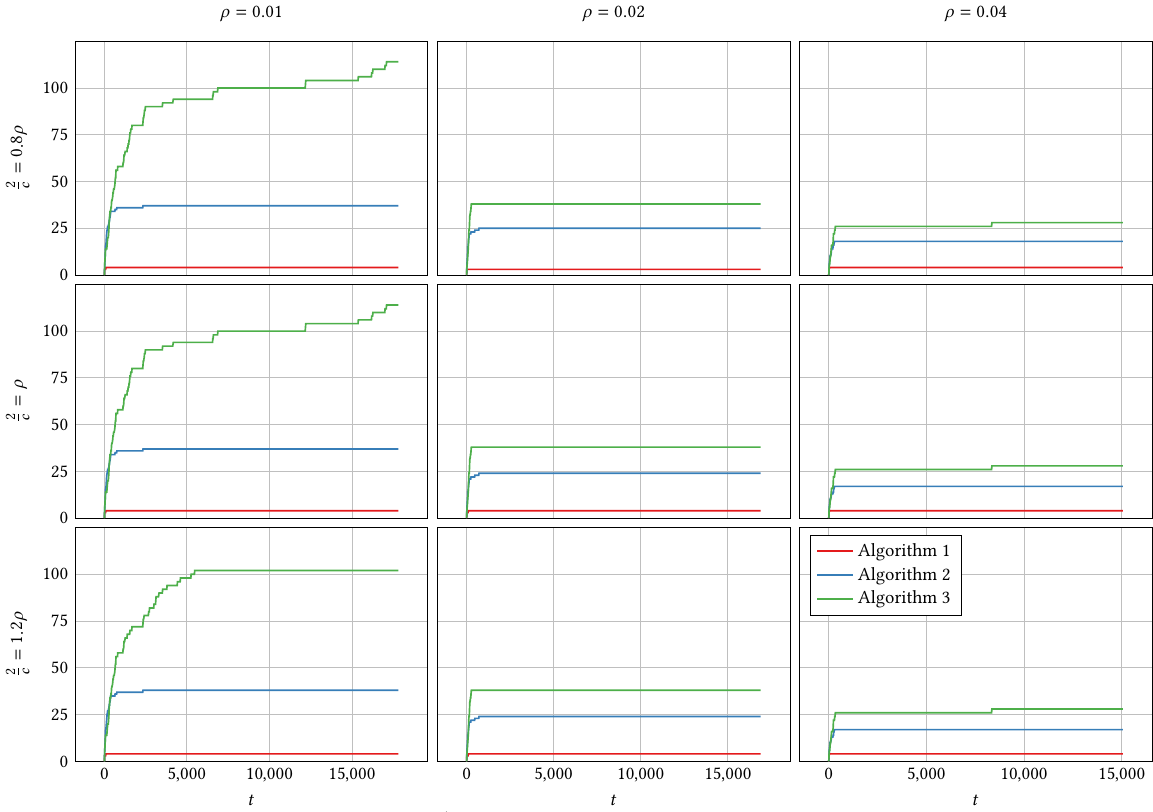}
    \caption{Number of mistakes made by \cref{alg:projected-perceptron,alg:data-driven,alg:data-driven-subgradient-averaging} on synthetic data, with different margins $\rho\in\{0.01, 0.02, 0.04\}$ and $2/c\in\{0.8\rho, \rho, 1.2\rho\}$.}
    \label{fig:mistake-synthetic}
    
  \end{figure}
\end{landscape}

\begin{landscape}
  \begin{figure}
    \centering
    \includegraphics[scale=.9]{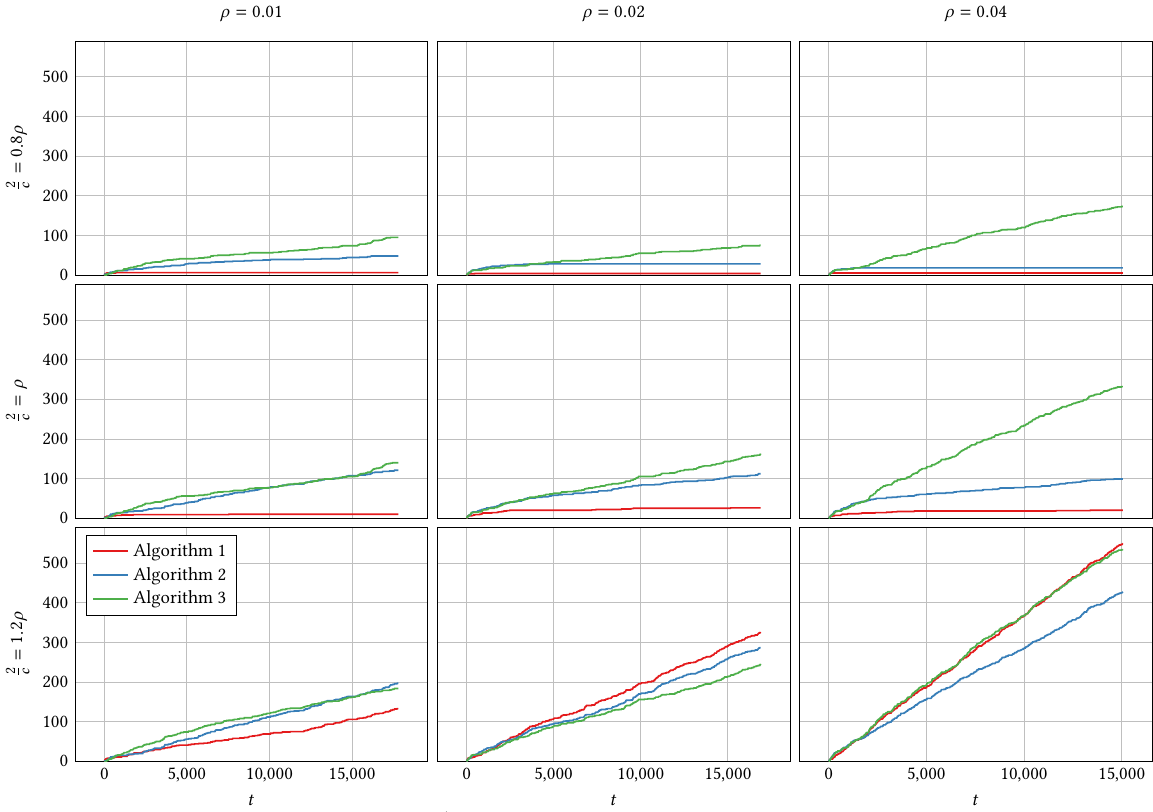}
    \caption{Number of manipulations incurred by \cref{alg:projected-perceptron,alg:data-driven,alg:data-driven-subgradient-averaging} on synthetic data, with different margins $\rho\in\{0.01, 0.02, 0.04\}$ and $2/c\in\{0.8\rho, \rho, 1.2\rho\}$.}
    \label{fig:manipulation-synthetic}
    
  \end{figure}
\end{landscape}

\begin{table}[htbp]
  \centering
\footnotesize
  \begin{tabular}{c|r|rrr|rrr|rrr}
\toprule
    \multirow{2}{*}{\centering $\frac2c$} & \multirow{2}{*}{\centering $t$} & \multicolumn{3}{c|}{$\rho=0.01$} & \multicolumn{3}{c|}{$\rho=0.02$} & \multicolumn{3}{c}{$\rho=0.04$} \\
& & Alg \ref{alg:data-driven} & Alg \ref{alg:data-driven-subgradient-averaging} & Alg \ref{alg:projected-perceptron} & Alg \ref{alg:data-driven} & Alg \ref{alg:data-driven-subgradient-averaging} & Alg \ref{alg:projected-perceptron} & Alg \ref{alg:data-driven} & Alg \ref{alg:data-driven-subgradient-averaging} & Alg \ref{alg:projected-perceptron} \\ 
    \midrule  
    \multirow{2}{*}{$0.8\rho$} &  
        250 &     4 &    27 &    22 &     3 &    23 &    36 &     4 &    16 &    22 \\
    & 15000 &     4 &    37 &   104 &     3 &    25 &    38 &     4 &    18 &    28 \\
    \midrule  
    \multirow{2}{*}{$1.0\rho$} &  
        250 &     4 &    27 &    22 &     4 &    22 &    36 &     4 &    14 &    22 \\
    & 15000 &     4 &    37 &   104 &     4 &    24 &    38 &     4 &    17 &    28 \\
    \midrule  
    \multirow{2}{*}{$1.2\rho$} &  
        250 &     4 &    28 &    22 &     4 &    22 &    36 &     4 &    14 &    22 \\
    & 15000 &     4 &    38 &   102 &     4 &    24 &    38 &     4 &    17 &    28 \\
    \bottomrule \end{tabular}
\caption{Number of mistakes made by \cref{alg:projected-perceptron,alg:data-driven,alg:data-driven-subgradient-averaging} on synthetic data, with different margins  $\rho\in\{0.01, 0.02, 0.04\}$, $2/c\in\{0.8\rho, \rho, 1.2\rho\}$.}
  \label{tab:mistake-synthetic}
\end{table}

\begin{table}[htbp]
  \centering
\footnotesize
  \begin{tabular}{c|r|rrr|rrr|rrr}
\toprule
    \multirow{2}{*}{\centering $\frac2c$} & \multirow{2}{*}{\centering $t$} & \multicolumn{3}{c|}{$\rho=0.01$} & \multicolumn{3}{c|}{$\rho=0.02$} & \multicolumn{3}{c}{$\rho=0.04$} \\
& & Alg \ref{alg:data-driven} & Alg \ref{alg:data-driven-subgradient-averaging} & Alg \ref{alg:projected-perceptron} & Alg \ref{alg:data-driven} & Alg \ref{alg:data-driven-subgradient-averaging} & Alg \ref{alg:projected-perceptron} & Alg \ref{alg:data-driven} & Alg \ref{alg:data-driven-subgradient-averaging} & Alg \ref{alg:projected-perceptron} \\ 
    \midrule  
    \multirow{2}{*}{$0.8\rho$} &  
        250 &     5 &     4 &     4 &     3 &     9 &     9 &     5 &    10 &    10 \\
    & 15000 &     6 &    44 &    74 &     4 &    28 &    68 &     5 &    18 &   173 \\
    \midrule  
    \multirow{2}{*}{$1.0\rho$} &  
        250 &     5 &     5 &     4 &     6 &    11 &    12 &     6 &    12 &    11 \\
    & 15000 &    10 &   107 &   105 &    25 &   103 &   143 &    20 &    99 &   332 \\
    \midrule  
    \multirow{2}{*}{$1.2\rho$} &  
        250 &     8 &     6 &     5 &     9 &    11 &    13 &    10 &    16 &    15 \\
    & 15000 &   106 &   164 &   162 &   291 &   259 &   214 &   547 &   425 &   534 \\
    \bottomrule \end{tabular}
\caption{Number of manipulations caused by \cref{alg:projected-perceptron,alg:data-driven,alg:data-driven-subgradient-averaging} on synthetic data, with different margins  $\rho\in\{0.01, 0.02, 0.04\}$, $2/c\in\{0.8\rho, \rho, 1.2\rho\}$.}
  \label{tab:manipulation-synthetic}
\end{table}

\begin{table}[htbp]
  \centering
  \footnotesize
  \begin{tabular}{c|r|rrr|rrr|rrr}
\toprule
    \multirow{2}{*}{\centering $\frac2c$} & \multirow{2}{*}{\centering $t$} & \multicolumn{3}{c|}{$\rho=0.01$} & \multicolumn{3}{c|}{$\rho=0.02$} & \multicolumn{3}{c}{$\rho=0.04$} \\
& & Alg \ref{alg:data-driven} & Alg \ref{alg:data-driven-subgradient-averaging} & Alg \ref{alg:projected-perceptron} & Alg \ref{alg:data-driven} & Alg \ref{alg:data-driven-subgradient-averaging} & Alg \ref{alg:projected-perceptron} & Alg \ref{alg:data-driven} & Alg \ref{alg:data-driven-subgradient-averaging} & Alg \ref{alg:projected-perceptron} \\
    \midrule  
    \multirow{2}{*}{$0.8\rho$} &  
        250 &  0.58 &  0.06 &  0.01 &  0.61 &  0.06 &  0.01 &  0.56 &  0.06 &  0.01 \\
    & 15000 & 24.86 &  5.56 &  0.45 & 25.42 &  5.52 &  0.44 & 25.31 &  5.70 &  0.44 \\
    \midrule  
    \multirow{2}{*}{$1.0\rho$} &  
        250 &  0.58 &  0.06 &  0.01 &  0.58 &  0.05 &  0.01 &  0.53 &  0.05 &  0.01 \\
    & 15000 & 25.29 &  5.52 &  0.45 & 25.88 &  5.50 &  0.44 & 25.81 &  5.51 &  0.44 \\
    \midrule  
    \multirow{2}{*}{$1.2\rho$} &  
        250 &  0.60 &  0.06 &  0.01 &  0.61 &  0.05 &  0.01 &  0.53 &  0.05 &  0.01 \\
    & 15000 & 25.08 &  5.75 &  0.44 & 24.64 &  5.71 &  0.44 & 24.37 &  5.45 &  0.44 \\
    \bottomrule \end{tabular}
  \caption{CPU running time (in seconds) of \cref{alg:projected-perceptron,alg:data-driven,alg:data-driven-subgradient-averaging} on synthetic data, with different margins $\rho\in\{0.01, 0.02, 0.04\}$, $2/c\in\{0.8\rho, \rho, 1.2\rho\}$.}
  \label{tab:time-synthetic}
\end{table}
 \end{appendix}

\end{document}